\def\defin{:=}
\def\st{~\text{s.t.}~}
\def\mxJ{\mx_{\!J}^{}}
\def\pll{{\scriptscriptstyle /\!\!/}}
\def\tr{{\rm tr}}
\def\zbullet{$ $}
\def\zspace{\vspace{1mm}}
\newcommand{\bmt}[1]{\bm{#1}^\top} 
\def\hati{{\hat{\imath}}}
\newcommand{\va}{\bm{a}}
\newcommand{\vd}{\bm{d}}               
\newcommand{\ve}{\bm{e}}
\newcommand{\vn}{\bm{n}}
\newcommand{\vr}{\bm{r}}               
\newcommand{\vs}{\bm{s}}               
\newcommand{\vt}{\bm{t}}               
\newcommand{\vu}{\bm{u}}               
\newcommand{\vv}{\bm{v}}               
\newcommand{\vw}{\bm{w}}               
\newcommand{\vx}{\bm{x}}               
\newcommand{\vy}{\bm{y}}               \newcommand{\yh}{\hat{y}}
\newcommand{\vz}{\bm{z}}               
\newcommand{\valpha}  {\bm{\alpha}}         
\newcommand{\vbeta}   {\bm{\beta}}           
\newcommand{\vepsilon}{\bm{\varepsilon}}
\newcommand{\veta}    {\bm{\eta}}             
\newcommand{\vkappa}  {\bm{\kappa}}
\newcommand{\md}{\bm{D}}
\newcommand{\mH}{\bm{H}}          
\newcommand{\mi}{\bm{I}}          
\newcommand{\mk}{\bm{K}}          
\newcommand{\mm}{\bm{M}}          
\newcommand{\mn}{\bm{N}}
\newcommand{\mU}{\bm{U}}          
\newcommand{\mv}{\bm{V}}          
\newcommand{\mw}{\bm{W}}          
\newcommand{\mx}{\bm{X}}          
\newcommand{\my}{\bm{Y}}
    \newcommand{\Bc}{\mathcal{B}}
    \newcommand{\Gc}{\mathcal{G}}  
    \newcommand{\Hc}{\mathcal{H}}
    \newcommand{\Kc}{\mathcal{K}}
    \newcommand{\Nc}{\mathcal{N}}  
    \newcommand{\Pc}{\mathcal{P}}
    \newcommand{\Tc}{\mathcal{T}}
    \newcommand{\Xc}{\mathcal{X}}  
    \newcommand{\Yc}{\mathcal{Y}}
\newcommand{\mynorm}[2]{\| {#1} \|_{#2}}
\newcommand{\frob}[1]{\|{#1}\|_{\text{F}}}
\newcommand{\R}{\mathbb{R}}
\DeclareMathOperator*{\argmin}{argmin}
\DeclareMathOperator{\sgn}{sgn}
\DeclareMathOperator{\trace}{Tr}
\DeclareMathOperator{\Diag}{Diag}
\newcommand{\lem}[1]{Lemma~\protect\ref{#1}}
\newcommand{\eq}[1]{(\protect\ref{#1})}
\begin{document}

\isbn{xxxxxxxxxxx}

\DOI{xxxxxx}


\abstract{Sparse estimation methods are aimed at using or obtaining parsimonious representations of data or models. They were first dedicated to linear variable selection but numerous extensions have now emerged such as structured sparsity or kernel selection. It turns out that many of the related estimation problems can be cast as convex optimization problems by regularizing the empirical risk with appropriate non-smooth norms. The goal of this paper is to present from a general perspective optimization tools and techniques dedicated to such sparsity-inducing penalties. We cover proximal methods, block-coordinate descent, reweighted $\ell_2$-penalized techniques, working-set and homotopy methods, as well as non-convex formulations and extensions, and provide an extensive set of experiments to compare various algorithms from a computational point of view.
 
}

\articletitle{Optimization with Sparsity-Inducing Penalties}

\authorname1{Francis Bach}
\affiliation1{INRIA - SIERRA Project-Team}
\author1address2ndline{INRIA - SIERRA Project-Team, Laboratoire d'Informatique de l'Ecole Normale Sup\'erieure, 23, avenue d'Italie,~}
\author1city{Paris,~}
\author1zip{75013}
\author1country{France}
\author1email{francis.bach@inria.fr}

\authorname2{Rodolphe Jenatton}
\affiliation2{INRIA - SIERRA Project-Team}
\author2address2ndline{INRIA - SIERRA Project-Team}
\author2email{rodolphe.jenatton@inria.fr}

\authorname3{\\ Julien Mairal}
\affiliation3{Department of Statistics, University of California}
\author3address2ndline{Department of Statistics, University of California,~}
\author3city{Berkeley,~}
\author3zip{CA 94720-1776}
\author3country{USA}
\author3email{julien@stat.berkeley.edu}

\authorname4{Guillaume Obozinski}
\affiliation4{INRIA - SIERRA Project-Team}
\author4address2ndline{INRIA - SIERRA Project-Team}
\author4email{guillaume.obozinski@inria.fr}

\journal{sample}
\volume{xx}
\issue{xx}
\copyrightowner{xxxxxxxxx}
\pubyear{xxxx}

\maketitle

\cleardoublepage \pagenumbering{roman}

\tableofcontents

\clearpage

\setcounter{page}{0}
\pagenumbering{arabic}

\newtheorem{theorem}{Theorem}[chapter]
\newtheorem{definition}{Definition}[chapter]
\newtheorem{proposition}{Proposition}[chapter]
\newtheorem{lemma}{Lemma}[chapter]
\newtheorem{remark}{Remark}[chapter]

\chapter{Introduction}
The principle of parsimony is central to many areas of science: the simplest explanation of a given phenomenon should be preferred over more complicated ones. In the context of machine learning, it takes the form of variable or feature selection, and it is
commonly used in two situations. First, to make the model or the prediction more interpretable
or computationally cheaper to use, i.e., even if the underlying problem is not sparse, one looks for the best sparse
approximation. Second, sparsity can also be used given prior knowledge that the model should be
sparse.

For variable selection in linear models, parsimony may be directly achieved by penalization of the empirical risk or the log-likelihood by the cardinality of the support\footnote{We call the set of non-zeros entries of a vector the support.} of the weight vector. However, this leads to hard combinatorial problems (see, e.g., \cite{natarajan,tropp}). A traditional convex approximation of the problem is to replace the cardinality of the support  by the $\ell_1$-norm.
Estimators may then be obtained as solutions of convex programs.

Casting sparse estimation as convex optimization problems has two main benefits: First, it leads to efficient estimation algorithms---and this paper focuses primarily on these. Second, it allows a fruitful theoretical analysis answering fundamental questions related to estimation consistency, prediction efficiency~\cite{bickel_lasso_dantzig,negahban2009unified} or model consistency~\cite{martin,Zhaoyu}. In particular, when the sparse model is assumed to be well-specified, regularization by the $\ell_1$-norm is adapted to high-dimensional
problems, where the number of variables to learn from may be exponential in the number of observations.

Reducing parsimony to finding the model of lowest cardinality turns
out to be limiting, and \emph{structured parsimony}~\cite{baraniuk,Jacob2009,huang,jenatton} has emerged as a natural extension, with
applications to computer vision~\cite{cehver,huang,jenatton2010sspca}, text processing~\cite{Jenatton2010a}, bioinformatics~\cite{Jacob2009,kim3} or audio processing~\cite{augustin}.  Structured sparsity may be achieved by penalizing other functions than the cardinality of the support or regularizing by other norms than the $\ell_1$-norm. In this paper, we focus primarily on norms which can be written as linear combinations of norms on subsets of variables, but we also consider traditional extensions such as multiple kernel learning and spectral norms on matrices (see Sections~\ref{sec:norms} and~\ref{sec:mkl}). One main objective of this paper is to present methods which are adapted to most sparsity-inducing norms with loss functions potentially beyond least-squares.

Finally, similar tools are used in other communities such as signal processing. While the objectives and the problem set-ups are different, the resulting convex optimization problems are often similar, and most of the techniques reviewed in this paper also apply to sparse estimation problems in signal processing. Moreover, we consider in Section~\ref{sec:nonconvex} non-convex formulations and extensions.

This paper aims at providing a general overview of the main optimization techniques that have emerged as most relevant and efficient for methods of variable selection based on sparsity-inducing norms. We survey and compare several algorithmic approaches as they apply to the $\ell_1$-norm, group norms, but also to norms inducing structured sparsity and to general multiple kernel learning problems. We complement these by a presentation of some greedy and non-convex methods. Our presentation is essentially based on existing literature, but the process of constructing a general framework leads naturally to new results, connections and points of view. 

This monograph is organized as follows:\zspace

\zbullet Sections \ref{sec:notation} and \ref{sec:losses} introduce respectively the notations used throughout the manuscript and the optimization problem \eq{eq:formulation} which is central to the learning framework that we will consider.\zspace

\zbullet Section \ref{sec:norms} gives an overview of common sparsity and structured sparsity-inducing norms, with some of their properties and examples of structures which they can encode.\zspace

\zbullet Section~\ref{sec:opt_tools} provides an essentially self-contained presentation of
concepts and tools from convex analysis that will be needed in the rest of the manuscript,
and which are relevant to understand algorithms for solving the main optimization problem \eq{eq:formulation}.
Specifically, since sparsity inducing norms are non-differentiable convex functions\footnote{Throughout this paper, we refer to sparsity-inducing norms such as the $\ell_1$-norm as nonsmooth norms; note that all norms are non-differentiable at zero, but some norms have more non-differentiability points (see more details in Section~\ref{sec:norms}).}, we introduce relevant elements of subgradient theory and Fenchel duality---which are particularly well suited to formulate the optimality conditions associated to learning problems regularized with these norms. We also introduce a general quadratic variational formulation for a certain class of norms in Section \ref{sec:subqua}; the part on subquadratic norms is essentially relevant in view of sections on structured multiple kernel learning and can safely be skipped in a first reading.\zspace

\zbullet Section~\ref{sec:mkl} introduces \emph{multiple kernel learning} (MKL) and shows that it can be interpreted as an extension of plain sparsity to reproducing kernel Hilbert spaces (RKHS), but formulated in the dual. This connection is further exploited in Section~\ref{sec:struct_mkl}, where its is shown how structured counterparts of MKL can be associated with structured sparsity-inducing norms. These sections rely on Section \ref{sec:subqua}. All sections on MKL can be skipped in a first reading.\zspace

\zbullet In Section~\ref{sec:opt_methods_classical}, we discuss classical approaches to solving the optimization problem arising from simple sparsity-inducing norms, such as interior point methods and subgradient descent, and point at their shortcomings in the context of machine learning.\zspace

\zbullet  Section~\ref{sec:proximal_methods} is devoted to a simple presentation of proximal methods.
After two short sections introducing the main concepts and algorithms, the longer Section~\ref{sec:proximal_operator} focusses on the \emph{proximal operator} and presents algorithms to compute it for a variety of norms. Section~\ref{sec:prox_mkl} shows how proximal methods for structured norms extend naturally to the RKHS/MKL setting.\zspace

\zbullet Section~\ref{sec:opt_methods_bcd} presents block coordinate descent algorithms, which provide an efficient alternative to proximal method for \emph{separable} norms like the $\ell_1$- and $\ell_1/\ell_2$-norms, and can be applied to MKL. This section uses the concept of proximal operator introduced in Section~\ref{sec:proximal_methods}.\zspace

\zbullet Section~\ref{sec:reweighted_l2} presents reweighted-$\ell_2$ algorithms that are based on the quadratic variational formulations introduced in Section~\ref{sec:subqua}. These algorithms are particularly relevant for the least-squares loss, for which they take the form of iterative reweighted least-squares algorithms (IRLS). Section~\ref{sec:gen_varia} presents a generally applicable quadratic variational formulation for general norms that extends the variational formulation of Section~\ref{sec:subqua}.\zspace

\zbullet Section~\ref{sec:workinghomo} covers algorithmic schemes that take advantage computationally of the sparsity of the solution by extending the support of the solution gradually. These schemes are particularly relevant to construct approximate or exact regularization paths of solutions for a range of values of the regularization parameter.
Specifically, Section~\ref{sec:active_sets} presents working-set techniques, which are meta-algorithms that can be used with the optimization schemes presented in all the previous chapters. Section~\ref{sec:lars} focuses on the homotopy algorithm, which can efficiently construct the entire regularization path of the Lasso.\zspace

\zbullet Section~\ref{sec:nonconvex} presents nonconvex as well as Bayesian approaches that provide alternatives to, or extensions of the convex methods that were presented in the previous sections.
More precisely, Section~\ref{sec:opt_greedy_algorithms} presents so-called greedy algorithms, that aim at solving the cardinality constrained problem and include matching pursuit, orthogonal matching pursuit and forward selection; Section~\ref{sec:DC} presents continuous optimization problems, in which the penalty is chosen to be closer to the so-called $\ell_0$-penalty (i.e., a penalization of the cardinality of the model regardless of the amplitude of the coefficients) at the expense of losing convexity, and corresponding optimization schemes.
Section~\ref{sec:DL} discusses the application of sparse norms regularization to the problem of matrix factorization, which is intrinsically nonconvex, but for which the algorithms presented in the rest of this monograph are relevant. Finally, we discuss briefly in Section~\ref{sec:opt_bayesian} Bayesian approaches to sparsity and the relations to sparsity-inducing norms.\zspace

\zbullet Section~\ref{sec:exp_intro} presents experiments comparing the performance of the algorithms
presented in Sections~\ref{sec:opt_methods_classical},~\ref{sec:proximal_methods},~\ref{sec:opt_methods_bcd},~\ref{sec:reweighted_l2}, in terms of speed of convergence of the algorithms. Precisely, Section~\ref{sec:exp_speed} is devoted to the $\ell_1$-regularization case, and Section~\ref{sec:exp_multitask} and~\ref{sec:exp_struct} are respectively covering the $\ell_1/\ell_p$-norms with disjoint groups and to more general structured cases.\zspace

\zbullet We discuss briefly methods and cases which were not covered in the rest of the monograph in Section~\ref{sec:extensions} and we conclude in Section~\ref{sec:conclusions}.\zspace

Some of the material from this paper is taken from an earlier book chapter~\cite{bookchapter} and the dissertations of Rodolphe Jenatton~\cite{jenatton_thesis} and Julien Mairal~\cite{mairal_thesis}.

\section{Notation}\label{sec:notation}
Vectors are denoted by bold lower case letters and matrices by upper case ones.
We define for $q \geq 1$ the \mbox{$\ell_q$-norm} of a vector~$\vx$ in~$\R^n$ as
$\|\vx\|_q \defin (\sum_{i=1}^n |\vx_i|^q)^{{1}/{q}}$, where~$\vx_i$ denotes the
$i$-th coordinate of~$\vx$, and $\|\vx\|_\infty \defin \max_{i=1,\ldots,n} |\vx_i|
= \lim_{q \to \infty} \|\vx\|_q$.  We also define the $\ell_0$-penalty as
the number of nonzero elements in a vector:\footnote{Note that it would
be more proper to write $\|\vx\|_0^0$ instead of $\|\vx\|_0$ to be consistent with the traditional notation $\|\vx\|_q$.
However, for the sake of simplicity, we will keep this
notation unchanged in the rest of the paper.}
$\|\vx\|_0 \defin \#\{i \st \vx_i
\neq 0  \} = \lim_{q \to 0^+}  (\sum_{i=1}^n |\vx_i|^q)$.  We consider the
Frobenius norm of a matrix~$\mx$ in~$\R^{m \times n}$: $\frob{\mx} \defin
(\sum_{i=1}^m \sum_{j=1}^n \mx_{ij}^2)^{{1}/{2}}$, where~$\mx_{ij}$ denotes the entry of~$\mx$ at row $i$ and column $j$. 
For an integer $n > 0$, and for any subset $J \subseteq \{1,\ldots,n\}$, we denote by~$\vx_J$ the vector of size~$|J|$ containing the entries of a vector~$\vx$ in~$\R^n$ indexed by~$J$, and by~$\mx_J$ the matrix in~$\R^{m \times |J|}$ containing the~$|J|$ columns of a matrix~$\mx$ in~$\R^{m \times n}$ indexed by~$J$.

\section{Loss Functions}\label{sec:losses}
\label{sec:intro_b}
We consider in this paper convex optimization problems of the form
\begin{equation}
   \min_{\vw \in \R^p} f(\vw) + \lambda \Omega(\vw), \label{eq:formulation}
\end{equation}
where $f: \R^p \to \R$ is a convex differentiable function and
$\Omega: \R^p \to \R$ is a sparsity-inducing---typically nonsmooth and non-Euclidean---norm.

In supervised learning, we predict outputs $y$ in $\Yc$ from observations~$\vx$ in $\Xc$;
these observations are usually represented by $p$-dimensional vectors with $\Xc=\R^p$.
In this supervised setting, $f$ generally corresponds to the empirical risk of a loss function $\ell:\Yc \times \R \to \R_+$.
More precisely, given $n$ pairs of data points
$
\{ (\vx^{(i)},y^{(i)}) \in \R^p\! \times\! \Yc;\ i\!=\!1,\dots,n\}
$,
we have for linear models\footnote{In Section~\ref{sec:mkl}, we consider extensions to non-linear predictors through multiple kernel learning.}
$
f(\vw)\defin\frac{1}{n}\sum_{i=1}^n\ell(y^{(i)},\bmt{\vw}\vx^{(i)})
$.
Typical examples of differentiable loss functions are the square loss for least squares regression,
i.e., $\ell(y,\yh) = \frac{1}{2}(y-\yh)^2$ with $y$ in $\R$,
and the logistic loss $\ell(y,\yh) = \log(1+e^{-y\yh})$ for logistic regression, with $y$ in $\{-1,1\}$. Clearly, several loss functions of interest are non-differentiable, such as the hinge loss $\ell(y,\yh) = (1-y\yh)_+$ or the absolute deviation loss $\ell(y,\yh) = |y-\yh|$, for which most of the approaches we present in this monograph would not be applicable or require appropriate modifications. Given the tutorial character of this monograph, 
we restrict ourselves to smooth functions $f$, which we consider is a reasonably broad setting, 
and we refer the interested reader to appropriate references in Section~\ref{sec:extensions}.
We refer the readers to \cite{Shawe-Taylor2004} for a more complete description of loss functions.

\paragraph{Penalty or constraint?}
Given our convex data-fitting term $f(\vw)$, we consider in this paper adding a convex penalty $\lambda \Omega(\vw)$. Within such a convex optimization framework, this is essentially equivalent to adding a constraint of the form $\Omega(\vw) \leq \mu$. More precisely, under weak assumptions on $f$ and $\Omega$ (on top of convexity), from Lagrange multiplier theory (see~\cite{borwein}, Section 4.3) $\vw$ is a solution of the constrained problem for a certain $\mu>0$ if and only if it is a solution of the penalized problem for a certain $\lambda \geq 0$. Thus, the two regularization paths, i.e., the set of solutions when $\lambda$ and $\mu$ vary, are equivalent. However, there is no direct mapping between corresponding values of $\lambda$ and~$\mu$. Moreover, in a machine learning context, where the parameters $\lambda$ and $\mu$ have to be selected, for example through cross-validation, the penalized formulation tends to be empirically easier to tune, 
as the performance is usually quite robust to small changes in $\lambda$, 
while it is not robust to small changes in~$\mu$.
Finally, we could also replace the penalization with a norm by a penalization with the squared norm. Indeed, following the same reasoning as for the non-squared norm, a penalty of the form $ \lambda \Omega(\vw)^2$ is ``equivalent'' to a constraint of the form $\Omega(\vw)^2 \leqslant \mu$, 
which itself is equivalent to  $\Omega(\vw) \leqslant \mu^{1/2}$, and thus to a penalty of the form $ \lambda' \Omega(\vw)^2$, for $\lambda' \neq \lambda$. Thus, using a squared norm,  as is often done in the context of multiple kernel learning (see Section~\ref{sec:mkl}), does not change the regularization properties of the formulation.

\section{Sparsity-Inducing Norms}\label{sec:norms}
\label{sec:intro_bb}

In this section, we present various norms as well as their main sparsity-inducing effects. These effects may be illustrated geometrically through the singularities of the corresponding unit balls (see Figure~\ref{intro:fig:balls}).

\paragraph{Sparsity through the $\ell_1$-norm.}

When one knows \emph{a priori} that the solutions $\vw^\star$ of problem~(\ref{eq:formulation})
should have a few non-zero coefficients, $\Omega$ is often chosen to be the
$\ell_1$-norm, i.e., $\Omega(\vw)=\sum_{j=1}^p|\vw_j|$.
This leads for instance to the Lasso~\cite{tibshirani} or basis pursuit~\cite{chen} with the square loss
and to $\ell_1$-regularized logistic regression~(see, for instance, \cite{koh2007interior,shevade2003simple}) with the logistic loss.
Regularizing by the $\ell_1$-norm is known to induce sparsity in the sense
that,
a number of coefficients of $\vw^\star$, depending on the strength of the regularization,
will be \textit{exactly} equal to zero.

\paragraph{$\ell_1/\ell_q$-norms.}

In some situations, the coefficients of $\vw^\star$ are naturally partitioned in subsets, or \textit{groups}, of variables. This is typically the case, when working with ordinal variables\footnote{Ordinal variables are integer-valued variables encoding levels of a certain feature, such as levels of severity of a certain symptom in a biomedical application, where the values do not correspond to an intrinsic linear scale: in that case it is common to introduce a vector of binary variables, each encoding a specific level of the symptom, that encodes collectively this single feature.}.
It is then natural to select or remove \textit{simultaneously} all the variables forming a group.
A regularization norm exploiting explicitly this group structure, or $\ell_1$-\emph{group norm}, can be shown to improve the prediction performance
and/or interpretability of the learned models~\cite{huang2,Lounici2009,obozinski-joint,roth,turlach,yuan}.
The arguably simplest group norm is the so-called-$\ell_1/\ell_2$ norm:
\begin{equation}
    \Omega(\vw) \defin \sum_{g \in \Gc} d_g \mynorm{\vw_g}{2}, \label{eq:def_omega_group}
\end{equation}
where $\Gc$ is a partition of $\{1,\dots,p\}$,
$(d_g)_{g\in\Gc}$ are some strictly positive weights,
and~$\vw_g$ denotes the vector in $\R^{|g|}$ recording the coefficients of $\vw$ indexed by $g$ in $\Gc$.
Without loss of generality we may assume all weights $(d_g)_{g\in\Gc}$ to be equal to one (when $\Gc$ is a partition, we can rescale the values of $\vw$ appropriately).
As defined in~Eq.~(\ref{eq:def_omega_group}), $\Omega$ is known as a mixed 
$\ell_1/\ell_2$-norm.
It behaves like an $\ell_1$-norm on the vector $(\mynorm{\vw_g}{2})_{g\in\Gc}$ in $\R^{|\Gc|}$,
and therefore, $\Omega$ induces group sparsity.
In other words, each $\mynorm{\vw_g}{2}$, and equivalently each $\vw_g$, is encouraged to be set to zero.
On the other hand, within the groups $g$ in~$\Gc$, the $\ell_2$-norm does not promote sparsity.
Combined with the square loss, it leads to the \emph{group Lasso} formulation~\cite{turlach,yuan}.
Note that when~$\Gc$ is the set of singletons, we retrieve the $\ell_1$-norm.
More general mixed $\ell_1/\ell_q$-norms for $q > 1$ are also used in the literature~\cite{zhao} (using $q=1$ leads to a weighted $\ell_1$-norm with no group-sparsity effects):
\begin{equation*}
\label{eq:def_group_lasso_norm}
    \Omega(\vw) = \sum_{g \in \Gc} \mynorm{\vw_g}{q} \defin
    \sum_{g \in \Gc} d_g \bigg\{\sum_{j\in g} |\vw_j|^q \bigg\}^{1/q}.
\end{equation*}
In practice though, the $\ell_1/\ell_2$- and $\ell_1/\ell_\infty$-settings remain the most popular ones. Note that using $\ell_\infty$-norms may have the undesired effect to favor solutions $\vw$ with many components of equal magnitude (due to the extra non-differentiabilities away from zero).
Grouped $\ell_1$-norms are typically used when extra-knowledge is available regarding an appropriate partition, in particular in the presence of categorical variables with orthogonal encoding~\cite{roth}, for multi-task learning where joint variable selection is desired~\cite{obozinski-joint}, and for multiple kernel learning (see Section~\ref{sec:mkl}).

\paragraph{Norms for overlapping groups: a direct formulation.}

In an attempt to better encode structural links between variables at play (e.g., spatial or hierarchical links related to the physics of the problem at hand),
recent research has explored the setting where $\Gc$ in Eq.~\eq{eq:def_omega_group} can contain groups of variables that
\textit{overlap}~\cite{hkl,Jacob2009,jenatton,kim3,schmidt2010convex,zhao}.
In this case, if the groups span the entire set of variables, $\Omega$ is still a norm, and it yields sparsity in the form of specific patterns of variables.
More precisely, the solutions $\vw^\star$ of problem~(\ref{eq:formulation}) can be shown to have a set of zero coefficients, or simply \textit{zero pattern},
that corresponds to a union of some groups $g$ in $\Gc$~\cite{jenatton}.
This property makes it possible to control the sparsity patterns of $\vw^\star$ by appropriately defining the groups in~$\Gc$. Note that here the weights $d_g$ should not be taken equal to one (see, \cite{jenatton} for more details).
This form of \textit{structured sparsity} has notably proven to be useful in various contexts, 
which we now illustrate through concrete examples:

\begin{figure}

\begin{center}
\includegraphics[width=0.45\linewidth]{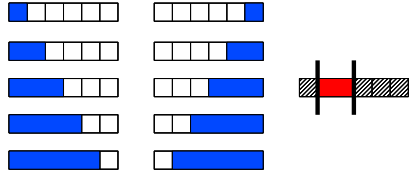}

\end{center}
\caption{ (Left) The set of blue groups to penalize in order to select contiguous patterns in a sequence.
(Right) In red, an example of such a nonzero pattern with its corresponding zero pattern (hatched area). } 
\label{intro:fig:sequence}
 \end{figure}

\begin{itemize}
\item[-]\textbf{One-dimensional sequence:}
Given $p$ variables organized in a sequence, if we want to select only contiguous nonzero patterns, we represent in Figure~\ref{intro:fig:sequence} the set of groups $\Gc$
to consider. In this case, we have $|\Gc|=O(p)$.
Imposing the contiguity of the nonzero patterns is for instance relevant in the context of time series, or for the diagnosis of tumors, based on the profiles of arrayCGH \cite{Rapaport2008}.
Indeed, because of the specific spatial organization of bacterial artificial chromosomes along the genome, the set of discriminative features is expected to have specific contiguous patterns.\\

\begin{figure} 
\begin{center}
\includegraphics[width=\linewidth]{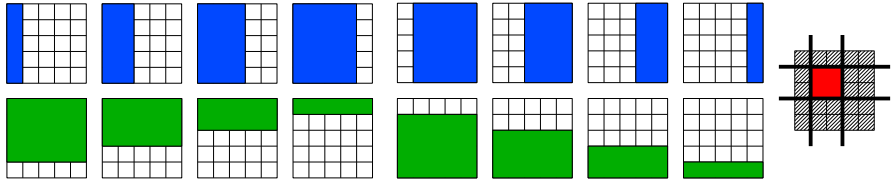}
\end{center}

\caption{ Vertical and horizontal groups: (Left) the set of blue and green groups to penalize in order to select
rectangles. (Right) In red, an example of nonzero pattern recovered in this setting, with its corresponding zero pattern (hatched area). } 
\label{intro:fig:axis-aligned}
\end{figure}

\item[-]\textbf{Two-dimensional grid:}
In the same way, assume now that the~$p$ variables are organized on a two-dimensional grid.
If we want the possible nonzero patterns $\Pc$ to be the set of all rectangles on this grid, 
the appropriate groups $\Gc$ to consider can be shown~(see~\cite{jenatton}) to be those represented in Figure~\ref{intro:fig:axis-aligned}.
In this setting, we have $|\Gc|=O(\sqrt{p})$.
Sparsity-inducing regularizations built upon such group structures have resulted in 
 good performances for background subtraction~\cite{huang,mairal2010,Mairal2011},
 topographic dictionary learning~\cite{Kavukcuoglu2009, Mairal2011}, wavelet-based denoising~\cite{Rao2011}, 
 and for face recognition with corruption by occlusions~\cite{jenatton2010sspca}.\\
 
\item[-]\textbf{Hierarchical structure:}
A third interesting example assumes that the variables have a hierarchical structure. Specifically, 
we consider that the $p$ variables correspond to the nodes of tree $\Tc$ (or a forest of trees).
Moreover, we assume that we want to select the variables according to a certain order:
a feature can be selected only if all its ancestors in $\Tc$ are already selected.
This hierarchical rule can be shown to lead to the family of groups displayed on Figure~\ref{intro:fig:treegroups}.
\begin{figure}[hbtp!]
   \centering
   \includegraphics[width=\textwidth]{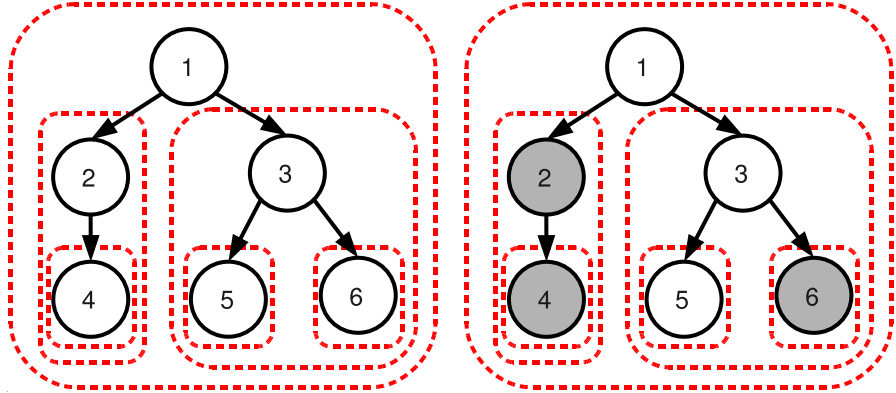}
   \caption{Left: example of a tree-structured set of groups $\Gc$ (dashed contours in red), corresponding to a tree $\mathcal{T}$ with $p=6$ nodes represented by black circles.
   Right: example of a sparsity pattern induced by the tree-structured norm corresponding to $\Gc$; the groups $\{2,4\},\{4\}$ and $\{6\}$ are set to zero, so that the corresponding nodes (in gray) that form subtrees of $\mathcal{T}$ are removed.
   The remaining nonzero variables $\{1,3,5\}$ form a rooted and connected subtree of $\mathcal{T}$.
   This sparsity pattern obeys the following equivalent rules: (i) if a node is selected, the same goes for all its ancestors; (ii) if a node is not selected, then its descendant are not selected.
}\label{intro:fig:treegroups}
\end{figure}

This resulting penalty was first used in~\cite{zhao}; since then, 
this group structure has led to numerous applications, for instance, 
 wavelet-based denoising~\cite{baraniuk,huang,Jenatton2010b,zhao},
 hierarchical dictionary learning for both topic modeling and image restoration~\cite{Jenatton2010a,Jenatton2010b},
 log-linear models for the selection of potential orders of interaction in a probabilistic graphical model~\cite{schmidt2010convex}, 
 bioinformatics, to exploit the tree structure of gene networks for multi-task regression~\cite{kim3},
 and multi-scale mining of fMRI data for the prediction of some cognitive task~\cite{Jenatton2011}.
 More recently, this hierarchical penalty was proved to be efficient for template selection in natural language processing~\cite{Martins2011}.\\

\item[-]\textbf{Extensions:}
The possible choices for the sets of groups $\Gc$ are not limited to the aforementioned examples.
More complicated topologies can be considered, for instance, three-dimensional spaces discretized in cubes or spherical volumes discretized in slices;
for instance, see~\cite{Varoquaux2010a} for an application to neuroimaging that pursues this idea. Moreover, directed acyclic graphs that extends the trees presented in Figure~\ref{intro:fig:treegroups}
have notably proven to be useful in the context of hierarchical variable selection~\cite{hkl,schmidt2010convex,zhao},

\end{itemize}

\paragraph{Norms for overlapping groups: a latent variable formulation.}
The family of norms defined in Eq.~(\ref{eq:def_omega_group}) is adapted to \emph{intersection-closed}
sets of nonzero patterns. However, some applications exhibit structures that can be more naturally modelled by \emph{union-closed} families of supports.
This idea was developed in~\cite{Jacob2009,Obozinski2011Group} where, given a set of groups $\Gc$, the following \emph{latent group Lasso} norm was proposed:
\begin{equation*}
\Omega_{\text{union}}(\vw) \defin \min_{ \vv \in \R^{p\times |\Gc|}  } \sum_{ g \in \Gc } d_g \| \vv^g \|_q\quad\text{s.t.}\ 
\begin{cases}
&\!\!\!\sum_{ g\in\Gc }\vv^g = \vw,\\
&\!\!\!\forall g\in\Gc,\ \vv^g_j = 0\ \text{if}\ j \notin g.
\end{cases}
\end{equation*}
The idea is to introduce latent parameter vectors $\vv^g$ constrained each to be supported on the corresponding group $g$, which should explain $\vw$ linearly and which are themselves regularized by a usual $\ell_1/\ell_q$-norm.
$\Omega_{\text{union}}$ reduces to the usual $\ell_1/\ell_q$ norm when groups are disjoint and provides therefore a different generalization of the latter to the case of overlapping groups than the norm considered in the previous paragraphs.
In fact, it is easy to see that solving Eq.~(\ref{eq:formulation}) with the norm $\Omega_{\text{union}}$ is equivalent to solving
\begin{equation}
\label{eq:expanded}
\min_{(\vv^g \in \R^{|g|})_g \in \Gc} \frac{1}{n} \sum_{i=1}^n \ell \big ( y^{(i)}, \sum_{g \in \Gc} {\vv^g_g}^\top \vx_g^{(i)}\big ) + \lambda \, \sum_{g \in \Gc} d_g \|\vv^g\|_q 
\end{equation}
and setting $\vw=\sum_{g \in \Gc}\vv^g$.
This last equation shows that using the norm $\Omega_{\text{union}}$ can be interpreted as implicitly duplicating the variables belonging to several groups and regularizing with a weighted $\ell_1/\ell_q$ norm for disjoint groups in the expanded space.
It should be noted that a careful choice of the weights is much more important in the situation of overlapping groups than in the case of disjoint groups, as it influences possible sparsity patterns~\cite{Obozinski2011Group}.

This latent variable formulation pushes some of the vectors $\vv^g$ to zero while keeping others with no zero components, hence leading to a vector $\vw$ with a support which is in general the union of the selected groups. 
Interestingly, it can be seen as a convex relaxation of a non-convex penalty encouraging similar sparsity patterns which was introduced by~\cite{huang}. Moreover, this norm can also be interpreted as a particular case of the family of \emph{atomic norms}, which were recently introduced by \cite{Chandrasekaran2010Convex}. 

\textit{Graph Lasso.} One type of a priori knowledge commonly encountered takes the form of graph defined on the set of input variables, which is such that connected variables are more likely to be simultaneously relevant or irrelevant; this type of prior is common in genomics where regulation, co-expression or interaction networks between genes (or their expression level) used as predictors are often available. To favor the selection of neighbors of a selected variable, it is possible to consider the edges of the graph
as groups in the previous formulation (see \cite{Jacob2009,Rao2011}).

\textit{Patterns consisting of a small number of intervals.} A quite similar situation occurs, when one knows a priori---typically for variables forming sequences (times series, strings, polymers)---that the support should consist of a small number of connected subsequences. In that case, one can consider the sets of variables forming connected subsequences (or connected subsequences of length at most $k$) as the overlapping groups.

\paragraph{Multiple kernel learning.}  For most of the sparsity-inducing terms described in this paper, 
we may replace real variables and their absolute values by pre-defined groups of variables with their Euclidean norms (we have already seen such examples with $\ell_1/\ell_2$-norms), or more generally, by members of reproducing kernel Hilbert spaces. As shown in Section~\ref{sec:mkl}, 
most of the tools that we present in this paper are applicable to this case as well, 
through appropriate modifications and borrowing of tools from kernel methods.
These tools have applications in particular in multiple kernel learning. 
Note that this extension requires tools from convex analysis presented in Section~\ref{sec:opt_tools}.

\paragraph{Trace norm.}
In learning problems on matrices, such as matrix completion, the rank plays a similar role to the cardinality of the support for vectors. Indeed, the rank of a matrix $\mathbf{M}$ may be seen as the number of non-zero singular values of $\mathbf{M}$. The rank of $\mathbf{M}$ however is not a continuous function of $\mathbf{M}$, and, following the convex relaxation of the $\ell_0$-pseudo-norm into the $\ell_1$-norm, we may relax the rank of $\mathbf{M}$ into the sum of its singular values, which happens to be a norm, and is often referred to as the trace norm or nuclear norm of $\mathbf{M}$, and which we denote by $\| \mathbf{M} \|_\ast$. As shown in this paper, many of the tools designed for the $\ell_1$-norm may be extended to the trace norm.
Using the trace norm as a convex surrogate for rank has many applications in control theory~\cite{fazel}, matrix completion~\cite{jake,Srebro2005Maximum}, multi-task learning~\cite{pontil}, or multi-label classification~\cite{srebro-mc}, where low-rank priors are adapted.

\paragraph{Sparsity-inducing properties: a geometrical intuition.}
Although we consider in Eq.~(\ref{eq:formulation}) a regularized formulation, as already described in Section~\ref{sec:losses}, we could equivalently focus on a \emph{constrained} problem, that is,
\begin{equation}\label{intro:eq:minf_c}
\min_{\vw \in \R^p} f(\vw)\quad \text{such that}\quad  \Omega(\vw) \leq \mu, 
\end{equation}
for some $\mu \in \R_+$. 
The set of solutions of Eq.~(\ref{intro:eq:minf_c}) parameterized by~$\mu$ is the same as that of Eq.~(\ref{eq:formulation}), 
as described by some value of $\lambda_\mu$ depending on $\mu$~(e.g., see Section 3.2 in \cite{borwein}).
At optimality, the gradient of $f$ evaluated at any solution $\hat{\vw}$ of~(\ref{intro:eq:minf_c}) is known to belong to the normal cone of $\Bc=\{\vw \in \R^p;\ \Omega(\vw) \leq \mu\}$
at $\hat{\vw}$~\cite{borwein}. In other words, 
for sufficiently small values of $\mu$, i.e., so that the constraint is active, 
the level set of $f$ for the value $f(\hat{\vw})$ is tangent to $\Bc$.

As a consequence, the geometry of the ball $\Bc$ is directly related to the properties of the solutions $\hat{\vw}$.
If $\Omega$ is taken to be the $\ell_2$-norm, then the resulting ball $\Bc$ is the standard, isotropic, ``round'' ball that does not favor any specific direction of the space.  
On the other hand, when~$\Omega$ is the $\ell_1$-norm, $\Bc$ corresponds to a diamond-shaped pattern in two dimensions, and to a pyramid in three dimensions.
In particular, $\Bc$ is anisotropic and exhibits some singular points due to the extra non-smoothness of $\Omega$. 
Moreover, these singular points are located along the axis of $\R^p$, so that if the level set of $f$ happens to be tangent at one of those points, sparse solutions are obtained.
We display in Figure~\ref{intro:fig:balls} the balls $\Bc$ for the $\ell_1$-, $\ell_2$-, and two different grouped $\ell_1/\ell_2$-norms.
\begin{figure}[hbtp!]
 \centering
   \subfloat[$\ell_2$-norm ball.]{\includegraphics[width=0.37\linewidth]{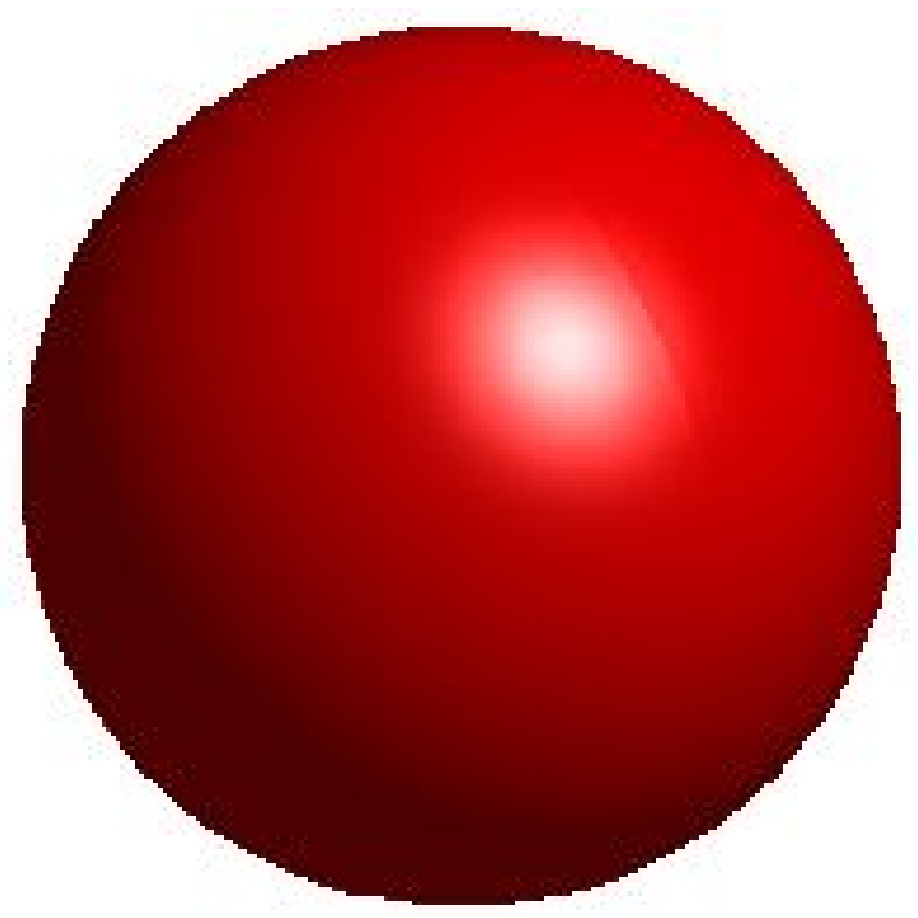}} \hfill
   \subfloat[$\ell_1$-norm ball.]{\includegraphics[width=0.37\linewidth]{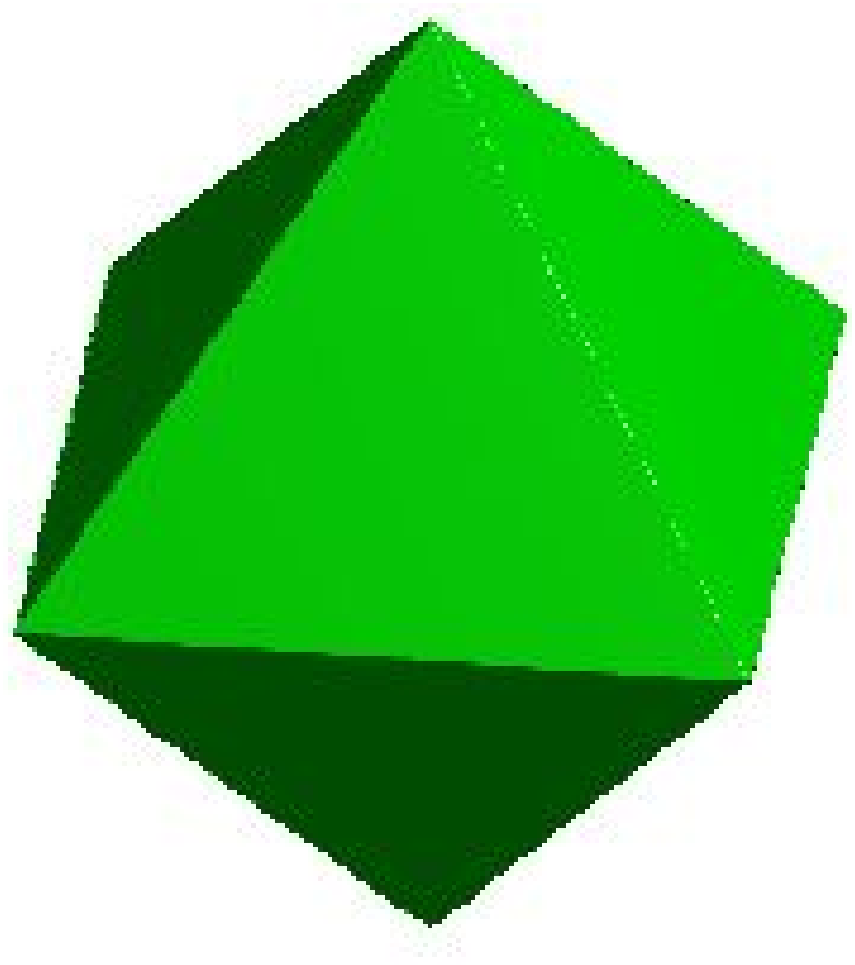}} \\
   \subfloat[$\ell_1/\ell_2$-norm ball: $\Omega(\vw)=\|\vw_{\{1,2\}}\|_2+|\vw_3|$.]
   {\includegraphics[width=0.39\linewidth]{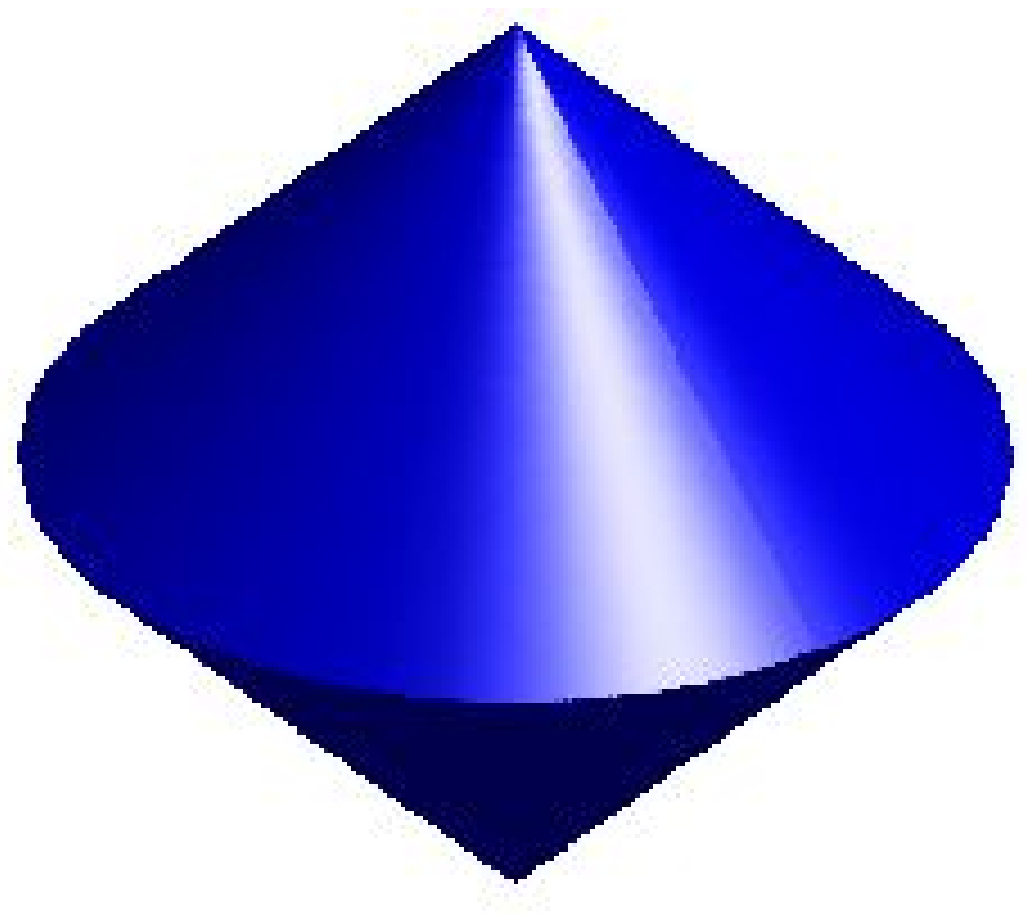}\label{intro:subfig:groupl1l2ball}} \hfill
   \subfloat[$\ell_1/\ell_2$-norm ball:\newline$\Omega(\vw)=\|\vw\|_2+|\vw_1|+|\vw_2|$.]
{\includegraphics[width=0.35\linewidth]{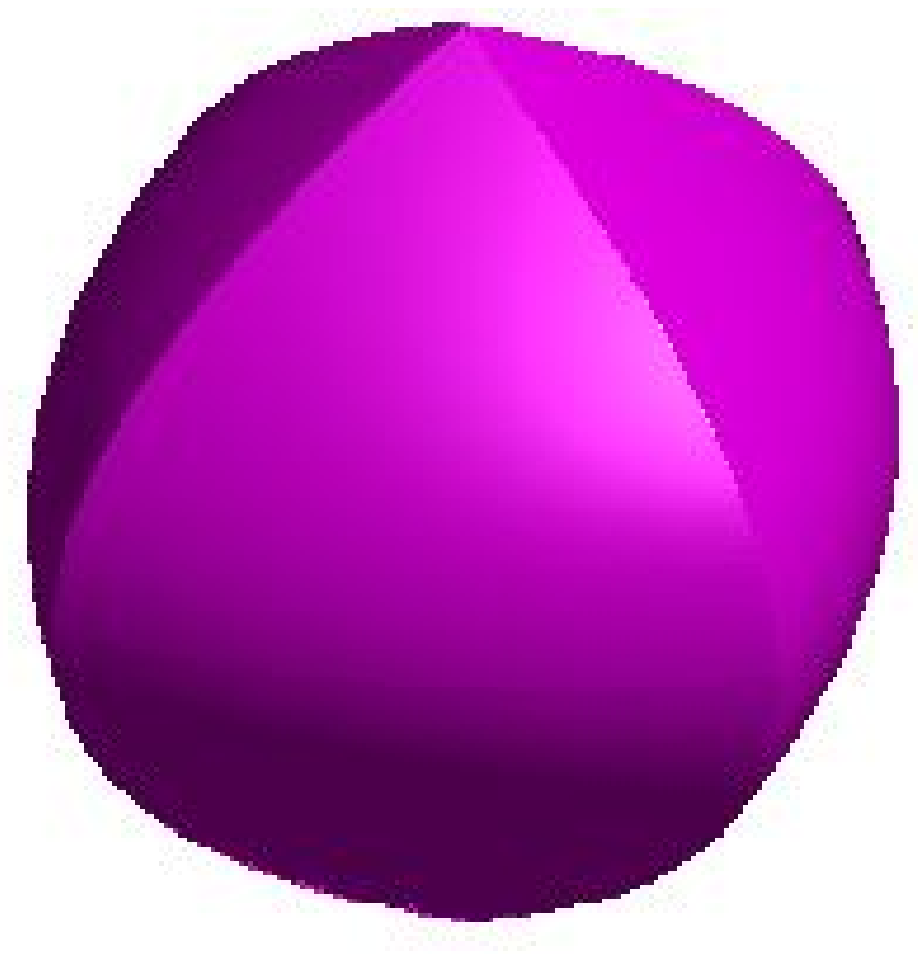}\label{intro:subfig:overl1l2ball}}\\
  \subfloat[$\Omega_{\text{union} }$ ball for $\Gc=\big \{\{1,3\},\{2,3\} \big \}$.]
   {\hspace*{.5cm}\includegraphics[width=0.32\linewidth]{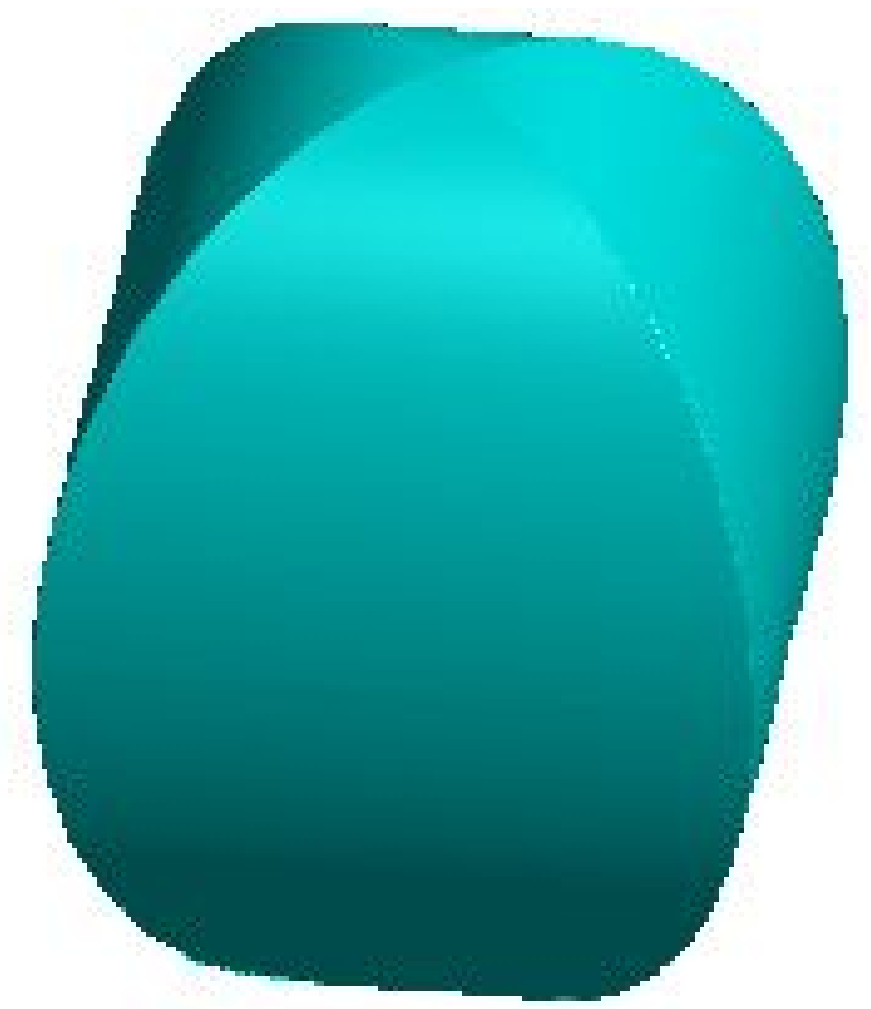}
   \label{subfig:lglball2}\hspace*{.5cm}} \hfill
   \subfloat[$\Omega_{\text{union} }$ ball for $\Gc=\big \{\{1,3\},\{2,3\},\{1,2\} \big \}$. ]{\hspace*{.1cm}\includegraphics[width=0.39\linewidth]{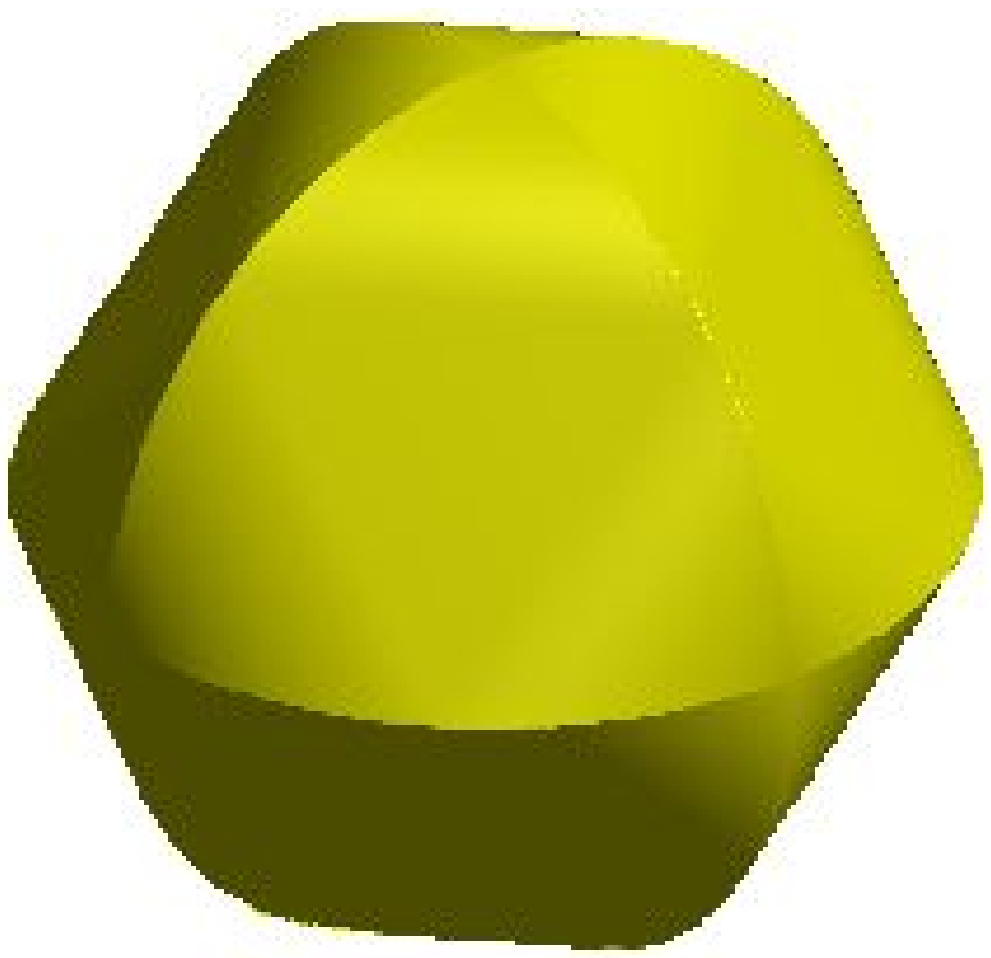}\label{subfig:lglball3}\hspace*{.1cm}}
\caption{Comparison between different balls of sparsity-inducing norms in three dimensions. 
The singular points appearing on these balls describe the sparsity-inducing behavior of the underlying norms $\Omega$.}\label{intro:fig:balls}
\end{figure}

\paragraph{Extensions.}
The design of sparsity-inducing norms is an active field of research and similar tools to the ones we present here can be derived for other norms. As shown in Section~\ref{sec:proximal_methods}, computing the proximal operator readily leads to efficient algorithms, and for the extensions we present below, these operators can be efficiently computed. 

In order to impose prior knowledge on the support of predictor, the norms based on overlapping $\ell_1/\ell_\infty$-norms can be shown to be convex relaxations of submodular functions of the support, and further ties can be made between convex optimization and combinatorial optimization (see~\cite{bach2010structured} for more details).
Moreover, similar developments may be carried through for norms which try to enforce that the predictors have many equal components and that the resulting clusters have specific shapes, e.g., contiguous in a pre-defined order, see some examples in Section~\ref{sec:proximal_methods}, and, e.g.,~\cite{shapinglevelsets,chambolle2005total,mairal2010,tibshirani2005sparsity,vert2010fast} and references therein.

\section{Optimization Tools}\label{sec:opt_tools}
\label{sec:intro_c}
The tools used in this paper are relatively basic and should be
accessible to a broad audience. Most of them can be found in classical books
on convex optimization~\cite{bertsekas,borwein,boyd.convex,nocedal}, but
for self-containedness, we present here a few of them related to non-smooth unconstrained optimization. 
In particular, these tools allow the derivation of rigorous approximate optimality conditions based on duality gaps (instead of relying on weak stopping criteria based on small changes or low-norm gradients).

\paragraph{Subgradients.}
Given a convex function $g: \R^p \to \R$ and a vector~$\vw$ in~$\R^p$, let us define the \emph{subdifferential} of $g$ at $\vw$ as
\begin{displaymath}
\partial g(\vw) \defin \{ \vz \in \R^p ~|~  g(\vw) + \bmt{\vz}(\vw'-\vw) \leq
g(\vw') ~\text{for all vectors}~ \vw' \in \R^p\}.
\end{displaymath}
The elements of $\partial g(\vw)$ are called the \emph{subgradients} of $g$ at
$\vw$. Note that all convex functions defined on $\mathbb{R}^p$ have non-empty subdifferentials at every point. This definition admits a clear geometric interpretation: any subgradient
$\vz$ in $\partial g(\vw)$ defines an affine function $\vw' \mapsto g(\vw) +
\bmt{\vz}(\vw'-\vw)$ which is tangent to the graph of the function~$g$ (because of the convexity of $g$, it is a lower-bounding tangent). Moreover, there is a
bijection (one-to-one correspondence) between such ``tangent affine functions''
and the subgradients, as illustrated in Figure~\ref{fig:subgrad}.

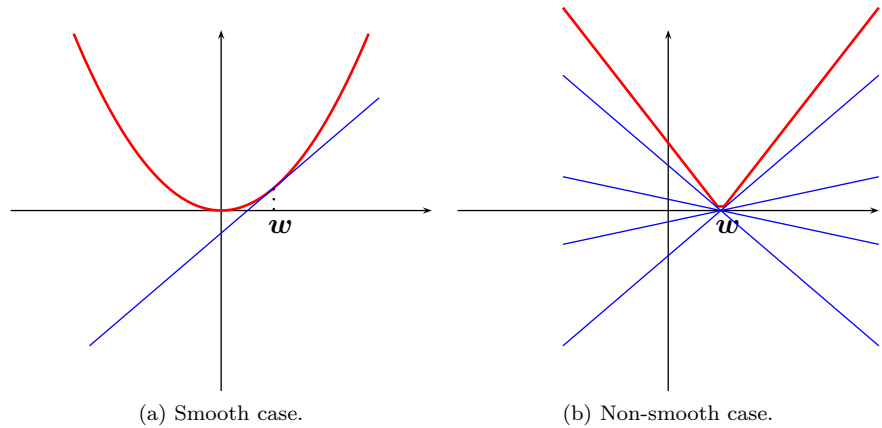
\begin{figure}[hbtp]
\centering
\subfloat[Smooth case.]{
   \psset{yunit=1.2,xunit=1.4}
   \begin{pspicture}(-2,-2)(2,2)
      \psline[linewidth=0.5pt]{->}(-2,0)(2,0)
      \psline[linewidth=0.5pt]{->}(0,-2)(0,2)
       \psplot[linecolor=red, linewidth=1pt]{-1.4}{1.4}{
          x x mul
       }
    \put(0.5,-0.3){ $\vw$ }
    \psline[linecolor=black,linewidth=1pt,linestyle=dotted]{-}(0.5,0)(0.5,0.25)
     \psplot[linecolor=blue, linewidth=0.5pt]{-1.25}{1.5}{
        -0.25 x 1 mul add
     }
   \end{pspicture}
} \hfill
\subfloat[Non-smooth case.]{
   \psset{yunit=1.2,xunit=1.4}
   \begin{pspicture}(-2,-2)(2,2)
      \psline[linewidth=0.5pt]{->}(-2,0)(2,0)
      \psline[linewidth=0.5pt]{->}(0,-2)(0,2)
       \psplot[linecolor=red, linewidth=1pt]{-1}{2}{
          1.5 -0.5 x add abs mul
       }
    \put(0.5,-0.3){ $\vw$ }
   \psplot[linecolor=blue, linewidth=0.5pt]{-1}{2}{
        -0.5 x add
     }
   \psplot[linecolor=blue, linewidth=0.5pt]{-1}{2}{
        -1.0 -0.5 x add mul
     }
   \psplot[linecolor=blue, linewidth=0.5pt]{-1}{2}{
        -0.25 -0.5 x add mul
     }
   \psplot[linecolor=blue, linewidth=0.5pt]{-1}{2}{
        0.25 -0.5 x add mul
     }
   \end{pspicture}
}

\caption[Gradients and subgradients for smooth and non-smooth functions.]{Red curves represent the graph of a smooth (left) and a non-smooth (right) function~$f$. Blue affine functions represent subgradients of the function~$f$ at a point~$\vw$.}
\label{fig:subgrad}
\end{figure}
Subdifferentials are useful
for studying nonsmooth optimization problems because of  the following proposition (whose proof is straightforward from the definition):

\begin{proposition}[Subgradients at Optimality]~\label{prop:opt}\\
For any convex function $g: \R^p \to \R$, a point $\vw$ in $\R^p$ is a global
minimum of $g$ if and only if the condition $0 \in \partial g(\vw)$ holds.
\end{proposition}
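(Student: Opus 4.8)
The plan is to prove the equivalence directly from the definition of the subdifferential given above, with no auxiliary machinery; the statement is essentially a restatement of that definition specialized to the subgradient $0$.

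First I would unwind the membership $0 \in \partial g(\vw)$. By definition, $\partial g(\vw)$ consists of all $\vz \in \R^p$ satisfying $g(\vw) + \bmt{\vz}(\vw'-\vw) \leq g(\vw')$ for every $\vw' \in \R^p$. Taking $\vz = 0$ makes the linear term $\bmt{\vz}(\vw'-\vw)$ vanish identically, so the defining inequality collapses to $g(\vw) \leq g(\vw')$ for all $\vw' \in \R^p$. Thus $0 \in \partial g(\vw)$ is, by construction, precisely the assertion that $g(\vw) \leq g(\vw')$ holds for every $\vw'$.

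Next I would observe that this last inequality is, verbatim, the statement that $\vw$ is a global minimizer of $g$. Since the two conditions have been shown to be literally identical, both implications of the ``if and only if'' follow simultaneously, without needing to argue the two directions separately: $0 \in \partial g(\vw)$ holds exactly when $g(\vw) \leq g(\vw')$ for all $\vw'$, which is exactly the global-minimum condition.

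There is essentially no obstacle in this argument---the only point requiring care is reading off the definition correctly and noticing that the inner-product term drops out for $\vz = 0$. It is worth remarking that convexity of $g$, while it guarantees (as noted in the text) that $\partial g(\vw)$ is non-empty at every point, plays no role in the equivalence itself: the characterization is a direct consequence of the form of the subgradient inequality alone.
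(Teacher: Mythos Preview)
Your proof is correct and matches exactly what the paper intends: it explicitly says the proposition's ``proof is straightforward from the definition'' and gives no further argument. Your unpacking of the definition with $\vz=0$ is precisely that straightforward proof.
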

Note that the concept of subdifferential is mainly useful for nonsmooth functions.
If~$g$ is differentiable at $\vw$, the set~$\partial g(\vw)$ is indeed the singleton
$\{ \nabla g(\vw) \}$, where $\nabla g(\vw)$ is the gradient of $g$ at $\vw$, and the condition $0 \in \partial g(\vw)$ reduces to the
classical first-order optimality condition $\nabla g(\vw)=0$.
As a simple example, let us consider the
following optimization problem
\begin{displaymath}
   \min_{w \in \R} \frac{1}{2}(x-w)^2+\lambda|w|.
\end{displaymath}
Applying the previous proposition and
noting that the subdifferential $\partial|\cdot|$ is $\{+1\}$ for $w>0$, $\{-1\}$
for $w<0$ and $[-1,1]$ for $w=0$, it is easy to show that the unique solution admits
a closed form called the \emph{soft-thresholding} operator, following a terminology introduced
in~\cite{donoho}; it can be written
\begin{equation}
\label{eq:st_op}
   w^\star = \begin{cases}
      0 & ~\text{if}~|x| \leq \lambda \\
      (1-\frac{\lambda}{|x|})x &~\text{otherwise},
   \end{cases}
\end{equation}
or equivalently~$w^\star = \text{sign}(x)(|x|-\lambda)_+$, where $\text{sign}(x) $ is equal to  $1$ if $x>0$, $-1$ if $x<0$ and $0$ if $x=0$.   This operator
is a core component of many optimization techniques for sparse estimation,
as we shall see later. Its counterpart for non-convex optimization problems is the hard-thresholding operator. Both of them are presented in Figure~\ref{fig:soft}.
Note that similar developments could be carried through using directional derivatives instead of subgradients~(see, e.g., \cite{borwein}).

\begin{figure}[hbtp]
\centering
\psset{yunit=1.1,xunit=1.1}

\hspace{0.1cm}

\subfloat[soft-thresholding operator, \newline $w^\star=\text{sign}(x)(|x|-\lambda)_+$, \newline $\min_w \frac{1}{2}(x-w)^2+\lambda|w|$.]{ 
   \psset{yunit=1.2,xunit=1.2}
   \begin{pspicture}(-2,-2)(2,2)
   \psline[linewidth=0.5pt]{->}(-2,0)(2,0)
\put(2,-0.25){$x$}
\put(0.2,2){$w^\star$}
\psline[linewidth=0.5pt]{->}(0,-2)(0,2)
\psplot[linecolor=black, linestyle=dotted, linewidth=1pt]{-2}{2}{
      x 
}
\psplot[linecolor=red, linewidth=1pt]{-2}{-0.5}{
   x 0.5 add
}
\psline[linecolor=red, linewidth=1pt]{-}(-0.5,0)(0.5,0)
\psplot[linecolor=red, linewidth=1pt]{0.5}{2.0}{
   x -0.5 add
}
\put(0.5,-0.35){$\lambda$}
\put(-0.95,0.15){$-\lambda$}
\end{pspicture} 
} \hfill
\subfloat[hard-thresholding operator \newline $w^\star={\mathbf 1}_{|x|\geq \sqrt{2\lambda}}x$ \newline $\min_w \frac{1}{2}(x-w)^2+\lambda {\mathbf 1}_{|w|>0}$.]{ 
   \psset{yunit=1.2,xunit=1.2}
   \begin{pspicture}(-2,-2)(2,2)
   \psline[linewidth=0.5pt]{->}(-2,0)(2,0)
\put(2,-0.25){$x$}
\put(0.2,2){$w^\star$}
\psline[linewidth=0.5pt]{->}(0,-2)(0,2)
\psplot[linecolor=black, linestyle=dotted, linewidth=1pt]{-2}{2}{
      x 
}
\psplot[linecolor=red, linewidth=1pt]{-2}{-0.5}{
   x 
}
\put(0.4,-0.35){\small{$\sqrt{2\lambda}$}}
\put(-1.15,0.15){\small{$-\sqrt{2\lambda}$}}
\psline[linecolor=red, linewidth=1pt]{-}(-0.5,-0.5)(-0.5,0)
\psline[linecolor=red, linewidth=1pt]{-}(-0.5,0)(0.5,0)
\psline[linecolor=red, linewidth=1pt]{-}(0.5,0)(0.5,0.5)
\psplot[linecolor=red, linewidth=1pt]{0.5}{2.0}{
   x
}
\end{pspicture} 
} 
\caption{Soft- and hard-thresholding operators.} \label{fig:soft}
\end{figure}
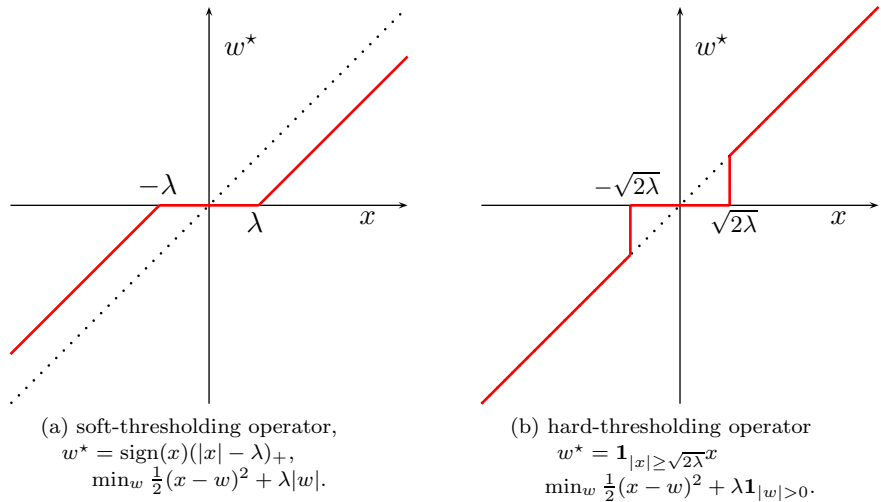

\paragraph{Dual norm and optimality conditions.} 
The next concept we introduce is the dual norm, which is important to study
sparsity-inducing regularizations~\cite{hkl,jenatton,negahban2009unified}. It
notably arises in the analysis of estimation bounds
\cite{negahban2009unified}, and in the design of working-set strategies as
will be shown in Section~\ref{sec:active_sets}.  The dual norm $\Omega^*$ of
the norm~$\Omega$ is defined for any vector $\vz$ in $\R^p$ by
\begin{equation}\label{eq:def_dualnorm}
   \Omega^*(\vz)\defin\max_{\vw \in \R^p}\: \bmt{\vz}\vw \: ~\text{such that}~ \: \Omega(\vw)\leq 1.
\end{equation}
Moreover, the dual norm of $\Omega^*$ is $\Omega$ itself, and as a consequence, the formula above holds also if the roles of $\Omega$ and $\Omega^*$ are exchanged. It is easy to show that in the case of an $\ell_q$-norm, $q \in [1; + \infty]$, the dual norm is the $\ell_{q'}$-norm, with $q'$ in $[1;
+\infty]$ such that $\frac{1}{q}+\frac{1}{q'}=1$. In
particular, the $\ell_1$- and $\ell_\infty$-norms are dual to each other, and
the $\ell_2$-norm is self-dual (dual to itself).

The dual norm plays a direct role in computing optimality conditions of sparse
regularized problems. By applying Proposition~\ref{prop:opt} to
Eq.~(\ref{eq:formulation}), we obtain the following proposition: 
\begin{proposition}[Optimality Conditions for Eq.~(\ref{eq:formulation})]\label{prop:optimality}
Let us consider problem~(\ref{eq:formulation}) where~$\Omega$ is a norm on~$\R^p$.
A vector $\vw$ in~$\R^p$ is optimal if and only if
$-\frac{1}{\lambda}\nabla f(\vw) \in \partial\Omega(\vw)$ with
\begin{equation}
\partial\Omega(\vw)=
\begin{cases}
\{\vz \in \R^p;\  \Omega^*(\vz)\leq 1 \}\ \text{if}\ \vw=0,\\
\{\vz \in \R^p;\  \Omega^*(\vz)= 1\ \text{and}\ \bmt{\vz}\vw=\Omega(\vw)\}\ \text{otherwise}.
\end{cases}\label{eq:opt}
\end{equation}
\end{proposition}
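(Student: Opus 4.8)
The plan is to split the statement into two independent pieces: first, to reduce the optimality of Eq.~(\ref{eq:formulation}) to a membership condition involving $\partial\Omega$, and second, to compute $\partial\Omega(\vw)$ explicitly for a norm.

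For the first piece, I would set $g(\vw)=f(\vw)+\lambda\Omega(\vw)$ and invoke Proposition~\ref{prop:opt}: $\vw$ minimizes $g$ over $\R^p$ if and only if $0\in\partial g(\vw)$. Since $f$ is convex and differentiable (hence finite-valued and continuous everywhere) and $\Omega$ is a finite convex function, the subdifferential sum rule applies without qualification, giving $\partial g(\vw)=\nabla f(\vw)+\lambda\,\partial\Omega(\vw)$. Therefore $0\in\partial g(\vw)$ is equivalent to $-\nabla f(\vw)\in\lambda\,\partial\Omega(\vw)$, i.e., to $-\tfrac{1}{\lambda}\nabla f(\vw)\in\partial\Omega(\vw)$, using $\lambda>0$. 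This establishes the first assertion.

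For the second piece, the key lemma I would prove is $\partial\Omega(\vw)=\{\vz\in\R^p : \Omega^*(\vz)\le 1 \text{ and } \bmt{\vz}\vw=\Omega(\vw)\}$, from which the two displayed cases follow. To show the inclusion $\subseteq$, I would start from the subgradient inequality $\Omega(\vw')\ge\Omega(\vw)+\bmt{\vz}(\vw'-\vw)$ valid for all $\vw'$, and test it at the two special points $\vw'=0$ and $\vw'=2\vw$; using positive homogeneity $\Omega(2\vw)=2\Omega(\vw)$ and $\Omega(0)=0$, the instance $\vw'=2\vw$ yields $\bmt{\vz}\vw\le\Omega(\vw)$ while $\vw'=0$ yields $\bmt{\vz}\vw\ge\Omega(\vw)$, so $\bmt{\vz}\vw=\Omega(\vw)$. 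Substituting this equality back into the subgradient inequality collapses it to $\bmt{\vz}\vw'\le\Omega(\vw')$ for all $\vw'$, which is exactly $\Omega^*(\vz)\le 1$ by the definition in Eq.~(\ref{eq:def_dualnorm}). For the reverse inclusion $\supseteq$, I would use the defining dual-norm inequality $\bmt{\vz}\vw'\le\Omega^*(\vz)\,\Omega(\vw')\le\Omega(\vw')$ and then add and subtract $\bmt{\vz}\vw=\Omega(\vw)$ to recover the subgradient inequality directly.

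Finally, I would reconcile the lemma with the two displayed cases. When $\vw=0$ the constraint $\bmt{\vz}\vw=\Omega(\vw)$ reads $0=0$ and is vacuous, leaving exactly $\{\vz : \Omega^*(\vz)\le 1\}$. When $\vw\ne 0$, so that $\Omega(\vw)>0$, the inequality $\Omega(\vw)=\bmt{\vz}\vw\le\Omega^*(\vz)\,\Omega(\vw)$ forces $\Omega^*(\vz)\ge 1$, which together with $\Omega^*(\vz)\le 1$ upgrades the bound to the equality $\Omega^*(\vz)=1$. The routine parts are the test-point substitutions and the additions and subtractions; the steps that require the most care are justifying the sum rule $\partial(f+\lambda\Omega)=\nabla f+\lambda\,\partial\Omega$ (painless here precisely because $f$ is everywhere differentiable) and, in the nonzero case, recognizing that the strict upgrade to $\Omega^*(\vz)=1$ is driven solely by $\Omega(\vw)>0$.
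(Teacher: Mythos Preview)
Your proof is correct. The first piece (reducing optimality to $-\tfrac{1}{\lambda}\nabla f(\vw)\in\partial\Omega(\vw)$ via Proposition~\ref{prop:opt} and the sum rule) is exactly what the paper has in mind, though it leaves this implicit. The second piece---computing $\partial\Omega(\vw)$---is where your argument genuinely diverges from the paper's. You work directly from the subgradient inequality, exploiting positive homogeneity through the test points $\vw'=0$ and $\vw'=2\vw$ to pin down $\bmt{\vz}\vw=\Omega(\vw)$, and then read off $\Omega^*(\vz)\le 1$ from the collapsed inequality. The paper instead defers this computation to Remark~\ref{rem:proof_subnorm}, where it first establishes that the Fenchel conjugate of $\Omega$ is the indicator $\iota_{\Omega^*}$ of the dual unit ball (Proposition~\ref{prop:fenchel_dualnorm}), and then invokes the equality case of the Fenchel--Young inequality (Proposition~\ref{prop:fenchel_young}) to obtain $\vz\in\partial\Omega(\vw)\Leftrightarrow \bmt{\vz}\vw=\Omega(\vw)+\iota_{\Omega^*}(\vz)$. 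Your route is more elementary and fully self-contained, needing nothing beyond the definitions of subgradient and dual norm; the paper's route is less hands-on but ties the result into the Fenchel-duality machinery that it develops and reuses later (e.g., for duality gaps and proximal operators). Both arrive at the same intermediate characterization $\partial\Omega(\vw)=\{\vz:\Omega^*(\vz)\le 1,\ \bmt{\vz}\vw=\Omega(\vw)\}$ before splitting into the two displayed cases.
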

Computing the subdifferential of a norm is a classical course
exercise~\cite{borwein} and its proof will be presented in the next section, in
Remark~\ref{rem:proof_subnorm}. As a consequence, the vector~$\mathbf{0}$
is solution if and only if
$\Omega^*\big(\nabla f(\mathbf{0})\big) \leq \lambda$. 
Note that this shows that for all $\lambda$ larger than $\Omega^*\big(\nabla f(\mathbf{0})\big)$, 
$\vw=\mathbf{0}$ is a solution of the regularized optimization problem (hence this value is the start of the non-trivial regularization path).

These general optimality conditions can be specialized to the Lasso
problem~\cite{tibshirani}, also known as basis pursuit~\cite{chen}:
\begin{equation}
\min_{\vw \in \R^p} \frac{1}{2n}\|\vy-\mx \vw \|_2^2 + \lambda \|\vw\|_1,  \label{eq:lasso}
\end{equation}
where $\vy$ is in $\R^n$, and $\mx$ is a design matrix in $\R^{n \times p}$.
With Eq.~(\ref{eq:opt}) in hand, we can now derive necessary and sufficient optimality conditions:
\begin{proposition}[Optimality Conditions for the Lasso]\label{prop:optimality_lasso}~\\
A vector~$\vw$ is a solution of the Lasso problem~(\ref{eq:lasso}) if and only if
\begin{equation}
\forall j =1,\ldots,p,~
\left\{
\begin{array}{ccll}
|\bmt{\mx}_j(\vy-\mx\vw)| &\leq& n\lambda & \text{if}\ \vw_j= 0 \\
\bmt{\mx}_j(\vy-\mx\vw) &=& n\lambda \sgn(\vw_j) & \text{if}\ \vw_j \neq 0, \\
\end{array}
\right.
 \label{eq:optlasso}
\end{equation}
where $\mx_j$ denotes the $j$-th column of $\mx$, and $\vw_j$ the $j$-th entry
of $\vw$.
\end{proposition}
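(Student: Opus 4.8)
The plan is to specialize the general optimality conditions of \prop{prop:optimality} to the Lasso, for which $f(\vw)=\frac{1}{2n}\|\vy-\mx\vw\|_2^2$ and $\Omega=\|\cdot\|_1$. First I would compute the gradient of the quadratic data-fitting term, $\nabla f(\vw)=-\frac{1}{n}\bmt{\mx}(\vy-\mx\vw)$, so that the stationarity condition $-\frac{1}{\lambda}\nabla f(\vw)\in\partial\Omega(\vw)$ reads $\frac{1}{n\lambda}\bmt{\mx}(\vy-\mx\vw)\in\partial\|\vw\|_1$. This reduces the whole proof to understanding the subdifferential of the $\ell_1$-norm and then reading the inclusion coordinate by coordinate.

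Next I would describe $\partial\|\cdot\|_1$. Because $\|\vw\|_1=\sum_{j=1}^p|\vw_j|$ is separable, its subdifferential factorizes coordinatewise, $\partial\|\vw\|_1=\partial|\vw_1|\times\cdots\times\partial|\vw_p|$, and each factor is the elementary subdifferential of the absolute value recalled in \sec{sec:opt_tools}: it equals $\{\sgn(\vw_j)\}$ when $\vw_j\neq 0$ and $[-1,1]$ when $\vw_j=0$. Equivalently, one could feed $\Omega=\|\cdot\|_1$ and its dual $\Omega^*=\|\cdot\|_\infty$ into Eq.~(\ref{eq:opt}) and unpack the constraint $\bmt{\vz}\vw=\|\vw\|_1$ together with $\Omega^*(\vz)=1$ into the same coordinatewise statement; the separable route is the shortest.

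Finally I would combine the two ingredients. Writing $\vz=\frac{1}{n\lambda}\bmt{\mx}(\vy-\mx\vw)$, so that $\vz_j=\frac{1}{n\lambda}\bmt{\mx}_j(\vy-\mx\vw)$, the membership $\vz\in\partial\|\vw\|_1$ is equivalent to requiring $\vz_j=\sgn(\vw_j)$ whenever $\vw_j\neq 0$ and $|\vz_j|\leq 1$ whenever $\vw_j=0$. Multiplying each scalar relation through by $n\lambda$ yields exactly the two cases of Eq.~(\ref{eq:optlasso}), which proves the stated equivalence in both directions since every step above is an equivalence.

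I do not expect a genuine obstacle: the statement is a direct corollary of \prop{prop:optimality} specialized to a smooth quadratic $f$ and a separable norm. The only point requiring a little care is the bookkeeping that turns the single vector inclusion into $p$ scalar conditions---immediate from the separability of the $\ell_1$-norm---together with checking that the $\vw=0$ and $\vw\neq 0$ branches of Eq.~(\ref{eq:opt}) are both captured uniformly by the per-coordinate reading, rather than treating the zero and nonzero patterns of $\vw$ as separate global cases.
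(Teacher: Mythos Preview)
Your proof is correct and follows the same high-level plan as the paper: invoke \prop{prop:optimality}, compute $\nabla f$, and identify $\partial\|\vw\|_1$. The only difference is in how the subdifferential is obtained. The paper works from the dual-norm description of Eq.~(\ref{eq:opt}) verbatim, splitting into the two global cases $\vw=0$ (giving $\|\bmt{\mx}(\vy-\mx\vw)\|_\infty\leq n\lambda$) and $\vw\neq 0$ (giving $\|\bmt{\mx}(\vy-\mx\vw)\|_\infty= n\lambda$ together with $\bmt{\vw}\bmt{\mx}(\vy-\mx\vw)=n\lambda\|\vw\|_1$), and then argues that these two global statements are equivalent to the coordinatewise conditions of Eq.~(\ref{eq:optlasso}). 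You instead exploit separability to write $\partial\|\vw\|_1=\prod_j\partial|\vw_j|$ and read off the conditions directly per coordinate. Your route is shorter and avoids the last ``it is easy to check'' step; the paper's route has the virtue of illustrating how the abstract dual-norm characterization specializes, which is pedagogically in keeping with the surrounding text. You correctly anticipated both options and chose the more economical one.
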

\begin{proof}
We apply Proposition~\ref{prop:optimality}. The
condition~$-\frac{1}{\lambda}\nabla f(\vw) \!\in\!  \partial \|\vw\|_1$ can be
rewritten: $ \bmt{\mx}(\vy-\mx\vw) \! \in  n \lambda \partial \|\vw\|_1$, which is equivalent to:
(i)~if~$\vw=\!0$, $\|\bmt{\mx}(\vy-\mx\vw)\|_\infty \!\leq\! n \lambda$ (using the fact that the~$\ell_\infty$-norm
is dual to the~$\ell_1$-norm); (ii)~if~$\vw \!\neq \!0$, $\|\bmt{\mx}(\vy-\mx\vw)\|_\infty \!=\!  n \lambda$ 
and~$\bmt{\vw}\bmt{\mx}(\vy-\mx\vw) \!=\! n \lambda\|\vw\|_1$. It is then easy to check that these conditions are equivalent to Eq.~(\ref{eq:optlasso}).
\end{proof}
 As we will see in Section~\ref{sec:lars}, it is possible to derive
from these conditions interesting properties of the Lasso, as well as efficient
algorithms for solving~it.  We have presented a useful duality tool for norms.
More generally, there exists a related concept for convex functions, which we
now introduce.

\subsection{Fenchel Conjugate and Duality Gaps}
Let us denote by $f^*$ the Fenchel
conjugate of $f$~\cite{rockafellar97}, defined~by
\begin{displaymath}
f^*(\vz)\defin\sup_{\vw \in \R^p} [\bmt{\vz} \vw  - f(\vw)].
\end{displaymath}
Fenchel conjugates are particularly useful to derive dual problems and duality gaps\footnote{For many of our norms, \emph{conic} duality tools would suffice (see, e.g.,~\cite{boyd.convex}).}. Under mild conditions, the conjugate of the conjugate of a convex function is itself, leading to the following representation of~$f$ as a maximum of affine functions:
\begin{displaymath}
f(\vw) = \sup_{\vz \in \R^p} [\bmt{\vz} \vw  - f^*(\vz)].
\end{displaymath}
In the context of this tutorial, it is notably useful to specify the expression of the conjugate of a norm.
Perhaps surprisingly and misleadingly, the conjugate of a norm is not equal to its dual norm, but
corresponds instead to the indicator function of the unit ball of its dual norm.
More formally, let us introduce  
the indicator function ${\iota}_{\Omega^*}$ such that ${\iota}_{\Omega^*}(\vz)$ is equal to $0$ if $\Omega^*(\vz) \leq 1$ and $+\infty$
otherwise. 
Then, we have the following well-known results, which appears in several text books (e.g., see Example 3.26 in \cite{boyd.convex}):
\begin{proposition}[Fenchel Conjugate of a Norm]\label{prop:fenchel_dualnorm}
Let $\Omega$ be a norm on $\R^p$. The following equality holds for any $\vz \in \R^p$
$$
\sup_{\vw \in \R^p} [\bmt{\vz} \vw  - \Omega(\vw)] = \iota_{\Omega^*}(\vw) = \begin{cases}  0 & \text{if}\ \Omega^*(\vz) \leq 1 \\ +\infty & \text{otherwise}.   \end{cases}
$$
\end{proposition}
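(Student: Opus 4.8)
The plan is to set $S \defin \sup_{\vw \in \R^p}[\bmt{\vz}\vw - \Omega(\vw)]$ and show directly that $S = 0$ when $\Omega^*(\vz) \leq 1$ and $S = +\infty$ otherwise. The only ingredient needed beyond the definition of the dual norm in Eq.~(\ref{eq:def_dualnorm}) is the generalized Cauchy--Schwarz inequality $\bmt{\vz}\vw \leq \Omega^*(\vz)\,\Omega(\vw)$, valid for all $\vw,\vz \in \R^p$. I would establish this first: for $\vw \neq 0$ the vector $\vw/\Omega(\vw)$ lies in the unit ball of $\Omega$, so by Eq.~(\ref{eq:def_dualnorm}) we have $\bmt{\vz}\big(\vw/\Omega(\vw)\big) \leq \Omega^*(\vz)$, i.e.\ $\bmt{\vz}\vw \leq \Omega^*(\vz)\,\Omega(\vw)$, and the case $\vw = 0$ is trivial.

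First I would treat the case $\Omega^*(\vz) \leq 1$. Taking $\vw = 0$ in the supremum gives $S \geq \bmt{\vz}\mathbf{0} - \Omega(\mathbf{0}) = 0$. Conversely, the inequality above yields $\bmt{\vz}\vw \leq \Omega^*(\vz)\,\Omega(\vw) \leq \Omega(\vw)$ for every $\vw$, so that $\bmt{\vz}\vw - \Omega(\vw) \leq 0$ and hence $S \leq 0$. Combining the two bounds gives $S = 0$.

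Next I would handle the case $\Omega^*(\vz) > 1$. Since $\Omega^*(\vz) = \sup_{\Omega(\vw)\leq 1} \bmt{\vz}\vw > 1$, there exists $\vw_0$ with $\Omega(\vw_0) \leq 1$ and $\bmt{\vz}\vw_0 > 1$. Evaluating the objective along the ray $\vw = t\vw_0$ for $t > 0$ gives $\bmt{\vz}(t\vw_0) - \Omega(t\vw_0) = t\big(\bmt{\vz}\vw_0 - \Omega(\vw_0)\big) \geq t\big(\bmt{\vz}\vw_0 - 1\big)$, which tends to $+\infty$ as $t \to +\infty$ because $\bmt{\vz}\vw_0 - 1 > 0$. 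Hence $S = +\infty$.

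There is essentially no hard part here: once the generalized Cauchy--Schwarz inequality is available, both cases are routine, and the ray argument in the second case avoids any appeal to attainment of the supremum defining $\Omega^*$. If one prefers, attainment does hold anyway, since the unit ball $\{\vw \in \R^p;\ \Omega(\vw) \leq 1\}$ is compact in finite dimension and $\vw \mapsto \bmt{\vz}\vw$ is continuous; this alternative lets one take $\vw_0$ with $\bmt{\vz}\vw_0 = \Omega^*(\vz)$ exactly, but it is not required for the conclusion.
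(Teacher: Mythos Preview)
Your proof is correct and follows essentially the same route as the paper's: the generalized Cauchy--Schwarz inequality handles the case $\Omega^*(\vz)\leq 1$, and a ray argument through a point $\vw_0$ in the unit $\Omega$-ball with $\bmt{\vz}\vw_0>1$ handles the case $\Omega^*(\vz)>1$. The only cosmetic difference is that the paper invokes compactness of the unit ball to pick $\vw_0$ with $\bmt{\vz}\vw_0=\Omega^*(\vz)$ exactly, whereas you (as you note) sidestep attainment by taking any $\vw_0$ with $\bmt{\vz}\vw_0>1$; both lead to the same ray estimate.
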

\begin{proof}
On the one hand, assume that the dual norm of $\vz$ is greater than one, that is, $\Omega^*(\vz) > 1$.
According to the definition of the dual norm (see Eq.~(\ref{eq:def_dualnorm})), and since the supremum is taken over the compact set $\{ \vw\in\R^p;\ \Omega(\vw)\leq 1\}$,
there exists a vector $\vw$ in this ball such that $\Omega^*(\vz) = \vz^\top \vw > 1$.
For any scalar $t \geq 0$, consider $\vv=t\vw$ and notice that 
$$
\vz^\top \vv - \Omega(\vv) = t[ \vz^\top\vw - \Omega(\vw) ] \geq t,
$$
which shows that when $\Omega^*(\vz) > 1$, the Fenchel conjugate is unbounded.
Now, assume that $\Omega^*(\vz) \leq 1$.
By applying the generalized Cauchy-Schwartz's inequality, we obtain for any $\vw$
$$
\vz^\top \vw - \Omega(\vw) \leq \Omega^*(\vz)\, \Omega(\vw) - \Omega(\vw) \leq 0.
$$
Equality holds for $\vw=\mathbf{0}$, and the conclusion follows.
\end{proof}
An important and useful duality result is the so-called Fenchel-Young
inequality~(see \cite{borwein}), which we will shortly illustrate
geometrically:
\begin{proposition}[Fenchel-Young Inequality]\label{prop:fenchel_young}
Let~$\vw$ be a vector in~$\R^p$, $f$ be a function on~$\R^p$, and~$\vz$ be a vector in the domain of~$f^\ast$ (which we assume  non-empty). We have then the following inequality
$$
   f(\vw) + f^*(\vz) \geq \bmt{\vw}\vz,
$$
with equality if and only if~$\vz$ is in~$\partial f(\vw)$.
\end{proposition}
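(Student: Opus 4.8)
The plan is to derive both the inequality and the equality characterization directly from the definition of the Fenchel conjugate as a supremum, combined with the definition of the subdifferential recalled earlier in the section.

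First I would establish the inequality, which is immediate. By definition $f^*(\vz)=\sup_{\vw' \in \R^p}[\bmt{\vz}\vw'-f(\vw')]$, so the supremum in particular dominates the value of the objective at the specific point $\vw'=\vw$; that is, $f^*(\vz) \geq \bmt{\vz}\vw - f(\vw)$. Rearranging this single inequality gives $f(\vw)+f^*(\vz) \geq \bmt{\vz}\vw = \bmt{\vw}\vz$, which is the claimed bound. This step requires nothing beyond the definition of a supremum.

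Next I would treat the equality case. Equality in the bound above holds precisely when the supremum defining $f^*(\vz)$ is attained at $\vw'=\vw$, i.e., when $\bmt{\vz}\vw - f(\vw) \geq \bmt{\vz}\vw' - f(\vw')$ for every $\vw' \in \R^p$. Rearranging this yields $f(\vw') \geq f(\vw) + \bmt{\vz}(\vw'-\vw)$ for all $\vw'$, which is exactly the defining condition for $\vz \in \partial f(\vw)$ recorded in the definition of the subdifferential. Hence equality holds if and only if $\vz \in \partial f(\vw)$, and because the reasoning is a chain of equivalences the "if and only if" is obtained in both directions simultaneously.

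Since every step is a one-line rearrangement of a definition, I do not anticipate any genuine obstacle; the only point meriting care is the observation that attainment of the conjugate's supremum at $\vw$ translates verbatim into the subgradient inequality, which is what links the equality case to membership in $\partial f(\vw)$.
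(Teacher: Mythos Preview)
Your proof is correct and is the standard argument. Note that the paper itself does not supply a proof of this proposition: it cites \cite{borwein} and instead offers a geometric illustration (the tangent-hyperplane picture in the figure immediately following the statement). Your derivation---bounding the supremum defining $f^*$ by its value at $\vw$, then recognizing that equality is exactly the subgradient inequality---is precisely the proof one would find in that reference, so there is nothing to compare beyond observing that you have filled in what the paper left as a citation.
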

We can now illustrate geometrically the duality principle between a
function and its Fenchel conjugate in Figure~\ref{fig:geom}.
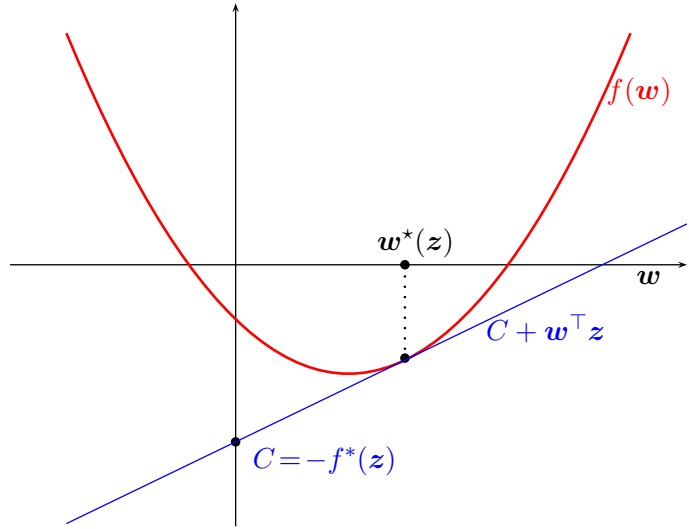
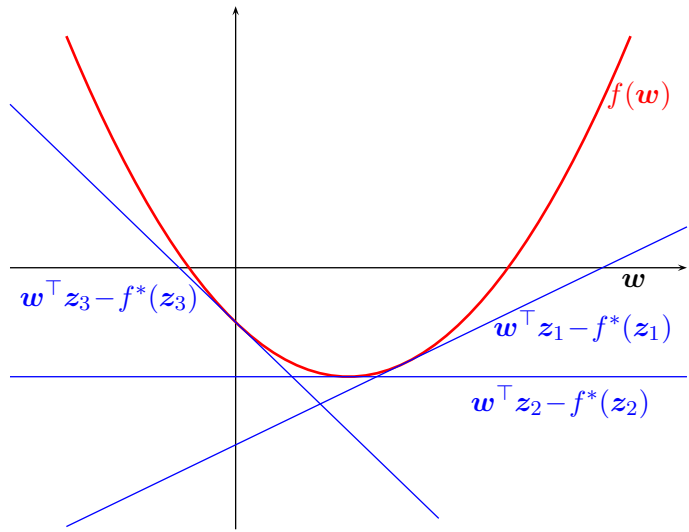
\begin{figure}[!hbtp]
\centering
\subfloat[Fenchel conjugate, tangent hyperplanes and subgradients.]{  \label{fig:geom:a}
   \psset{yunit=2.9,xunit=3.0}
   \begin{pspicture}(-1,-1.2)(2,1.2)
      \psline[linewidth=0.5pt]{->}(-1,0)(2,0)
      \psline[linewidth=0.5pt]{->}(0,-1.2)(0,1.2)
       \psplot[linecolor=red, linewidth=1pt]{-0.75}{1.75}{
          -0.5 x -0.5 add x -0.5 add mul add
       }
    \put(1.75,0.2){ $\vw^\star(\vz)$ }
    \put(5.2,-0.25){ $\vw$ }
    \put(4.8,2.2){ {\color{red} $f(\vw)$  } }
    \put(0.1,-2.7){ {\color{blue} $C\!=\!-f^*(\vz)$  } }
    \put(3.2,-1.0){ {\color{blue} $C+\bmt{\vw}\vz$  } }
    \psdots(0.75,0)(0.75,-0.4275)(0,-0.8125)
    \psline[linecolor=black,linewidth=1pt,linestyle=dotted]{-}(0.75,0)(0.75,-0.4375)
     \psplot[linecolor=blue, linewidth=0.5pt]{-0.75}{2}{
        -13 16 div x 0.5 mul add
     }
   \end{pspicture}
}  \\
\subfloat[The graph of~$f$ is the envelope of the tangent hyperplanes ${\mathcal P}(\vz)$.]{ \label{fig:geom:b}
   \psset{yunit=2.9,xunit=3.0}
   \begin{pspicture}(-1,-1.2)(2,1.2)
      \psline[linewidth=0.5pt]{->}(-1,0)(2,0)
      \psline[linewidth=0.5pt]{->}(0,-1.2)(0,1.2)
       \psplot[linecolor=red, linewidth=1pt]{-0.75}{1.75}{
          -0.5 x -0.5 add x -0.5 add mul add
       }
    \psplot[linecolor=blue, linewidth=0.5pt]{-1}{2}{
       -0.5 
    }
   \psplot[linecolor=blue, linewidth=0.5pt]{-1}{0.9}{
       -0.25 -1 x mul add 
    }
    \put(5,-0.25){ $\vw$ }
    \put(4.8,2.2){ {\color{red} $f(\vw)$  } }
    \put(3.3,-0.9){ {\color{blue} $\bmt{\vw}\vz_1 \! - \! f^*(\vz_1)$  } }
    \put(3.0,-1.9){ {\color{blue} $\bmt{\vw}\vz_2 \! - \! f^*(\vz_2)$  } }
    \put(-3.0,-0.5){ {\color{blue} $\bmt{\vw}\vz_3 \! - \! f^*(\vz_3)$  } }
     \psplot[linecolor=blue, linewidth=0.5pt]{-0.75}{2}{
        -13 16 div x 0.5 mul add
     }
   \end{pspicture}
} 

\caption[Geometric Interpretation of the Fenchel conjugate]{ 
   For all~$\vz$ in~$\R^p$, we denote by $\mathcal{P}(\vz)$ the hyperplane with normal~$\vz$ and tangent
   to the graph of the convex function~$f$.
Fig.~\subref{fig:geom:a}: For any contact point between the graph of~$f$ and an hyperplane $\mathcal{P}(\vz)$, we have that~$f(\vw)+f^*(\vz) = \bmt{\vw}\vz$ and~$\vz$ is in~$\partial f(\vw)$
   (the Fenchel-Young inequality is an equality).
Fig.~\subref{fig:geom:b}: the graph of~$f$ is the convex envelope of the collection of hyperplanes $(\mathcal{P}(\vz))_{\vz \in \R^p}$.}
\end{figure}\label{fig:geom}

\begin{remark}\label{rem:proof_subnorm}
With Proposition~\ref{prop:fenchel_dualnorm} in place, we can formally (and easily) prove the relationship in Eq.~(\ref{eq:opt}) that make explicit the subdifferential of a norm.
Based on Proposition~\ref{prop:fenchel_dualnorm}, we indeed know that the conjugate of $\Omega$ is $\iota_{\Omega^\ast}$. 
Applying the Fenchel-Young inequality (Proposition~\ref{prop:fenchel_young}), we have
$$
\vz\in \partial\Omega(\vw) \Leftrightarrow \Big[ \vz^\top\vw = \Omega(\vw)+\iota_{\Omega^\ast}(\vz)\Big],
$$
which leads to the desired conclusion.
\end{remark}

For many objective functions, the Fenchel conjugate admits closed forms, and can
therefore be computed efficiently~\cite{borwein}.  Then, it is possible to
derive a duality gap for problem~(\ref{eq:formulation}) from standard Fenchel
duality arguments (see \cite{borwein}),
as shown in the following proposition:
\begin{proposition}[Duality for Problem~(\ref{eq:formulation})]~\label{prop:duagap}\\
   If $f^*$ and $\Omega^*$ are respectively the Fenchel conjugate of a convex and differentiable function $f$ and the dual norm of $\Omega$, then we have
   \begin{equation}
   \label{eq:pd}
      \max_{\vz \in \R^p: \, \Omega^*(\vz) \leq \lambda} \: -f^*(\vz) \quad \leq \quad \min_{\vw \in \R^p} f(\vw) + \lambda\Omega(\vw).
   \end{equation}
   Moreover, equality holds as soon as the domain of $f$ has non-empty interior.
\end{proposition}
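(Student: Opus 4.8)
The plan is to read Eq.~(\ref{eq:pd}) as an instance of Fenchel--Rockafellar duality applied to the splitting $f(\vw)+\lambda\Omega(\vw)$, and to establish the two directions separately: the stated inequality (weak duality), which needs no hypothesis, and then equality (strong duality) under the interior condition.

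For weak duality I would fix an arbitrary $\vw\in\R^p$ and an arbitrary $\vz$ with $\Omega^*(\vz)\leq\lambda$, and chain two elementary bounds. The Fenchel--Young inequality (Proposition~\ref{prop:fenchel_young}) applied to $f$ gives $f(\vw)+f^*(\vz)\geq\bmt{\vw}\vz$, i.e. $-f^*(\vz)\leq f(\vw)-\bmt{\vw}\vz$. The generalized Cauchy--Schwarz inequality $|\bmt{\vw}\vz|\leq\Omega(\vw)\,\Omega^*(\vz)$ then yields $\bmt{\vw}\vz\geq-\Omega(\vw)\,\Omega^*(\vz)\geq-\lambda\,\Omega(\vw)$, hence $-\bmt{\vw}\vz\leq\lambda\Omega(\vw)$. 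Adding, $-f^*(\vz)\leq f(\vw)+\lambda\Omega(\vw)$ for every such $\vw$ and $\vz$; taking the supremum over admissible $\vz$ on the left and the infimum over $\vw$ on the right proves~(\ref{eq:pd}).

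For equality, the key computation is to recognize the right-hand side as the Fenchel dual of the primal. I would first use Proposition~\ref{prop:fenchel_dualnorm} together with the positive homogeneity of $\Omega$ to show that the Fenchel conjugate of the map $\vw\mapsto\lambda\Omega(\vw)$ is the indicator $\iota_{\{\vz\,:\,\Omega^*(\vz)\leq\lambda\}}$: indeed $\sup_\vw[\bmt{\vz}\vw-\lambda\Omega(\vw)]=\lambda\,\sup_\vw[\tfrac{1}{\lambda}\bmt{\vz}\vw-\Omega(\vw)]$, and by Proposition~\ref{prop:fenchel_dualnorm} the inner supremum is $0$ when $\Omega^*(\vz/\lambda)\leq1$, i.e. $\Omega^*(\vz)\leq\lambda$, and $+\infty$ otherwise. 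Writing $g\defin\lambda\Omega$, the symmetry of the dual norm ($\Omega^*(-\vz)=\Omega^*(\vz)$) then gives that $g^*(-\vz)$ is again the indicator of $\{\vz\,:\,\Omega^*(\vz)\leq\lambda\}$, so that $\sup_\vz[-f^*(\vz)-g^*(-\vz)]$ is exactly the left-hand side of~(\ref{eq:pd}). The Fenchel duality theorem (see~\cite{borwein}) asserts that $\min_\vw[f(\vw)+g(\vw)]=\sup_\vz[-f^*(\vz)-g^*(-\vz)]$ once a constraint qualification holds; since $\dom g=\R^p$, it suffices that $\ri(\dom f)$ be nonempty, which is guaranteed by the assumed nonempty interior of $\dom f$ (in the present setting $f:\R^p\to\R$ is finite everywhere, so this is automatic).

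The main obstacle is not the algebra but applying the duality theorem cleanly: one must check that the constraint qualification is genuinely met and, above all, that the conjugation conventions and signs line up so that $g^*(-\vz)$ produces the \emph{symmetric} ball $\{\Omega^*(\vz)\leq\lambda\}$ rather than a one-sided or shifted set. This is exactly where the symmetry of the norm and the careful bookkeeping of the factor $\lambda$ matter; everything else reduces to Propositions~\ref{prop:fenchel_young} and~\ref{prop:fenchel_dualnorm}.
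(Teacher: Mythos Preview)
Your proposal is correct and follows essentially the same route as the paper: the paper's proof simply states that the result is a specific instance of the Fenchel duality theorem (Theorem~3.3.5 in \cite{borwein}), together with the identification of the conjugate of a norm as the indicator of the dual unit ball (Proposition~\ref{prop:fenchel_dualnorm}). You have just unpacked this citation in full detail, additionally giving an explicit weak-duality argument via Fenchel--Young and Cauchy--Schwarz rather than obtaining both directions from the theorem at once.
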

\begin{proof}
This result is a specific instance of Theorem 3.3.5 in \cite{borwein}. In particular, we use the fact that
the conjugate of a norm~$\Omega$ is the indicator function $\iota_{\Omega^\ast}$ of the unit ball of the dual norm $\Omega^*$ (see Proposition~\ref{prop:fenchel_dualnorm}).
\end{proof}
If $\vw^\star$ is a solution of Eq.~(\ref{eq:formulation}), and
$\vw,\vz$ in $\R^p$ are such that $\Omega^*(\vz) \leq \lambda$, this proposition implies that we have
\begin{equation}
   f(\vw)+ \lambda \Omega(\vw) \geq f(\vw^\star)+\lambda\Omega(\vw^\star) \geq  -f^*(\vz). \label{eq:dualitygap}
\end{equation}
The difference between the left and right term of Eq.~(\ref{eq:dualitygap}) is
called a duality gap.  It represents the difference between the value of the
primal objective function $f(\vw)+\lambda \Omega(\vw)$ and a dual objective
function $-f^*(\vz)$, where $\vz$ is a dual variable. The proposition says that the duality gap for a pair of optima~$\vw^\star$ and~$\vz^\star$ of the primal and dual problem is equal to $0$. When the optimal duality gap is zero one says that \emph{strong duality} holds. In our situation, the duality gap for the pair of primal/dual problems in Eq.~(\ref{eq:pd}), may be decomposed as the sum of two non-negative terms (as the consequence of Fenchel-Young inequality):
$$ \big( f(\vw) + f^*(\vz)  - \vw^\top \vz \big) + \lambda \big(  \Omega(\vw)  + \vw^\top (\vz/\lambda) + \iota_{\Omega^\ast}
( \vz / \lambda )
\big).
$$
It is equal to zero if and only if the two terms are simultaneously equal to zero.

Duality gaps are important in convex optimization because they provide an upper bound on the difference between the
current value of an objective function and the optimal value, which makes it possible to set proper
stopping criteria for iterative optimization algorithms. Given a current iterate $\vw$, computing a duality gap requires choosing a ``good" value for $\vz$ (and in particular a feasible one).
Given that at optimality, $\vz(\vw^\star)=\nabla f(\vw^\star)$ is the unique solution to the dual problem, a natural choice of dual variable is $\vz = \min \big  (1,\frac{\lambda}{\Omega^*(\nabla f(\vw))} \big )\nabla
f(\vw)$, which reduces to $\vz(\vw^\star)$ at the optimum and therefore yields a zero duality gap
at optimality.

Note that in most formulations that we will consider, the function~$f$
is of the form $f(\vw) = \psi(\mx\vw)$ with $\psi :\R^n \rightarrow \R$ and $\mx$ a design matrix.
Indeed, this corresponds to linear prediction on $\R^p$, given $n$ observations $\vx_i$, $i=1,\dots,n$, and the predictions $\mx \vw = ( \vw^\top \vx_i)_{i=1,\dots,n}$.
Typically, the Fenchel conjugate of $\psi$ is easy to compute\footnote{For the least-squares loss with output vector $\vy \in \mathbb{R}^n$, we have $\psi(\vu)=\frac{1}{2}\|\vy - \vu\|_2^2$ and $\psi^\ast(\vbeta) = \frac{1}{2} \| \vbeta\|_2^2 + \vbeta^\top \vy$.
For the logistic loss, we have $\psi(\vu) = \sum_{i=1}^n \log( 1 + \exp( - \vy_i \vu_i))$ and
$\psi^\ast(\vbeta) = \sum_{i=1}^n (1+\vbeta_i \vy_i) \log  (1+\vbeta_i \vy_i) - \vbeta_i \vy_i  \log(-\vbeta_i \vy_i )  $ if $\forall i, \ - \vbeta_i \vy_i \in [0,1]$ and $+\infty$ otherwise.
}  while the design matrix $\mx$ makes it hard\footnote{It would require to compute the pseudo-inverse of $\mx$.} to compute $f^*$.
In that case, Eq.~(\ref{eq:formulation}) can be rewritten as 
\begin{equation}
\label{eq:constraint}
\min_{\vw \in \R^p, \vu \in \R^n} \quad  \psi(\vu) + \lambda \: \Omega(\vw) \qquad \st \: \vu=\mx \vw,
\end{equation}
and equivalently as the optimization of the Lagrangian
\begin{equation*}
\min_{\vw \in \R^p, \vu \in \R^n} \quad \max_{\valpha \in \R^n} \quad   \psi(\vu)
+ \lambda \Omega(\vw)   + \lambda \valpha^\top \big(\mx \vw - \vu
\big),
\end{equation*}
\begin{equation}
\min_{\vw \in \R^p, \vu \in \R^n} \quad \max_{\valpha \in \R^n} \quad \big ( \psi(\vu) - \lambda \bmt{\valpha} \vu \big ) + \lambda \: \big (\Omega(\vw) + \valpha^\top \mx\vw \big ),
\end{equation}
which is obtained by introducing the Lagrange multiplier $\valpha$ for the constraint $\vu=\mx \vw$. The corresponding Fenchel dual\footnote{Fenchel conjugacy naturally extends to this case; see Theorem $3.3.5$ in \cite{borwein} for more details.} is then
\begin{equation}
\max_{\valpha \in \R^n} \quad -\psi^*(\lambda \valpha) \quad \text{such that} \quad \Omega^*(\mx^\top\valpha)\leq 1,
\label{eq:dualX}
\end{equation}
which does not require any inversion of $\mx^\top \mx$ (which would be required for computing the Fenchel conjugate of $f$). Thus, given a candidate $\vw$, we consider $\valpha = \min \big( 1, \frac{\lambda}{ \Omega^\ast(\mx^\top\nabla\psi(\mx \vw))  } \big) \nabla\psi(\mx \vw) $, and can get an upper bound on optimality using primal (\ref{eq:constraint}) and dual (\ref{eq:dualX})  problems.
Concrete examples of such duality gaps for various sparse regularized problems are presented in appendix D of~\cite{mairal_thesis}, and are implemented in the open-source software SPAMS\footnote{\url{http://www.di.ens.fr/willow/SPAMS/}}, which we have used in the experimental section of this paper.

\subsection{Quadratic Variational Formulation of Norms}\label{subsec:variational}
Several variational formulations are associated with norms, the most natural one being the one that results directly from \eqref{eq:def_dualnorm} applied to the dual norm:
$$\Omega(\vw)=\max_{ \vz \in \R^p} \vw^\top \vz \quad \st \quad \Omega^*(\vz) \leq 1.$$
However, another type of variational form is quite useful, especially for sparsity-inducing norms; among other purposes, as it is obtained by a variational upper-bound (as opposed to a lower-bound in the equation above), it leads to a general algorithmic scheme for learning problems regularized with this norm, in which the difficulties associated with optimizing the loss and that of optimizing the norm are partially decoupled. We present it in Section~\ref{sec:reweighted_l2}.
We introduce this variational form first for the $\ell_1$- and $\ell_1/\ell_2$-norms and
subsequently generalize it to norms that we call \emph{subquadratic norms}. 

\paragraph{The case of the $\ell_1$- and $\ell_1/\ell_2$-norms.}
The two basic variational identities we use are, for $a,b>0$, 
\begin{equation}
\label{eq:norm_varia}
2 ab=\inf_{\eta \in \R_{+}^*} \eta^{-1} a^2 + \eta\, b^2,
\end{equation}
where the infimum is attained at $\eta = a/b$, 
and, for $\va \in \R^p_+$,
\begin{equation}
\label{eq:sqnorm_varia}
\Big (\sum_{i=1}^p \va_i \Big )^2=\inf_{\veta \in (\R_{+}^*)^p} \sum_{i=1}^p \frac{\va_i^2}{\veta_i} \: \st \: \sum_{i=1}^p \veta_i=1.
\end{equation}
The last identity is a direct consequence of the Cauchy-Schwartz inequality:
\begin{equation}
\sum_{i=1}^p \va_i=\sum_{i=1}^p \frac{\va_i}{\sqrt{\veta_i}} \cdot \sqrt{\veta_i} \leq \Big (\sum_{i=1}^p \frac{\va_i^2}{\veta_i} \Big )^{1/2} \big ( \sum_{i=1}^p \veta_i\big)^{1/2}.
\end{equation}
The infima in the previous expressions can be replaced by a minimization if the function $q:\R \times \R_+ \rightarrow \R_+$ with $q(x,y)= \frac{x^2}{y}$ is extended in $(0,0)$ using the convention ``0/0=0", since the resulting function\footnote{This extension is in fact the function $\tilde{q}: (x,y) \mapsto \min \Big \{t \in \R_+ \mid 
\begin{bmatrix}
t & x \\ x & y
\end{bmatrix}
\succeq 0
\Big \}$.} is a proper closed convex function. We will use this convention implicitly from now on. The minimum is then attained when equality holds in the Cauchy-Schwartz inequality, that is for $\sqrt{\veta_i} \propto \va_i/\sqrt{\veta_i}$, which leads to $\veta_i=\frac{\va_i}{\|\va\|_1}$ if $\va\neq 0$ and $0$ else.

Introducing the simplex $\triangle_p=\{\veta \in \R^p_+ \mid \sum_{i=1}^p \veta_i=1\}$, 
we apply these variational forms to the $\ell_1$- and $\ell_1/\ell_2$-norms (with non overlapping groups) with $\|\vw\|_{\ell_1/\ell_2}=\sum_{g \in \Gc} \|\vw_g\|_2$ and $|\Gc|=m$, 
so that we obtain directly:
\begin{align*}
\!\!\!\!\!\!\!\!\!\!\!\!\!\!\!\!\|\vw\|_1&=\min_{\veta \in \R^p_+} \frac{1}{2}\sum_{i=1}^p \Big [ \frac{\vw_i^2}{\veta_i} +\veta_i \Big ], \quad &
\|\vw\|^2_1&=\min_{\veta \in \triangle_p} \sum_{i=1}^p \frac{\vw_i^2}{\veta_i},\\
\!\!\!\!\!\!\!\! \!\!\!\!\!\!\!\!\|\vw\|_{\ell_1/\ell_2}&=\min_{\veta \in \R^{m}_+} \frac{1}{2}\sum_{g \in \Gc} \Big [ \frac{\|\vw_g\|_2^2}{\veta^g} +\veta^g \Big ], \quad
& \|\vw\|_{\ell_1/\ell_2}^2&=\min_{\veta \in \triangle_{m}} \sum_{g \in \Gc} \frac{\|\vw_g\|_2^2}{\veta^g}.
\end{align*}

\paragraph{Quadratic variational forms for subquadratic norms.}
\label{sec:subqua}
The variational form of the $\ell_1$-norm admits a natural generalization for certain norms that we call \emph{subquadratic} norms. 
Before we introduce them, we review a few useful properties of norms. 
In this section, we will denote $|\vw|$ the vector $(|\vw_1|,\ldots,|\vw_p|)$.
\begin{definition}[Absolute and monotonic norm]
\label{def:monotonic}
We say that:\vspace{-3mm} 
\begin{itemize}
\item A norm $\Omega$ is \textbf{absolute} if for all $v \in \R^p$,  $\Omega(\vv)=\Omega(|\vv|)$.
\item A norm $\Omega$ is \textbf{monotonic} if for all $\vv,\vw \in \R^p$ s.t.\ $|\vv_i| \leq |\vw_i|, \: i=1, \ldots,p$, it holds that $\Omega(\vv) \leq \Omega(\vw)$.
\vspace{-3mm} 
\end{itemize}
\end{definition}
These definitions are in fact equivalent~(see, e.g.,~\cite{bauer1961absolute}):
\begin{proposition}
A norm is \emph{monotonic} if and only if it is \emph{absolute}.
\end{proposition}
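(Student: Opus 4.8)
The plan is to prove the two implications separately. The forward direction, \emph{monotonic} $\Rightarrow$ \emph{absolute}, is essentially immediate, while the converse carries all the content.

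For \emph{monotonic} $\Rightarrow$ \emph{absolute}, I would fix $\vv \in \R^p$ and compare $\vv$ with $|\vv|$. Since $|\vv_i| = \big| |\vv|_i \big|$ for every $i$, the defining inequality of monotonicity applies in both directions (once to the pair $(\vv, |\vv|)$ and once to the pair $(|\vv|, \vv)$), giving $\Omega(\vv) \leq \Omega(|\vv|)$ and $\Omega(|\vv|) \leq \Omega(\vv)$ simultaneously, hence $\Omega(\vv) = \Omega(|\vv|)$.

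For \emph{absolute} $\Rightarrow$ \emph{monotonic}, suppose $|\vv_i| \leq |\vw_i|$ for all $i$. First I would use absoluteness to replace $\vv,\vw$ by $|\vv|,|\vw|$ without changing either norm, reducing to the case $0 \leq \vv_i \leq \vw_i$ for all $i$. The heart of the argument is a single-coordinate reduction: if two nonnegative vectors agree in every coordinate except one, say $k$, with $0 \leq \vv_k \leq \vw_k$, then $\Omega(\vv) \leq \Omega(\vw)$. To see this I would set $\lambda = \vv_k/\vw_k \in [0,1]$ (the case $\vw_k=0$ forces $\vv_k=0$ and is trivial) and let $\vw'$ be $\vw$ with its $k$-th coordinate negated. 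Then $\vv = \tfrac{1+\lambda}{2}\vw + \tfrac{1-\lambda}{2}\vw'$, so convexity of $\Omega$ (which follows from positive homogeneity and the triangle inequality) together with $\Omega(\vw') = \Omega(|\vw'|) = \Omega(\vw)$ yields $\Omega(\vv) \leq \tfrac{1+\lambda}{2}\Omega(\vw) + \tfrac{1-\lambda}{2}\Omega(\vw') = \Omega(\vw)$.

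Finally I would chain these single-coordinate steps. Interpolating between $\vw$ and $\vv$ through the vectors $\vu^{(j)}$ that agree with $\vv$ on the first $j$ coordinates and with $\vw$ on the rest, consecutive vectors $\vu^{(j-1)}$ and $\vu^{(j)}$ differ only in coordinate $j$ and satisfy the single-coordinate hypothesis, so $\Omega(\vv) = \Omega(\vu^{(p)}) \leq \cdots \leq \Omega(\vu^{(0)}) = \Omega(\vw)$. The main obstacle is the converse direction, and within it the reflection-plus-convexity trick: the key realization is that negating one coordinate preserves the norm (absoluteness) and that shrinking a coordinate toward zero expresses the shrunk vector as a convex combination of the original vector and its reflection, which is exactly what converts absoluteness into coordinatewise monotonicity.
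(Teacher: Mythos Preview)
Your proof is correct. The forward direction matches the paper's exactly, and for the converse you give the classical reflection-plus-convexity argument: shrink one coordinate at a time by writing the shrunk vector as a convex combination of the original and its reflection in that coordinate, then chain.

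The paper takes a genuinely different route for the converse. Instead of working directly with $\Omega$, it passes through the dual norm: first it shows that if $\Omega$ is absolute then $\Omega^*$ is absolute (since $\Omega^*(\vkappa)=\max_{\Omega(|\vw|)\leq 1}\vw^\top\vkappa=\max_{\Omega(|\vw|)\leq 1}|\vw|^\top|\vkappa|=\Omega^*(|\vkappa|)$), and then uses the bidual representation $\Omega(\vv)=\max_{\Omega^*(|\vkappa|)\leq 1}|\vv|^\top|\vkappa|$ to read off monotonicity immediately from $|\vv|\leq|\vw|$. Your argument is more elementary and self-contained, relying only on the norm axioms and convexity, with no appeal to duality or the bidual theorem. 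The paper's argument is shorter and, more importantly for its purposes, yields the intermediate fact that $\Omega^*$ is absolute whenever $\Omega$ is---a fact the paper reuses later (e.g., in characterizing dual norms of $H$-norms and in the MKL section). So both proofs are clean; yours is the textbook direct proof, while the paper's dovetails with its running duality theme and harvests a useful by-product.
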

\begin{proof}
If $\Omega$ is monotonic, the fact that $\big |\vv\big| = \big | |\vv| \big|$ implies $\Omega(\vv)=\Omega(|\vv|)$ so that $\Omega$ is absolute.

If $\Omega$ is absolute, we first show that $\Omega^*$ is absolute.
Indeed, $$\Omega^*(\vkappa)=\max_{\vw \in \R^p,\: \Omega(|\vw|) \leq 1} \vw^\top \vkappa=\max_{\vw \in \R^p,\: \Omega(|\vw|) \leq 1} |\vw|^\top |\vkappa| =\Omega^*(|\vkappa|).$$
 Then if $|\vv|\leq |\vw|$, since $\Omega^*(\vkappa)=\Omega^*(|\vkappa|)$,
 $$\Omega(\vv)=\max_{\vkappa \in \R^p,\: \Omega^*(|\vkappa|) \leq 1} |\vv|^\top |\vkappa| \leq \max_{\vkappa \in \R^p,\: \Omega^*(|\vkappa|) \leq 1} |\vw|^\top |\vkappa| =\Omega(\vw).$$
 which shows that $\Omega$ is monotonic.
\end{proof}
 
 We now introduce a family of norms, which have recently been studied in \cite{micchelli2011regularizers}.
\begin{definition}[$H$-norm] 
\label{def:Hnorm} Let $H$ be a compact convex subset of $\R_+^p$, such that $H \cap (\R_+^*)^p\neq \varnothing$, we say that $\Omega_H$ is an $H$-norm if $\Omega_H(\vw)=\min_{\veta \in H} \sum_{i=1}^p \frac{\vw_i^2}{\eta_i}$.
\end{definition}
The next proposition shows that $\Omega_H$ is indeed a norm and characterizes its dual norm.
\begin{proposition}
\label{prop:dual_Hnorm}
$\Omega_H$ is a norm and $\Omega_H^*(\vkappa)^2=\max_{\veta \in H} \sum_{i=1}^p \veta_i\,\vkappa_i^2$.
\label{lem:convex_var}
\end{proposition}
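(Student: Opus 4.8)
The plan is to exhibit $\Omega_H$ as the dual norm of an explicitly constructed norm, which will simultaneously prove that $\Omega_H$ is a norm and identify $\Omega_H^*$. Introduce the candidate
$$\gamma_H(\vkappa) \defin \Big(\max_{\veta \in H} \sum_{i=1}^p \veta_i\,\vkappa_i^2\Big)^{1/2},$$
so that the target formula is exactly $\Omega_H^* = \gamma_H$. First I would check directly that $\gamma_H$ is a norm. For each fixed $\veta \in H$ the map $\vkappa \mapsto \big(\sum_i \veta_i \vkappa_i^2\big)^{1/2}$ is a weighted Euclidean seminorm, and $\gamma_H$ is their pointwise supremum over the compact set $H$; a supremum of seminorms is positively homogeneous and satisfies the triangle inequality, and it is even in $\vkappa$ since it depends only on $\vkappa_i^2$. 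Definiteness is the only non-automatic point: since $H \cap (\R_+^*)^p \neq \varnothing$, picking $\veta$ with strictly positive entries gives $\gamma_H(\vkappa)^2 \ge \sum_i \veta_i \vkappa_i^2 > 0$ whenever $\vkappa \neq 0$, while compactness of $H$ ensures the maximum is attained.

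The heart of the argument is then the identity $\Omega_H = \gamma_H^*$. I would start from the elementary one-dimensional variational form $\frac{\vw_i^2}{\veta_i} = \max_{\vkappa_i \in \R}\big(2\,\vw_i\vkappa_i - \veta_i \vkappa_i^2\big)$, valid even on the boundary with the $0/0$ convention, so that
$$\Omega_H(\vw)^2 = \min_{\veta \in H}\sum_{i=1}^p \frac{\vw_i^2}{\veta_i} = \min_{\veta \in H}\ \max_{\vkappa \in \R^p}\Big(2\,\vw^\top\vkappa - \sum_{i=1}^p \veta_i \vkappa_i^2\Big).$$
The bracketed function is linear (hence concave) in $\veta$ on the convex compact set $H$ and concave in $\vkappa$ on $\R^p$, so Sion's minimax theorem lets me interchange the min and the max. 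After swapping, the inner minimisation over $\veta$ produces $2\vw^\top\vkappa - \gamma_H(\vkappa)^2$, leaving $\max_{\vkappa}\big(2\vw^\top\vkappa - \gamma_H(\vkappa)^2\big)$. Optimising along rays $\vkappa = t\vu$ with $\gamma_H(\vu)=1$ reduces this to $\big(\max_{\gamma_H(\vu)\le 1}\vw^\top\vu\big)^2 = \gamma_H^*(\vw)^2$, whence $\Omega_H(\vw)=\gamma_H^*(\vw)$.

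From $\Omega_H = \gamma_H^*$ both claims follow at once: the dual of a norm is a norm, so $\Omega_H$ is a norm; and by biduality in finite dimension $\Omega_H^* = \gamma_H^{**} = \gamma_H$, that is $\Omega_H^*(\vkappa)^2 = \max_{\veta\in H}\sum_i \veta_i \vkappa_i^2$, as stated. As a by-product one recovers the generalised Cauchy--Schwarz inequality $|\vw^\top\vkappa| \le \Omega_H(\vw)\,\gamma_H(\vkappa)$.

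The step I expect to be most delicate is the min--max interchange together with the degenerate cases where some $\veta_i$ or $\vw_i$ vanish: there one must take the $0/0$ convention seriously and rely on the fact (noted earlier in the text) that $q(x,y)=x^2/y$ extends to a proper closed convex function, so that both the inner maximisation identity and the minimax theorem remain valid up to the boundary of $\R_+^p$. A more hands-on route avoids the minimax theorem: prove $|\vw^\top\vkappa|\le \Omega_H(\vw)\,\gamma_H(\vkappa)$ by applying Cauchy--Schwarz coordinatewise for an arbitrary $\veta\in H$, and then establish tightness by showing that the minimiser $\veta$ of $\sum_i \vw_i^2/\veta_i$ is also the maximiser of $\sum_i \veta_i \vkappa_i^2$ for the matching $\vkappa_i \propto \vw_i/\veta_i$, since the first-order optimality conditions of the two problems coincide; this is conceptually transparent but requires the same boundary care.
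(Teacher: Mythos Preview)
Your proof is correct but takes a different route from the paper. The paper first verifies the norm axioms for $\Omega_H$ directly: the key step is that $(\vw,\veta)\mapsto \sum_i \vw_i^2/\veta_i$ is jointly convex, so its partial minimum in $\veta$ is convex, whence $\Omega_H$ is convex (the other axioms being immediate). It then computes the dual norm via the identity $\tfrac{1}{2}\Omega_H^*(\vkappa)^2 = \max_{\vw}\big(\vw^\top\vkappa - \tfrac{1}{2}\Omega_H(\vw)^2\big)$; substituting $\Omega_H(\vw)^2 = \min_{\veta\in H}\sum_i \vw_i^2/\veta_i$ turns the $-\min$ into a $\max$, yielding a double maximum $\max_{\vw}\max_{\veta\in H}\big(\vw^\top\vkappa - \tfrac{1}{2}\sum_i \vw_i^2/\veta_i\big)$ that commutes trivially. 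Maximising the inner quadratic in $\vw$ gives $\tfrac{1}{2}\sum_i \veta_i \vkappa_i^2$, and the claim follows.

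The contrast is that you go the other way round: you construct $\gamma_H$, show it is a norm, and invoke Sion's theorem to prove $\Omega_H=\gamma_H^*$, deducing both conclusions by biduality. This is conceptually appealing---it makes the duality $\Omega_H=\gamma_H^*$, $\Omega_H^*=\gamma_H$ explicit from the outset, which the paper essentially rediscovers later when characterising subquadratic norms---but it is longer and requires the minimax swap and the boundary care you yourself flag. The paper's trick of computing the conjugate of $\tfrac{1}{2}\Omega_H^2$ instead is worth remembering: the sign flip converts the would-be min--max into a max--max and sidesteps both issues entirely.
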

\begin{proof}
First, since $H$ contains at least one element whose components are all strictly positive, $\Omega$ is finite on $\R^p$.
Symmetry, nonnegativity and homogeneity of 
$\Omega_H$ are straightforward from the definitions.
Definiteness results from the fact that $H$ is bounded. 
$\Omega_H$ is convex, since it is obtained by minimization of $\veta$ in a jointly convex formulation.
Thus $\Omega_H$ is a norm.
Finally,
\begin{eqnarray*}
\frac{1}{2}\Omega_H^*(\vkappa)^2&=&\max_{\vw \in \R^p} \vw^\top
 \vkappa -\frac{1}{2}\Omega_H(\vw)^2\\
&=&\max_{\vw \in \R^p} \max_{\veta \in H} \vw^\top
 \vkappa -\frac{1}{2} \vw^\top \text{Diag}(\veta)^{-1}\vw.
 \end{eqnarray*}
The form of the dual norm follows by maximizing w.r.t. $\vw$.
\end{proof}

We finally introduce the family of norms that we call \emph{subquadratic}.

\begin{definition}[Subquadratic Norm]
Let $\Omega$ and $\Omega^*$ a pair of \emph{absolute} dual norms.
Let $\bar{\Omega}^*$ be the function defined as $\bar{\Omega}^*: \vkappa \mapsto [\Omega^*(|\vkappa|^{1/2})]^2$ where we use the notation $|\vkappa|^{1/2}=(|\vkappa_1|^{1/2}, \ldots, |\vkappa_p|^{1/2})^\top$. We say that $\Omega$ is \emph{subquadratic} if $\bar{\Omega}^*$ is convex.
\end{definition}
With this definition, we have:
\begin{lemma}
\label{lem:subqua_varia}
If 
$\Omega$ is \emph{subquadratic}, then $\bar{\Omega}^*$ is a norm, and denoting $\bar{\Omega}$ the dual norm of the latter, we have:
\begin{eqnarray*}
\Omega(\vw)&=&\frac{1}{2}\min_{\veta \in \R^p_+} \sum_{i} \frac{\vw_i^2}{\veta_i} + \bar{\Omega}(\veta)\\ 
\Omega(\vw)^2&=&\min_{\veta \in H} \sum_{i} \frac{\vw_i^2}{\veta_i} \mbox{ where } H = \{ \veta \in \R_+^p \mid \bar{\Omega}(\veta) \leq 1\}.
\end{eqnarray*}
\vspace*{-2mm}
\end{lemma}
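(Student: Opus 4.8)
The plan is to establish the two identities by first verifying that $\bar{\Omega}^*$ is a norm, then identifying $\Omega$ with the $H$-norm associated to the unit ball of its dual $\bar{\Omega}$, and finally deducing the non-squared identity from the squared one by a one-dimensional scaling argument. For the first point, nonnegativity and definiteness of $\bar{\Omega}^*(\vkappa) = [\Omega^*(|\vkappa|^{1/2})]^2$ follow immediately from the fact that $\Omega^*$ is a norm. Positive homogeneity of degree one follows from $|t\vkappa|^{1/2} = |t|^{1/2}|\vkappa|^{1/2}$ together with the homogeneity of $\Omega^*$, which after squaring gives $\bar{\Omega}^*(t\vkappa) = |t|\,\bar{\Omega}^*(\vkappa)$. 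The subquadratic hypothesis supplies convexity, and convexity together with positive homogeneity yields subadditivity through $\bar{\Omega}^*(\vkappa+\vkappa') = 2\,\bar{\Omega}^*(\tfrac{1}{2}(\vkappa+\vkappa')) \leq \bar{\Omega}^*(\vkappa)+\bar{\Omega}^*(\vkappa')$. Hence $\bar{\Omega}^*$ is a norm; it is moreover absolute by construction, so its dual $\bar{\Omega}$ is also absolute, a property I will exploit below.

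Next I would prove the squared identity. Set $H = \{\veta\in\R_+^p \mid \bar{\Omega}(\veta)\leq 1\}$; being the intersection of the closed $\bar{\Omega}$-ball with the nonnegative orthant, it is compact, convex and meets $(\R_+^*)^p$, so \defi{def:Hnorm} applies and $\min_{\veta\in H}\sum_i \vw_i^2/\veta_i$ is the square of the associated $H$-norm $\Omega_H$. By \prop{prop:dual_Hnorm}, $\Omega_H^*(\vkappa)^2 = \max_{\veta\in H}\sum_i \veta_i\vkappa_i^2$. I would evaluate this maximum via the substitution $\vxi_i = \vkappa_i^2 \geq 0$, rewriting it as $\max_{\veta\geq 0,\,\bar{\Omega}(\veta)\leq 1}\langle\veta,\vxi\rangle$. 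Since $\bar{\Omega}$ is absolute and $\vxi\geq 0$, replacing any feasible $\veta$ by $|\veta|$ preserves feasibility and does not decrease the objective, so the constraint $\veta\geq 0$ is inactive and the maximum equals $\bar{\Omega}^*(\vxi)$. By definition of $\bar{\Omega}^*$ this is $[\Omega^*(|\vxi|^{1/2})]^2 = [\Omega^*(|\vkappa|)]^2 = \Omega^*(\vkappa)^2$, the final equality using that $\Omega^*$ is absolute. Therefore $\Omega_H^* = \Omega^*$, and biduality of norms gives $\Omega_H = \Omega$, which is exactly the second identity.

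Finally I would derive the non-squared identity from the squared one by optimizing over a single scalar. Writing any nonzero $\veta\in\R_+^p$ as $\veta = t\vu$ with $\bar{\Omega}(\vu)=1$ and $t>0$, the objective becomes $t^{-1}\sum_i \vw_i^2/\vu_i + t$, whose minimum over $t>0$ is $2(\sum_i \vw_i^2/\vu_i)^{1/2}$, attained at $t = (\sum_i \vw_i^2/\vu_i)^{1/2}$ by the elementary identity \eq{eq:norm_varia}. Minimizing the result over $\vu$ with $\bar{\Omega}(\vu)=1$, and noting that $\sum_i \vw_i^2/\vu_i$ is decreasing in each $\vu_i$ so that the minimum over the whole ball is attained on its boundary, the squared identity gives $\min_{\bar{\Omega}(\vu)\leq 1}\sum_i \vw_i^2/\vu_i = \Omega(\vw)^2$; the overall minimum is thus $2\,\Omega(\vw)$, and dividing by two yields the first identity. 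The main obstacle is the second step: carrying the $|\cdot|^{1/2}$ and squaring bookkeeping correctly through \prop{prop:dual_Hnorm}, and justifying that the nonnegativity constraint on $\veta$ may be dropped --- which is precisely where the absoluteness of $\Omega^*$ and of $\bar{\Omega}$ are indispensable.
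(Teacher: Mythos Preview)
Your proof is correct and takes a genuinely different route from the paper's. The paper proves both identities directly and in parallel: it writes $\Omega(\vw)=\max_{\vkappa\geq 0,\,\Omega^*(\vkappa)\leq 1}\vkappa^\top|\vw|$, substitutes $\vkappa\mapsto\vkappa^{1/2}$ so that the constraint becomes $\bar{\Omega}^*(\vkappa)\leq 1$, then inserts the elementary identities \eq{eq:norm_varia} and \eq{eq:sqnorm_varia} to introduce the variable~$\veta$, and finally swaps the $\min$ and $\max$ by strong duality; the inner maximization over $\vkappa$ then collapses to $\bar{\Omega}(\veta)$ (respectively to the constraint $\bar{\Omega}(\veta)\leq 1$). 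You instead attack the squared identity first, and do so structurally: you recognize that it asserts exactly $\Omega=\Omega_H$ for $H$ the positive part of the $\bar{\Omega}$-ball, and you verify this by computing $\Omega_H^*$ via \prop{prop:dual_Hnorm} and matching it with $\Omega^*$ through the very definition of $\bar{\Omega}^*$. You then get the non-squared identity by the one-line homogeneity trick $\veta=t\vu$. Your argument is more conceptual --- it makes explicit that the lemma is the statement ``subquadratic $\Rightarrow$ $H$-norm'' for a specific $H$, which the paper only spells out in the proposition that follows --- and it avoids the min--max interchange. The paper's approach, on the other hand, is more self-contained (it needs only the scalar identities, not the $H$-norm machinery) and treats the two formulas symmetrically rather than deriving one from the other.
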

\begin{proof}
Note that by construction, $\bar{\Omega}^*$ is homogeneous, symmetric and definite ($\bar{\Omega}^*(\vkappa)=0 \Rightarrow \vkappa=0$).
If $\bar{\Omega}^*$ is convex then $\bar{\Omega}^*(\frac{1}{2}(\vv+\vu)) \leq \frac{1}{2} \big ( \bar{\Omega}^*(\vv)+\bar{\Omega}^*(\vu)\big )$, which by homogeneity shows that $\bar{\Omega}^*$ also satisfies the triangle inequality. Together, these properties show that $\bar{\Omega}^*$ is a norm.
To prove the first identity we have, applying \eq{eq:norm_varia}, and since $\Omega$ is absolute,
\begin{eqnarray*}
\Omega(\vw)&=&\max_{\vkappa \in \R^p_+} \vkappa^\top |\vw| \quad \st \quad \Omega^*(\vkappa)\leq 1\\
&=&\max_{\vkappa \in \R^p_+} \sum_{i=1}^p \vkappa_i^{1/2} |\vw_i| \quad \st \quad \Omega^*(\vkappa^{1/2})^2\leq1\\
&=&\max_{\vkappa \in \R^p_+} \min_{\veta \in \R^p_+} \frac{1}{2} \sum_{i=1}^p \frac{\vw_i^2}{\veta_i} + \vkappa^\top \veta \quad \st \quad \bar{\Omega}^*(\vkappa)\leq1\\
&=& \min_{\veta \in \R^p_+} \max_{\vkappa \in \R^p_+}\frac{1}{2} \sum_{i=1}^p \frac{\vw_i^2}{\veta_i} + \vkappa^\top \veta \quad \st \quad \bar{\Omega}^*(\vkappa)\leq1, 
\end{eqnarray*}
which proves the first variational formulation (note that we can switch the order of the $\max$ and $\min$ operations because strong duality holds, which is due to the non-emptiness of the unit ball of the dual norm).
The second one follows similarly by applying \eq{eq:sqnorm_varia} instead of \eq{eq:norm_varia}.
\begin{eqnarray*}
\Omega(\vw)^2
&=&\max_{\vkappa \in \R_+^p} \big(\sum_{i=1}^p \vkappa_i^{1/2} |\vw_i|\big)^2 \quad \st \quad \Omega^*(\vkappa^{1/2})^2\leq1\\
&=&\max_{\vkappa \in \R^p_+} \min_{\tilde{\veta} \in \R^p_+} \sum_{i=1}^p \frac{\vkappa_i \vw_i^2}{\tilde{\veta}_i} \quad \st \quad \sum_{i=1}^p \tilde{\veta}_i=1, \: \bar{\Omega}^*(\vkappa)\leq 1\\
&=&\max_{\vkappa \in \R^p_+} \min_{\veta \in \R^p_+}   \sum_{i=1}^p \frac{\vw_i^2}{\veta_i} \quad \st \quad \veta^\top\vkappa=1, \: \bar{\Omega}^*(\vkappa)\leq1.\\
\end{eqnarray*}
\end{proof}
 Thus, given a subquadratic norm, we may define a convex set $H$, namely the intersection of the unit ball of $\bar{\Omega}$ with the positive orthant $\R_+^p$, such that $\Omega(\vw)^2=\min_{\veta \in H} \sum_{i=1}^p \frac{\vw_i^2}{\veta_i} $, i.e., a subquadratic norm is an $H$-norm. We now show that these two properties are in fact equivalent.
\begin{proposition}
$\Omega$ is \emph{subquadratic} if and only if it is an $H$-norm.
\end{proposition}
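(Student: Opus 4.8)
The plan is to prove the two implications separately, observing that one direction is essentially already established in the paragraph preceding the statement.

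\textbf{Forward direction (subquadratic $\Rightarrow$ $H$-norm).} This is exactly the content of the discussion built on \lem{lem:subqua_varia}: if $\Omega$ is subquadratic, then $\bar{\Omega}^*$ is a norm with dual $\bar{\Omega}$, and $\Omega(\vw)^2=\min_{\veta \in H}\sum_i \vw_i^2/\veta_i$ for $H=\{\veta\in\R_+^p \mid \bar{\Omega}(\veta)\leq 1\}$. The only thing left is to check that this $H$ meets the hypotheses of \defi{def:Hnorm}. It is convex as the intersection of a norm ball with the orthant; it is compact, being closed and bounded (the unit ball of the norm $\bar{\Omega}$ is bounded since all norms on $\R^p$ are equivalent); and it intersects $(\R_+^*)^p$ because $\varepsilon\cdot(1,\ldots,1)^\top$ lies in $H$ for $\varepsilon$ small enough. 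Hence $\Omega$ is an $H$-norm.

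\textbf{Converse ($H$-norm $\Rightarrow$ subquadratic).} This is where the actual work lies, and it is short. Suppose $\Omega=\Omega_H$. First I would verify that the pair $(\Omega_H,\Omega_H^*)$ is an absolute dual pair, which is needed merely to speak of subquadraticity: by \defi{def:Hnorm} the value $\Omega_H(\vw)$ depends on $\vw$ only through $(\vw_i^2)_i$, so $\Omega_H(\vw)=\Omega_H(|\vw|)$ and $\Omega_H$ is absolute; likewise the formula $\Omega_H^*(\vkappa)^2=\max_{\veta\in H}\sum_i \veta_i\,\vkappa_i^2$ from \prop{prop:dual_Hnorm} depends on $\vkappa$ only through $(\vkappa_i^2)_i$, so $\Omega_H^*$ is absolute as well. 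It then remains to show that $\bar{\Omega}^*$ is convex. The key computation is to substitute $|\vkappa|^{1/2}$ into the dual-norm formula of \prop{prop:dual_Hnorm}, which yields $\bar{\Omega}^*(\vkappa)=[\Omega_H^*(|\vkappa|^{1/2})]^2=\max_{\veta\in H}\sum_i \veta_i\,|\vkappa_i|$. For each fixed $\veta\in H$ the map $\vkappa\mapsto\sum_i \veta_i|\vkappa_i|$ is convex, being a nonnegative combination (using $\veta_i\geq 0$) of the convex functions $|\vkappa_i|$; a pointwise supremum of convex functions is convex, so $\bar{\Omega}^*$ is convex and $\Omega$ is subquadratic.

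The main obstacle—to the extent there is one—is purely a matter of bookkeeping: recognizing that squaring $\Omega_H^*(|\vkappa|^{1/2})$ cancels the square root appearing inside \prop{prop:dual_Hnorm} and leaves the expression $\max_{\veta\in H}\veta^\top|\vkappa|$, which is affine in $|\vkappa|$ and therefore visibly convex in $\vkappa$. Some care should also be taken in the forward direction to confirm the compactness, convexity, and strict-positivity hypotheses on $H$, since these are explicit requirements of \defi{def:Hnorm} rather than automatic consequences.
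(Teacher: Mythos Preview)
Your proof is correct and follows essentially the same approach as the paper: the forward direction is deferred to \lem{lem:subqua_varia}, and for the converse you use \prop{prop:dual_Hnorm} to compute $\bar{\Omega}_H^*(\vkappa)=\max_{\veta\in H}\sum_i\veta_i|\vkappa_i|$ and conclude convexity as a pointwise maximum of convex functions. Your additional verification that the set $H$ from \lem{lem:subqua_varia} satisfies the hypotheses of \defi{def:Hnorm}, and that both $\Omega_H$ and $\Omega_H^*$ are absolute, is more careful than the paper's own proof, which leaves these checks implicit.
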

\begin{proof}
The previous lemma show that subquadratic norms are $H$-norms. Conversely, let $\Omega_H$ be an $H$-norm. By construction, $\Omega_H$ is absolute, and as a result of Prop.~\ref{prop:dual_Hnorm}, $\bar{\Omega}^*_H(\vw)=\big ( \Omega^*_H(|\vw|^{1/2}) \big )^2=\max_{\veta \in H} \sum_i \veta_i|\vw_i|$, which shows that $\bar{\Omega}^*_H$ is a convex function, as a maximum of convex functions.
\end{proof}

It should be noted that the set $H$ leading to a given $H$-norm $\Omega_H$ is not unique; in particular $H$ is not necessarily the intersection of the unit ball of a norm with the positive orthant. Indeed, for the $\ell_1$-norm, we can take $H$ to be the unit simplex.

\begin{proposition}
Given a convex compact set $H$, let $\Omega_H$ be the associated $H$-norm and $\bar{\Omega}_H$ as defined in Lemma~\ref{lem:subqua_varia}. Define the mirror image of $H$ as the set $\text{Mirr}(H)=\{\vv \in \R^p \mid |\vv| \in H\}$ and denote the convex hull of a set $S$ by $\text{Conv}(S)$. Then the unit ball of $\bar{\Omega}_H$ is $\text{Conv}(\text{Mirr}(H))$.
\end{proposition}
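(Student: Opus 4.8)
The plan is to reduce the claim to the bipolar theorem, exploiting the explicit formula for $\bar\Omega^*_H$ obtained just above. Recall from the proof of the preceding proposition (characterizing $H$-norms as subquadratic norms) that
$\bar\Omega^*_H(\vkappa) = \big(\Omega^*_H(|\vkappa|^{1/2})\big)^2 = \max_{\veta \in H} \veta^\top |\vkappa|$,
and that $\bar\Omega_H$ is by definition the dual norm of $\bar\Omega^*_H$. First I would record the standard duality between a norm's unit ball and its dual norm's unit ball: for any norm $N$ with unit ball $B_N = \{\vx \mid N(\vx) \le 1\}$, the unit ball of the dual norm $N^*$ is exactly the polar $B_N^\circ := \{\vz \mid \vx^\top \vz \le 1 \text{ for all } \vx \in B_N\}$, which is immediate from $N^*(\vz) = \max_{N(\vx)\le 1} \vx^\top\vz$. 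Applied with $N = \bar\Omega^*_H$, this says the unit ball of $\bar\Omega_H$ equals the polar of the unit ball of $\bar\Omega^*_H$.

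The key computation is then to identify the unit ball $B^* = \{\vkappa \mid \bar\Omega^*_H(\vkappa)\le 1\}$ of $\bar\Omega^*_H$ with the polar of $C := \text{Conv}(\text{Mirr}(H))$. Since the support function of a set coincides with that of its convex hull, $C^\circ = \{\vkappa \mid \sup_{\vv \in \text{Mirr}(H)} \vv^\top\vkappa \le 1\}$. For $\vv \in \text{Mirr}(H)$, write $\veta = |\vv| \in H$ and optimize over the sign pattern of $\vv$: because the entries of $\veta$ are nonnegative, matching the signs of $\vv$ to those of $\vkappa$ gives $\sup \vv^\top \vkappa = \sum_i \veta_i |\vkappa_i| = \veta^\top|\vkappa|$. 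Maximizing next over $\veta \in H$ yields $\sup_{\vv \in \text{Mirr}(H)} \vv^\top\vkappa = \max_{\veta\in H}\veta^\top|\vkappa| = \bar\Omega^*_H(\vkappa)$, so that $C^\circ = B^*$ exactly.

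Finally I would invoke the bipolar theorem: the unit ball of $\bar\Omega_H$ equals $(B^*)^\circ = (C^\circ)^\circ$, and since $C$ is a closed convex set containing the origin, $(C^\circ)^\circ = C = \text{Conv}(\text{Mirr}(H))$, as claimed. The two hypotheses needed here are exactly where the assumptions on $H$ enter, and verifying them is the only delicate point. Closedness—indeed compactness—of $C$ follows since $\text{Mirr}(H) = \bigcup_{\vs \in \{\pm 1\}^p} \vs \odot H$ is a finite union of compact reflections of $H$, hence compact, and the convex hull of a compact subset of $\R^p$ is compact. Membership $0 \in C$ uses $H \cap (\R_+^*)^p \neq \varnothing$: if $\veta^0 \in H$ has all strictly positive entries, then both $\veta^0$ and its total sign-flip $-\veta^0$ lie in $\text{Mirr}(H)$, so their midpoint $0$ lies in $C$. (The same strictly positive point makes $\bar\Omega^*_H$ definite and finite, confirming that $\bar\Omega_H$ is a genuine norm with compact unit ball, which is what legitimizes the polar manipulations.) The main obstacle is thus not a hard estimate but bookkeeping of these convex-geometry hypotheses; the substance of the argument is the one-line sign optimization that converts the support function of $\text{Mirr}(H)$ into the absolute-value form defining $\bar\Omega^*_H$.
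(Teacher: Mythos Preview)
Your proof is correct and follows essentially the same approach as the paper: both compute $\bar{\Omega}^*_H(\vkappa)=\max_{\veta\in H}\veta^\top|\vkappa|$, perform the sign optimization to identify this with the support function of $\text{Mirr}(H)$ (equivalently of $C=\text{Conv}(\text{Mirr}(H))$), and then invoke a standard convex duality result. The only cosmetic difference is that the paper phrases the last step as ``$C$ is centrally symmetric, compact, with $0$ in its interior, hence the unit ball of a norm by Theorem~15.2 in \cite{rockafellar97}, which by duality must be $\bar{\Omega}_H$'', whereas you phrase it via the bipolar theorem $(C^\circ)^\circ=C$; these are two sides of the same coin.
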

\begin{proof}
By construction:
\begin{eqnarray*}
\bar{\Omega}^*_H(\vkappa) & = & \Omega^*_H(|\vkappa|^{1/2})^2
=\max_{\veta \in H} \veta^\top |\vkappa| \\
&= &  \max_{|\vw| \in H} \vw^\top\vkappa=\max_{\vw \in \text{Conv}(\text{Mirr}(H))} \vw^\top\vkappa,
\end{eqnarray*}
since the maximum of a convex function over a convex set is attained at its extreme points.
But $C=\text{Conv}(\text{Mirr}(H))$ is by construction a centrally symmetric convex set, which is bounded and closed like $H$, and whose interior contains $0$ since $H$ contains at least one point whose components are strictly positive. This implies by Theorem 15.2 in \cite{rockafellar97} that $C$ is the unit ball of a norm (namely $\vx \mapsto \inf \{\lambda \in \R_+ \mid \vx \in \lambda C\}$), which by duality has to be the unit ball of $\bar{\Omega}_H$.
\end{proof}
This proposition combined with the result of Lemma~\ref{lem:subqua_varia} therefore shows that if $\text{Conv}(\text{Mirr}(H))=\text{Conv}(\text{Mirr}(H'))$ then $H$ and $H'$ define the same norm.

Several instances of the general variational form we considered in this section have appeared in the literature~\cite{jenatton2010sspca,pontil,simpleMKL}. 
For norms that are not subquadratic, it is often the case that their dual norm is itself subquadratic, in which case symmetric variational forms can be obtained~\cite{aflalo2011variable}.
Finally, we show in Section \ref{sec:reweighted_l2} that all norms admit a quadratic variational form provided the bilinear form considered is allowed to be non-diagonal.

\section{Multiple Kernel Learning}
\label{sec:mkl}

A seemingly unrelated problem in machine learning, the problem of \emph{multiple kernel learning} is in fact intimately connected with sparsity-inducing norms by duality. It actually corresponds to the most natural extension of sparsity to reproducing kernel Hilbert spaces.
We will show that for a large class of norms and, among them, many sparsity-inducing norms, there exists for each of them a corresponding multiple kernel learning scheme, and, vice-versa, each multiple kernel learning scheme defines a new norm.

The problem of kernel learning is a priori quite unrelated with parsimony.
It emerges as a consequence of a convexity property of the so-called ``kernel trick", which we now describe.
Consider a learning problem with $f(\vw)=\psi(\mx\vw)$. As seen before, this corresponds to linear predictions of the form $\mx \vw = ( \vw^\top \vx_i)_{i=1,\dots,n}$. Assume that this learning problem is this time regularized this time by the square of the norm $\Omega$ (as shown in Section~\ref{sec:losses}, this does not change the regularization properties), so the we have the following optimization problem:
\begin{equation}
\label{eq:primal_learn}
\min_{\vw\in \R^p} f(\vw)+\frac{\lambda}{2} \Omega(\vw)^2.
\end{equation}
As in Eq.~\eqref{eq:constraint} we can introduce the linear constraint
\begin{equation}
\label{eq:constraint2}
\min_{\vu \in \R^n,\vw\in \R^p} \psi(\vu)+\frac{\lambda}{2} \Omega(\vw)^2 \quad \st \quad \vu=\mx \vw,
\end{equation}
and reformulate the problem as the saddle point problem
\begin{equation}
\min_{\vu \in \R^n,\vw\in \R^p} \max_{\alpha \in \R^n} \psi(\vu)+\frac{\lambda}{2} \Omega(\vw)^2 - \lambda \valpha^\top (\vu-\mx \vw).
\end{equation}
Since the primal problem (\ref{eq:constraint2}) is a convex problem with feasible linear constraints, it satisfies Slater's qualification conditions and the order of maximization and minimization can be exchanged:
\begin{equation}
\max_{\valpha \in \R^n} \min_{\vu \in \R^n,\vw\in \R^p}  (\psi(\vu)- \lambda \valpha^\top \vu)+ \lambda \big ( \frac{1}{2} \Omega(\vw)^2 +\valpha^\top\mx \vw ).
\end{equation}
Now, the minimization in $\vu$ and $\vw$ can be performed independently.
One property of norms is that the Fenchel conjugate of $\vw \mapsto \frac{1}{2} \Omega(\vw)^2$ is $\vkappa \mapsto \frac{1}{2} \Omega^*(\vkappa)^2$; 
this can be easily verified by finding the vector $\vw$ achieving equality in the sequence of inequalities
 $\vkappa^\top \vw \leq \Omega(\vw) \, \Omega^*(\vkappa) \leq \frac{1}{2} \big [ \Omega(\vw)^2+\Omega^*(\vkappa)^2 \big ]$. 
As a consequence, the dual optimization problem is
\begin{equation}
\max_{\valpha \in \R^n} - \psi^*(\lambda \valpha) -\frac{\lambda}{2} \Omega^*(\mx^\top \valpha)^2.
\end{equation}
If $\Omega$ is the Euclidean norm (i.e., the $\ell_2$-norm) then the previous problem is simply
\begin{equation}
\label{eq:dual_learn}
G(\mk) \defin \max_{\valpha \in \R^n} - \psi^*(\lambda \valpha) -\frac{\lambda}{2} \valpha^\top \mk \valpha \quad \text{with} \quad \mk=\mx \mx^\top.
\end{equation}
Focussing on this last case, a few remarks are crucial: 
\begin{enumerate}
\item The dual problem depends on the design $\mx$ only through the kernel matrix $\mk =\mx \mx^\top \in \mathbb{R}^{n \times n}$.
\item $G$ is a \emph{convex} function of $\mk$ (as a maximum of linear functions).
\item The solutions $\vw^{\star}$ and $\valpha^{\star}$ to the primal and dual problems satisfy $\vw^{\star}=\mx^\top \valpha^{\star}=\sum_{i=1}^n \valpha_i^{\star} \vx_i$.
\item The exact same duality result applies for the generalization to $\vw, \vx_i \in \Hc$ for $\Hc$ a Hilbert space.\\
\end{enumerate}

The first remark suggests a way to solve learning problems that are non-linear in the inputs $\vx_i$: in particular consider a non-linear mapping~$\phi$ which maps $\vx_i$ to a high-dimensional $\phi(\vx_i) \in \Hc$ with $ \Hc=\R^d$ for $d \gg p$ or possibly an infinite dimensional Hilbert space. Then consider the problem (\ref{eq:primal_learn}) with now $f(\vw)=\psi \big( ( \langle\vw, \phi(\vx_i)\rangle)_{i=1,\dots,n} \big)$, which is typically of the form of an empirical risk
$f(\vw)=
\frac{1}{n} \sum_{i=1}^n \ell( y^{(i)}, \langle\vw, \phi(\vx_i)\rangle)$. It  becomes high-dimensional to solve in the primal, while it is simply solved in the dual by choosing a kernel matrix with entries $\mk_{i,j}=\langle \phi(\vx_i),\phi(\vx_j) \rangle$, which is advantageous as soon as $n^2\leq d$; this is the so-called ``kernel trick'' (see more details in~\cite{scholkopf-smola-book,Shawe-Taylor2004}).

In particular if we consider functions $h \in \Hc$ where $\Hc$ is a reproducing kernel Hilbert space (RKHS) with reproducing kernel $K$ then 
\begin{equation}
\label{eq:rkhs_learn}
\min_{h \in \Hc} \psi \big( (h(\vx_i))_{i=1,\dots,n} \big) +\frac{\lambda}{2}\|h\|^2_{\Hc}
\end{equation} 
is solved by solving Eq.~\eq{eq:dual_learn} with $\mk_{i,j}=K(\vx_i,\vx_j)$.  
When applied to the mapping $\phi: \vx \mapsto K(\vx,\cdot)$, the third remark above yields a specific version of the  representer theorem of Kimmeldorf and Wahba~\cite{representer}\footnote{Note that this provides a proof of the representer theorem for \emph{convex} losses only and that the parameters $\valpha$ are obtained through a dual \emph{maximization} problem.} stating that
$h^{\star}(\cdot)=\sum_{i=1}^n {\valpha^{\star}_i} K(\vx_i,\cdot)$. In this case, the predictions may be written equivalently as $
h(\vx_i)$ or $\langle\vw, \phi(\vx_i)\rangle$, $i=1,\dots,n$. 

As shown in~\cite{gert}, the fact that $G$ is a convex function of $\mk$ suggests the possibility of optimizing the objective with respect to the choice of the kernel itself by solving a problem of the form $\min_{\mk \in \Kc} G(\mk)$ where $\Kc$ is a convex set of kernel matrices.
 
In particular, given a finite set of kernel functions $(K_i)_{1\leq i\leq p}$ it is natural to consider to find the best \emph{linear} combination of kernels, which requires to add a positive definiteness constraint on the kernel, leading to a semi-definite program~\cite{gert}:
\begin{equation}
\label{eq:sdp}
\!\!\!\!\!\!\!\! \min_{\veta \in \R^p} G({\textstyle \sum_{i=1}^p \veta_i \mk_i}) \: \st \: {\textstyle \sum_{i=1}^p \eta_i \mk_i} \succeq 0, \: {\rm tr}( {\textstyle \sum_{i=1}^p \veta_i \mk_i}) \leq 1.
\end{equation}
Assuming that the kernels have equal trace, the two constraints of the previous program are avoided by considering convex combinations of kernels, which leads to a quadratically-constrained quadratic program (QCQP)~\cite{genomic_fusion}:
\begin{equation}
\displaystyle \min_{\veta \in \R^p_+} G({\textstyle \sum_{i=1}^p \veta_i \mk_i}) \quad \st \quad {\textstyle \sum_{i=1}^p \veta_i=1}.
\label{eq:mkl1}
\end{equation}
We now present a reformulation of Eq.~\eq{eq:mkl1} using sparsity-inducing norms (see~\cite{Bach2008a,skm, simpleMKL} for more details).

\subsection{From $\ell_1/\ell_2$-Regularization to MKL}
As we presented it above, MKL arises from optimizing the objective of a learning problem w.r.t. to a convex combination of kernels, in the context of plain $\ell_2$- or Hilbert norm regularization, which seems a priori unrelated to sparsity. We will show in this section that, in fact, the primal problem corresponding exactly to MKL (i.e., Eq.~\ref{eq:mkl1}) is an $\ell_1/\ell_2$-regularized problem (with the $\ell_1/\ell_2$- norm defined in Eq.~\eq{eq:def_omega_group}), in the sense that its dual is the MKL problem for the set of kernels associated with each of the groups of variables.
The proof to establish the relation between the two relies on the variational formulation presented in Section~\ref{sec:subqua}.

We indeed have, assuming that $\Gc$ is a partition of $\{1, \ldots,p\}$, with $|\Gc|=m$,
and $\triangle_{m}$ denoting the simplex in $\R^m$,
\begin{eqnarray*}
\min_{\vw \in \R^p}  & & \psi(\mx \vw)  +  \frac{\lambda}{2} \big ( {\textstyle \sum_{g \in \Gc}} \|\vw_g\|_2 \big )^2 \\
=\min_{\vw \in \R^p, \veta \in \triangle_{m}} & &  \psi(\mx \vw)  +  \frac{\lambda}{2}\sum_{g \in \Gc} \frac{\|\vw_g\|_2^2}{\veta_g}\\
=\min_{\widetilde{\vw} \in \R^p, \veta \in \triangle_{m}} & & \psi({\textstyle \sum_{g \in \Gc} \veta_g^{1/2} \mx_g \widetilde{\vw}_g})  +  \frac{\lambda}{2}  \sum_{g \in \Gc} \|\widetilde{\vw}_g\|_2^2\\
=\min_{\widetilde{\vw} \in \R^p, \veta \in \triangle_{m}} & & \psi(\widetilde{\mx} \widetilde{\vw})  +  \frac{\lambda}{2} \|\widetilde{\vw}\|_2^2 \quad \st \widetilde{\mx}=[\eta_{g_1}^{1/2} \mx_{g_1}, \ldots, \eta_{g_m}^{1/2} \mx_{g_m}]\\
=\min_{\veta \in \triangle_{m}} \max_{\valpha \in \R^n} & & - \psi^*(\lambda \valpha )  -  \frac{\lambda}{2}\valpha^\top \big ({\textstyle \sum_{g \in \Gc} \veta_g \mk_g }\big ) \valpha\\
=\min_{\veta \in \triangle_{m}} & & G({\textstyle \sum_{g \in \Gc} \veta_g \mk_g}),
\end{eqnarray*}
where the third line results from the change of variable $\widetilde{\vw}_g \veta_g^{1/2}=\vw_g$, and the last step from the definition of $G$ in Eq.~\eq{eq:dual_learn}.

Note that $\ell_1$-regularization corresponds to the special case where groups are singletons and where
$\mk_i=\vx_i \vx_i^\top$ is a rank-one kernel matrix. In other words, MKL with rank-one kernel matrices (i.e., feature spaces of dimension one) is equivalent to $\ell_1$-regularization (and thus simpler algorithms can be brought to bear in this situation).

We have shown that learning convex combinations of kernels through Eq.~\eq{eq:mkl1} turns out to be equivalent to an $\ell_1/\ell_2$-norm penalized problems. In other words, learning a linear combination $\textstyle \sum_{i=1}^m \veta_i \mk_i$ of kernel matrices, subject to $\veta$ belonging to the simplex $\triangle_{m}$ is equivalent to penalizing the empirical risk with an $\ell_1$-norm applied to norms of predictors $\| \vw_g \|_2$. This link between the $\ell_1$-norm and the simplex may be extended to other norms, among others to the subquadratic norms introduced in Section~\ref{sec:subqua}.

\subsection{Structured Multiple Kernel Learning}
\label{sec:struct_mkl}
In the relation established between $\ell_1/\ell_2$-regularization and MKL in the previous section, the vector of weights $\veta$ for the different kernels corresponded with the vector of optimal variational parameters defining the norm.
A natural way to extend MKL is, instead of considering a convex combination of kernels, to consider a linear combination of the same kernels, but with positive weights satisfying a different set of constraints than the simplex constraints. Given the relation between kernel weights and the variational form of a norm, we will be able to show that, for norms that   
 have a variational form as in \lem{lem:convex_var}, we can generalize the correspondence between the $\ell_1/\ell_2$-norm and MKL to  a correspondence between other structured norms and structured MKL schemes. 
 
Using the same line of proof as in the previous section, and given an $H$-norm (or equivalently a subquadratic norm) $\Omega_H$  as defined in Definition~\ref{def:Hnorm}, we have:
\begin{eqnarray}
\label{eq:struct_mkl}
\min_{\vw \in \R^p}  & & \psi(\mx \vw)  +  \frac{\lambda}{2} \Omega_H(\vw)^2 \\
=\min_{\vw \in \R^p, \veta \in H} & &  \psi(\mx \vw)  +  \frac{\lambda}{2}\sum_{i=1}^p \frac{\vw_i^2}{\veta_i}\notag\\
=\min_{\widetilde{\vw} \in \R^p, \veta \in H} & & \psi({\textstyle \sum_{i=1}^p \veta_i^{1/2} \mx_i \widetilde{\vw}_i})  +  \frac{\lambda}{2}  \sum_{i=1}^p \widetilde{\vw}_i^2\notag\\
=\min_{\widetilde{\vw} \in \R^p, \veta \in H} & & \psi(\widetilde{\mx} \widetilde{\vw})  +  \frac{\lambda}{2} \|\widetilde{\vw}\|_2^2 \quad \st \widetilde{\mx}=[\veta_{1}^{1/2} \mx_{1}, \ldots, \veta_{p}^{1/2} \mx_{p}] \notag\\
=\min_{\veta \in H} \max_{\valpha \in \R^n} & & - \psi^*(\lambda \valpha )  -  \frac{\lambda}{2}\valpha^\top \big ({\textstyle \sum_{i=1}^p \veta_i \mk_i }\big ) \valpha \notag\\
=\min_{\veta \in H} & & G({\textstyle \sum_{i=1}^p \veta_i \mk_i}). \notag
\end{eqnarray}

This results shows that the regularization with the norm $\Omega_H$ in the primal is equivalent to a multiple kernel learning formulation in which the kernel weights are constrained to belong to the convex set $H$, which defines $\Omega_H$ variationally. Note that we have assumed that $H \subset \R_+^p$, so that formulations such as \eq{eq:sdp}, where positive semidefiniteness of $\sum_{i=1}^p \veta_i \mk_i$ has to be added as a constraint, are not included.

Given the relationship of MKL to the problem of learning a function in a reproducing kernel Hilbert space, 
the previous result suggests a natural extension of structured sparsity to the RKHS settings.
Indeed let, $h=(h_1, \ldots, h_p) \in \Bc\defin \Hc_1\times \ldots\times\Hc_p$, where $\Hc_i$ are RKHSs. It is easy to verify that $\Lambda:h \mapsto \Omega_H \big ( (\|h_1\|_{\Hc_1}, \ldots,\|h_p\|_{\Hc_p}))$ is a convex function, using the variational formulation of $\Omega_H$, and since it is also non-negative definite and homogeneous, it is a norm\footnote{As we show in Section \ref{sec:prox_mkl}, it actually sufficient to assume that $\Omega$ is monotonic for $\Lambda$ to be a norm.}.  Moreover, the learning problem obtained by summing the predictions from the different RKHSs, i.e., 
\begin{equation}
\label{eq:structmkl_learn}
\min_{h \in \Bc} \psi \big( (h_1(\vx_i)+\ldots+h_p(\vx_i))_{i=1,\dots,n} \big)+\frac{\lambda}{2} \Omega_H \big ( (
\|h_1\|_{\Hc_1}, \ldots,\|h_p\|_{\Hc_p})
\big)^2
\end{equation}
 is equivalent, by the above derivation, to the MKL problem $\min_{\veta \in H} G({\textstyle \sum_{i=1}^p \veta_i \mk_i})$ with $[\mk_i]_{j,j'}=K_i(\vx_j,\vx_{j'})$ for $K_i$ the reproducing kernel of $\Hc_i$. See Section~\ref{sec:prox_mkl} for more details.

This means that, for most of the structured sparsity-inducing norms that we have considered in Section~\ref{sec:norms}, we may replace individual variables by whole Hilbert spaces. For example, tree-structured sparsity (and its extension to directed acyclic graphs) was explored in~\cite{hkl} where each node of the graph was a RKHS, with an application to non-linear variable selection.

\chapter{Generic Methods}
\label{sec:opt_methods_classical}

The problem defined in Eq.~(\ref{eq:formulation}) is convex, as soon as both the loss $f$ and the regularizer $\Omega$ are convex functions. In this section, we consider optimization strategies which are essentially blind to problem structure. The first of these techniques is subgradient descent (see, e.g., \cite{bertsekas}), which is widely applicable, has low running time complexity per iterations, but has a slow convergence rate. As opposed to proximal methods presented in Section~\ref{sec:proximal}, it does not use problem structure. At the other end of the spectrum, the second strategy
is to consider reformulations such as linear programs (LP), quadratic programs (QP) or more generally, second-order cone programming (SOCP) or semidefinite programming (SDP) problems~(see, e.g., \cite{boyd.convex}).
The latter strategy is usually only possible with the square loss and makes use of general-purpose optimization toolboxes. Moreover, these toolboxes are only adapted to small-scale problems and usually lead to solution with very high precision (low duality gap), while simpler iterative methods can be applied to large-scale problems but only leads to solution with low precision, which is sufficient in most applications to machine learning~(see~\cite{bottou2008tradeoffs} for a detailed discussion).

\paragraph{Subgradient descent.}
For all convex unconstrained problems, subgradient descent can be used as soon as one subgradient can be computed efficiently. In our setting, this is possible when a subgradient of the loss $f$, and a subgradient of the regularizer $\Omega$ can be computed. This is true for all the norms that we have considered. The corresponding algorithm consists of the following iterations:
$$
\vw_{t+1} = \vw_{t} - \frac{\alpha}{t^\beta} ( \vs + \lambda \vs' ), \mbox{ where } \vs \in \partial f(\vw_t), \ \vs' \in  \partial \Omega(\vw_t),
$$
with $\alpha$ a well-chosen positive parameter and $\beta$ typically $1$ or $1/2$. Under certain conditions, these updates are globally convergent. More precisely, we have, from~\cite{nesterov2004introductory},
$F(\vw_t) - \min_{\vw \in \R^p} F(\vw) = O(\frac{\log t}{\sqrt{t}})$ for Lipschitz-continuous function and $\beta=1/2$.
However, the convergence is in practice slow (i.e., many iterations are needed), and the solutions obtained are usually not sparse. This is to be contrasted with the proximal methods presented in the next section which are less generic but more adapted to sparse problems, with in particular convergence rates in $O(1/t)$ and $O(1/t^2)$.

\paragraph{Reformulation as LP, QP, SOCP, SDP.}
 For all the sparsity-inducing norms we consider in this paper the corresponding regularized least-square problem can be represented by standard mathematical programming problems, all of them being SDPs, and often simpler (e.g., QP). For example, for the $\ell_1$-norm regularized least-square regression, we can reformulate
$\min_{\vw \in \R^p} \frac{1}{2n} \| \vy - \mx \vw \|_2^2 + \lambda \Omega(\vw)$ as
$$
\min_{ \vw_+, \vw_- \in \R_+^p} \frac{1}{2n} \| \vy-\mx\vw_+ + \mx\vw_- \|_2^2 + \lambda (  1^\top \vw_+ + 1^\top \vw_-),
$$
which is a quadratic program. Grouped norms with combinations of $\ell_2$-norms leads to an SOCP, i.e., 
$\min_{w \in \R^p} \frac{1}{2n} \| \vy - \mx \vw \|_2^2 + \lambda \sum_{ g \in \mathcal{G}} d_g \| \vw_g \|_2$ may be formulated as
$$
\min_{ \vw  \in \R^p, \ (\vt_g)_{g \in \mathcal{G}} \in \R_+^{|\mathcal{G}|}} \frac{1}{2n} \| \vy-\mx \vw \|_2^2 + \lambda \sum_{g \in \mathcal{G}} d_g \vt_g 
\mbox{ s.t } \forall g \in \mathcal{G}, \ \|\vw_g\|_2 \leq \vt_g.$$
 Other problems can be similarly cast (for the trace norm, see~\cite{tracenorm,fazel}).
General-purpose toolboxes can then be used, to get solutions with high precision (low duality gap). However, in the context of machine learning, this is inefficient for two reasons: (1) these toolboxes are generic and blind to problem structure and tend to be too slow, or cannot even run because of memory problems, (2) as outlined in~\cite{bottou2008tradeoffs}, high precision is not necessary for machine learning problems, and a duality gap of the order of machine precision (which would be a typical result from such toolboxes) is not necessary.

We present in the following sections methods that are adapted to problems regularized by sparsity-inducing norms.

\chapter{Proximal Methods}
\label{sec:proximal_methods}
\label{sec:opt_methods_prox}

This chapter reviews a class of techniques referred to as \textit{proximal methods},
where, the non-smooth component of the objective \eqref{eq:formulation} will 
only be involved in the computations through an associated \textit{proximal operator}, which we formally define subsequently. 

The presentation that we make of proximal methods in this chapter is deliberately
simplified, and to be rigorous the methods that we will refer to as proximal methods in this section
are known as \textit{forward-backward splitting} methods. We refer the interested reader to Section~\ref{sec:extensions}
for a broader view and references.

\section{Principle of Proximal Methods}
\label{sec:proximal}

Proximal methods (i.e., forward-backward splitting methods) are specifically tailored to optimize an objective of the form \eqref{eq:formulation}, i.e., 
which can be written as the sum of a generic smooth differentiable function $f$ with Lipschitz-continuous gradient, 
and a non-differentiable function $\lambda \Omega$.

They have drawn increasing attention in
the machine learning community, especially because of their convergence rates and their ability to deal with large nonsmooth convex problems
(e.g., \cite{beck2009fast,combettes2010proximal,nesterov2007gradient,wright2009sparse}).
\\
   Proximal methods can be described as follows: at each iteration the function $f$ is linearized around the current point and a problem of the form
\begin{equation}
\label{eq:prox_linearization}
\min_{\vw \in \R^p} \: f(\vw^t)\! +\! \nabla\! f(\vw^t)^\top (\vw - \vw^t)  + \lambda \Omega(\vw) + { \frac{L}{2}}\|\vw - \vw^t\|_2^2
\end{equation}
is solved. The quadratic term, called proximal term, keeps the update in a neighborhood of the current iterate $\vw^t$ where $f$ is close to its linear approximation; $L\!>\!0$ is a parameter, which should essentially be an upper bound on the Lipschitz constant of $\nabla f$ and is typically set with a line-search. This problem can be rewritten as
\begin{equation}
\label{eq:prox_least_square}
\min_{\vw \in \R^p} \: \frac{1}{2}\big\|\vw - \big(\, \vw^t - {\frac{1}{L}}\nabla\! f(\vw^t) \, \big) \big\|_2^2 \, + \, {\frac{\lambda}{L}} \Omega(\vw).
\end{equation}
It should be noted that when the nonsmooth term $\Omega$ is not present, the solution of the previous
proximal problem, also known as the backward or implicit step, 
just yields the standard gradient update rule $\vw^{t+1} \leftarrow \vw^t - {\frac{1}{L}}\nabla\! f(\vw^t)$. Furthermore, if $\Omega$ is the indicator function of a set $\iota_C$, i.e., defined by $\iota_C(x)=0$ for $x \in C$ and $\iota_C(x)=+\infty$ otherwise, then solving (\ref{eq:prox_least_square}) yields the projected gradient update with projection on the set $C$. This suggests that the solution of the proximal problem provides an interesting generalization of gradient updates, and motivates the introduction of the notion of a \textit{proximal operator} associated with the regularization term $\lambda\Omega$.

The proximal operator, which we will denote $\text{Prox}_{\mu \Omega}$, was defined in \cite{moreau1962fonctions} as the function that maps a vector $\vu \in \R^{p}$ to the unique\footnote{Since the objective is strongly convex.} solution of
\begin{equation}
\label{eq:prox_problem}
\min_{\vw \in \R^{p}} \: \frac{1}{2} \|\vu-\vw\|_2^2 + \mu \, \Omega(\vw).
\end{equation}
This operator is clearly central to proximal methods since their main step consists in computing
$\displaystyle
\text{Prox}_{\frac{\lambda}{L} \Omega} \big (\vw^t - \textstyle {\frac{1}{L}} \nabla f(\vw^t) \big ).$

In Section \ref{sec:proximal_operator}, we present analytical forms of proximal operators associated with simple norms and algorithms to compute them in some more elaborate cases. Note that the proximal term in Eq.~(\ref{eq:prox_linearization}) could be replaced by any Bregman divergences~(see, e.g.,~\cite{tseng2008accelerated}), which may be useful in settings where extra constraints (such as non-negativity) are added to the problem.

\section{Algorithms}
The basic proximal algorithm
uses the solution of problem (\ref{eq:prox_least_square}) as the next update $\vw^{t+1}$; 
however fast variants such as the accelerated algorithm presented in \cite{nesterov2007gradient} or FISTA \cite{beck2009fast}
maintain two variables and use them to combine at marginal extra computational cost the solution of (\ref{eq:prox_least_square}) with information about previous steps. Often, an upper bound on the Lipschitz constant of $\nabla f$ is not known, and even if it is\footnote{For problems common in machine learning where $f(\vw) = \psi( \mx \vw)$ and $\psi$ is twice differentiable, then $L$ may be chosen to be the largest eigenvalue of $\frac{1}{n} \mx^\top \mx$ times the supremum over $\vu \in \R^n$ of the largest eigenvalue of the Hessian of $\psi$ at $\vu$.}, it is often better to obtain a local estimate. 
A suitable value for $L$ can be obtained by iteratively increasing $L$ by a constant factor until the condition
\begin{equation}
\label{eq:bound_L}
f(\vw^\star_L) \leq 
 f(\vw^t) + \nabla\! f(\vw^t)^\top (\vw^{\star}_L - \vw^t) + {\frac{L}{2} } \|\vw^{\star}_L - \vw^t\|_2^2
\end{equation}
is met, where $\vw^\star_L$ denotes the solution of (\ref{eq:prox_problem}).

For functions $f$ whose gradients are Lipschitz-continuous, the basic proximal algorithm has a global convergence rate in $O(\frac{1}{t})$ where $t$ is the number of iterations of the algorithm. 
Accelerated algorithms like FISTA can be shown to have global convergence rate---\textit{on the objective function}---in $O(\frac{1}{t^2})$, which has been proved to be optimal for the class of first-order techniques~\cite{nesterov2004introductory}.

Note that, unlike for the simple proximal scheme, we cannot guarantee that the sequence of iterates generated by the accelerated version is itself convergent~\cite{combettes2010proximal}.

Perhaps more importantly, both basic (ISTA) and accelerated \cite{nesterov2007gradient} proximal methods are adaptive in the sense that if $f$ is strongly convex---and the problem is therefore better conditioned---the convergence is actually linear (i.e., with rates in $O(C^t)$ for some constant $C < 1$; see \cite{nesterov2007gradient}).
Finally, it should be noted that accelerated schemes
are not necessarily descent algorithms, in the sense that the objective does not necessarily decrease at each iteration in spite of the global convergence properties.

\section{Computing the Proximal Operator}
\label{sec:proximal_operator}
Computing the \textit{proximal operator} efficiently and exactly allows to attain the 
fast convergence rates of proximal methods.\footnote{Note, however, that fast convergence rates can also be achieved while solving approximately the 
proximal problem, as long as the precision of the approximation iteratively
increases with an appropriate rate (see \cite{schmidt2011} for more details).}
We therefore focus here on properties of this operator and on its computation
for several sparsity-inducing norms. 
For a complete study of the properties of the proximal operator, we refer the interested reader to~\cite{combettes2006signal}.

\paragraph{Dual proximal operator.}
 In the case where~$\Omega$ is a norm, by Fenchel duality, the following problem is dual (see Proposition \ref{prop:duagap}) to problem (\ref{eq:prox_least_square}):
\begin{equation}
\label{eq:dual_prox}
\max_{\vv \in \R^p} \: -\frac{1}{2} \left [ \|\vv-\vu \|_2^2 - \|\vu\|^2 \right ] \quad \text{such that} \quad \Omega^*(\vv) \leq \mu.
\end{equation}

\begin{lemma}[Relation to Dual Proximal Operator]
Let $\text{Prox}_{\mu \Omega}$ be the proximal operator associated with the regularization $\mu \Omega$, where~$\Omega$ is a norm, and $\text{Proj}_{ \{ \Omega^*(\cdot) \leq \mu \}}$ be the projector on the ball of radius $\mu$ of the dual norm~$\Omega^*$. Then $\text{Proj}_{ \{ \Omega^*(\cdot) \leq \mu \}}$ is the proximal operator for the dual problem (\ref{eq:dual_prox}) and, denoting the identity $I_d$, these two operators satisfy the relation
\begin{equation}
\label{eq:prox_proj}
\text{Prox}_{\mu \Omega}=I_d - \text{Proj}_{ \{ \Omega^*(\cdot) \leq \mu \}}.
\end{equation}
\proof
By Proposition \ref{prop:duagap}, if $\vw^\star$ is optimal for (\ref{eq:prox_problem}) and $\vv^\star$ is optimal for (\ref{eq:dual_prox}), we have\footnote{The dual variable from Fenchel duality is $-\vv$ in this case.} $-\vv^\star=\nabla f(\vw^\star)=\vw^\star-\vu$. Since $\vv^\star$ is the projection of $\vu$ on the ball of radius $\mu$ of the norm $\Omega^*$, the result follows.
\end{lemma}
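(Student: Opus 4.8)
The plan is to treat the primal proximal problem~(\ref{eq:prox_problem}) as an instance of the generic regularized problem with smooth part $f(\vw) = \frac{1}{2}\|\vu - \vw\|_2^2$ and nonsmooth part $\mu\,\Omega$, and to read off both assertions from the Fenchel-duality machinery of \prop{prop:duagap}. First I would record the elementary facts that $f$ is differentiable on all of $\R^p$ with $\nabla f(\vw) = \vw - \vu$, so that $\dom f$ has nonempty interior and strong duality (equality in \prop{prop:duagap}) holds, and that $f$ is strongly convex, which guarantees that the primal minimizer $\vw^\star$ is unique and hence that $\text{Prox}_{\mu\Omega}$ is single-valued. Applying \prop{prop:duagap} with $\lambda$ replaced by $\mu$ and using $f^*(\vz) = \vz^\top\vu + \frac{1}{2}\|\vz\|_2^2$, the Fenchel dual reads $\max_{\Omega^*(\vz)\leq\mu} -f^*(\vz)$; the change of variable $\vz = -\vv$ then reproduces exactly the stated dual~(\ref{eq:dual_prox}), which also pins down the sign convention flagged in the footnote.

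For the first claim, I would simply expand the dual objective in~(\ref{eq:dual_prox}): since $-\frac{1}{2}\big[\|\vv - \vu\|_2^2 - \|\vu\|_2^2\big]$ differs from $-\frac{1}{2}\|\vv - \vu\|_2^2$ only by the additive constant $\frac{1}{2}\|\vu\|_2^2$, maximizing it subject to $\Omega^*(\vv)\leq\mu$ is equivalent to minimizing $\|\vv-\vu\|_2^2$ over the closed convex ball $\{\vv : \Omega^*(\vv)\leq\mu\}$. By definition this minimizer is the Euclidean projection of $\vu$ onto that ball, and it is unique by strict convexity of the squared distance together with convexity and closedness of the ball; hence $\vv^\star = \text{Proj}_{\{\Omega^*(\cdot)\leq\mu\}}(\vu)$, which is precisely the statement that the projector solves the dual problem.

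For the operator identity the key step is the complementarity relation between the primal and dual optima. Using the zero-duality-gap decomposition following \prop{prop:duagap} (equivalently, the tightness of the Fenchel--Young inequality of \prop{prop:fenchel_young}), the term $f(\vw^\star) + f^*(\vz^\star) - (\vw^\star)^\top\vz^\star$ must vanish at optimality, which forces $\vz^\star \in \partial f(\vw^\star)$; since $f$ is differentiable this is the equality $\vz^\star = \nabla f(\vw^\star) = \vw^\star - \vu$. Recalling $\vz^\star = -\vv^\star$, this becomes $-\vv^\star = \vw^\star - \vu$, i.e., $\vw^\star = \vu - \vv^\star$. Substituting $\vw^\star = \text{Prox}_{\mu\Omega}(\vu)$ and the identification $\vv^\star = \text{Proj}_{\{\Omega^*(\cdot)\leq\mu\}}(\vu)$ from the previous paragraph, and noting that $\vu \in \R^p$ was arbitrary, yields the operator identity~(\ref{eq:prox_proj}).

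I expect the main obstacle to be the careful bookkeeping in the third step: correctly matching the dual of~(\ref{eq:prox_problem}) to the stated form~(\ref{eq:dual_prox}) --- in particular computing $f^*$ and tracking the $\vz = -\vv$ sign flip --- and then rigorously justifying $\vz^\star = \nabla f(\vw^\star)$ from the vanishing gap rather than merely asserting it. As an independent sanity check I would observe that the result is exactly Moreau's decomposition applied to $g = \mu\Omega$: its conjugate is $g^* = \iota_{\{\Omega^*(\cdot)\leq\mu\}}$ (by the scaling argument behind \prop{prop:fenchel_dualnorm}), whose proximal operator $\argmin_\vw \frac{1}{2}\|\vu - \vw\|_2^2 + \iota_{\{\Omega^*(\cdot)\leq\mu\}}(\vw)$ is the projection onto the dual ball, so that $\vu = \text{Prox}_{\mu\Omega}(\vu) + \text{Proj}_{\{\Omega^*(\cdot)\leq\mu\}}(\vu)$ confirms~(\ref{eq:prox_proj}).
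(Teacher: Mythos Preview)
Your proposal is correct and follows essentially the same approach as the paper: apply \prop{prop:duagap} to the proximal problem with $f(\vw)=\tfrac{1}{2}\|\vu-\vw\|_2^2$, identify the dual optimum $\vv^\star$ as the projection onto the dual-norm ball, and read off $\vw^\star=\vu-\vv^\star$ from the optimality relation $-\vv^\star=\nabla f(\vw^\star)$. Your version is simply more explicit about computing $f^*$, tracking the sign convention $\vz=-\vv$, and justifying the Fenchel--Young equality at optimality, and it adds the Moreau-decomposition cross-check; the paper's proof compresses all of this into two sentences.
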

This lemma shows that the proximal operator can always be computed as the residual of a Euclidean projection onto a convex set. More general results appear in~\cite{combettes2006signal}. 

\paragraph{$\ell_1$-norm regularization.} Using optimality conditions for (\ref{eq:dual_prox}) and then \eqref{eq:prox_proj} or subgradient condition \eqref{eq:opt} applied to (\ref{eq:prox_problem}), it is easy to check that $\text{Proj}_{\{ \|\cdot\|_\infty \leq \mu\}}$ and  $\text{Prox}_{\mu \|\cdot\|_1}$  respectively satisfy:

$$\textstyle \big [ \text{Proj}_{\{ \|\cdot\|_\infty \leq \mu\}}(\vu) \big ]_j= \min \big (1,\frac{\mu}{|\vu_j|} \big ) \, \vu_j,$$
and
$$\big [ \text{Prox}_{\mu \|\cdot\|_1} (\vu) \big ]_j = \Big (1-\frac{\mu}{|\vu_j|} \Big )_+ \vu_j = \sgn(\vu_j)(|\vu_j|-\mu)_+,$$  
for $\ j \in \{1,\dots,p\}$, with $(x)_+ \defin \max(x,0)$. 
Note that $\text{Prox}_{\mu \|\cdot\|_1}$ is componentwise the \emph{soft-thresholding operator} of \cite{donoho} presented in Section~\ref{sec:intro_c}.

\paragraph{$\ell_1$-norm constraint.} Sometimes, the $\ell_1$-norm is used as a hard constraint and, in that case, the optimization problem is
$$\min_{\vw} f(\vw) \quad \text{such that} \quad \|\vw\|_1 \leq C. $$
This problem can still be viewed as an instance of \eqref{eq:formulation}, with $\Omega$ defined by $\Omega(\vu)=0$ if $ \|\vu\|_1 \leq C$ and $\Omega(\vu)=+\infty$ otherwise. Proximal methods thus apply and the corresponding proximal operator is the projection on the $\ell_1$-ball, itself an instance of a \emph{quadratic continuous knapsack problem} for which efficient pivot algorithms with linear complexity have been proposed \cite{brucker1984,maculan1989linear}. Note that when penalizing by the dual norm of the $\ell_1$-norm, i.e., the $\ell_\infty$-norm, the proximal operator is also equivalent to the projection onto an $\ell_1$-ball.

\paragraph{$\ell_2^2$-regularization (ridge regression).} This regularization function does not induce sparsity and is therefore slightly off topic here. It is nonetheless widely used and it is worth mentioning its proximal operator, which is a scaling operator: $$\text{Prox}_{\frac{\mu}{2} \|.\|_2^2}[\vu] = \frac{1}{1+\mu}\vu.$$

\paragraph{$\ell_1+\ell_2^2$-regularization (Elastic-net~\cite{zou}).} This regularization function combines the~$\ell_1$-norm and the classical squared~$\ell_2$-penalty. For a vector~$\vw$ in~$\R^p$, it can be written~$\Omega(\vw) = \|\vw\|_1 + \frac{\gamma}{2}\|\vw\|_2^2$, where~$\gamma>0$ is an additional parameter. It is not a norm, but the proximal operator can be obtained in closed form: 
$$\text{Prox}_{\mu(\|.\|_1+\frac{\gamma}{2}\|.\|_2^2)}=\text{Prox}_{\frac{\mu\gamma}{2}\|.\|_2^2} \circ \text{Prox}_{\mu \|.\|_1} = \frac{1}{\mu\gamma+1}\text{Prox}_{\mu\|.\|_1}.$$
 Similarly to the~$\ell_1$-norm, when~$\Omega$ is used as a constraint instead of a penalty, the proximal operator can be obtained in linear time using pivot algorithms (see \cite{mairal2010}, Appendix B.2).

\paragraph{1D-total variation.} Originally introduced in the image
processing community~\cite{rudin}, the total-variation penalty encourages
piecewise constant signals. It can be found
in the statistics literature under the name of ``fused lasso''
\cite{tibshirani2005sparsity}. For one-dimensional signals, 
it can be seen as the~$\ell_1$-norm of finite
differences for a vector~$\vw$ in~$\R^p$:~$\Omega_{\text{TV-1D}}(\vw)\defin\sum_{i=1}^{p-1} |\vw_{i+1}-\vw_i|$.
Even though no closed form is available
for~$\text{Prox}_{\mu\Omega_{\text{TV-1D}}}$, it can be easily obtained
using a modification of the homotopy algorithm presented later in this paper in
Section~\ref{sec:lars}~(see \cite{harchaoui2,harchaoui}). 
Similarly, it is possible to combine this penalty with the~$\ell_1$- and squared~$\ell_2$-penalties and efficiently compute~$\text{Prox}_{\Omega_{\text{TV-1D}} + \gamma_1\|.\|_1 + \frac{\gamma_2}{2}\|.\|_2^2}$ or use such a regularization function in a constrained formulation (see~\cite{mairal2010}, Appendix B.2).

\paragraph{Anisotropic 2D-total variation.} The regularization function above
can be extended to more than one dimension. For a two dimensional-signal
$\mw$ in $\R^{p \times l}$ this penalty is defined as
$\Omega_{\text{TV-2D}}(\mw)\defin\sum_{i=1}^{p-1}\sum_{j=1}^{l-1}
|\mw_{i+1,j}-\mw_{i,j}| + |\mw_{i,j+1}-\mw_{i,j}|$. Interestingly, it has been shown in~\cite{chambolle2009} that the corresponding proximal operator can be obtained by solving a parametric maximum flow problem.

\paragraph{$\ell_1/\ell_q$-norm (``group Lasso'').} If $\Gc$ is a partition of $\{1,\ldots,p\}$, the dual norm of the $\ell_1/\ell_q$ norm is the $\ell_\infty/\ell_{q'}$ norm, with $\frac{1}{q}+\frac{1}{q'}\!=\!1$. It is easy to show that the orthogonal projection on a unit $\ell_\infty/\ell_{q'}$ ball is obtained by projecting separately each subvector $\vu_g$ on a unit $\ell_{q'}$-ball in $\R^{|g|}$.
For the $\ell_1/\ell_2$-norm $\Omega: \vw \mapsto \sum_{g \in \Gc} \|\vw_g\|_2$ we have
\begin{equation}
[ \text{Prox}_{\mu \Omega}(\vu)]_g= \Big (1-\frac{\lambda}{\|\vu_g\|_2} \Big )_+ \vu_g, \quad g \in \Gc. \label{eq:prox_grouplasso}
\end{equation}
This is shown easily by considering that the subgradient of the $\ell_2$-norm is $\partial\|\vw\|_2=\big \{ \frac{\vw}{\|\vw\|_2} \big \}$ if $\vw\neq \mathbf{0}$ or $\partial\|\vw\|_2=\{\vz \mid \|\vz\|_2 \leq 1\}$ if $\vw=0$ and by applying the result of Eq.~\eqref{eq:opt}.

For the $\ell_1/\ell_\infty$-norm, whose dual norm is the $\ell_\infty/\ell_1$-norm, an efficient algorithm to compute the proximal operator is based on Eq.~\eqref{eq:prox_proj}. Indeed this equation indicates that the proximal operator can be computed on each group $g$ as the residual of a projection on an $\ell_1$-norm ball in $\R^{|g|}$; the latter is done efficiently with the previously mentioned linear-time algorithms.

\paragraph{$\ell_1/\ell_2$-norm constraint.} When the~$\ell_1/\ell_2$-norm is used as a constraint of the form~$\Omega(\vw) \leq C$, computing the proximal operator amounts to perform an orthogonal projection onto the~$\ell_1/\ell_2$-ball of radius~$C$. It is easy to show that such a problem can be recast as an orthogonal projection onto the simplex~\cite{schmidt2008}.
We know for instance that there exists a parameter~$\mu\geq 0$ such that the solution~$\vw^\star$ of the projection is~$\text{Prox}_{\mu\Omega}[\vu]$ whose form is given in Equation~(\ref{eq:prox_grouplasso}). As a consequence, there exists scalars~$z^g\geq 0$ such that~$\vw^{\star}_g=\frac{z^g}{\|\vu_g\|_2} \vu_g$ (where to simplify but without loss of generality we assume all the~$\vu_g$ to be non-zero). By optimizing over the scalars~$z^g$, one can equivalently rewrite the projection as
$$\min_{(z^g)_{g\in\Gc} \in \R_+^{|\Gc|}} \frac{1}{2} \sum_{g\in \Gc} (\|\vu_g\|_2 - z^g)^2  \quad \st \quad \sum_{g \in \Gc} z^g \leq C,$$
which is a Euclidean projection of the vector~$[\|\vu_g\|_2]_{g\in \Gc}$ in~$\R^{|\Gc|}$ onto a simplex\footnote{This result also follows from Lemma~\ref{lem:prox_rkhs} applied to the compute the proximal operator of the $\ell_\infty/\ell_2$-norm which is dually related to the projection on the $\ell_1/\ell_2$-norm}. The optimization problem above is then solved in linear time using the previously mentioned pivot algorithms~\cite{brucker1984,maculan1989linear}.

We have shown how to compute the proximal operator of group-norms when the
groups form a partition. In general, the case where groups overlap is more
complicated because the regularization is no longer separable. Nonetheless, in
some cases it is still possible to compute efficiently the proximal operator.

\paragraph{Hierarchical $\ell_1/\ell_q$-norms.} Hierarchical norms were proposed in \cite{zhao}. Following \cite{Jenatton2010a}, we focus on the case of a norm $\Omega: \vw \mapsto \sum_{g \in \Gc} \|\vw_g\|_q$, with $q \in \{2, \infty\}$, where the set of groups~$\Gc$ is \emph{tree-structured}, meaning that two groups are either disjoint or one is included in the other. Let $\preceq$ be a total order such that $g_1 \preceq g_2$ if and only if either $g_1 \subset g_2$ or $g_1 \cap g_2=\emptyset$.\footnote{For a tree-structured $\Gc$ such an order exists.} Then, if $g_1 \preceq \ldots \preceq g_{m}$ with $m=|\Gc|$, and if we define $\Pi_g$ as (a) the proximal operator $\vw_g \mapsto \text{Prox}_{\mu \|\cdot\|_q}(\vw_g)$ on the subspace corresponding to group $g$ and (b) the identity on the orthogonal, it can be shown \cite{Jenatton2010a} that:
\begin{equation}
\text{Prox}_{\mu\Omega}=\Pi_{g_m} \circ \ldots \circ \Pi_{g_1}.
\end{equation}

In other words, the proximal operator associated with the norm can be obtained as the composition of the proximal operators associated with individual groups provided that the ordering of the groups is well chosen. Note that this result does not hold for $q \notin \{1,2,\infty\}$ (see \cite{Jenatton2010a} for more details). In terms of complexity, $\text{Prox}_{\mu\Omega}$ can be computed in $O(p)$ operations when~$q=2$ and~$O(pd)$ when~$q=\infty$, where~$d$ is the depth of the tree.

\paragraph{Combined $\ell_1\! +\! \ell_1/\ell_q$-norm (``sparse group Lasso''), with~$q\!\in\! \{2,\infty\}$.} The possibility of combining an $\ell_1/\ell_q$-norm that takes advantage of sparsity at the group level with an $\ell_1$-norm that induces sparsity within the groups is quite natural \cite{friedman2010note,sprechmann2010collaborative}. Such regularizations are in fact a special case of the hierarchical $\ell_1/\ell_q$-norms presented above and the corresponding proximal operator is therefore readily computed by applying first the proximal operator associated with the~$\ell_1$-norm (soft-thresholding) and the one associated with the~$\ell_1/\ell_q$-norm (group soft-thresholding).

\paragraph{Overlapping $\ell_1/\ell_\infty$-norms.} When the groups overlap but do not have a tree structure, computing the proximal operator has proven to be more difficult, but it can still be done efficiently when $q=\infty$.
Indeed, as shown in~\cite{mairal10}, there exists a dual relation between such an operator and a quadratic min-cost flow problem on a particular graph, which can be tackled using network flow optimization techniques. Moreover, it may be extended to more general situations where structured sparsity is expressed through submodular functions~\cite{bach2010structured}.

\paragraph{Trace norm and spectral functions.} The proximal operator for the trace norm, i.e., the unique minimizer of
$\frac{1}{2} \|\mm - \mn\|_F^2 + \mu \| \mm \|_\ast$ for a fixed matrix~$\mm$, may be obtained by computing a singular value decomposition of $\mn$ and then replacing the singular values by their soft-thresholded versions~\cite{cai2010singular}.
This result can be easily extended to the case of spectral functions.
Assume that the penalty~$\Omega$ is of the form~$\Omega(\mm) = \psi(\vs)$ where $\vs$ is a vector carrying the singular values of~$\mm$ and $\psi$ a convex function which is invariant by permutation of the variables~(see, e.g.,~\cite{borwein}). Then, it can be shown that
$\text{Prox}_{\mu\Omega}[\mn] = \mU \Diag(\text{Prox}_{\mu\psi}[\vs])
\bmt{\mv}$, where $\mU \Diag(\vs)\bmt{\mv}$ is a singular value
decomposition of~$\mn$.

\section{Proximal Methods for Structured MKL}
\label{sec:prox_mkl}
 
In this section we show how proximal methods can be applied to solve multiple kernel learning problems. More precisely, we follow \cite{mosci2010solving} who showed, in the context of plain MKL that proximal algorithms are applicable in a RKHS. We extend and present here this idea to the general case of structured MKL, showing that the proximal operator for the structured RKHS norm may be obtained from the proximal operator of the corresponding subquadratic norms.

Given a collection of reproducing kernel Hilbert spaces  $\Hc_1,\ldots,\Hc_p$,
we consider the Cartesian product $\Bc \defin \Hc_1 \times \ldots \times \Hc_p$,
equipped with the norm $\|h\|_{\Bc}\defin (\|h_1\|_{\Hc_1}^2+\ldots+\|h_p\|_{\Hc_p}^2)^{1/2}$, where $h=(h_1,\ldots,h_p)$ with $h_i \in \Hc_i$.

The set $\Bc$ is a Hilbert space, in which gradients and subgradients are well defined and in which we can extend some algorithms that we considered in the Euclidean case easily. 

In the following, we will consider a \emph{monotonic} norm as defined in Definition~\ref{def:monotonic}. This is motivated by the fact that a monotonic norm may be composed with norms of elements of Hilbert spaces to defines a norm on $\Bc$:
\begin{lemma}
Let $\Omega$ be a \emph{monotonic} norm on $\R^p$ with dual norm $\Omega^*$, then $\Lambda: h \mapsto \Omega \big ( (\|h_1\|_{\Hc_1}, \ldots,\|h_p\|_{\Hc_p}) \big )$ is a norm on $\Bc$ whose dual norm is $\Lambda^*: g \mapsto \Omega^*\big ( (\|g_1\|_{\Hc_1}, \ldots,\|g_p\|_{\Hc_p}) \big )$. 
Moreover the subgradient of $\Lambda$ is $\partial \Lambda(h)=A(h)$ with $A(h)\defin\{(u_1 s_1, \ldots,u_p s_p) \mid \vu \in B(h), \: s_i \in \partial \|\cdot\|_{\Hc_i}(h_i)\}$
 with  $B(h)\defin\partial \Omega\big ( (\|h_1\|_{\Hc_1}, \ldots,\|h_p\|_{\Hc_p}) \big )$.
\end{lemma}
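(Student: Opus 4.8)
The plan is to write $\Lambda=\Omega\circ\vn$, where $\vn:\Bc\to\R_+^p$ is the map $h\mapsto\vn(h)\defin(\|h_1\|_{\Hc_1},\ldots,\|h_p\|_{\Hc_p})$, and to exploit throughout that, $\Omega$ being monotonic, its dual norm $\Omega^*$ is \emph{absolute} (as established in the proof that monotonic $=$ absolute). First I would verify that $\Lambda$ is a norm. Nonnegativity, homogeneity and definiteness are immediate from the corresponding properties of $\Omega$ and of the Hilbertian norms, since $\vn(\alpha h)=|\alpha|\,\vn(h)$ and $\vn(h)=0\Leftrightarrow h=0$. The only point requiring monotonicity is the triangle inequality: from $\|h_i+h_i'\|_{\Hc_i}\le\|h_i\|_{\Hc_i}+\|h_i'\|_{\Hc_i}$ one gets $\vn(h+h')\le\vn(h)+\vn(h')$ componentwise between nonnegative vectors, so monotonicity of $\Omega$ gives $\Omega(\vn(h+h'))\le\Omega(\vn(h)+\vn(h'))$, and the triangle inequality of $\Omega$ concludes.

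For the dual norm, I would work with the inner product $\langle g,h\rangle_\Bc=\sum_i\langle g_i,h_i\rangle_{\Hc_i}$. Cauchy--Schwarz in each $\Hc_i$ followed by the generalized Cauchy--Schwarz (dual-norm) inequality yields $\langle g,h\rangle_\Bc\le\sum_i\|g_i\|_{\Hc_i}\|h_i\|_{\Hc_i}=\vn(g)^\top\vn(h)\le\Omega^*(\vn(g))\,\Omega(\vn(h))$; restricting to $\Lambda(h)\le1$ gives $\Lambda^*(g)\le\Omega^*(\vn(g))$. For the reverse inequality, since $\Omega^*$ is absolute the supremum defining $\Omega^*(\vn(g))$ is attained at some $t\ge0$ with $\Omega(t)\le1$; choosing $h_i=t_i\,g_i/\|g_i\|_{\Hc_i}$ (any $h_i$ of norm $t_i$ when $g_i=0$) makes $\vn(h)=t$, so $\Lambda(h)\le1$ and $\langle g,h\rangle_\Bc=\vn(g)^\top t=\Omega^*(\vn(g))$, giving equality.

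The subgradient formula splits into two inclusions. For $A(h)\subseteq\partial\Lambda(h)$, take $z=(u_1s_1,\ldots,u_ps_p)$ with $\vu\in\partial\Omega(\vn(h))$ and $s_i\in\partial\|\cdot\|_{\Hc_i}(h_i)$. Using $\langle s_i,h_i\rangle_{\Hc_i}=\|h_i\|_{\Hc_i}$ and $\|s_i\|_{\Hc_i}\le1$ together with $\vu^\top\vn(h)=\Omega(\vn(h))=\Lambda(h)$ (from Eq.~\eqref{eq:opt}), I would bound, for any $h'$,
\[
\langle z,h'-h\rangle_\Bc=\sum_i u_i\langle s_i,h_i'\rangle_{\Hc_i}-\Lambda(h)\le\sum_i|u_i|\,\|h_i'\|_{\Hc_i}-\Lambda(h)=|\vu|^\top\vn(h')-\Lambda(h).
\]
Because $\Omega^*$ is absolute, $\Omega^*(|\vu|)=\Omega^*(\vu)\le1$, so $|\vu|^\top\vn(h')\le\Omega(\vn(h'))=\Lambda(h')$, which is exactly the subgradient inequality.

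The reverse inclusion $\partial\Lambda(h)\subseteq A(h)$ is where I expect the real work. Given $z\in\partial\Lambda(h)$, the subdifferential characterization of a norm (Eq.~\eqref{eq:opt}, with the dual norm identified above) gives $\Omega^*(\vn(z))\le1$ and $\langle z,h\rangle_\Bc=\Lambda(h)$, so the chain
\[
\Lambda(h)=\langle z,h\rangle_\Bc\le\vn(z)^\top\vn(h)\le\Omega^*(\vn(z))\,\Omega(\vn(h))\le\Omega(\vn(h))=\Lambda(h)
\]
must collapse to equalities. The first equality forces Cauchy--Schwarz tightness in every coordinate, so each $z_i$ is a nonnegative multiple of $h_i$; the remaining equalities show that $\vu\defin\vn(z)$ satisfies $\Omega^*(\vu)\le1$ and $\vu^\top\vn(h)=\Omega(\vn(h))$, i.e.\ $\vu\in\partial\Omega(\vn(h))$. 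Setting $s_i=z_i/\|z_i\|_{\Hc_i}$ when $z_i\neq0$ (and any element of $\partial\|\cdot\|_{\Hc_i}(h_i)$ otherwise, which is harmless since then $u_i=0$), coordinatewise proportionality gives $s_i\in\partial\|\cdot\|_{\Hc_i}(h_i)$ and $u_is_i=z_i$, so $z\in A(h)$. The delicate points I anticipate are the bookkeeping of the degenerate coordinates ($z_i=0$ or $h_i=0$) and the argument that tightness of the \emph{aggregated} inequality forces coordinatewise tightness, ensuring $\vu=\vn(z)$ genuinely lands in $\partial\Omega(\vn(h))$; the rest is routine once the absoluteness of $\Omega^*$ is in hand.
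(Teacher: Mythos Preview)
Your proposal is correct and follows essentially the same route as the paper: both arguments hinge on the chained Cauchy--Schwarz inequality $\langle g,h\rangle_\Bc\le\vn(g)^\top\vn(h)\le\Omega^*(\vn(g))\,\Omega(\vn(h))$ and the analysis of when equality holds, yielding the dual norm and the subgradient simultaneously. The paper compresses the equality analysis into ``it is easy to check that equality is attained if and only if $g\in A(h)$'' and then appeals to Propositions~\ref{prop:optimality} and~\ref{prop:fenchel_dualnorm}, whereas you spell out both inclusions $A(h)\subseteq\partial\Lambda(h)$ and $\partial\Lambda(h)\subseteq A(h)$ and handle the degenerate coordinates explicitly; this added care is warranted and your use of the absoluteness of $\Omega^*$ (to pass from $\vu$ to $|\vu|$) is a clean way to avoid having to argue separately that subgradients of $\Omega$ at nonnegative points have nonnegative components on the support.
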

\begin{proof}
It is clear that $\Lambda$ is symmetric, nonnegative definite and homogeneous. The triangle inequality results from the fact that $\Omega$ is monotonic. Indeed the latter property implies that $\Lambda(h+g)=\Omega \big ((\|h_i+g_i\|_{\Hc_i})_{1 \leq i \leq p} \big) \leq \Omega \big ((\|h_i\|_{\Hc_i}+\|g_i \|_{\Hc_i})_{1 \leq i \leq p} \big)$ and the result follows from applying the triangle inequality for $\Omega$.

Moreover, we have the generalized Cauchy-Schwarz inequality:
$$\langle h,g\rangle_{\Bc}=\sum_i \langle h_i,g_i\rangle_{\Hc_i} \leq \sum_i \|h_i\|_{\Hc_i} \|g_i\|_{\Hc_i} \leq \Lambda(h) \: \Lambda^*(g),$$ 
and it is easy to check that equality is attained if and only if $g \in A(h)$. 
This simultaneously shows that $\Lambda(h)=\max_{g \in \Bc, \: \Lambda^*(g) \leq 1} \langle h,g\rangle_{\Bc}$ (as a consequence of Proposition~\ref{prop:fenchel_dualnorm}) and that $\partial \Lambda(h)=A(h)$ (by Proposition~\ref{prop:optimality}).
\end{proof}

We consider now a learning problem of the form:
\begin{equation}
\min_{h = (h_1,\ldots,h_p)\in \Bc} f(h_1,\ldots,h_p)+\lambda \,\Lambda(h),
\end{equation}
with, typically, following Section~\ref{sec:mkl}, $f(h)=\frac{1}{n} \sum_{i=1}^n \ell(y^{(i)},h(\vx_i))$. The structured MKL case corresponds more specifically to the case where $f(h)=\frac{1}{n} \sum_{i=1}^n \ell \big (y^{(i)},h_1(\vx_i)+ \ldots+h_p(\vx_i)\big )$. Note that the problem we consider here is regularized with $\Lambda$ and not $\Lambda^2$ as opposed to the formulations (\ref{eq:rkhs_learn}) and (\ref{eq:structmkl_learn}) considered in Section~\ref{sec:mkl}. 

To apply the proximal methods introduced in this chapter using $\|\cdot\|_{\Bc}$ as the proximal term requires one to be able to solve the proximal problem:
\begin{equation}
\label{eq:prox_rkhs}
\min_{h \in \Bc} \frac{1}{2}\|h-g\|_{\Bc}^2+\mu \,\Lambda(h).
\end{equation}

The following lemma shows that if we know how to compute the proximal operator of $\Omega$ for an $\ell_2$ proximity term in $\R^p$, we can readily compute the proximal operator of $\Lambda$ for the proximity defined by the Hilbert norm on $\Bc$. Indeed, to obtain the image of $h$ by the proximal operator associated with $\Lambda$, it suffices to apply the proximal operator of $\Omega$ to the vector of norms $(\|h_1\|_{\Hc_1},\ldots,\|h_p\|_{\Hc_p})$ which yields a vector $(\vy_1,\ldots,\vy_p)$, and to scale in each space $\Hc_i$, the function $h_i$ by $\vy_i/\|h_i\|_{\Hc_{i}}$. Precisely:
\begin{lemma}
\label{lem:prox_rkhs}
$\text{Prox}_{\mu \Lambda}(g)=(\vy_1 s_1,\ldots,\vy_p s_p)$ where $s_i=0$ if $g_i=0$,  $$s_i=\frac{g_i}{\|g_i\|_{\Hc_i}}\: \text{if}  \:g_i \neq 0 \quad \text{and} \quad \vy=\text{Prox}_{\mu\Omega}\big ( (\|g_i\|_{\Hc_i})_{1\leq i \leq p} \big ).$$
\end{lemma}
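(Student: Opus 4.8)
The plan is to minimize the strongly convex objective of \eqref{eq:prox_rkhs}, namely $\tfrac12\sum_{i=1}^p \|h_i - g_i\|_{\Hc_i}^2 + \mu\,\Omega\big((\|h_i\|_{\Hc_i})_{1\le i\le p}\big)$, by decoupling each $h_i$ into its norm $r_i = \|h_i\|_{\Hc_i} \ge 0$ and its direction. Since the regularizer depends on $h$ only through the vector of norms $(r_i)_i$, I would first fix the $r_i$ and optimize over directions. For fixed $r_i$ the data-fitting term $\|h_i - g_i\|_{\Hc_i}^2 = r_i^2 - 2\langle h_i, g_i\rangle_{\Hc_i} + \|g_i\|_{\Hc_i}^2$ is, by Cauchy--Schwarz in $\Hc_i$, minimized by aligning $h_i$ with $g_i$, i.e.\ $h_i = r_i s_i$ with $s_i = g_i/\|g_i\|_{\Hc_i}$ when $g_i \ne 0$, and the minimal value is $(r_i - \|g_i\|_{\Hc_i})^2$ (when $g_i = 0$ this value equals $r_i^2$ and the direction is irrelevant).

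Substituting back, the problem reduces to the finite-dimensional minimization
$$\min_{r \in \R_+^p}\ \tfrac12 \sum_{i=1}^p \big(r_i - \|g_i\|_{\Hc_i}\big)^2 + \mu\,\Omega(r)$$
over the nonnegative orthant, and the key step is to recognize this as $\text{Prox}_{\mu\Omega}$ evaluated at the nonnegative vector $u \defin (\|g_i\|_{\Hc_i})_{1\le i\le p}$. For that I must remove the constraint $r\in\R_+^p$: I would show that the \emph{unconstrained} proximal problem over $r\in\R^p$ automatically admits a nonnegative solution whenever $u \ge 0$. This is where monotonicity of $\Omega$ (equivalently absoluteness, by Definition~\ref{def:monotonic} and the proposition following it) enters: replacing a candidate $r$ by $|r|$ leaves $\Omega(|r|)=\Omega(r)$ unchanged while not increasing $\tfrac12\|u - r\|_2^2$, since $u\ge 0$ gives $(u_i - |r_i|)^2 \le (u_i - r_i)^2$ coordinatewise. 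By strong convexity the minimizer is unique, hence it equals $|r|\ge 0$; the constrained and unconstrained minimizers therefore coincide and $r^\star = \vy = \text{Prox}_{\mu\Omega}(u)$.

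Finally I would assemble the pieces: the optimal $h$ has $h_i = r_i^\star s_i = \vy_i s_i$. It remains to check the degenerate directions. When $g_i = 0$, the same absoluteness/monotonicity argument (reducing an unnecessary positive coordinate $r_i$ to $0$ strictly decreases the objective) forces $\vy_i = 0$, so $h_i = \vy_i s_i = 0$ regardless of the undetermined direction, matching the convention $s_i = 0$ of the statement. This yields exactly $\text{Prox}_{\mu\Lambda}(g) = (\vy_1 s_1, \ldots, \vy_p s_p)$.

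The main obstacle is the constrained-to-unconstrained reduction in the second paragraph: one has to invoke monotonicity of $\Omega$ precisely to guarantee that $\text{Prox}_{\mu\Omega}$ preserves nonnegativity, and to handle the $g_i=0$ coordinates cleanly. The radial/angular decoupling itself is a routine Cauchy--Schwarz computation.
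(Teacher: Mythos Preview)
Your proof is correct and takes a genuinely different route from the paper. The paper argues via first-order optimality conditions: it writes $h - g \in -\mu\,\partial\Lambda(h)$, uses the explicit description of $\partial\Lambda$ from the preceding lemma to obtain $h_i = g_i - \mu\,\vu_i s_i$ with $\vu \in \partial\Omega\big((\|h_j\|_{\Hc_j})_j\big)$ and $s_i \in \partial\|\cdot\|_{\Hc_i}(h_i)$, deduces colinearity of $h_i$ and $g_i$, and then matches these relations with the optimality conditions defining $\text{Prox}_{\mu\Omega}$ in $\R^p$. You instead perform a direct radial--angular decomposition of the primal objective, optimize out the direction by Cauchy--Schwarz, and reduce to a constrained finite-dimensional proximal problem whose nonnegativity constraint you discharge via the absoluteness/monotonicity of $\Omega$. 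Your argument is more self-contained (it does not rely on the subdifferential computation of the previous lemma) and makes explicit the step---that $\text{Prox}_{\mu\Omega}$ maps the nonnegative orthant into itself---which the paper's proof leaves implicit. The paper's approach is shorter once the subdifferential machinery is in place; yours is more elementary and constructive.
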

\begin{proof}
To lighten the notations, we write $\|h_i\|$ for $\|h_i\|_{\Hc_i}$ if $h_i \in \Hc_i$.
The optimality condition for problem (\ref{eq:prox_rkhs}) is $h-g \in -\mu\partial\Lambda$ so that we have $h_i=g_i- \mu \,s_i \vu_i$, with $\vu \in B(h), \: s_i \in \partial \|\cdot\|_{\Hc_i}(h_i)$. The last equation implies that $h_i,g_i$ and $s_i$ are colinear. If $g_i=0$ then the fact that $\Omega$ is monotonic implies that $h_i=s_i=0$. 
If on the other hand, $g_i \neq 0$ we have $h_i=g_i \, (1-\frac{\mu \, \vu_i}{\|g_i\|})_+$ and thus $\|h_i\|=(\|g_i\| -\mu \, \vu_i)_+$ and $h_i=s_i\,\|h_i\|$, but by definition of $\vy_i$ we have $\vy_i=(\|g_i\| -\mu \, \vu_i)_+$, which shows the result.
\end{proof}

This results shows how to compute the proximal operator at an abstract level. For the algorithm to be practical, we need to show that the corresponding computation can be performed by manipulating a finite number of parameters.

Fortunately, we can appeal to a representer theorem to that end, which leads to the following lemma:

\begin{lemma}
\label{lem:prox_mkl}
Assume that for all $i$, $g_i=\sum_{j=1}^n \valpha_{ij} K_i(\vx_j, \cdot)$. Then 
the solution of problem (\ref{eq:prox_rkhs}) is of the form $h_i=\sum_{j=1}^n \vbeta_{ij} K_i(\vx_j, \cdot)$. Let $\vy=\text{Prox}_{\mu \Omega} \big ((\sqrt{\valpha_k^\top \mk_k \valpha_k })_{1\leq k\leq p} \big )$. Then if $\valpha_i \neq 0$, $\vbeta_i=\frac{\vy_i}{\sqrt{\valpha_i^\top\mk_i \valpha_i}} \, \valpha_i$ and otherwise $\vbeta_i=0$.  
\end{lemma}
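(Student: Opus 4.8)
The plan is to combine the explicit formula for $\text{Prox}_{\mu\Lambda}$ furnished by \lem{lem:prox_rkhs} with the reproducing property of each kernel $K_i$, the latter being what converts the abstract Hilbert-space norms $\|\cdot\|_{\Hc_i}$ into the finite-dimensional quadratic forms $\valpha_i^\top\mk_i\valpha_i$. First I would set $h=\text{Prox}_{\mu\Lambda}(g)$ and invoke \lem{lem:prox_rkhs}, which gives $h_i=\vy_i s_i$ with $s_i=g_i/\|g_i\|_{\Hc_i}$ when $g_i\neq 0$ (and $s_i=0$ otherwise), and $\vy=\text{Prox}_{\mu\Omega}\big((\|g_i\|_{\Hc_i})_{1\leq i\leq p}\big)$. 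The key structural consequence is that each $h_i$ is a scalar multiple of $g_i$, so $h_i$ automatically lies in the span of $\{K_i(\vx_j,\cdot)\}_{j=1}^n$; this colinearity is exactly the finite-dimensional reduction that a representer theorem would supply, so no separate span argument is needed.

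Next I would evaluate the norms. By the reproducing property, $\langle K_i(\vx_j,\cdot),K_i(\vx_{j'},\cdot)\rangle_{\Hc_i}=K_i(\vx_j,\vx_{j'})=[\mk_i]_{j,j'}$, whence $\|g_i\|_{\Hc_i}^2=\sum_{j,j'}\valpha_{ij}\valpha_{ij'}[\mk_i]_{j,j'}=\valpha_i^\top\mk_i\valpha_i$. Substituting into the expression for $\vy$ yields precisely $\vy=\text{Prox}_{\mu\Omega}\big((\sqrt{\valpha_k^\top\mk_k\valpha_k})_{1\leq k\leq p}\big)$, matching the statement. Then for $\valpha_i\neq 0$ I substitute $s_i=g_i/\sqrt{\valpha_i^\top\mk_i\valpha_i}=\sum_{j=1}^n\big(\valpha_{ij}/\sqrt{\valpha_i^\top\mk_i\valpha_i}\big)K_i(\vx_j,\cdot)$ into $h_i=\vy_i s_i$ and read off $\vbeta_{ij}=\vy_i\valpha_{ij}/\sqrt{\valpha_i^\top\mk_i\valpha_i}$, i.e.\ $\vbeta_i=\tfrac{\vy_i}{\sqrt{\valpha_i^\top\mk_i\valpha_i}}\valpha_i$; when $\valpha_i=0$ we have $g_i=0$, hence $s_i=0$ and $\vbeta_i=0$.

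A route that does not lean on the explicit formula of \lem{lem:prox_rkhs}, and which makes the ``representer theorem'' phrasing literal, is to decompose $h_i=h_i^V+h_i^\perp$ along $V_i=\text{span}\{K_i(\vx_j,\cdot)\}$ and its orthogonal complement: since $g_i\in V_i$ one has $\|h_i-g_i\|_{\Hc_i}^2=\|h_i^V-g_i\|_{\Hc_i}^2+\|h_i^\perp\|_{\Hc_i}^2$ while $\|h_i\|_{\Hc_i}\geq\|h_i^V\|_{\Hc_i}$, so by monotonicity of $\Omega$ zeroing $h_i^\perp$ decreases both the proximity term and $\Lambda$, forcing $h_i\in V_i$; the problem then reduces to a finite-dimensional one in the $\valpha_i$-coordinates with Gram matrices $\mk_i$. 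I would use whichever is cleaner in context, but the first is essentially immediate.

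The computation is routine once \lem{lem:prox_rkhs} is in hand, so there is no serious obstacle. The one point requiring care is the correspondence between the condition ``$g_i=0$'' used in \lem{lem:prox_rkhs} and the condition ``$\valpha_i=0$'' used here: these coincide when $\mk_i$ is strictly positive definite, but for a merely positive semidefinite $\mk_i$ a nonzero $\valpha_i$ lying in $\ker\mk_i$ gives $g_i=0$ and makes the quotient $\vy_i/\sqrt{\valpha_i^\top\mk_i\valpha_i}$ a formal $0/0$. In that degenerate case $\|g_i\|_{\Hc_i}=0$, and because the proximal operator of a monotonic norm keeps a zero input coordinate at zero one has $\vy_i=0$, so \lem{lem:prox_rkhs} still returns $h_i=0$; I would flag this to keep the branch on $\valpha_i$ well defined.
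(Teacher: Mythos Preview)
Your proposal is correct. Your primary route---reading off $h_i$ as a scalar multiple of $g_i$ directly from \lem{lem:prox_rkhs} and then computing $\|g_i\|_{\Hc_i}$ via the reproducing property---is in fact more streamlined than what the paper does. The paper instead takes your alternative route: it first establishes the representer theorem by the orthogonal decomposition $h_i=h_i^{\pll}+h_i^{\bot}$ along $\text{span}\{K_i(\vx_j,\cdot)\}$ and uses monotonicity of $\Omega$ to argue that setting $h_i^{\bot}=0$ can only decrease the objective, and only afterwards identifies the coefficients via the reproducing property and \lem{lem:prox_rkhs}. Your first approach buys economy, since the colinearity conclusion of \lem{lem:prox_rkhs} makes the span containment automatic; the paper's approach makes the representer-theorem structure explicit and stands on its own without appealing to the explicit formula. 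The paper also flags the non-invertible $\mk_i$ case, in the same spirit as your closing remark, noting that the solution in $\vbeta_i$ may then fail to be unique.
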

\begin{proof}
We first show that a representer theorem holds. For each $i$ let $h_i^{\pll}$ be the component of $h_i$ in the span of $K_i(\vx_j,\cdot)_{1\leq j\leq n}$ and $h_i^{\bot}=h_i-h_i^{\pll}$.
We can rewrite the objective of problem (\ref{eq:prox_rkhs}) as\footnote{We denote again $\|h_i\|$ for $\|h_i\|_{\Hc_i}$, when the RHKS norm used is implied by the argument.}
\begin{equation*}
\frac{1}{2} \sum_{i=1}^p \big [ \|h_i^{\pll}\|^2+\|h_i^\bot\|^2-2\langle h_i^{\pll},g_i\rangle_{\Hc_i}+\|g_i\|^2 \big ]+\mu \Omega \big ( (\|h_i^{\pll}\|^2+\|h_i^{\bot}\|^2)_{1\leq i \leq p}\big),
\end{equation*}
from which, given that $\Omega$ is assumed monotonic, it is clear that setting $h_i^{\bot}=0$ for all $i$ can only decrease the objective.
To conclude, the form of the solution in $\vbeta$ results from the fact that $\|g_i\|_{\Hc_i}^2=\sum_{1\leq j,j' \leq n}{\valpha_{ij}} \valpha_{ij'} \langle K_i(\vx_j,\cdot),K_i(\vx_{j'},\cdot)\rangle_{\Hc_i}$ and $\langle K_i(\vx_j,\cdot),K_i(\vx_{j'},\cdot)\rangle_{\Hc_i}=K_i(\vx_j,\vx_{j'})$ by the reproducing property, and by identification (note that if the kernel matrix $\mk_i$ is not invertible the solution might not be unique in $\vbeta_i$).
\end{proof}
 
Finally, in the last lemma we assumed that the function $g_i$ in the proximal problem could be represented as a linear combination of the $K_i(\vx_j,\cdot)$. Since $g_i$ is typically of the form $h^t_i-\frac{1}{L}\frac{\partial}{\partial h_i}f(h^t_1,\ldots,h^t_p)$, then the result follows by linearity if the gradient is in the span of the $\mk_i(\vx_j,\cdot)$.  The following lemma shows that this is indeed the case:
\begin{lemma}
\label{lem:grad_representer}
For $f(h)=\frac{1}{n} \sum_{j=1}^n\ell(y^{(j)},h_1(\vx_j),\ldots,h_p(\vx_j))$ then 
$$\frac{\partial}{\partial h_i}f(h)= \sum_{j=1}^n  \valpha_{ij} K_i(\vx_j,\cdot) \quad  \text{for} \quad\valpha_{ij}=\frac{1}{n} \partial_i \ell(y^{(j)},h_1(\vx_j),\ldots,h_p(\vx_j)),$$ where $\partial_i \ell$ denote the partial derivative of $\ell$ w.r.t. to its $i+1$-th scalar component.
\end{lemma}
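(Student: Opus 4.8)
The plan is to compute the partial gradient $\frac{\partial}{\partial h_i}f(h)$, which is itself an element of $\Hc_i$, by identifying it as the Riesz representer of the directional derivative of $f$ in its $i$-th argument. The starting observation is that $h_i$ enters $f$ only through the scalar evaluations $h_i(\vx_j)$, $j=1,\ldots,n$, each of which appears as the $(i+1)$-th argument of the term $\ell(y^{(j)},h_1(\vx_j),\ldots,h_p(\vx_j))$; the remaining components $h_k$, $k\neq i$, are held fixed throughout.

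First I would fix an arbitrary direction $\delta\in\Hc_i$ and differentiate $t\mapsto f(h_1,\ldots,h_i+t\delta,\ldots,h_p)$ at $t=0$. Because $(h_i+t\delta)(\vx_j)=h_i(\vx_j)+t\,\delta(\vx_j)$ is affine in $t$, the scalar chain rule applied termwise gives
\[
\frac{d}{dt}\Big|_{t=0} f(h_1,\ldots,h_i+t\delta,\ldots,h_p)
= \frac{1}{n}\sum_{j=1}^n \partial_i\ell\big(y^{(j)},h_1(\vx_j),\ldots,h_p(\vx_j)\big)\,\delta(\vx_j).
\]

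The key step is then to rewrite each evaluation $\delta(\vx_j)$ through the reproducing property of $\Hc_i$, namely $\delta(\vx_j)=\langle \delta, K_i(\vx_j,\cdot)\rangle_{\Hc_i}$. Substituting this and using bilinearity of the inner product, the directional derivative becomes
\[
\Big\langle \delta,\ \frac{1}{n}\sum_{j=1}^n \partial_i\ell\big(y^{(j)},h_1(\vx_j),\ldots,h_p(\vx_j)\big)\,K_i(\vx_j,\cdot)\Big\rangle_{\Hc_i}.
\]
Since this identity holds for every $\delta\in\Hc_i$, the Riesz representation theorem identifies the element appearing in the second slot of the inner product as the partial gradient, yielding exactly $\frac{\partial}{\partial h_i}f(h)=\sum_{j=1}^n \valpha_{ij}K_i(\vx_j,\cdot)$ with $\valpha_{ij}=\frac{1}{n}\partial_i\ell(y^{(j)},h_1(\vx_j),\ldots,h_p(\vx_j))$.

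There is essentially no hard obstacle: the result is a direct combination of the chain rule with the reproducing property, and in particular it justifies why the gradient lies in the span of the $K_i(\vx_j,\cdot)$, as needed to make Lemma~\ref{lem:prox_mkl} applicable. The only points requiring mild care are the bookkeeping of arguments---ensuring $\partial_i\ell$ refers to differentiation with respect to the $(i+1)$-th scalar slot, the first slot being $y^{(j)}$---and the standing differentiability assumption on $\ell$ in its prediction arguments, under which each termwise chain-rule step is legitimate.
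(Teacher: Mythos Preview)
Your proof is correct and follows essentially the same approach as the paper: both combine the chain rule with the reproducing property to identify the partial gradient. The paper's version is slightly terser---it rewrites $h_i(\vx_j)=\langle h_i,K_i(\vx_j,\cdot)\rangle_{\Hc_i}$ up front and then simply observes that the gradient of the linear map $h_i\mapsto\langle h_i,K_i(\vx_j,\cdot)\rangle_{\Hc_i}$ is $K_i(\vx_j,\cdot)$---whereas you make the directional-derivative and Riesz-representation steps explicit, but the content is the same.
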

\begin{proof}
This result follows from the rules of composition of differentiation applied to the functions
$$(h_1,\ldots,h_p) \mapsto \ell \big ( y^{(j)},\, \langle h_1,K_1(x_j,\cdot)\rangle_{\Hc_1},\, \ldots,\, \langle h_p,K_p(x_j,\cdot)\rangle_{\Hc_p}\big ),$$ and the fact that, since $h_i\mapsto \langle h_i,K_i(\vx_j,\cdot)\rangle_{\Hc_i}$ is linear, its gradient in the RKHS $\Hc_i$ is just $K_i(\vx_j,\cdot)$.

\end{proof}

\chapter{(Block) Coordinate Descent Algorithms}
\label{sec:opt_methods_bcd}
Coordinate descent algorithms solving $\ell_1$-regularized learning problems go back to \cite{fu_penalized_1998}.
They optimize  (exactly or approximately) the objective function with respect to one variable at a time while all others are kept fixed. Note that, in general, coordinate descent algorithms are not necessarily convergent for non-smooth optimization (cf.\ \cite{bertsekas}, p.636); they are however applicable in this setting because of a \emph{separability} property of the nonsmooth regularizer we consider (see end of Section~\ref{sec:cd}).

\section{Coordinate Descent for $\ell_1$-Regularization}
\label{sec:cd}
We consider first the following special case of the one-dimensional $\ell_1$-regularized problem:
\begin{equation}
\label{eq:absvalproxprob}
\min_{w \in \R} \, \frac{1}{2}(w-w_0)^2 + \lambda |w|.
\end{equation}
As shown in \eqref{eq:st_op}, the minimizer $w^\star$ can be obtained by \emph{soft-thresholding}:
\begin{equation}
\label{eq:st}
w^\star=\text{Prox}_{\lambda\,|\cdot|}(w_0) \defin \Big (1-\frac{\lambda}{|w_0|} \Big )_+ \, w_0.
\end{equation}

\paragraph{Lasso case.} In the case of the square loss, the minimization with respect to a single coordinate can be written as
$$\min_{\vw_j \in \R}\: \nabla_j f(\vw^t)\, (\vw_j-\vw^t_j)+\frac{1}{2} \nabla^2_{jj}\, f(\vw^t)(\vw_j-\vw^t_j)^2 + \lambda |\vw_j|,$$
with $\nabla_j f(\vw)=\frac{1}{n}\mx_j^\top(\mx \vw-\vy)$ and $\nabla^2_{jj}f(\vw)=\frac{1}{n} \mx_j^\top \mx_j$ independent of~$\vw$.
Since the above equation is of the form \eqref{eq:absvalproxprob}, it can be solved in closed form:
\begin{equation}
\label{eq:shooting}
\vw_j^\star=\text{Prox}_{\frac{\lambda}{\nabla^2_{jj} f}|\cdot|} \big (\, \vw_j^t-\nabla_jf(\vw_j^t)/\nabla^2_{jj}f \, \big ).
\end{equation}
In other words, $\vw_j^\star$ is obtained by solving the unregularized problem with respect to coordinate $j$ and soft-thresholding the solution.

This is the update proposed in the shooting algorithm of Fu \cite{fu_penalized_1998}, which cycles through all variables in a fixed order.\footnote{Coordinate descent with a cyclic order is sometimes called a Gauss-Seidel procedure.} Other cycling schemes are possible (see, e.g.,~\cite{nesterov2010efficiency}).

An efficient implementation is obtained if the quantity $\mx \vw^t-\vy$ or even better $\nabla f(\vw^{t})= \frac{1}{n} \mx^\top\mx \vw^t-\frac{1}{n} \mx^\top\vy$ is kept updated.\footnote{In the former case, at each iteration, $\mx \vw-\vy$ can be updated in $\Theta(n)$ operations if $\vw_j$ changes and $\nabla_{\!j^{t+1}} f(\vw)$ can always be updated in $\Theta(n)$ operations. The complexity of one cycle is therefore $O(pn)$.
However a better complexity is obtained in the latter case, provided the matrix $\mx^\top\mx$ is precomputed (with complexity $O(p^2n)$). Indeed $\nabla f(\vw^{t})$ is updated in $\Theta(p)$ iterations only if $w_j$ does not stay at $0$. Otherwise, if $w_j$ stays at $0$ the step costs $O(1)$; the complexity of one cycle is therefore $\Theta(ps)$ where $s$ is the number of non-zero variables at the end of the cycle.}

\paragraph{Smooth loss.} For more general smooth losses, like the logistic loss, the optimization with respect to\ a single variable cannot be solved in closed form. It is possible to
solve it numerically using a sequence of modified Newton steps as proposed in \cite{shevade2003simple}. We present here a fast algorithm of Tseng and Yun \cite{tseng2009coordinate} based on solving just a quadratic approximation of $f$ with an inexact line search at each iteration.

Let $L^t>0$ be a parameter and let $\vw_j^\star$ be the solution of 
$$\min_{\vw_j \in \R}\: \nabla_j f(\vw^t)\, (\vw_j-\vw^t_j)+\frac{1}{2} L^t\, (\vw_j-\vw^t_j)^2 + \lambda |\vw_j|,$$
Given $d=\vw_j^\star-\vw_j^t$ where $\vw_j^\star$ is the solution of $\eqref{eq:shooting}$,
the algorithm of Tseng and Yun performs a line search to choose the largest step of the form $\alpha d$ with $\alpha=\alpha_0 \beta^k$ and $\alpha_0>0,\beta \in (0,1), \, k \in \mathbb{N}$, such that the following modified Armijo condition is satisfied:
$$F(\vw^t+\alpha d\ve_j)-F(\vw^t) \leq \sigma \alpha \big ( \nabla_j f(\vw) d + \gamma L^t d^2 +|\vw^t_j+d|-|\vw^t_j| \big ),$$
where $F(\vw)\defin f(\vw)+\lambda\,\Omega(\vw)$, and $0\leq \gamma\leq 1$ and $\sigma<1$ are parameters of the algorithm.

Tseng and Yun \cite{tseng2009coordinate} show that under mild conditions on $f$ the algorithm is convergent and, under further assumptions, asymptotically linear. In particular, if $f$ is of the form $\frac{1}{n}\sum_{i=1}^n\ell(y^{(i)},\vw^\top \vx^{(i)})$ with 
$\ell(y^{(i)},\cdot)$ a twice continuously differentiable convex function with strictly positive curvature, the algorithm is asymptotically linear for $L^t=\nabla^2_{jj}f(\vw^t_j)$. 
We refer the reader to Section \ref{sec:bcd} and to \cite{tseng2009coordinate,wright2010accelerated} for results under much milder conditions.
It should be noted that the algorithm generalizes to other separable regularizations than the $\ell_1$-norm.

Variants of coordinate descent algorithms have also been considered in \cite{genkin_large-scale_2007,krishnapuram_sparse_2005,wu2008coordinate}.
Generalizations based on the Gauss-Southwell rule have been considered in \cite{tseng2009coordinate}.

\paragraph{Convergence of coordinate descent algorithms.}
In general, coordinate descent algorithms are not convergent for non-smooth objectives. Therefore, using such schemes always requires a convergence analysis. In the context of the $\ell_1$-norm regularized smooth objective, the non-differentiability is \emph{separable} (i.e., is a   sum of non-differentiable terms that depend on single variables), and this is sufficient for convergence~\cite{bertsekas,tseng2009coordinate}. In terms of convergence rates, coordinate descent behaves in a similar way to first-order methods such as proximal methods, i.e., if the objective function is strongly convex~\cite{nesterov2010efficiency,tseng2009coordinate}, then the convergence is linear, while it is slower if the problem is not strongly convex, e.g., in the learning context, if there are strong correlations between input variables~\cite{shalev2009stochastic}.

\section{Block-Coordinate Descent for $\ell_1/\ell_q$-Regularization}
\label{sec:bcd}
When $\Omega(\vw)$ is the $\ell_1/\ell_q$-norm with groups $g \in \Gc$ forming a partition of $\{1, \ldots, p\}$, the previous methods are generalized by block-coordinate descent (BCD) algorithms, which have been the focus of recent work by Tseng and Yun~\cite{tseng2009coordinate} and Wright~\cite{wright2010accelerated}. These algorithms do not attempt to solve exactly a reduced problem on a block of coordinates but rather optimize a surrogate of $F$ in which the function $f$ is substituted by a quadratic approximation.

Specifically, the BCD scheme of \cite{tseng2009coordinate} solves at each iteration a problem of the form:
\begin{equation}
\label{eq:bcd_blocknorm}
\min_{\vw_g \in \R^{|g|}} \nabla_g f(\vw^t)^\top(\vw_g-\vw^t_g)+\frac{1}{2} (\vw_g-\vw^t_g)^\top  \mH^t (\vw_g-\vw^t_g) + \lambda \|\vw_g\|_q,
\end{equation}
where the positive semi-definite matrix $\mH^t \in \R^{|g|\times |g|}$ is a parameter.  Note that this may correspond to a richer quadratic approximation around $\vw^t_g$ than the proximal terms used in Section~\ref{sec:proximal_methods}. However, in practice,
the above problem is solved in closed form if $\mH^t=L^t \mi_{|g|}$ for some scalar $L^t$ and $q \in\{2,\infty\}$\footnote{More generally for $q\geq 1$ and $\mH^t=L^t \mi_{|g|}$ , it can be solved efficiently coordinate-wise using bisection algorithms.}. In particular for $q=2$, the solution $\vw_g^{\star}$ is obtained by \emph{group-soft-thresholding}:
$$
\vw_g^{\star}=\text{Prox}_{\frac{\lambda}{L^t}\,\|\cdot\|_2}\big (\vw_g^t-\frac{1}{L^t} \nabla_gf(\vw_g^t)\big),
$$
with
$$\text{Prox}_{\mu\,\|\cdot\|_2}(\vw)=\Big (1-\frac{\mu}{\|\vw\|_2} \Big )_+ \vw.$$

In the case of general smooth losses, the descent direction is given by $\vd=\vw_g^{\star}-\vw^t_g$ with $\vw_g^{\star}$
as above.  The next point is of the form $\vw^t+\alpha \vd$, where $\alpha$ is a stepsize of the form $\alpha=\alpha_0\, \beta^k$, with $\alpha_0>0$, $0<\beta<1,\, k \in \mathbb{N}$. 
The parameter $k$ is chosen large enough to satisfy the following modified Armijo condition
$$F(\vw^t+\alpha \vd)-F(\vw^t) \leq \sigma \alpha \big ( \nabla_{\!g} f(\vw)^\top \vd +\gamma \, \vd^\top \mH^t\vd+\|\vw^t_g+\vd\|_q-\|\vw^t_g\|_q \big ),$$ 
for parameters $0\leq \gamma\leq 1$ and $\sigma<1$.

If $f$ is convex continuously differentiable, lower bounded on $\R^p$ and $F$ has a unique minimizer, provided that there exists $\tau, \bar{\tau}$ fixed constants such that for all $t$, $\tau \preceq H^t \preceq \bar{\tau}$, the results of
Tseng and Yun show that the algorithm converges (see Theorem 4.1 in \cite{tseng2009coordinate} for broader conditions).
Wright~\cite{wright2010accelerated} proposes a variant of the algorithm, in which the line-search on $\alpha$ is replaced by a line search on the parameter $L^t$, similar to the line-search strategies used in proximal methods.

\section{Block-coordinate descent for MKL}

Finally, block-coordinate descent algorithms are also applicable to classical multiple kernel learning . We consider the same setting and notation as in Section~\ref{sec:prox_mkl} and we consider specifically the optimization problem:
$$\min_{h \in \Bc} f(h_1,\ldots,h_p)+ \lambda \sum_{i=1}^p \|h_i\|_{\Hc_i}.$$
A block-coordinate algorithm can be applied by considering each RKHS $\Hc_i$ as one ``block"; this type of algorithm was considered in \cite{ravikumar2009sparse}. Applying the lemmas \ref{lem:prox_mkl} and \ref{lem:grad_representer} of Section \ref{sec:prox_mkl}, we know that $h_i$ can be represented as $h_i=\sum_{j=1}^n \valpha_{ij} K_i(\vx_j,\cdot)$.

The algorithm then consists in performing successively group soft-thresholding in each RKHS $\Hc_i$. This can be done by working directly with the dual parameters $\valpha_i$, with a corresponding proximal operator in the dual simply formulated as:
$$\text{Prox}_{\mu\,\|\cdot\|_{\mk_i}}(\valpha_i)=\Big (1-\frac{\mu}{\|\valpha_i\|_{\mk_i}} \Big )_+ \valpha_i.$$ 
with $\|\valpha\|_{\mk}^2=\valpha^\top \mk \valpha$. The precise equations would be obtained by kernelizing Eq.~\ref{eq:bcd_blocknorm} (which requires kernelizing the computation of the gradient and the Hessian as in Lemma~\ref{lem:grad_representer}). We leave the details to the reader.

\chapter{Reweighted-$\ell_2$ Algorithms}
\label{sec:reweighted_l2}
\label{sec:opt_methods_etatrick}

Approximating a nonsmooth or constrained optimization problem by a series of smooth unconstrained problems is common in optimization~(see, e.g., \cite{boyd.convex,nesterov2005smooth,nocedal}). In the context of objective functions regularized by sparsity-inducing norms, it is natural to consider variational formulations of these norms in terms of squared $\ell_2$-norms, since many efficient methods are available to solve  $\ell_2$-regularized problems (e.g., linear system solvers for least-squares regression).

\section{Variational formulations for grouped $\ell_1$-norms}

In this section, we show on our motivating example of sums of $\ell_2$-norms of subsets how such formulations arise~(see, e.g., \cite{grandvalet1999,ingrid2010iteratively,jenatton2010sspca,pontil,simpleMKL}).
The variational formulation we have presented in Section~\ref{subsec:variational} allows us to consider the following function $H(\vw,\veta)$ defined as
$$
H(\vw,\veta) = f(\vw) +   \frac{\lambda}{2}
\sum_{j = 1}^p \bigg\{
\sum_{g \in \Gc, j \in g} \veta_g^{-1}
\bigg\} \vw_j^2
+  \frac{\lambda}{2} \sum_{g \in \Gc} \veta_g  .
$$
It is jointly convex in $(\vw,\veta)$; the minimization with respect to $\veta$ can be done in closed form, and the optimum is equal to $F(\vw) = f(\vw) + \lambda \Omega(\vw)$; as for the minimization with respect to $\vw$, it is an $\ell_2$-regularized  problem.

Unfortunately, the alternating minimization algorithm that is immediately suggested is not convergent in general, because the function~$H$  is not continuous (in particular around $\veta$ which has zero coordinates). In order to make the algorithm convergent, two strategies are commonly used:

$-$ \textbf{Smoothing}: we can add a term of the form $\frac{\varepsilon}{2}  \sum_{g \in \Gc} \veta_g^{-1}$, which yields a joint cost function with compact level sets on the set of positive numbers. Alternating minimization algorithms are then convergent (as a consequence of general results on block coordinate descent), and have two different iterations: (1) minimization with respect to $\veta$ in closed form, through
$\veta_g =    \sqrt{\mynorm{\vw_g}{2}^2  + \varepsilon }$,  and (2) minimization with respect to $\vw$, which is an $\ell_2$-regularized problem, which can be for example solved in closed form for the square loss. Note however, that the second problem does not need to be exactly optimized at all iterations.

$-$ \textbf{First order method in $\veta$}: while the joint cost function $H(\veta,\vw)$ is not continuous, the function
$I(\veta) =\min_{\vw \in \R^p}  H(\vw,\veta) $ is continuous, and under general assumptions, continuously differentiable, and is thus amenable to first-order methods (e.g., proximal methods, gradient descent). When the groups in $\Gc$ do not overlap, one sufficient condition is that the function $f(\vw)$ is of the form $f(\vw) = \psi(\mx\vw)$ for $\mx \in \R^{n \times p}$ any matrix (typically the design matrix) and $\psi$ a strongly convex function on $\R^n$. This strategy is particularly interesting when evaluating $I(\veta)$ is computationally cheap.\\


In theory, the alternating scheme consisting of optimizing alternately over $\veta$ and $\vw$ can be used to solve learning problems regularized with \emph{any} norms: on top of the subquadratic norms defined in Section~\ref{sec:subqua}, we indeed show in the next section that any norm admits a quadratic variational formulation (potentially defined from a non-diagonal symmetric positive matrix). 
To illustrate the principle of $\ell_2$-reweighted algorithms, we first consider the special case of multiple kernel learning; in Section~\ref{sec:gen_varia}, we consider the case of the trace norm.

\paragraph {\bf Structured MKL.} 
Reweighted-$\ell_2$ algorithms are fairly natural for norms which admit a diagonal variational formulation  (see Lemma \ref{lem:convex_var} and \cite{micchelli2011regularizers}) and for the corresponding multiple kernel learning problem.
We consider the structured multiple learning problem presented in Section \ref{sec:struct_mkl}.

The alternating scheme applied to Eq.~(\ref{eq:struct_mkl}) then takes the following form: for $\veta$ fixed, one has to solve a single kernel learning problem with the kernel $K=\sum_{i} \veta_i K_i$;
 the corresponding solution in the product of RKHSs $\Hc_1 \times \ldots \times \Hc_p$ (see Section~\ref{sec:prox_mkl}) is of the form $h(\vx)=h_1(\vx)+\ldots+h_p(\vx)$ with $h_i(\vx)=\veta_i \sum_{j=1}^n \valpha_j K_i(\vx_j,\cdot)$. 
 Since $\|h_i\|^2_{\Hc_i}=\veta_i^2 \valpha^\top \mk_i \valpha$, for fixed $\valpha$, the update in $\veta$ then takes the form: 
$$\veta^{t+1} \leftarrow \argmin_{\veta \in H} \sum_{i=1}^p \frac{(\veta^t_i)^2 {\valpha^t}^\top \mk_i \, \valpha^t+\varepsilon}{\veta_i}.$$   
Note that these updates produce a non-increasing sequence of values of the primal objective.
Moreover, this MKL optimization scheme uses a potentially much more compact parameterization than proximal methods since in addition to the variational parameter $\veta \in \R^p$ a single vector of parameters $\valpha \in \R^n$ is needed as opposed to up to one such vector for each kernel in the case of proximal methods.
MKL problems can also be tackled using first order methods in $\veta$ described above: we refer the reader to \cite{simpleMKL} for an example in the case of classical MKL.

\section{Quadratic Variational Formulation for General Norms}
\label{sec:gen_varia}
 We now investigate a general variational formulation of norms that naturally leads to a sequence of reweighted $\ell_2$-regularized problems. The formulation is based on approximating the unit ball of a norm~$\Omega$ with enclosing ellipsoids.
 See Figure~\ref{fig:ellipsoids}.
 The following proposition shows that all norms may be expressed as a minimum of Euclidean norms:

 \begin{figure}
 \begin{center}
 \includegraphics[scale=.35]{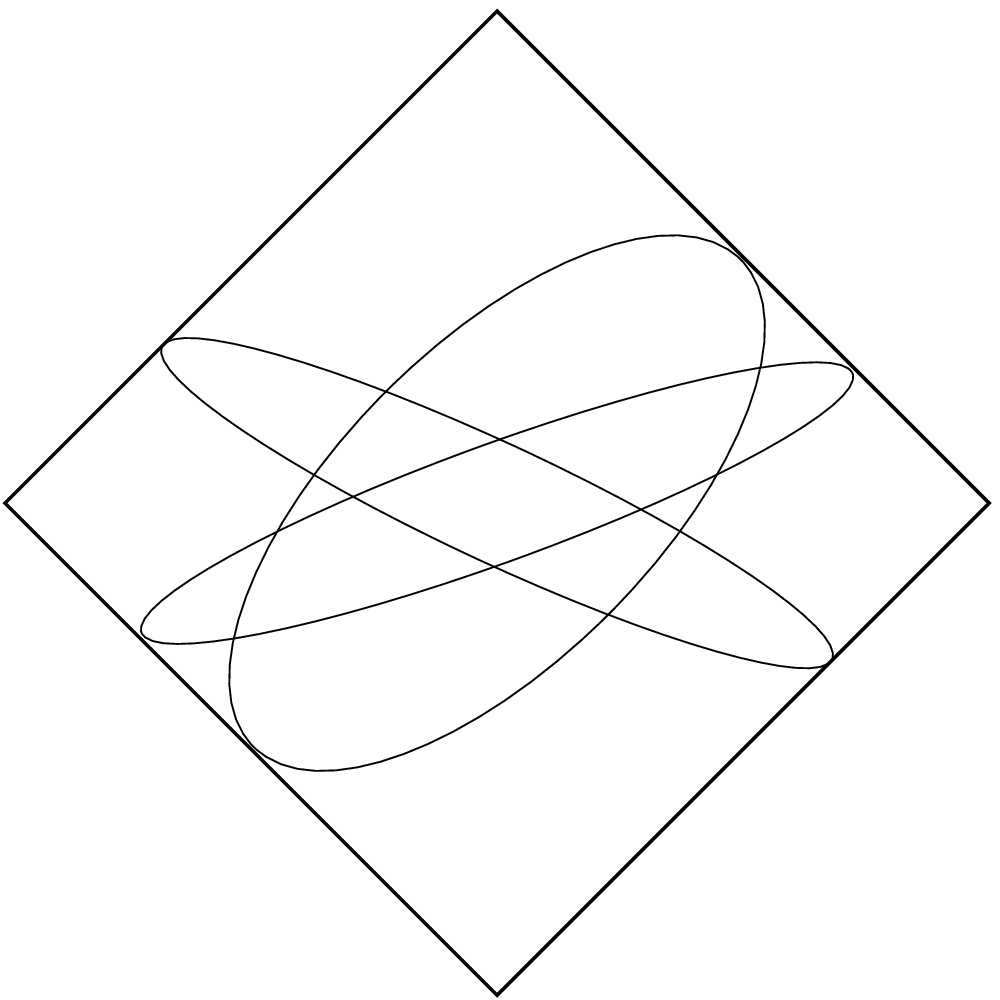} \hspace*{.5cm}
 \includegraphics[scale=.35]{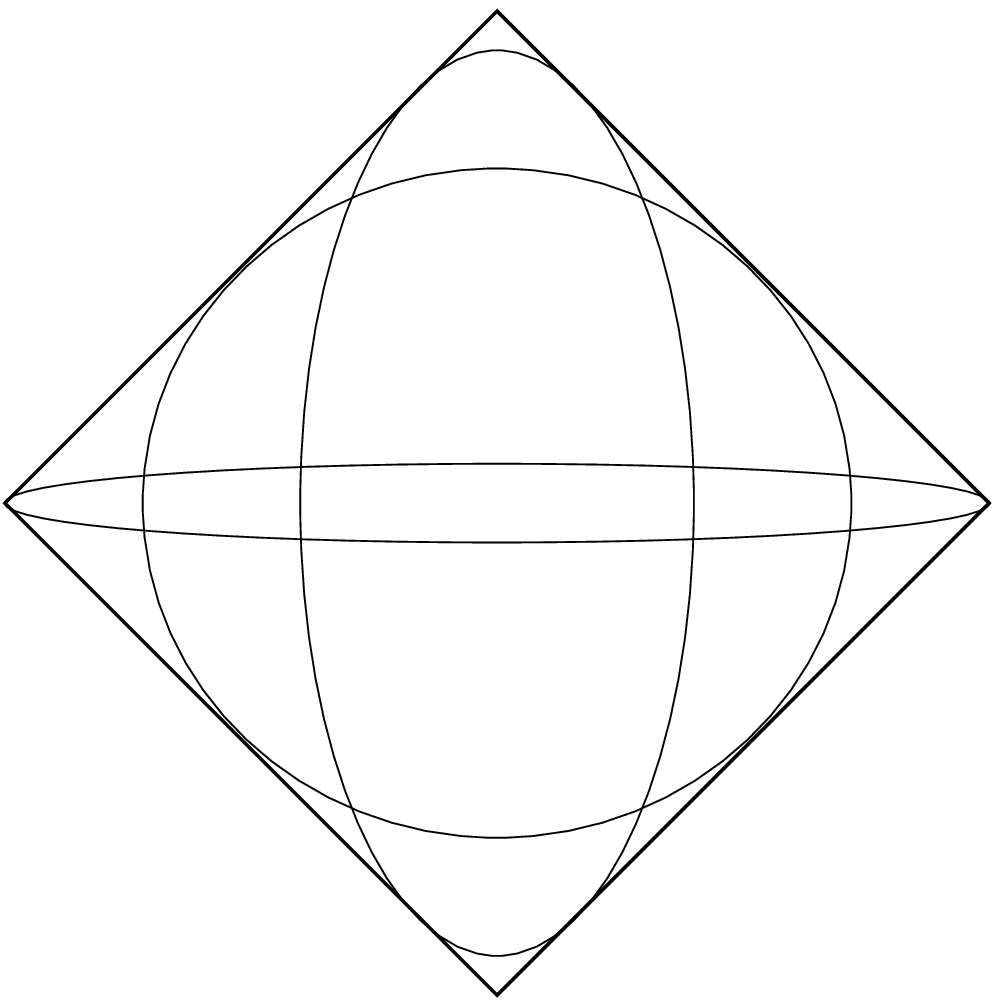} \hspace*{.5cm}
 \includegraphics[scale=.35]{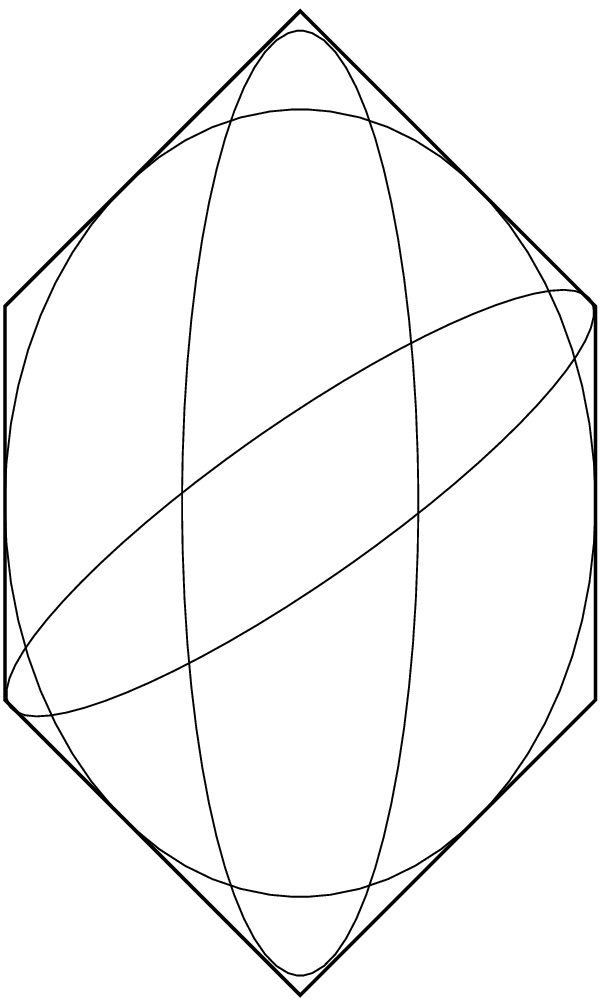}
 \end{center}
 \caption{Example of a sparsity-inducing ball in two dimensions, with enclosing ellipsoids. Left: ellipsoids with general axis for the $\ell_1$-norm; middle: ellipsoids with horizontal and vertical axis  for the $\ell_1$-norm; right: ellipsoids for another polyhedral norm.}
 \label{fig:ellipsoids}
 \end{figure}

 \begin{proposition}
 Let $\Omega: \R^p \to \R$ a norm on $\R^p$, then there exists a function $g$ defined on the cone of positive semi-definite matrices $\mathbf{S}^+_p$,  such that $g$ is convex, strictly positive except at zero, positively homogeneous and such that for all $ \vw \in \R^p$, 
 \begin{equation}
 \label{eq:eta}
 \Omega(\vw) = \! \min_{\bm{\Lambda} \in \mathbf{S}^+_p, \  g(\bm{\Lambda}) \leqslant 1 } \! \sqrt{ \vw^\top \bm{\Lambda}^{-1} \vw }
 = \frac{1}{2} \min_{\bm{\Lambda} \in \mathbf{S}^+_p} \big\{    \vw^\top \bm{\Lambda}^{-1} \vw  + g(\bm{\Lambda}) \big\}.
 \end{equation}
 \end{proposition}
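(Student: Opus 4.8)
The plan is to exhibit an explicit candidate for $g$ and to verify the two identities by a pair of inequalities, the second identity then following from the first by a one-dimensional rescaling. Concretely, I would define
\[
g(\bm{\Lambda}) \defin \max_{\vs \in \R^p,\ \Omega^*(\vs) \leq 1}\ \vs^\top \bm{\Lambda}\, \vs, \qquad \bm{\Lambda} \in \mathbf{S}^+_p .
\]
This maximum is finite because $\{\vs : \Omega^*(\vs)\le 1\}$ is compact (it is the unit ball of the norm $\Omega^*$) and the integrand is continuous. As a pointwise maximum of the linear maps $\bm{\Lambda}\mapsto \langle \bm{\Lambda}, \vs\vs^\top\rangle$, $g$ is convex and positively homogeneous of degree one; and for $\bm{\Lambda}\succeq 0$, $\bm{\Lambda}\neq 0$, there is $\vs_0$ with $\vs_0^\top\bm{\Lambda}\vs_0>0$, which after rescaling $\vs_0$ into the dual ball shows $g(\bm{\Lambda})>0$, so $g$ is strictly positive off the origin. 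Throughout I would use the convention (already adopted in the paper for $x^2/y$) that $\vw^\top\bm{\Lambda}^{-1}\vw$ denotes $\vw^\top\bm{\Lambda}^{\dagger}\vw$ when $\vw\in\range(\bm{\Lambda})$ and $+\infty$ otherwise; with this convention the matrix-fractional function is jointly convex and lower semicontinuous, which guarantees that the infima are attained and may be written as minima.

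First I would prove the lower bound $\Omega(\vw)\le \sqrt{\vw^\top\bm{\Lambda}^{-1}\vw}$ for every feasible $\bm{\Lambda}$ (that is, $g(\bm\Lambda)\le 1$). Fix $\vs$ with $\Omega^*(\vs)\le 1$. If $\vw\notin\range(\bm{\Lambda})$ the claim is trivial; otherwise write $\vw=\bm{\Lambda}\vu$ and apply the Cauchy--Schwarz inequality for the positive semidefinite form $\langle\vx,\vy\rangle_{\bm\Lambda}=\vx^\top\bm{\Lambda}\vy$ to get $\vs^\top\vw = \vs^\top\bm{\Lambda}\vu \le \sqrt{\vs^\top\bm{\Lambda}\vs}\,\sqrt{\vw^\top\bm{\Lambda}^{\dagger}\vw}\le \sqrt{\vw^\top\bm{\Lambda}^{-1}\vw}$, where the last step uses $\vs^\top\bm{\Lambda}\vs\le g(\bm{\Lambda})\le 1$. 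Taking the maximum over $\Omega^*(\vs)\le 1$ and using $\Omega(\vw)=\max_{\Omega^*(\vs)\le1}\vs^\top\vw$ (the dual of the dual norm, see \eqref{eq:def_dualnorm}) yields the bound, hence $\Omega(\vw)\le \min_{g(\bm{\Lambda})\le1}\sqrt{\vw^\top\bm{\Lambda}^{-1}\vw}$.

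Next I would show this bound is attained by the rank-one matrix $\bm{\Lambda}^\star\defin \vw\vw^\top/\Omega(\vw)^2$ (for $\vw\neq 0$). Feasibility follows from $g(\bm{\Lambda}^\star)=\max_{\Omega^*(\vs)\le1}(\vs^\top\vw)^2/\Omega(\vw)^2 = \Omega(\vw)^2/\Omega(\vw)^2=1$, again by the dual-norm identity. Since $\vw\in\range(\bm{\Lambda}^\star)=\mathrm{span}(\vw)$, the pseudoinverse gives $\vw^\top(\bm{\Lambda}^\star)^{\dagger}\vw=\Omega(\vw)^2$, so $\sqrt{\vw^\top(\bm{\Lambda}^\star)^{-1}\vw}=\Omega(\vw)$. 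Combined with the previous paragraph this establishes the first identity $\Omega(\vw)=\min_{g(\bm{\Lambda})\le1}\sqrt{\vw^\top\bm{\Lambda}^{-1}\vw}$ (the case $\vw=0$ being immediate).

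Finally, the additive identity would follow from the first one by the scalar variational relation \eqref{eq:norm_varia}. Writing any nonzero $\bm{\Lambda}\succeq0$ as $\bm{\Lambda}=t M$ with $g(M)=1$ and $t=g(\bm{\Lambda})>0$, homogeneity gives $\vw^\top\bm{\Lambda}^{-1}\vw+g(\bm{\Lambda})=t^{-1}\vw^\top M^{-1}\vw+t$, whose minimum over $t>0$ is $2\sqrt{\vw^\top M^{-1}\vw}$; minimizing over $\{g(M)=1\}$ and invoking the first identity gives $\tfrac12\min_{\bm\Lambda}\{\vw^\top\bm{\Lambda}^{-1}\vw+g(\bm{\Lambda})\}=\Omega(\vw)$. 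I expect the only delicate point to be the treatment of singular $\bm{\Lambda}$: one must check that the matrix-fractional convention makes the objective lower semicontinuous so that the minima are genuinely attained (in particular at the rank-deficient optimizer $\bm{\Lambda}^\star$), and that the Cauchy--Schwarz step is valid on $\range(\bm{\Lambda})$. Everything else is routine verification of the norm/dual-norm bookkeeping.
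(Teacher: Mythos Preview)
Your proposal is correct and follows essentially the same approach as the paper: the same candidate $g(\bm{\Lambda})=\max_{\Omega^*(\vs)\le 1}\vs^\top\bm{\Lambda}\vs$, the same Cauchy--Schwarz argument for the lower bound, the rank-one matrix $\vw\vw^\top$ (suitably normalized) for the upper bound, and the same one-dimensional rescaling for the additive identity. The only technical difference is that where you invoke the pseudoinverse convention to work directly with the singular optimizer $\bm{\Lambda}^\star=\vw\vw^\top/\Omega(\vw)^2$, the paper instead regularizes it as $\bm{\Lambda}(\varepsilon)=(1-\varepsilon)\vw_0\vw_0^\top+\varepsilon(\vw_0^\top\vw_0)I$ and passes to the limit $\varepsilon\to 0$; your route has the small advantage of exhibiting an actual minimizer rather than an infimizing sequence.
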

 \begin{proof}
 Let $\Omega^\ast$ be the dual norm of $\Omega$, defined as $\Omega^\ast(\vs) = \max_{ \Omega(\vw) \leqslant 1 } \vw^\top \vs$ \cite{boyd.convex}. Let $g$ be the function defined through
 $g(\bm{\Lambda})=  \max_{ \Omega^\ast(\vs) \leqslant 1 } \vs^\top \bm{\Lambda} \vs$. This function is well-defined as the maximum of a continuous function over a compact set; moreover, as a maximum of linear functions, it is convex and positive homogeneous. Also, for nonzero~$\bm{\Lambda}$, the quadratic form $\vs \mapsto \vs^\top \bm{\Lambda} \vs$ is not identically zero around $\vs=0$, hence the strict positivity of $g$.

 Let $\vw  \in \R^p$ and $\bm{\Lambda} \in \mathbf{S}^+_p$; there exists $\vs $ such that $\Omega^\ast(\vs) = 1$ and $\vw^\top \vs = \Omega(\vw)$. We then have
 $$
 \Omega(\vw)^2 =  (\vw^\top \vs)^2 \leqslant (\vw^\top \bm{\Lambda}^{-1} \vw)  (\vs^\top \bm{\Lambda} \vs )
 \leqslant  g(\bm{\Lambda})  (\vw^\top \bm{\Lambda}^{-1} \vw) .
 $$
 This shows that
 $\Omega(\vw) \leqslant \min_{\bm{\Lambda} \in \mathbf{S}^+_p, \  g(\bm{\Lambda}) \leqslant 1 } \sqrt{ \vw^\top \bm{\Lambda}^{-1} \vw }$. Proving the other direction can be done using the following limiting argument. Given $\vw_0 \in \R^p$, consider
 $\bm{\Lambda}(\varepsilon) = ( 1 - \varepsilon) \vw_0 \vw_0^\top + \varepsilon (\vw_0^\top \vw_0) I $. We have $\vw_0^\top \bm{\Lambda}(\varepsilon) ^{-1} \vw_0 =1$ and $g(\bm{\Lambda}(\varepsilon)) \to g(\vw_0 \vw_0^\top) = \Omega(\vw_0)^2$.
 Thus, for $\tilde{\bm{\Lambda}}(\varepsilon)  = \bm{\Lambda}(\varepsilon) /  g(\bm{\Lambda}(\varepsilon)) $, we have
 that $\sqrt{ \vw_0^\top \tilde{\bm{\Lambda}}(\varepsilon) ^{-1} \vw_0 }$ tends to $\Omega(\vw_0)$, thus $\Omega(\vw_0)$ must be no smaller than the minimum over all $\bm{\Lambda}$. The right-hand side of Eq.~(\ref{eq:eta}) can be obtained by optimizing over the scale of $\bm{\Lambda}$.
 \end{proof}

 Note that while the proof provides a closed-form expression for a candidate function $g$, it is not unique, as can be seen in the following examples. The domain of $g$ (matrices so that $g$ is finite) may be reduced (in particular to diagonal matrices for the $\ell_1$-norm and more generally the sub-quadratic norms defined in Section~\ref{sec:subqua}):

 $-$ For the $\ell_1$-norm: using the candidate from the proof, we have $g(\bm{\Lambda})= 
 \max_{ \| \vs\|_\infty \leq 1} \vs^\top {\bm \Lambda} \vs $, but we could use $g(\bm{\Lambda}) = \trace{\bm{\Lambda}}$ if $\bm{\Lambda}$ is diagonal and $+\infty$ otherwise.
 
$-$  For subquadratic norms (Section~\ref{sec:subqua}), we can take $g(\bm{\Lambda}) $ to be $+\infty$ for non-diagonal $\bm{\Lambda}$, and either equal to the gauge function of the set $H$, i.e. the function $\vs \mapsto \inf \{\nu \in \R_+ \mid \vs \in \nu H\}$, or equal to the function  $\bar{\Omega}$ defined in Lemma~\ref{lem:subqua_varia}, both applied to the diagonal of $\bm{\Lambda}$. 
 
  $-$ For the $\ell_2$-norm:  $g(\bm{\Lambda}) = \lambda_{\max}(\bm{\Lambda})$ but we could of course use  $g(\bm{\Lambda}) = 1$ if $\bm{\Lambda}= I$  and $+\infty$ otherwise.

 $-$ For the trace norm: $\vw$ is assumed to be of the form $\vw = { \rm vect} (\bm{W})$ and the trace norm of $\bm{W}$ is regularized. The trace norm admits the variational form (see \cite{argyriou2008convex}) :
\begin{equation} 
\label{eq:tr_norm}
\|\mw\|_*= \frac{1}{2} \inf_{\md \succeq 0} \tr(\mw^\top \md^{-1} \mw+\md) \quad \st\quad \md \succ 0.
\end{equation}

But $\tr(\mw^\top \md^{-1} \mw)=\vw^\top (\mi \otimes \md)^{-1} \vw$, which shows that the regularization by the trace norm takes the form of Eq.~(\ref{eq:eta}) in which we can choose $g({\bm \Lambda})$ equal to $\tr(\md)$  if ${\bm \Lambda}=\mi \otimes \md$ for some $\md \succ 0$ and $+\infty$ otherwise.

The solution of the above optimization problem is given by
$\md^{\star}=(\mw\mw^\top)^{1/2}$ 
which can be computed via a singular value decomposition of $\mw$. The reweighted-$\ell_2$ algorithm to solve
$$\min_{\mw \in \R^{p\times k}} f(\mw)+ \lambda \|\mw\|_*$$
therefore consists of iterating between the two updates (see, e.g.,~\cite{argyriou2008convex} for more details):
$$\mw \leftarrow \argmin_{\mw} f(\mw)+\frac{\lambda}{2} \trace(\mw^\top \md^{-1} \mw) \quad \text{and} $$
$$\md \leftarrow (\mw\mw^\top+\varepsilon \, \mi_k)^{1/2},$$
where $\varepsilon$ is a smoothing parameter that arises from adding a term $\frac{\varepsilon\lambda}{2} \, \trace(\md^{-1})$ to Eq.~(\ref{eq:tr_norm}) and prevents the matrix from becoming singular.

\chapter{Working-Set and Homotopy Methods}
In this section, we consider methods that explicitly take into account the fact that the solutions are sparse, namely working set methods and homotopy methods.
\label{sec:workinghomo}

\section{Working-Set Techniques}
\label{sec:active_sets}

\label{sec:opt_methods_activeset}
Working-set algorithms address optimization problems
by solving an increasing sequence of small subproblems of (\ref{eq:formulation}), which we recall can be written as
\begin{displaymath}
   \min_{\vw \in \R^p} f(\vw) + \lambda \Omega(\vw),
\end{displaymath}
where~$f$ is a convex smooth function and~$\Omega$ a sparsity-inducing norm.
The working set, which we denote by~$J$,
refers to the subset of variables involved in the optimization of these subproblems.

Working-set algorithms proceed as follows:
after computing a solution to the problem restricted to the variables in $J$ (while setting the variables in~$J^c$ to zero),
global optimality is checked to determine whether the algorithm has to continue.
If this is the case, new variables enter the working set $J$ according to a strategy that has to be defined.
Note that we only consider \textit{forward} algorithms, i.e., where the working set grows monotonically.
In other words, there are no \textit{backward} steps where variables would be allowed to leave the set $J$.
Provided this assumption is met, it is easy to see that these procedures stop
in a finite number of iterations.

This class of algorithms is typically applied to linear programming and
quadratic programming problems (see, e.g.,~\cite{nocedal}), and here takes
specific advantage of sparsity from a computational point of
view~\cite{hkl,jenatton,ng-sparsecoding, obozinski-joint,  roth,
schmidt2010convex,szafranski2007hierarchical}, since the subproblems that need
to be solved are typically much smaller than the original one.

Working-set algorithms require three ingredients:\\
 
 $\bullet$  \textbf{Inner-loop solver}: At each iteration of the working-set algorithm,
 problem~(\ref{eq:formulation}) has to be solved on $J$,
 i.e., subject to the additional equality constraint that $\vw_j=0$ for all $j$ in $J^c$:
 \begin{equation}
   \min_{\vw \in \R^p} f(\vw) + \lambda \Omega(\vw),\ \text{such that}\ \vw_{J^c}=0.\label{eq:formulation_J}
\end{equation}
 The computation can be performed by any of the methods presented in this monograph.
 Working-set algorithms should therefore be viewed as ``meta-algorithms''.
 Since solutions for successive working sets are typically close to each other
the approach is efficient if the method chosen can use \textit{warm-restarts}.\\

 $\bullet$  \textbf{Computing the optimality conditions}: Given a solution~$\vw^\star$ of problem~(\ref{eq:formulation_J}),
 it is then necessary to check whether~$\vw^\star$ is also a solution for the original problem~(\ref{eq:formulation}).
 This test relies on the duality gaps of problems~(\ref{eq:formulation_J}) and (\ref{eq:formulation}).
 In particular, if $\vw^\star$ is a solution of problem~(\ref{eq:formulation_J}),
 it follows from Proposition \ref{prop:duagap} in Section~\ref{sec:opt_tools}
 that
  \begin{equation*}
  f(\vw^\star) + \lambda \Omega(\vw^\star) + f^\ast( \nabla f(\vw^\star) ) = 0.
  \end{equation*}
In fact, the Lagrangian parameter associated with the equality constraint
ensures the feasibility of the dual variable formed from the gradient of $f$ at~$\vw^\star$.
In turn, this guarantees that the duality gap of problem~(\ref{eq:formulation_J}) vanishes.
 The candidate $\vw^\star$ is now a solution of the full problem~(\ref{eq:formulation}),
i.e., without the equality constraint $\vw_{J^c}=0$, if and only if
 \begin{equation}
   \Omega^\ast( \nabla f(\vw^\star) )\leq \lambda.\label{eq:feasibility_grad}
  \end{equation}
 Condition~(\ref{eq:feasibility_grad}) points out that the dual norm $\Omega^\ast$ is a key quantity to monitor the progress of the working-set algorithm~\cite{jenatton}.
 In simple settings, for instance when $\Omega$ is the $\ell_1$-norm,
 checking condition~(\ref{eq:feasibility_grad}) can be easily computed since~$\Omega^\ast$ is just the $\ell_\infty$-norm.
 In this case, condition~(\ref{eq:feasibility_grad}) becomes
  \begin{equation*}
   | [\nabla f(\vw^\star)]_j |\leq \lambda,\ \text{for all}\ j\ \text{in}\ \{1,\dots,p\}.
  \end{equation*}
Note that by using the optimality of problem~(\ref{eq:formulation_J}),
the components of the gradient of $f$ indexed by $J$ are already guaranteed to be no greater than $\lambda$.\\

 $\bullet$  \textbf{Strategy for the growth of the working set}: If condition~(\ref{eq:feasibility_grad}) is not satisfied for the current working set $J$,
 some inactive variables in $J^c$ have to become active.
 This point raises the questions of \textit{how many} and \textit{how} these variables should be chosen.
 First, depending on the structure of~$\Omega$,
 a \textit{single} or a \textit{group} of inactive variables have to be considered to enter the working set.
 Furthermore, one natural way to proceed is to look at the variables that violate condition~(\ref{eq:feasibility_grad}) most.
 In the example of $\ell_1$-regularized least squares regression with normalized predictors,
 this strategy amounts to selecting the inactive variable that has the highest correlation with the current residual.\\

The working-set algorithms we have described so far aim at solving problem~(\ref{eq:formulation}) for a fixed value of the regularization parameter $\lambda$. However, for specific types of loss and regularization functions,
the set of solutions of problem~(\ref{eq:formulation}) can be obtained efficiently for all possible values of $\lambda$,
which is the topic of the next section.

\section{Homotopy Methods} \label{sec:lars}
We present in this section an active-set\footnote{Active-set and working-set
methods are very similar; they differ in that active-set methods allow (or sometimes require)
variables returning to zero to exit the set.} method for solving the Lasso
problem of Eq.~(\ref{eq:lasso}).  We recall the Lasso formulation:
\begin{equation}
   \min_{\vw \in \R^p} \frac{1}{2n}\|\vy-\mx \vw \|_2^2 + \lambda \|\vw\|_1, \label{eq:lasso2}
\end{equation}
where $\vy$ is in $\R^n$, and $\mx$ is a design matrix in $\R^{n \times p}$.
Even though generic working-set methods introduced above could be used to solve
this formulation (see, e.g.,~\cite{ng-sparsecoding}), a specific property of the $\ell_1$-norm associated
with a quadratic loss makes it possible to address it more efficiently.

Under mild assumptions (which we will detail later), the solution of
Eq.~(\ref{eq:lasso2}) is unique, and we denote it by~$\vw^\star(\lambda)$ for a given regularization parameter~$\lambda >0$. We use the name \emph{regularization path} to denote the function
$\lambda \mapsto \vw^\star(\lambda)$ that associates to a
regularization parameter~$\lambda$ the corresponding solution.
We will show that this function is piecewise linear, 
a behavior illustrated
in Figure~\ref{fig:lasso_piecewise}, where the entries of $\vw^\star(\lambda)$ for
a particular instance of the Lasso are represented as functions of $\lambda$.
\begin{figure}
\centering
\includegraphics[width=0.8\linewidth]{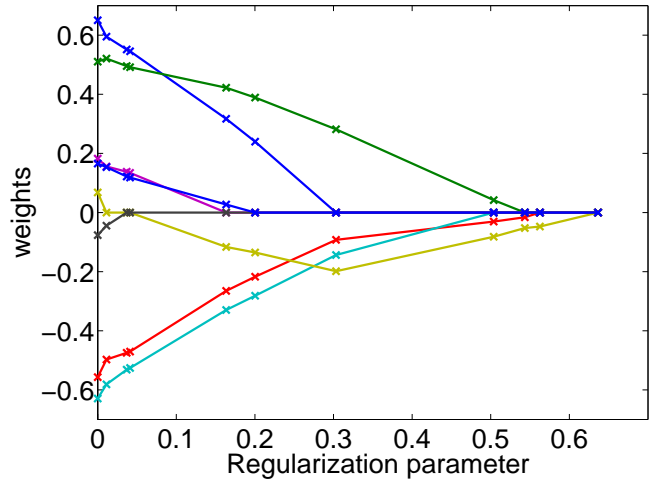}
\caption{The weights $\vw^\star(\lambda)$ are represented as functions of the regularization parameter $\lambda$.
When $\lambda$ increases, more and more coefficients are set to zero. These functions are all piecewise affine. Note that some variables (here one) may enter and leave the regularization path.}
\label{fig:lasso_piecewise}
\end{figure}

An efficient algorithm can thus be constructed 
by choosing a particular value of $\lambda$,
for which finding this solution is trivial, and by following the piecewise
affine path, computing the directions of the current affine parts, and the
points where the direction changes (also known as kinks).  This piecewise linearity
was first discovered and exploited in \cite{markowitz} in the context of
portfolio selection, revisited in~\cite{osborne} describing a \emph{homotopy}
algorithm, and studied in~\cite{efron} with the LARS algorithm.\footnote{Even
though the basic version of LARS is a bit different from the procedure we have
just described, it is closely related, and indeed a simple modification makes
it possible to obtain the full regularization path of Eq.~(\ref{eq:lasso}).}
Similar ideas also appear early in the optimization literature: Finding the
full regularization path of the Lasso is in fact a particular instance of a
\emph{parametric quadratic programming} problem, for which path following
algorithms have been developed~\cite{ritter}.

Let us show how to construct the path. From the optimality conditions we have
presented in Eq.~(\ref{eq:optlasso}), denoting by $J \defin \{j ;
|\bmt{\mx}_j(\vy-\mx\vw^\star)| = n\lambda \}$ the set of active variables, and
defining the vector $\vt$ in $\{-1; 0; 1\}^{p}$ as $\vt \defin
\text{sign}\big(\bmt{\mx}(\vy-\mx\vw^\star)\big)$,
we have the following closed-form expression
\begin{displaymath}
   \begin{cases}
 \vw_J^\star(\lambda) & = (\bmt{\mx}_J
\mxJ)^{-1}(\bmt{\mx}_J \vy-n\lambda \vt_J) \\
 \vw_{J^c}^\star(\lambda) &= 0,
 \end{cases}
\end{displaymath}
where we have assumed the matrix $\bmt{\mx}_J \mxJ$ to
be invertible (which is a sufficient condition to guarantee the
uniqueness of $\vw^\star$).
This is an important point: if one knows in advance the
set $J$ and the signs $\vt_J$, then $\vw^\star(\lambda)$ admits
a simple closed-form.
Moreover, when $J$ and $\vt_J$ are fixed, the function $\lambda \mapsto
(\bmt{\mx}_J \mxJ)^{-1}(\bmt{\mx}_J \vy-n\lambda \vt_J)$ is affine in
$\lambda$.
With this observation in hand, we can now present the main steps of the path-following
algorithm. It basically starts from a trivial
solution of the regularization path, follows the path by exploiting this formula,
updating $J$ and $\vt_J$ whenever needed
so that optimality conditions~(\ref{eq:optlasso}) remain satisfied.
This procedure requires some assumptions---namely that (a) the
matrix $\bmt{\mx}_J \mxJ$ is always invertible, and (b) that updating $J$ along the path 
consists of adding or removing from this set a single variable at the same time.
Concretely, we proceed as follows:
\begin{enumerate}
    \item Set $\lambda$ to $\frac{1}{n}\|\bmt{\mx}\vy\|_\infty$ for which it is easy to show from Eq.~(\ref{eq:optlasso}) that $\vw^\star(\lambda)=0$ (trivial solution on the regularization path).
    \item Set $J \defin \{j ; |\bmt{\mx}_j\vy| = n\lambda \}$.
    \item \label{item:lars} Follow the regularization path by decreasing the value of $\lambda$, with the formula
       $\vw^\star_J(\lambda) = (\bmt{\mx}_J \mxJ)^{-1}(\bmt{\mx}_J \vy-n\lambda\vt_J)$ keeping $\vw^\star_{J^c}=0$,
       until one of the following events (kink) occurs
       \begin{itemize}
          \item There exists $j$ in $J^c$ such that $|\bmt{\mx}_j(\vy-\mx\vw^\star)| =n \lambda$. Then, add $j$ to the set $J$.
          \item There exists $j$ in $J$ such that a non-zero coefficient $\vw^\star_j$ hits zero. Then, remove $j$ from $J$.
       \end{itemize}
       We assume that only one of such events can occur at the same time (b). It is also easy to show that the value of $\lambda$ corresponding to the next event can be obtained in closed form such that one can ``jump'' from a kink to another.
    \item  Go back to \ref{item:lars}.
\end{enumerate}
Let us now briefly discuss assumptions (a) and (b).
When the matrix~$\bmt{\mx}_J \mxJ$ is not invertible, the regularization path is
non-unique, and the algorithm fails. This can easily be fixed by addressing instead a slightly
modified formulation. It is possible to consider instead
the elastic-net formulation~\cite{zou} that uses
$\Omega(\vw)=\lambda\|\vw\|_1+ \frac{\gamma}{2}\|\vw\|_2^2$. Indeed, it amounts to replacing
the matrix $\bmt{\mx}_J \mxJ$ by $\bmt{\mx}_J \mxJ + n\gamma \mi$, which is positive definite and
therefore always invertible,  and to apply the same algorithm.
The second assumption (b) can be unsatisfied in practice
because of the machine precision. To the best of our knowledge, the algorithm will
fail in such cases, but we consider this scenario unlikely with real data, though possible when the Lasso/basis pursuit is used multiple times such as in dictionary learning, presented in Section~\ref{sec:DL}. In such situations, a proper use of optimality conditions can detect such problems and more stable algorithms such as proximal methods may then be used.

The complexity of the above procedure depends on the number of kinks of the
regularization path (which is also the number of iterations of the
algorithm). Even though it is possible to build examples where this number is
large, we often observe in practice that the event where one
variable gets out of the active set is rare.
The complexity also depends on the implementation.
By maintaining the computations of $\bmt{\mx}_j(\vy-\mx\vw^\star)$ and a
Cholesky decomposition of $(\bmt{\mx}_J \mxJ)^{-1}$,
it is possible to obtain an implementation in $O(psn+ps^2+s^3)$ operations, where $s$ is the
sparsity of the solution when the algorithm is stopped (which we approximately consider as equal to the number of iterations). 
The product $psn$ corresponds to the computation of the matrices $\bmt{\mx}_J \mxJ$, $ps^2$ to the updates of the
correlations $\bmt{\mx}_j(\vy-\mx\vw^\star)$ along the path, and $s^3$ to the
Cholesky decomposition.

\chapter{Sparsity and Nonconvex Optimization}
\label{sec:nonconvex}
In this section, we consider alternative approaches to sparse modelling, which are not based in convex optimization, but often use convex optimization problems in inner loops.

\section{Greedy Algorithms}
\label{sec:opt_greedy_algorithms}
We consider the $\ell_0$-constrained signal decomposition problem
\begin{equation}
   \min_{\vw \in \R^p} \frac{1}{2n}\|\vy-\mx\vw\|_2^2\quad  \st\quad  \|\vw\|_0 \leq s,\label{eq:greedy}
\end{equation}
where $s$ is the desired number of non-zero coefficients of the solution, and we assume for simplicity
that the columns of~$\mx$ have unit norm.
Even though this problem can be shown to be NP-hard~\cite{natarajan},
greedy procedures can provide an approximate solution. Under some 
assumptions on the matrix~$\mx$, they can also be shown to have some
optimality guarantees~\cite{tropp}. 

Several variants of these algorithms with different names have been developed
both by the statistics and signal processing communities. In a nutshell, they are known
as forward selection techniques in statistics~(see \cite{weisberg}), and matching
pursuit algorithms in signal processing~\cite{mallat}.
All of these approaches start with a null vector $\vw$, and iteratively
add non-zero variables to~$\vw$ until the threshold~$s$ is reached.
   
The algorithm dubbed \emph{matching pursuit}, was introduced in the 90's
in~\cite{mallat}, and can be seen as a non-cyclic coordinate descent procedure
for minimizing Eq.~(\ref{eq:greedy}).  It selects at each step the column
$\vx^\hati$ that is the most correlated with the residual according to the
formula
\begin{displaymath}
\hati \leftarrow \argmin_{i \in \{1,\dots,p\} } |\vr^\top \vx^{i}|,
\end{displaymath}
where $\vr$ denotes the residual $\vy-\mx\vw$.
Then, the residual is projected on $\vx^\hati$ and the entry~$\vw_\hati$ is
updated according to
\begin{displaymath}
\begin{split}
\vw_\hati & \leftarrow \vw_\hati + \vr^\top \vx^{\hati} \\
      \vr & \leftarrow \vr - (\vr^\top \vx^{\hati} )\vx^\hati.
\end{split}
\end{displaymath}
Such a simple procedure is guaranteed to decrease the objective function at
each iteration, but is not to converge in a finite number of steps (the same
variable can be selected several times during the process).  Note that 
such a scheme also appears in statistics in boosting procedures~\cite{friedman2001}.

\emph{Orthogonal matching pursuit}~\cite{mallat} was proposed as a major variant of
matching pursuit that ensures the residual of the decomposition to be always
\emph{orthogonal to all previously selected columns of~$\mx$}. Such technique existed
in fact in the statistics literature under the name \emph{forward
selection}~\cite{weisberg}, and a particular implementation exploiting a QR
matrix factorization also appears in~\cite{natarajan}.
More precisely, the algorithm is an active set procedure, which sequentially
adds one variable at a time to the active set, which we denote
by $J$. It provides an approximate solution
of Eq.~(\ref{eq:greedy}) for every value
$s' \leq s$, and stops when the desired level of sparsity is reached. Thus, it builds 
a regularization path, and shares many similarities with the homotopy algorithm for solving
the Lasso~\cite{efron}, even though the two algorithms address different optimization
problems. 
These similarities are also very strong in terms of implementation: Identical
tricks as those described in Section~\ref{sec:lars} for the homotopy algorithm
can be used, and in fact both algorithms have roughly the same complexity (if
most variables do not leave the path once they have entered it).  At each
iteration, one has to choose which new predictor should enter the active
set~$J$. A possible choice is to look for the column of~$\mx$ most correlated
with the residual as in the matching pursuit algorithm, 
but another criterion is to select the one that helps most reducing the
objective function
\begin{displaymath}
\hati \leftarrow \argmin_{i \in J^c} \min_{\vw' \in \R^{|J|+1}} \frac{1}{2n}\|\vy-\mx_{J \cup \{i \}}\vw' \|_2^2.
\end{displaymath}
Whereas this choice seem at first sight computationally expensive since it requires 
solving $|J^c|$ least-squares problems, the solution can in fact
be obtained efficiently using a Cholesky matrix
decomposition of the matrix~$\mx_{J}^\top \mx_J$ and basic linear algebra,
which we will not detail here for simplicity reasons (see~\cite{cotter} for
more details). 

After this step, the active set is updated $J \leftarrow J \cup \{\hati\}$, and
the corresponding residual $\vr$ and coefficients $\vw$ are
\begin{displaymath}
\begin{split}
\vw& \leftarrow (\mx_J^\top\mx_J)^{-1}\mx_J^\top \vy, \\
      \vr & \leftarrow (\mi-\mx_J(\mx_J^\top\mx_J)^{-1}\mx_J^\top) \vy, 
\end{split}
\end{displaymath}
where $\vr$ is the residual of the orthogonal projection of $\vy$ onto
the linear subspace spanned by the columns of $\mx_J$.
It is worth noticing that one does not need to compute these
two quantities in practice, but only updating the Cholesky decomposition
of $(\mx_J^\top\mx_J)^{-1}$ and computing directly $\mx^\top\vr$,
via simple linear algebra relations.

For simplicity, we have chosen to present matching pursuit algorithms for solving the
$\ell_0$-sparse approximation problem, but they admit several variants (see~\cite{needell} for example),
or extensions when the regularization is more complex than the~$\ell_0$-penalty or for other loss functions than the square loss. For instance,
they are used in the context of non-convex group-sparsity in~\cite{tropp2}, or
with structured sparsity formulations~\cite{baraniuk,huang}.

We also remark that other possibilities than greedy methods exists for
optimizing Eq.~(\ref{eq:greedy}). One can notably use the algorithm ISTA
(i.e., the non-accelerated proximal method) presented in
Section~\ref{sec:proximal_methods} when the function $f$ is convex and its
gradient Lipschitz continuous. Under this assumption, it is easy to see that
ISTA can iteratively decrease the value of the nonconvex objective function.
Such proximal gradient algorithms when~$\Omega$ is the~$\ell_0$-penalty
often appear under the name of iterative hard-thresholding
methods~\cite{herrity}.

\section{Reweighted-$\ell_1$ Algorithms with DC-Programming}
\label{sec:DC}
We focus in this section on optimization schemes for a certain type of
non-convex regularization functions.  More precisely, 
we consider problem~(\ref{eq:formulation}) when~$\Omega$ is a nonconvex separable penalty that can be written as
$\Omega(\vw)\defin\sum_{i=1}^p\zeta(|\vw_i|),$ where $\vw$ is in $\R^p$, and
$\zeta: \R_+ \to \R_+$ is a \emph{concave} non-decreasing differentiable
function. In other words, we address
\begin{equation}
  \min_{\vw \in \R^p} f(\vw) + \lambda\sum_{i=1}^p \zeta(|\vw_i|), \label{eq:dcprogformulation}
\end{equation}
where~$f$ is a convex smooth function.
Examples of such penalties include variants of the $\ell_q$-penalties
for $q < 1$ defined as $\zeta: t \mapsto (|t|+\varepsilon)^q$, log-penalties
$\zeta:t \mapsto \log(|t| + \varepsilon)$, where~$\varepsilon > 0$ makes the function~$\zeta$ differentiable at~$0$.  Other nonconvex
regularization functions have been proposed in the statistics community, such
as the SCAD penalty~\cite{fan2001}. 

The main motivation for using such penalties is that they induce more
sparsity than the $\ell_1$-norm, while they can be optimized with continuous optimization as opposed to greedy methods.  The unit balls corresponding to the $\ell_q$ pseudo-norms and
norms are illustrated in Figure~\ref{fig:lp},  for several values of $q$.
When~$q$ decreases, the $\ell_q$-ball approaches in a sense the $\ell_0$-ball, which allows to induce sparsity more aggressively.
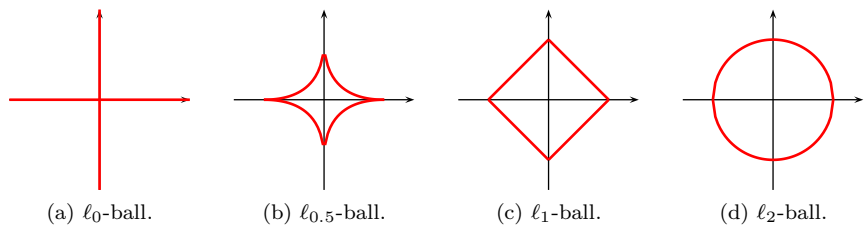
\begin{figure}[hbtp]
\centering
\hspace*{-.5cm}
\psset{yunit=0.8,xunit=0.8}
\subfloat[$\ell_0$-ball.]{ 
   \begin{pspicture}(-1.5,-1.5)(1.5,1.5)
   \psline[linewidth=0.5pt]{->}(-1.5,0)(1.5,0)
   \psline[linewidth=0.5pt]{->}(0,-1.5)(0,1.5)
   \psline[linecolor=red,linewidth=1pt]{-}(-1.5,0)(1.5,0)
   \psline[linecolor=red,linewidth=1pt]{-}(0,-1.5)(0,1.5)
\end{pspicture} 
} 
\hspace*{.2cm}
\subfloat[$\ell_{0.5}$-ball.]{ 
   \begin{pspicture}(-1.5,-1.5)(1.5,1.5)
   \psline[linewidth=0.5pt]{->}(-1.5,0)(1.5,0)
   \psline[linewidth=0.5pt]{->}(0,-1.5)(0,1.5)
   \psplot[linecolor=red, linewidth=1pt]{-1}{1}{
      1 -1 x abs sqrt mul add 1 -1 x abs sqrt mul add mul
   }
   \psplot[linecolor=red, linewidth=1pt]{-1}{1}{
      -1 1 -1 x abs sqrt mul add 1 -1 x abs sqrt mul add mul mul
   }
\end{pspicture} 
} 
\hspace*{.2cm}
\subfloat[$\ell_1$-ball.]{ 
   \begin{pspicture}(-1.5,-1.5)(1.5,1.5)
   \psline[linewidth=0.5pt]{->}(-1.5,0)(1.5,0)
   \psline[linewidth=0.5pt]{->}(0,-1.5)(0,1.5)
   \psplot[linecolor=red, linewidth=1pt]{0}{1}{
      1 -1 x mul add 
   }
   \psplot[linecolor=red, linewidth=1pt]{0}{1}{
      -1 x add 
   }
   \psplot[linecolor=red, linewidth=1pt]{-1}{0}{
      1 x add 
   }
   \psplot[linecolor=red, linewidth=1pt]{-1}{0}{
      -1 -1 x mul add 
   }
\end{pspicture} 
}  
\hspace*{.2cm}
\subfloat[$\ell_2$-ball.]{ 
   \begin{pspicture}(-1.5,-1.5)(1.5,1.5)
   \psline[linewidth=0.5pt]{->}(-1.5,0)(1.5,0)
   \psline[linewidth=0.5pt]{->}(0,-1.5)(0,1.5)
   \psplot[linecolor=red, linewidth=1pt]{-1}{1}{
      1 -1 x x mul mul add sqrt
   }
   \psplot[linecolor=red, linewidth=1pt]{-1}{1}{
      -1 1 -1 x x mul mul add sqrt mul
   }
\end{pspicture} 
} 
\hspace*{-.5cm}

\caption{Unit balls in 2D corresponding to $\ell_q$-penalties.}
\label{fig:lp}
\end{figure}

Even though the optimization problem~(\ref{eq:dcprogformulation}) is not convex and not
smooth, it is possible to iteratively decrease the value of the objective
function by solving a sequence of convex problems. Algorithmic schemes of this form appear
early in the optimization literature in a more general framework for minimizing the difference
of two convex functions (or equivalently the sum of a convex and a concave function),
which is called DC programming~(see \cite{gasso2009} and references therein). 
Even though the objective function of Equation~(\ref{eq:dcprogformulation}) is
not exactly a difference of convex functions (it is only the case on~$\R_+^p$),
the core idea of DC programming can be applied. 
We note that these algorithms were recently revisited under the particular form
of reweighted-$\ell_1$ algorithms~\cite{candes4}.
The idea is relatively simple. We denote by~$g: \R^p \to \R$ the objective
function which can be written as~$g(\vw) \defin f(\vw)+\lambda \sum_{i=1}^p
\zeta(|\vw_i|)$ for a vector~$\vw$ in~$\R^p$.  This optimization scheme consists
in minimizing, at iteration~$k$ of the algorithm, a convex upper bound of the
objective function~$g$, which is tangent to the graph of~$g$ at the current
estimate~$\vw^k$. 

A surrogate function with these properties is obtained easily by exploiting the
concavity of the functions $\zeta$ on $\R_+$, which always lie below their
tangents, as illustrated in Figure~\ref{fig:lp_tangent}. The iterative scheme
can then be written
\begin{displaymath}
   \vw^{k+1} \leftarrow \argmin_{\vw \in \R^p} f(\vw) + \lambda\sum_{i=1}^p \zeta'(|\vw^k_i|) |\vw_i|,
\end{displaymath}
which is a reweighted-$\ell_1$ sparse decomposition problem~\cite{candes4}. To
initialize the algorithm, the first step is usually a simple Lasso, with no
weights. In practice, the effect of the weights $\zeta'(|\vw^k_i|)$ is to push
to zero the smallest non-zero coefficients from iteration~$k-1$, and two or
three iterations are usually enough to obtain the desired sparsifying effect.
Linearizing iteratively concave functions to obtain convex surrogates is the
main idea of DC programming, which readily applies here to the functions~$w \mapsto \zeta(|w|)$.

\begin{figure}[hbtp]
\centering
\subfloat[red: $\zeta(|w|)=\log(|w|+\varepsilon)$. \protect\newline  blue: convex surrogate $\zeta'(|w^k|)|w|+C$.]{
\psset{yunit=1.8,xunit=1.8}
   \begin{pspicture}(-1.5,-1.2)(1.5,1.2)
   \psline[linewidth=0.5pt]{->}(-1.5,0)(1.5,0)
   \psline[linewidth=0.5pt]{->}(0,-1.2)(0,1.2)
\put(0.9,0.2){$w^k$}
   \psline[linewidth=0.5pt,linecolor=black,linestyle=dotted]{-}(0.5,0)(0.5,-0.22)
\psplot[plotpoints=200,linecolor=red, linewidth=1pt]{-1.5}{1.5}{
   x abs 0.1 add log
}
\psplot[plotpoints=200,linecolor=blue, linewidth=1pt]{-1.5}{1.5}{
   0.7238 x abs mul -0.5837 add
}
\end{pspicture}} \hfill
\subfloat[red: $f(w) + \zeta(|w|)$. \protect\newline  blue: $f(w)+\zeta'(|w^k|)|w|+C$.]{
\psset{yunit=1.8,xunit=1.8}
   \begin{pspicture}(-1.5,-1.2)(1.5,1.2)
   \psline[linewidth=0.5pt]{->}(-1.5,0)(1.5,0)
   \psline[linewidth=0.5pt]{->}(0,-1.2)(0,1.2)
\put(0.9,0.2){$w^k$}
   \psline[linewidth=0.5pt,linecolor=black,linestyle=dotted]{-}(0.5,0)(0.5,-0.22)
\psplot[plotpoints=200,linecolor=red, linewidth=1pt]{-0.8}{1.1}{
   -0.2 x add x -0.2 add mul -0.2 add x abs 0.1 add log add
}
\psplot[plotpoints=200,linecolor=blue, linewidth=1pt]{-0.8}{1.1}{
   -0.2 x add x -0.2 add mul -0.2 add 0.7238 x abs mul -0.5837 add add
}
\end{pspicture}}
\caption[Surrogate function used in the DC-programming approach]{Functions and their surrogates involved in the reweighted-$\ell_1$ optimization scheme in the one dimensional case ($p=1$). The function~$\zeta$ can be written here as~$\zeta(|w|)=\log(|w|+\varepsilon)$ for a scalar~$w$ in~$\R$, and the function~$f$ is quadratic. The graph of the non-convex functions are represented in red and their convex ``tangent'' surrogates in blue.}
\label{fig:lp_tangent}
\end{figure}
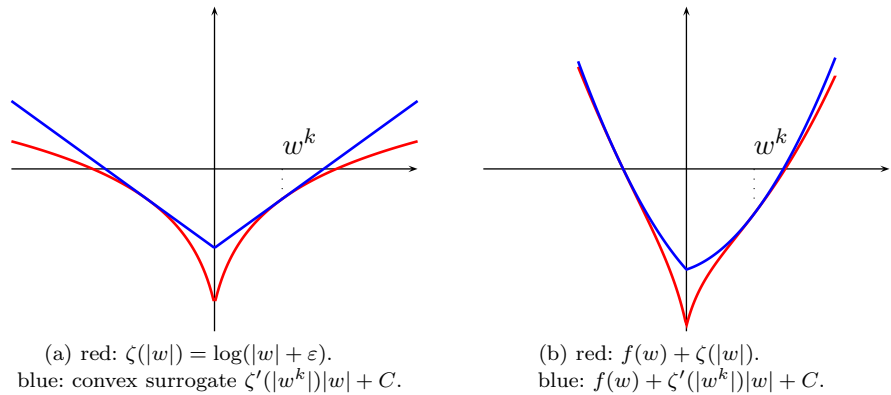

For simplicity we have presented these reweighted-$\ell_1$ algorithms 
when~$\Omega$ is separable. We note however that these optimization schemes
are far more general and readily apply to non-convex versions of most of the
penalties we have considered in this paper. For example, when the penalty~$\Omega$ has the form
\begin{displaymath}
   \Omega(\vw) = \sum_{g \in \Gc} \zeta(\|\vw_g\|),
\end{displaymath}
where~$\zeta$ is concave and differentiable on~$\R_+$, $\Gc$ is a set of
(possibly overlapping) groups of variables and~$\|.\|$ is any norm.
The idea is then similar, iteratively linearizing for each group~$g$ the
functions $\zeta$ around~$\|\vw_g\|$ and minimizing the resulting convex surrogate (see an application to structured sparse principal component analysis in~\cite{jenatton2010sspca}).

Finally, another possible approach to solve optimization problems regularized by nonconvex penalties of the type 
presented in this section is to use the reweighted-$\ell_2$ algorithms of Section~\ref{sec:reweighted_l2} 
based on quadratic variational forms of such penalties (see, e.g., \cite{jenatton2010sspca}).

\section{Sparse Matrix Factorization and Dictionary Learning}
\label{sec:DL}

Sparse linear models for regression in statistics and machine learning
assume a linear relation~$\vy \approx \mx\vw$, where $\vy$ in~$\R^n$ is a
vector of observations, $\mx$ in~$\R^{n \times p}$ is a design matrix
whose rows can be interpreted as features, and~$\vw$ is a weight vector
in~$\R^p$.  Similar models are used in the signal processing literature,
where $\vy$ is a signal approximated by a linear combination of columns
of~$\mx$, which are called dictionary elements, or basis element
when~$\mx$ is orthogonal.

Whereas a lot of attention has been devoted to cases where~$\mx$ is fixed
and pre-defined, another line of work considered the problem of learning~$\mx$
from training data. In the context of sparse linear models, this problem
was first introduced in the neuroscience community by Olshausen and
Field~\cite{field1996} to model the spatial receptive fields of simple
cells in the mammalian visual cortex.  Concretely, given a training set
of~$q$ signals $\my=[\vy^1,\ldots,\vy^q]$ in $\R^{n \times q}$, one would like to
find a dictionary matrix~$\mx$ in~$\R^{n \times p}$ and a coefficient
matrix~$\mw=[\vw^1,\ldots,\vw^q]$ in~$\R^{p \times q}$ such that each
signal $\vy^i$ admits a sparse approximation $\mx\vw^i$. In other words,
we want to learn a dictionary~$\mx$ and a sparse matrix~$\mw$ such that
$\my\approx\mx\mw$.

A natural formulation is the following
non-convex matrix factorization problem:
\begin{equation}
    \min_{\mx \in {\mathcal X}, \mw \in \R^{n \times q}}
\frac{1}{q}\sum_{i=1}^q\frac{1}{2}\|\vy^i-\mx\vw^i\|_2^2 +
\lambda\, \Omega(\vw^i), \label{eq:dict_learn}
\end{equation}
where~$\Omega$ is a sparsity-inducing penalty function, and~${\mathcal X}
\subseteq \R^{n \times p}$ is a convex set, which is typically the set of
matrices whose columns have $\ell_2$-norm less or equal to one. Without any sparse prior (i.e., for $\lambda=0$), the solution of this factorization problem is obtained through principal component analysis (PCA) (see, e.g.,~\cite{burges2009dimension} and references therein). However, when $\lambda >0$, the solution of Eq.~(\ref{eq:dict_learn}) has a different behavior, and may be used as an alternative to PCA for unsupervised learning.

A successful application of this approach is when the vectors~$\vy^i$ are small
natural image patches, for example of size~$n=10\times10$ pixels.  A typical
setting is to have an overcomplete dictionary---that is, the number of
dictionary elements can be greater than the signal dimension but small compared to the number of training signals, for example~$p=200$ and~$q=100\,000$.  For this sort of
data, dictionary learning finds linear subspaces of small dimension where the
patches live, leading to effective applications in image
processing~\cite{elad2006}. Examples of a dictionary for image patches is given in 
Figure~\ref{fig:patches}.
\begin{figure}[hbtp]
\centering
\includegraphics[width=0.48\linewidth]{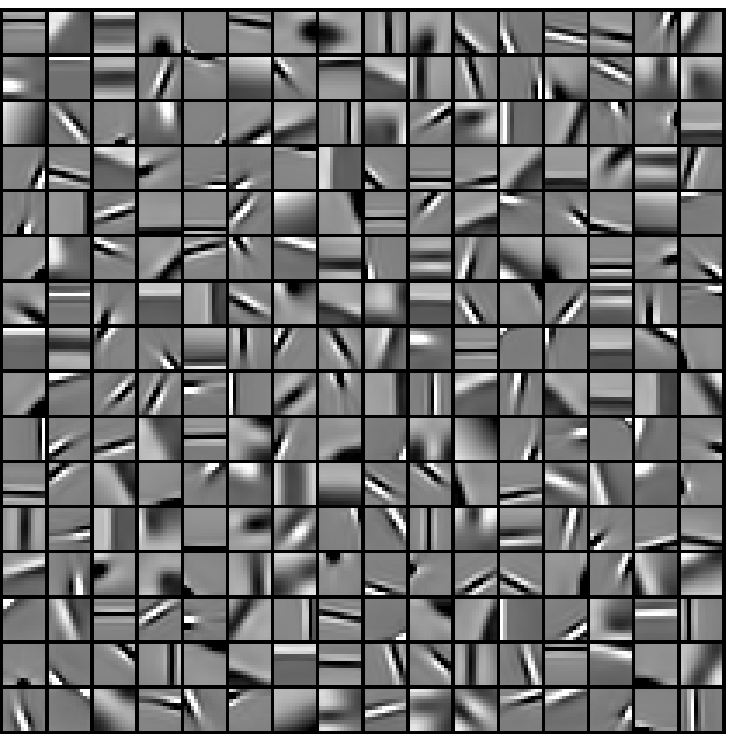}
\hfill
\includegraphics[width=0.48\linewidth]{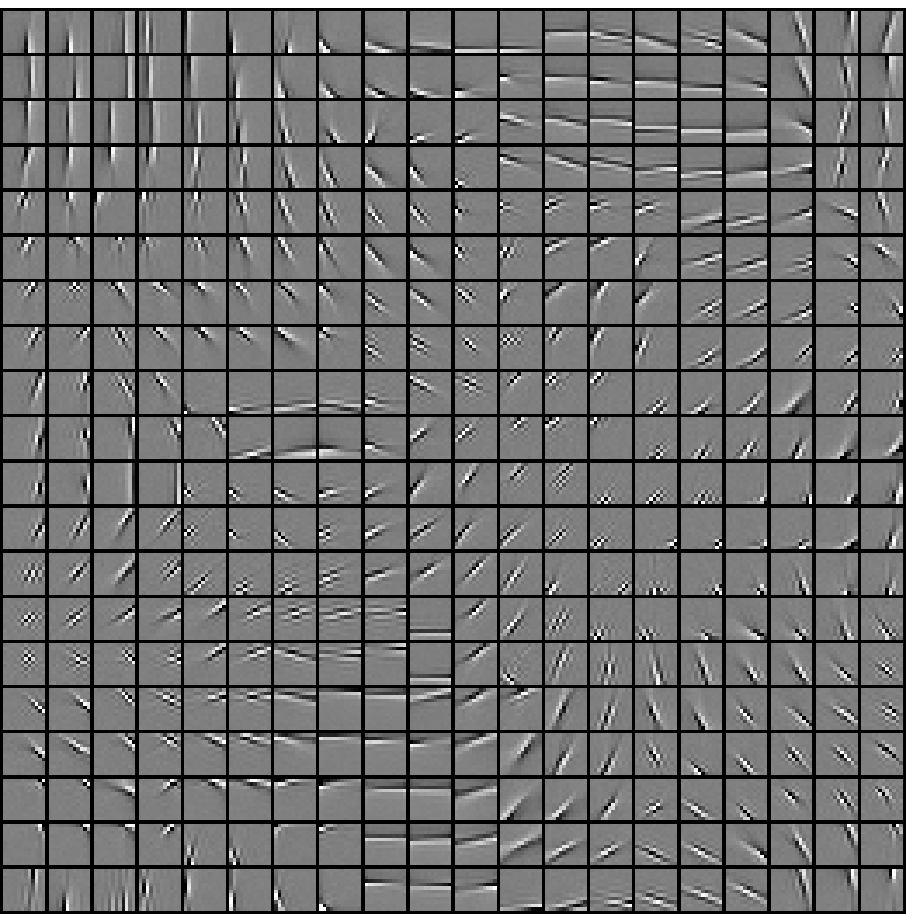}
\caption{Left: Example of dictionary with $p=256$ elements, learned on a database of
natural $12 \times 12$ image patches when~$\Omega$ is the~$\ell_1$-norm. Right: Dictionary with $p=400$ elements, learned with a structured sparsity-inducing penalty~$\Omega$ (see~\cite{Mairal2011}).}\label{fig:patches}
\end{figure}

In terms of optimization, Eq.~(\ref{eq:dict_learn}) is nonconvex and no known
algorithm has a guarantee of providing a global optimum in general, whatever
the choice of penalty~$\Omega$ is. A typical approach to find a local minimum is to use a block-coordinate
scheme, which optimizes $\mx$ and~$\mw$ in turn, while keeping the other one
fixed~\cite{engan1999}. Other alternatives include the K-SVD
algorithm~\cite{aharon2006} (when~$\Omega$ is the~$\ell_0$-penalty), and online learning
techniques~\cite{mairal2010,field1996} that have proven to be particularly
efficient when the number of signals~$q$ is large.\footnote{Such efficient algorithms are freely available in the open-source software package SPAMS \url{http://www.di.ens.fr/willow/SPAMS/}.}  Convex relaxations of
dictionary learning have also been proposed in~\cite{bach2008b,bradley2009}.

\section{Bayesian Methods}
\label{sec:opt_bayesian}
 
While the focus of this monograph is on frequentist approaches to sparsity, 
and particularly on approaches that minimize a regularized empirical risk, 
there naturally exist several Bayesian\footnote{Bayesian methods can of course not be reduced to nonconvex optimization, but given that they are often characterized by multimodality and that corresponding variational formulations are typically nonconvex, we conveniently discuss them here.} approaches to sparsity.

As a first remark, regularized optimization can be viewed as solving a maximum a posteriori (MAP) estimation problem if the loss $\ell$ (cf.  Sec.~\ref{sec:losses}) defining $f$ can be interpreted as a log-likelihood and the norm as certain log-prior. Typically, the $\ell_1$-norm can for instance be interpreted as the logarithm of a product of independent Laplace priors on the loading vectors $\mathbf{w}$ (see, e.g.,~\cite{seeger2008bayesian}). However, the Laplace distribution is actually not a sparse prior, in the sense that it is a continuous distribution whose samples are thus nonzero almost surely. Besides the fact that MAP estimation is generally not considered as a Bayesian method (the fundamental principle of Bayesian method is to integrate over the uncertainty and avoid point estimates), evidence in the literature suggests that MAP is not a good principle to yield estimators that are adapted to the corresponding prior. In particular, the Lasso does in fact not provide a good algorithm 
to estimate vectors whose coefficients follow a Laplace distribution~\cite{gribonval2011compressible}! 

To obtain exact zeros in a Bayesian setting, one must use so called ``spike and slab" priors~\cite{ishwaran2005spike}. Inference with such priors leads to nonconvex optimization problems, and sampling methods, while also simple to implement, do not come with any guarantees, in particular in high-dimensional settings.

In reality, while obtaining exact zeros can be valuable from a computational point of view, it is a priori not necessary to obtain theoretical guarantees associated with sparse methods. In fact,
sparsity should be understood as the requirement or expectation that a few coefficients are significantly larger than most of the rest, an idea which is somehow formalized as \emph{compressibility} in the compressed sensing literature, 
and which inspires \emph{automatic relevance determination} methods (ARD) and the use of \emph{heavy-tail priors} among Bayesian approaches \cite{neal1996bayesian,wipf2004sparse}. Using heavy-tailed prior distribution on $\mathbf{w}_i$ allows to obtain posterior estimates with many small values and a few large values, this effect being stronger when the tails are heavier, in particular with Student's $t$-distribution. Heavy-tailed distributions and ARD are very related 
since these prior distributions can be expressed as scaled mixture of Gaussians~\cite{Archambeau:NIPS08b,Caron:ICML08}.  This is of interest computationally, in particular for variational methods. 

Variable selection is also obtained in a Bayesian setting by optimizing the marginal likelihood of the data over the hyper-parameters, that is using \emph{empirical Bayes} estimation; in that context iterative methods based on DC programming may be efficiently used~\cite{wipf2008new}.

It should be noted that the heavy-tail prior formulation points to an interesting connection
between sparsity and the notion of robustness in statistics, in which a sparse subset of the data is allowed to take large values. 
This is is also suggested by work such as~\cite{wright2008robust,xu2010robust}.

\chapter{Quantitative Evaluation}
\label{sec:exp_intro}
To illustrate and compare the methods presented in this paper, we consider in this section three benchmarks.
These benchmarks are chosen to be representative of problems regularized with sparsity-inducing norms, involving different norms and different loss functions.
To make comparisons that are as fair as possible, each algorithm is implemented in \texttt{C/C++}, 
using efficient BLAS and LAPACK libraries for basic linear algebra operations. Most of these implementations have been made available in the open-source software SPAMS\footnote{\url{http://www.di.ens.fr/willow/SPAMS/}}.
All subsequent simulations are run on a single core of a 3.07Ghz CPU, with 8GB of memory.
In addition, 
we take into account several criteria which strongly
influence the convergence speed of the algorithms. 
In particular, we consider  
\begin{itemize}
\item[(a)] different problem scales, 
\item[(b)] different levels of correlations between input variables, 
\item[(c)] different strengths of regularization.
\end{itemize}
We also show the influence of the required precision by monitoring the time of
computation as a function of the objective function.

For the convenience of the reader, we list here the algorithms compared and the acronyms we use to refer to them throughout this section: 
the homotopy/LARS algorithm (LARS), 
coordinate-descent (CD), 
reweighted-$\ell_2$ schemes (Re-$\ell_2$), 
simple proximal method (ISTA) and its accelerated version (FISTA). Note that all methods except the working set methods are very simple to implement as each iteration is straightforward (for proximal methods such as FISTA or ISTA, as long as the proximal operator may be computed efficiently). On the contrary, as detailed in Section~\ref{sec:lars}, homotopy methods require some care in order to achieve the performance we report in this section.

We also include in the comparisons generic algorithms such as a
subgradient descent algorithm (SG), and a commercial software\footnote{Mosek, available at \texttt{http://www.mosek.com/}.}
for cone (CP), quadratic (QP) and second-order cone programming (SOCP) problems.

\section{Speed Benchmarks for Lasso}
\label{sec:exp_speed}
We first present a large benchmark evaluating the performance of
various optimization methods for solving the Lasso.

We perform small-scale $(n=200,p=200)$ and 
medium-scale $(n=2\,000,p=10\,000)$ experiments.
We generate design matrices as follows. For the scenario with low
correlations, all entries of $\mx$ are independently drawn from a Gaussian
distribution ${\mathcal N}(0,1/n)$, which is a setting often used to
evaluate optimization algorithms in the literature. For the scenario with
large correlations, we draw the rows of the matrix $\mx$ from a multivariate
Gaussian distribution for which the \emph{average absolute value} of the
correlation between two different columns is eight times the one of the scenario with
low correlations. 
Test data vectors $\vy=\mx\vw+\vn$
where $\vw$ are randomly generated, with two
levels of sparsity to be used with the two different levels of
regularization; $\vn$ is a noise vector whose entries are i.i.d. samples
from a Gaussian distribution ${\mathcal N}(0,0.01\|\mx\vw\|_2^2/n)$.
In the low regularization setting the sparsity of the vectors
$\vw$ is $s=0.5\min(n,p)$, and in the high regularization one $s=0.01\min(n,p)$,
corresponding to fairly sparse vectors.
For SG, 
we take the step size to be equal to
$a/(k+b)$, where $k$ is the iteration number, and $(a,b)$ are the best\footnote{``The best step size'' is understood here as being the step size leading to the smallest objective function after 500 iterations.}
parameters selected on a logarithmic grid 
$(a,b) \in \{10^{−3},\ldots,10\}\times \{10^2, 10^3, 10^4\}$; 
we proceeded this way not to disadvantage SG by an arbitrary choice of stepsize. 

To sum up, we make a comparison for $8$ different conditions ($2$
scales $\times$ $2$ levels of correlation $\times$ $2$ levels of regularization).
All results are reported on Figures~\ref{fig:curvesa},~\ref{fig:curvesb},
by averaging $5$ runs for each experiment.
Interestingly, we observe that the relative performance of the different methods
 change significantly with the scenario. 

\begin{figure}[h]
	\centering
   \subfloat[corr: low, reg: low]{\includegraphics[width=0.49\linewidth]{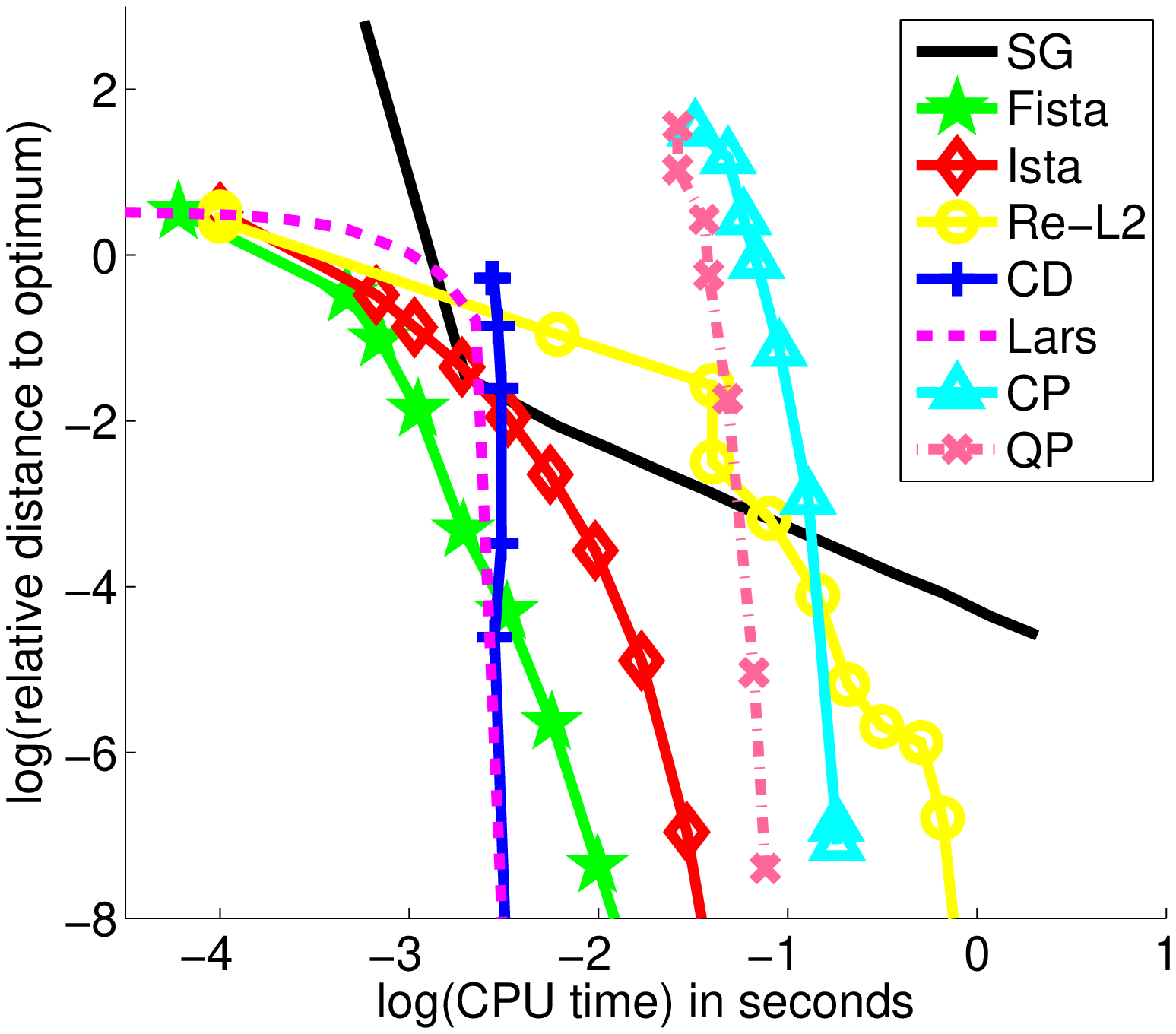}} \hfill 
   \subfloat[corr: low, reg: high]{\includegraphics[width=0.49\linewidth]{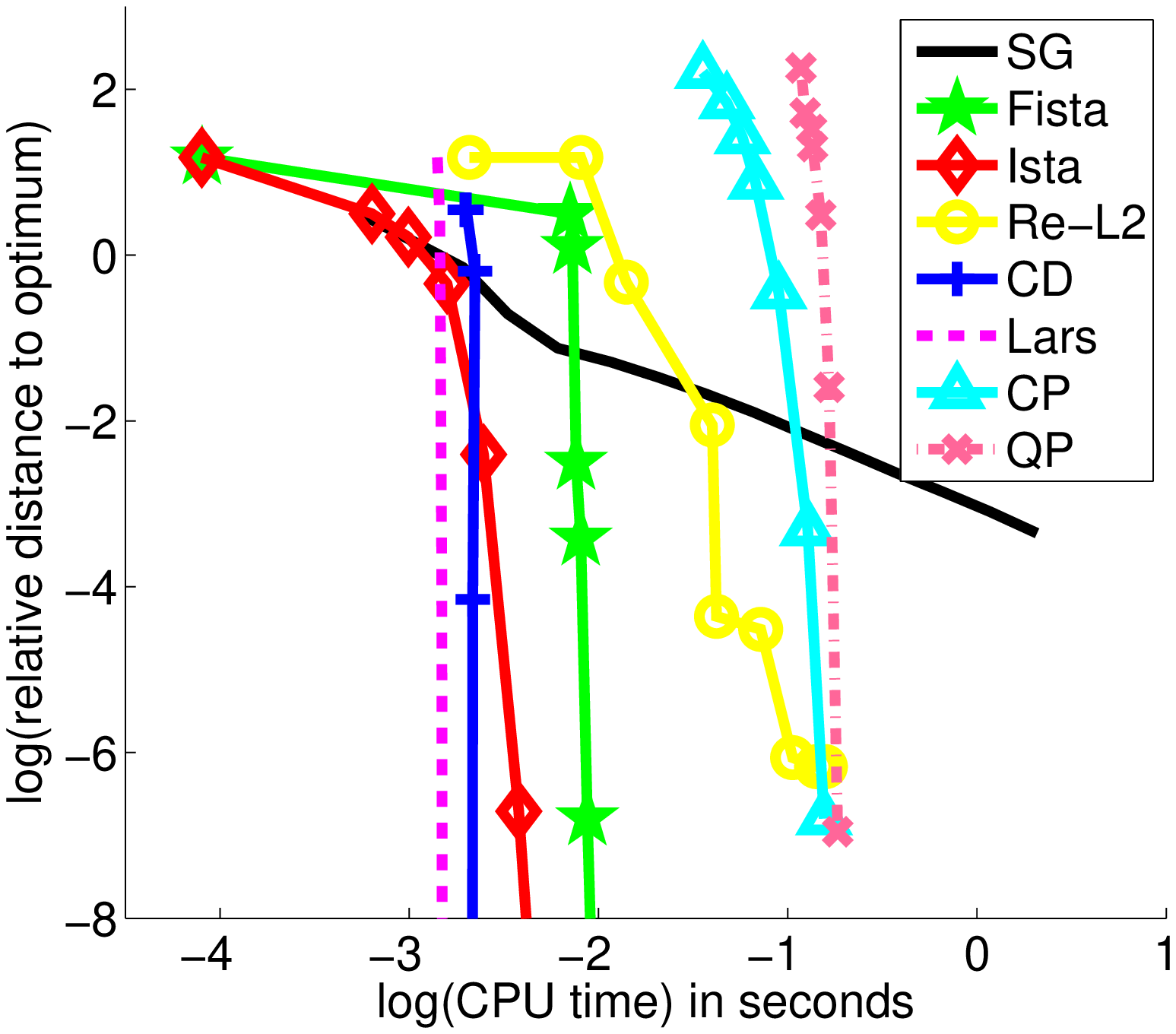}} \\
   \subfloat[corr: high, reg: low]{\includegraphics[width=0.49\linewidth]{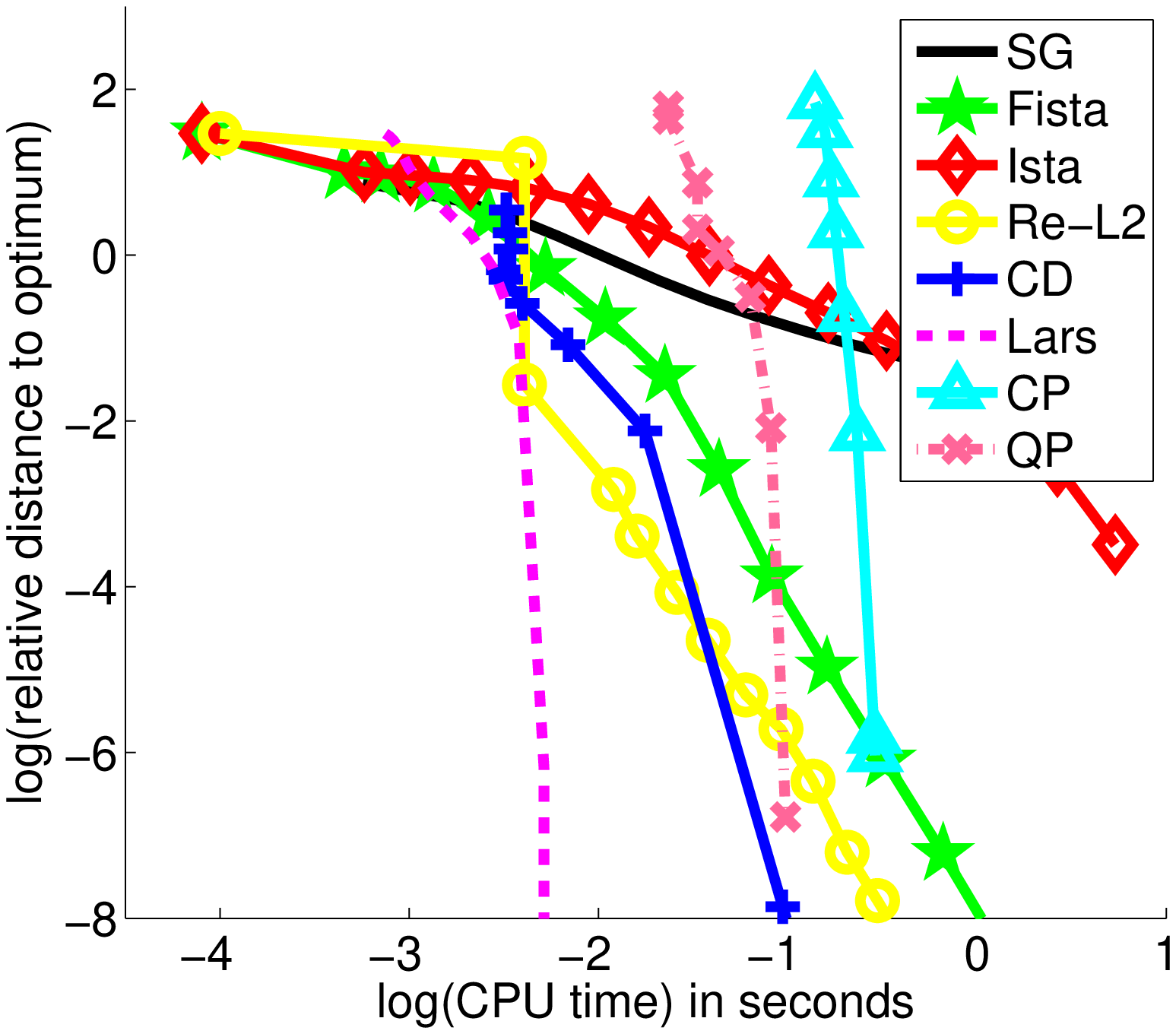}} \hfill 
   \subfloat[corr: high, reg: high]{\includegraphics[width=0.49\linewidth]{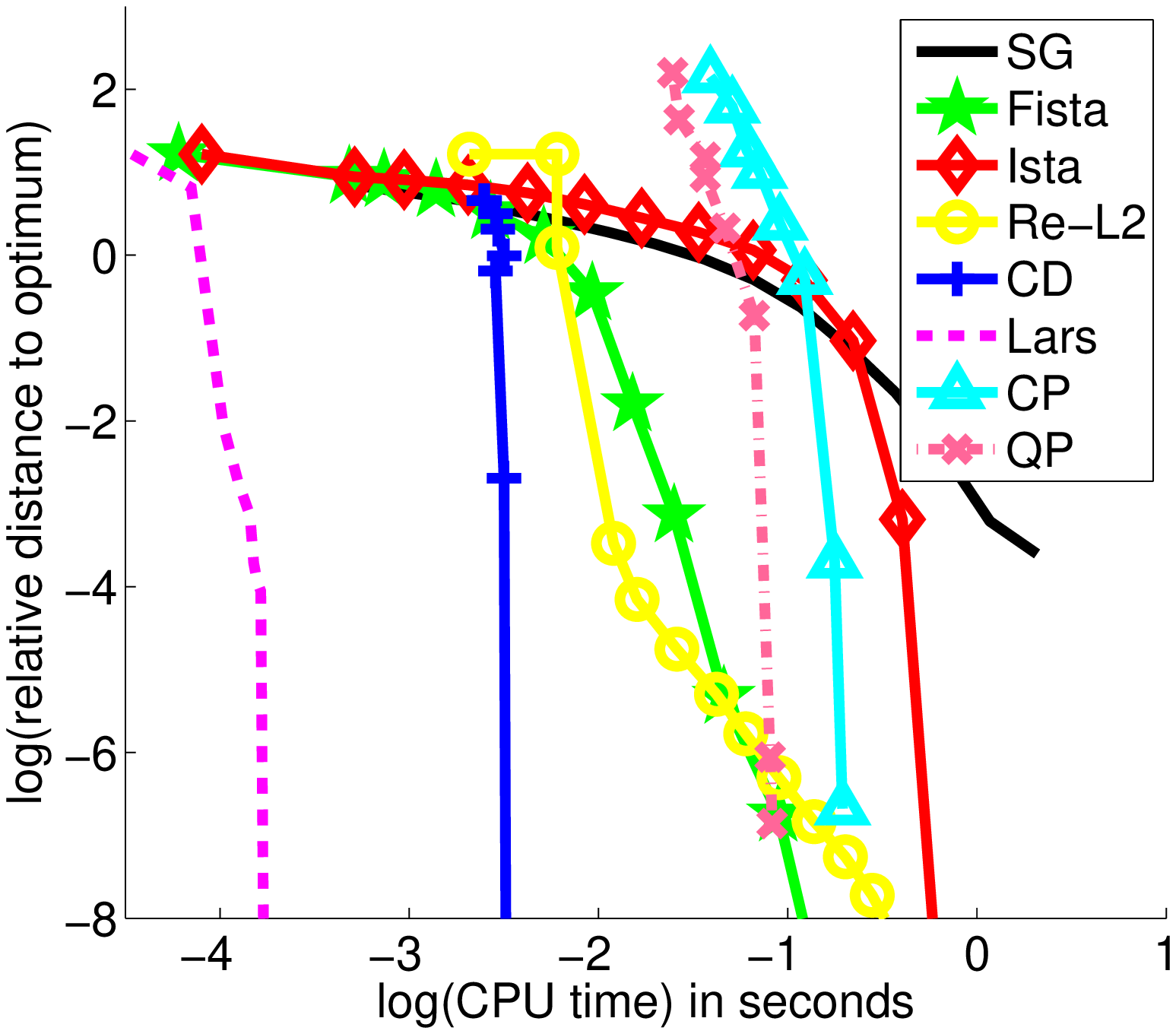}} 
   \caption{Benchmark for solving the Lasso for the small-scale experiment ($n=200,\ p=200$), for the two levels of correlation and two levels of regularization, and $8$ optimization methods (see main text for details). The curves represent the relative value of the objective function as a function of the computational time in second on a $\log_{10}/\log_{10}$ scale.}
   \label{fig:curvesa}
\end{figure}

\begin{figure}[h]
\centering
   \subfloat[corr: low, reg: low]{\includegraphics[width=0.49\linewidth]{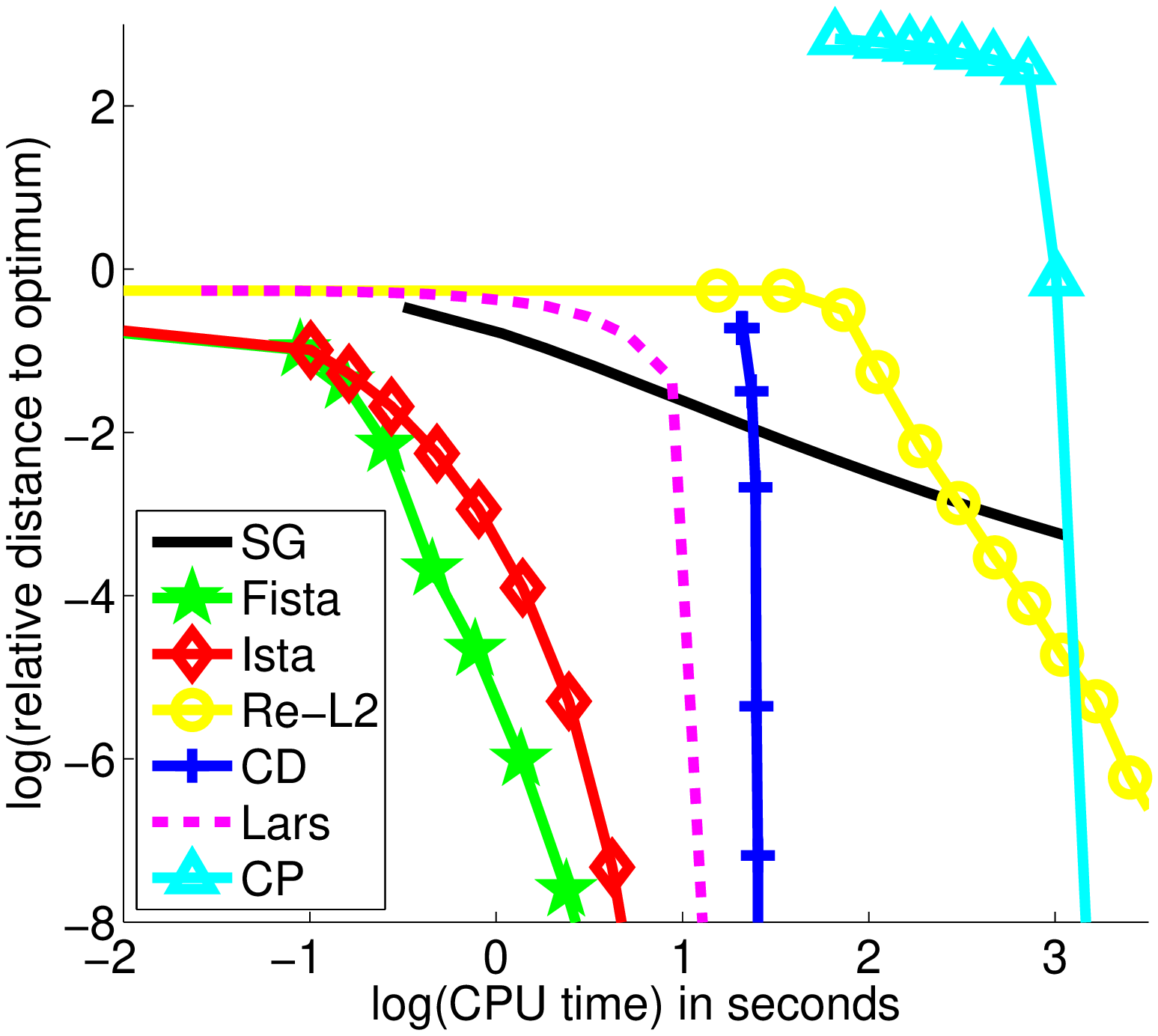}}  \hfill
   \subfloat[corr: low, reg: high]{\includegraphics[width=0.49\linewidth]{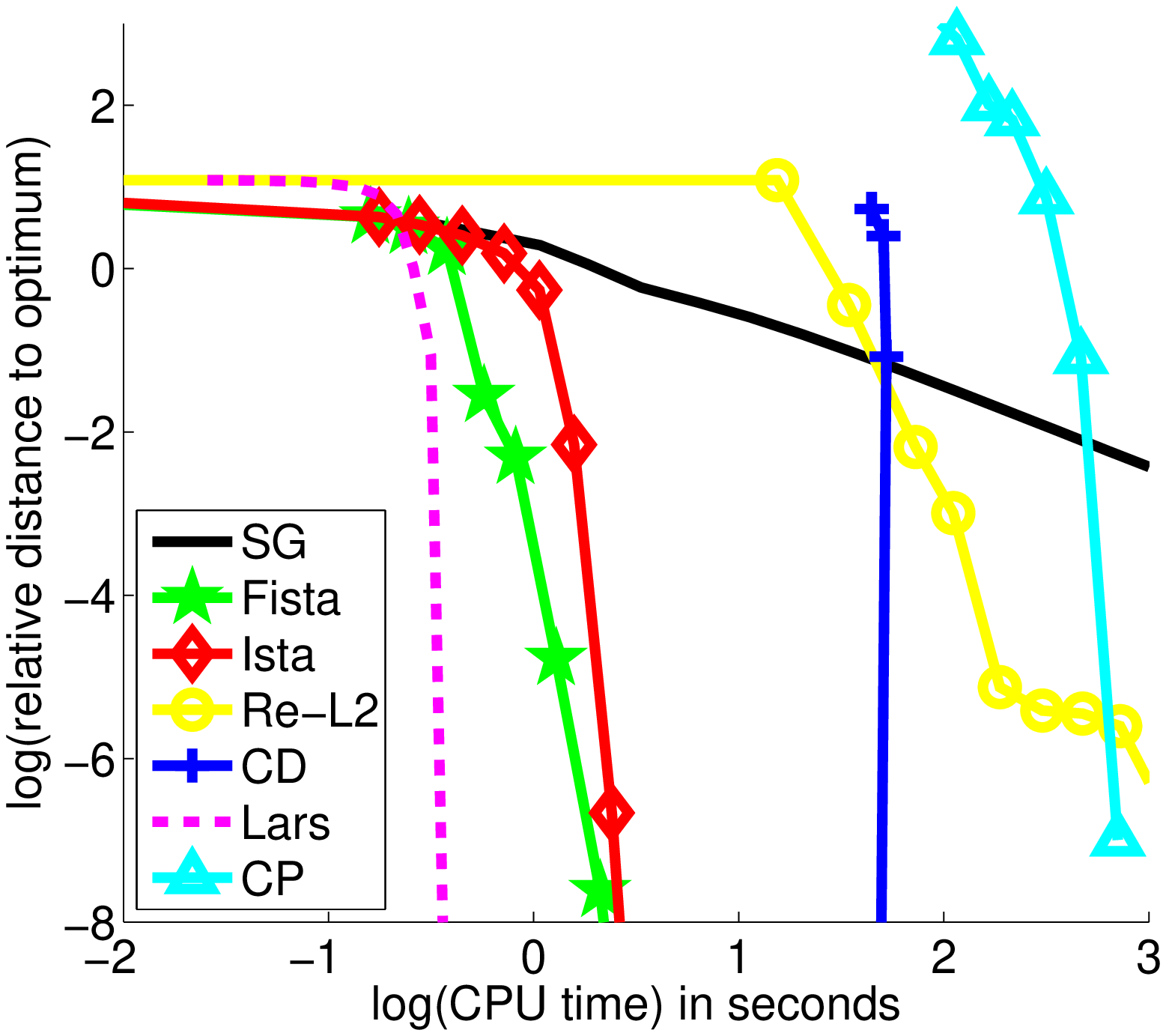}} \\
   \subfloat[corr: high, reg: low]{\includegraphics[width=0.49\linewidth]{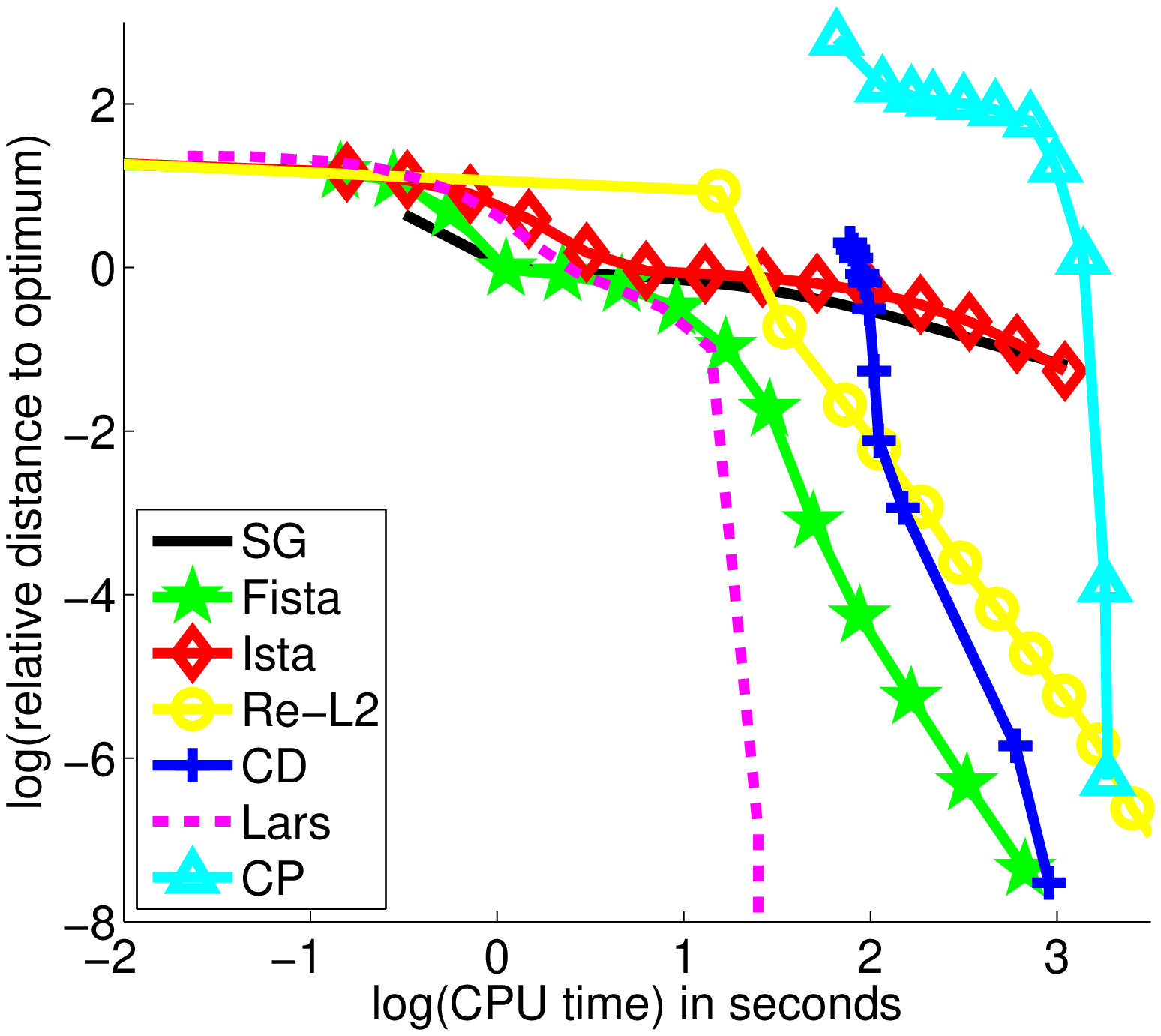}} \hfill
   \subfloat[corr: high, reg: high]{\includegraphics[width=0.49\linewidth]{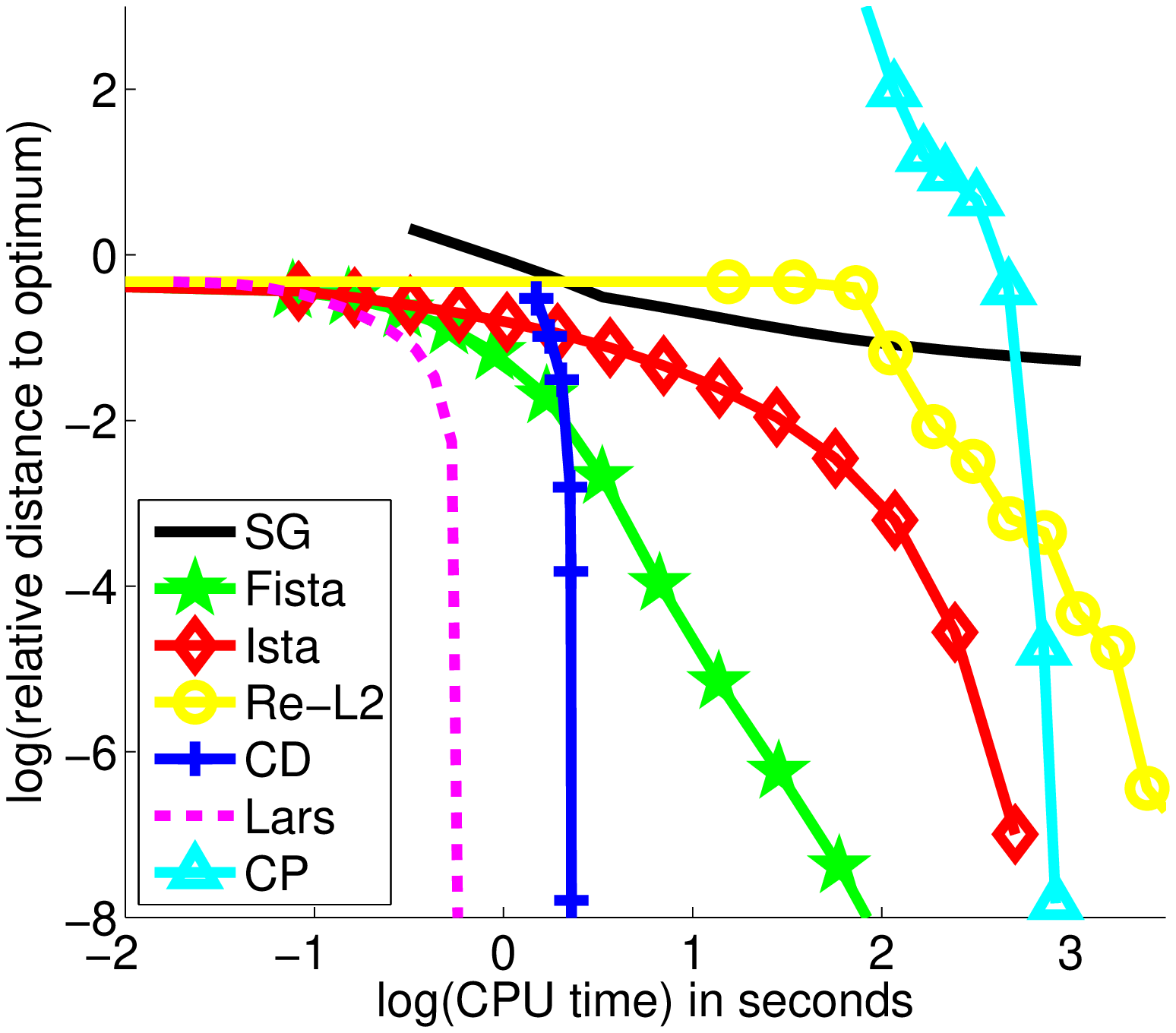}} 
   \caption{Benchmark for solving the Lasso for the medium-scale experiment $n=2\,000,\ p=10\,000$, for the two levels of correlation and two levels of regularization, and $8$ optimization methods (see main text for details). The curves represent the relative value of the objective function as a function of the computational time in second on a $\log_{10}/\log_{10}$ scale.}
   \label{fig:curvesb}
\end{figure}

Our conclusions for the different methods are as follows:
\begin{itemize}
   \item {\bfseries LARS/homotopy methods}: For the small-scale problem, LARS outperforms all
other methods for almost every scenario and precision regime.  It is therefore
\emph{definitely the right choice for the small-scale setting.} 
Unlike first-order methods, its performance
does not depend on the correlation of the design matrix $\mx$, but rather on the
sparsity $s$ of the solution.  In our larger scale setting, 
it has been competitive either when the solution is \emph{very sparse} (high
regularization), or when there is \emph{high correlation} in $\mx$ (in
that case, other methods do not perform as well). 
More importantly,
the homotopy algorithm gives an exact solution and computes the regularization path. 
   \item {\bfseries Proximal methods (ISTA, FISTA)}:
      FISTA outperforms ISTA in all scenarios but one. Both methods are close for high regularization or low
correlation, but FISTA is significantly better for high correlation or/and low
regularization.  These methods are almost always outperformed by LARS in the
small-scale setting, except for \emph{low precision and low correlation}.

Both methods \emph{suffer from correlated features}, which is consistent with the fact that 
their convergence rate depends on the correlation between input variables (convergence as a geometric sequence when the correlation matrix is invertible, and as the inverse of a degree-two polynomial otherwise).
 They are \emph{well
adapted to large-scale settings, with low or medium correlation}.
   \item {\bfseries Coordinate descent (CD)}: The theoretical analysis of these methods suggest that they behave in a similar way to proximal methods~\cite{nesterov2010efficiency,shalev2009stochastic}. However, empirically, we have observed
      that the behavior of CD often translates into a first ``warm-up'' stage followed by a fast convergence phase. 

      Its performance in the \emph{small-scale setting is competitive} (even
though always behind LARS), but \emph{less efficient in the large-scale one}.
For a reason we cannot explain, \emph{it suffers less than proximal methods
from correlated features.}
   \item {\bfseries Reweighted-$\ell_2$}: This method was outperformed in all our experiments by other dedicated methods.\footnote{Note that the reweighted-$\ell_2$ scheme 
requires solving iteratively large-scale linear system that are badly conditioned. Our implementation
uses LAPACK Cholesky decompositions, but a better performance might be obtained using a pre-conditioned conjugate gradient, especially in the very large scale setting.} Note that we considered only the smoothed alternating scheme of Section~\ref{sec:reweighted_l2} and not first order methods in $\veta$ such as that of \cite{simpleMKL}. A more exhaustive comparison should include these as well.
   \item {\bfseries Generic Methods (SG, QP, CP)}: 
      As expected, generic methods are not adapted for solving
the Lasso and are always outperformed by dedicated ones such as LARS.
\end{itemize}

Among the methods that we have presented, some require an initial overhead computation of the Gram matrix $\mx^\top\mx$: this is the case for coordinate descent and reweighted-$\ell_2$ methods. We took into account this overhead time in all figures, which explains the behavior of the corresponding convergence curves. Like homotopy methods, these methods could also benefit from an
offline pre-computation of $\bmt{\mx}\mx$ and would therefore be more competitive 
if the solutions corresponding to several values of the regularization parameter have to be computed.

We have considered in the above experiments the case of the square loss. 
Obviously, some of the conclusions drawn above would not be valid for other smooth losses. 
On the one hand, the LARS does no longer apply; on the other hand, 
proximal methods are clearly still available and coordinate descent schemes, 
which were dominated by the LARS in our experiments, 
would most likely turn out to be very good contenders in that setting.

\section{Group-Sparsity for Multi-task Learning}
\label{sec:exp_multitask}
For $\ell_1$-regularized least-squares regression, 
homotopy methods have appeared in the previous section as one of the best techniques, 
in almost all the experimental conditions. 

This second speed benchmark explores a setting where this homotopy approach cannot be applied anymore.
In particular, we consider a multi-class classification problem in the context of cancer diagnosis.
We address this problem from a multi-task viewpoint~\cite{obozinski-joint}.
To this end, we take the regularizer to be $\ell_1/\ell_2$- and $\ell_1/\ell_\infty$-norms, 
with (non-overlapping) groups of variables penalizing features across all classes~\cite{liu2009l1linf,obozinski-joint}.
As a data-fitting term, we now choose a simple ``1-vs-all'' logistic loss function.

We focus on two multi-class classification problems in the ``small $n$, large $p$'' setting,
based on two datasets\footnote{The two datasets we use are \textit{SRBCT} and \textit{14\_Tumors}, 
which are freely available at \texttt{http://www.gems-system.org/}.} 
of gene expressions.
The medium-scale dataset contains $n=83$ observations, $p=2\,308$ variables and $4$ classes, 
while the large-scale one contains $n=308$ samples, $p=15\,009$ variables and $26$ classes.
Both datasets exhibit highly-correlated features.

In addition to ISTA, FISTA, and SG, we consider here the block coordinate-descent (BCD) from~\cite{tseng2009coordinate}
presented in Section~\ref{sec:opt_methods_bcd}.
We also consider a working-set strategy on top of BCD, that optimizes over the full set of features
(including the non-active ones) only once every four iterations.
As further discussed in Section~\ref{sec:opt_methods_bcd}, 
it is worth mentioning that the multi-task setting is well suited for the method of~\cite{tseng2009coordinate} since
an appropriate approximation of the Hessian can be easily computed.
\begin{figure}[h]
   \centering
   \subfloat[scale: medium,\newline regul: low ]{\includegraphics[width=0.33\linewidth]{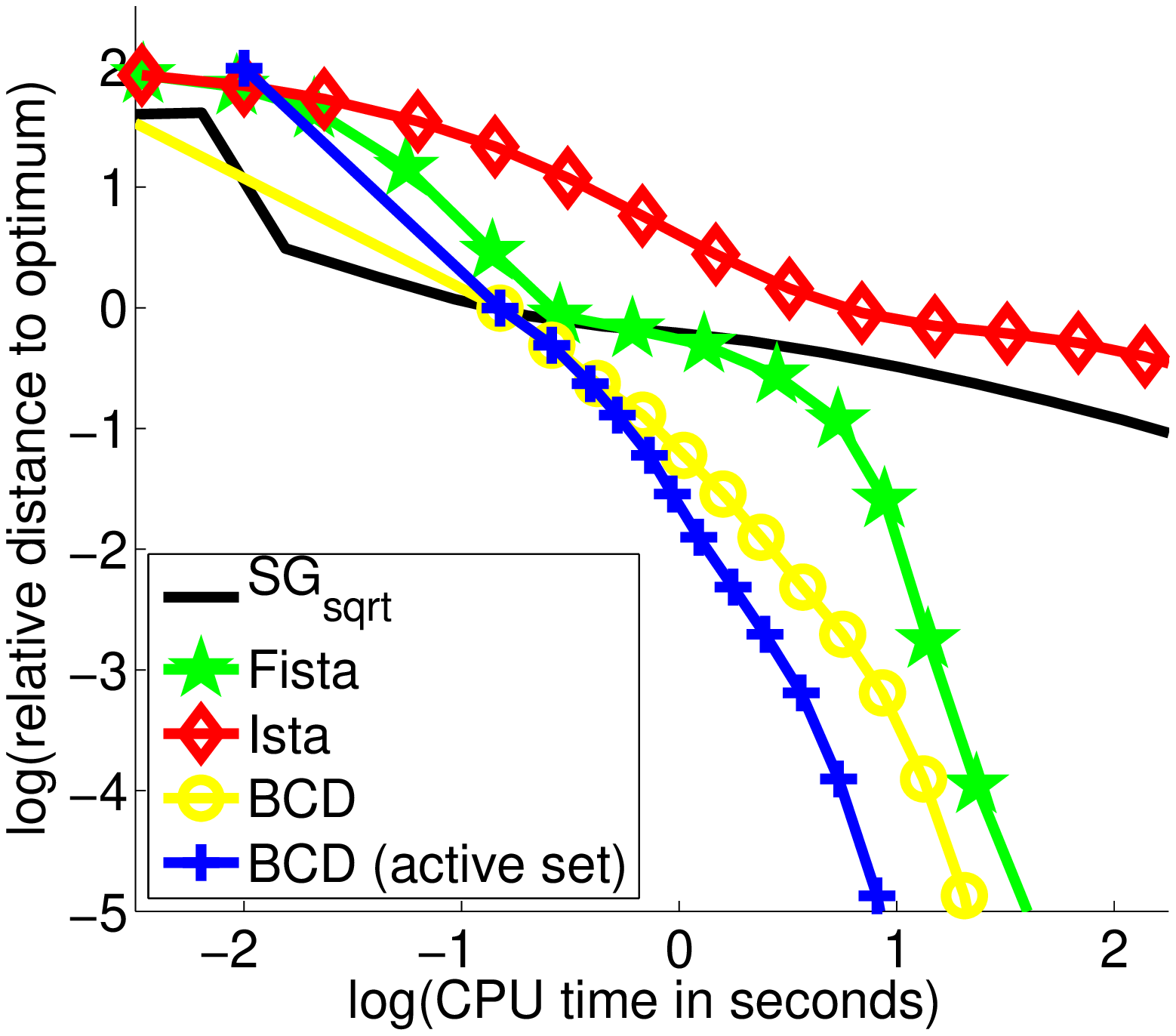}} \hfill
   \subfloat[scale: medium,\newline regul: medium]{\includegraphics[width=0.33\linewidth]{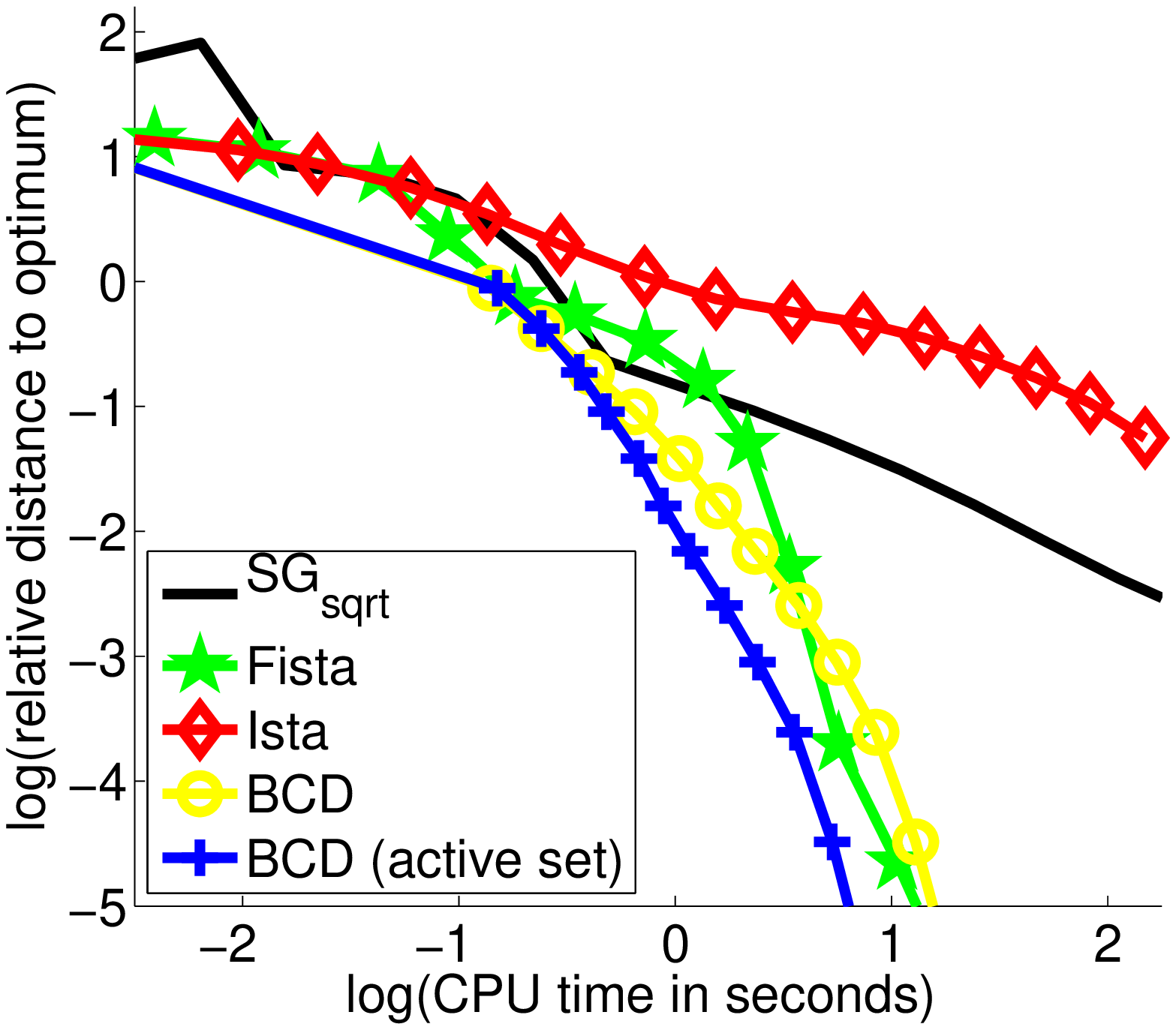}} \hfill
   \subfloat[scale: medium,\newline regul: high]{\includegraphics[width=0.33\linewidth]{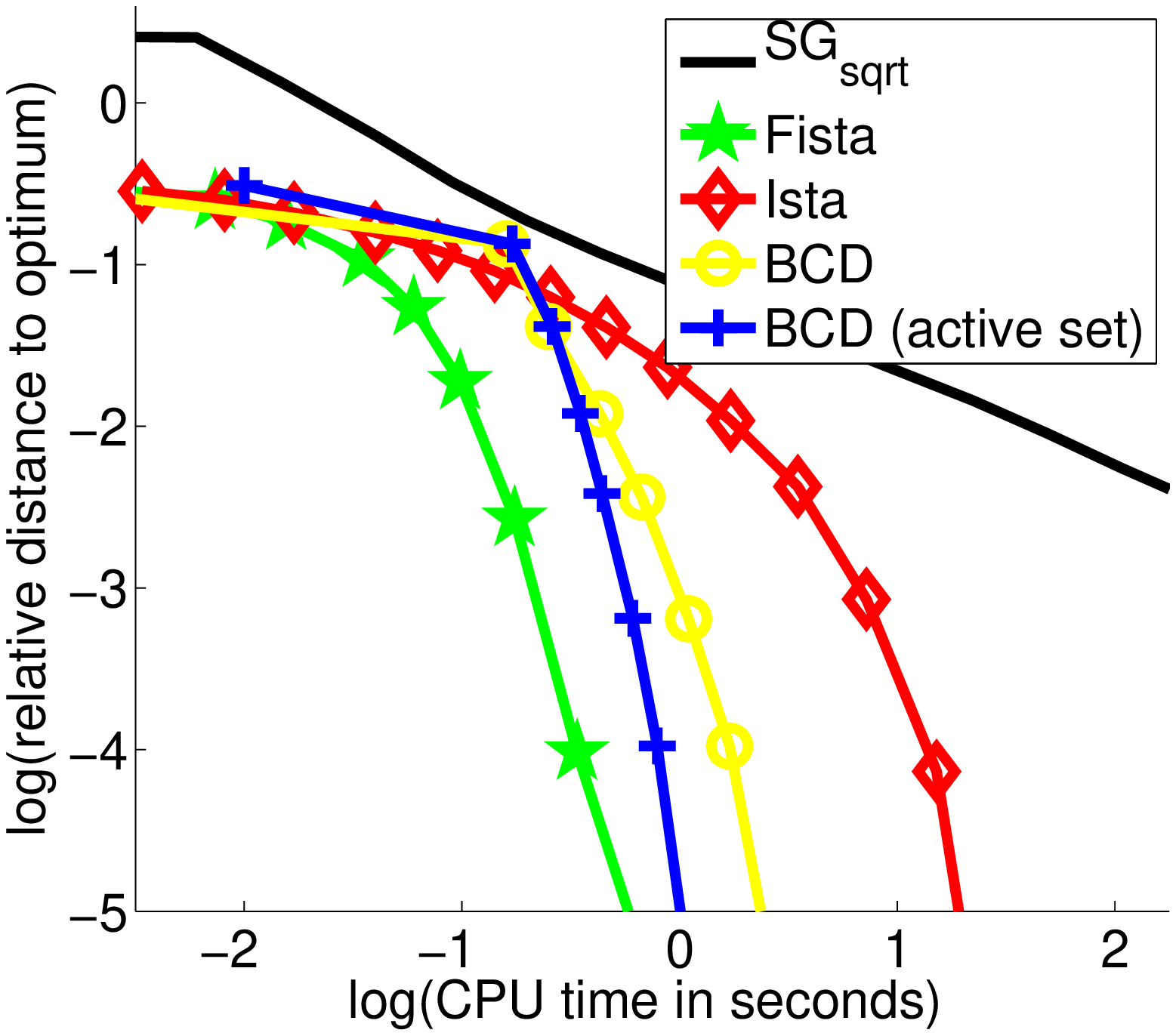}} \\
   \subfloat[scale: large,\newline regul: low]{\includegraphics[width=0.33\linewidth]{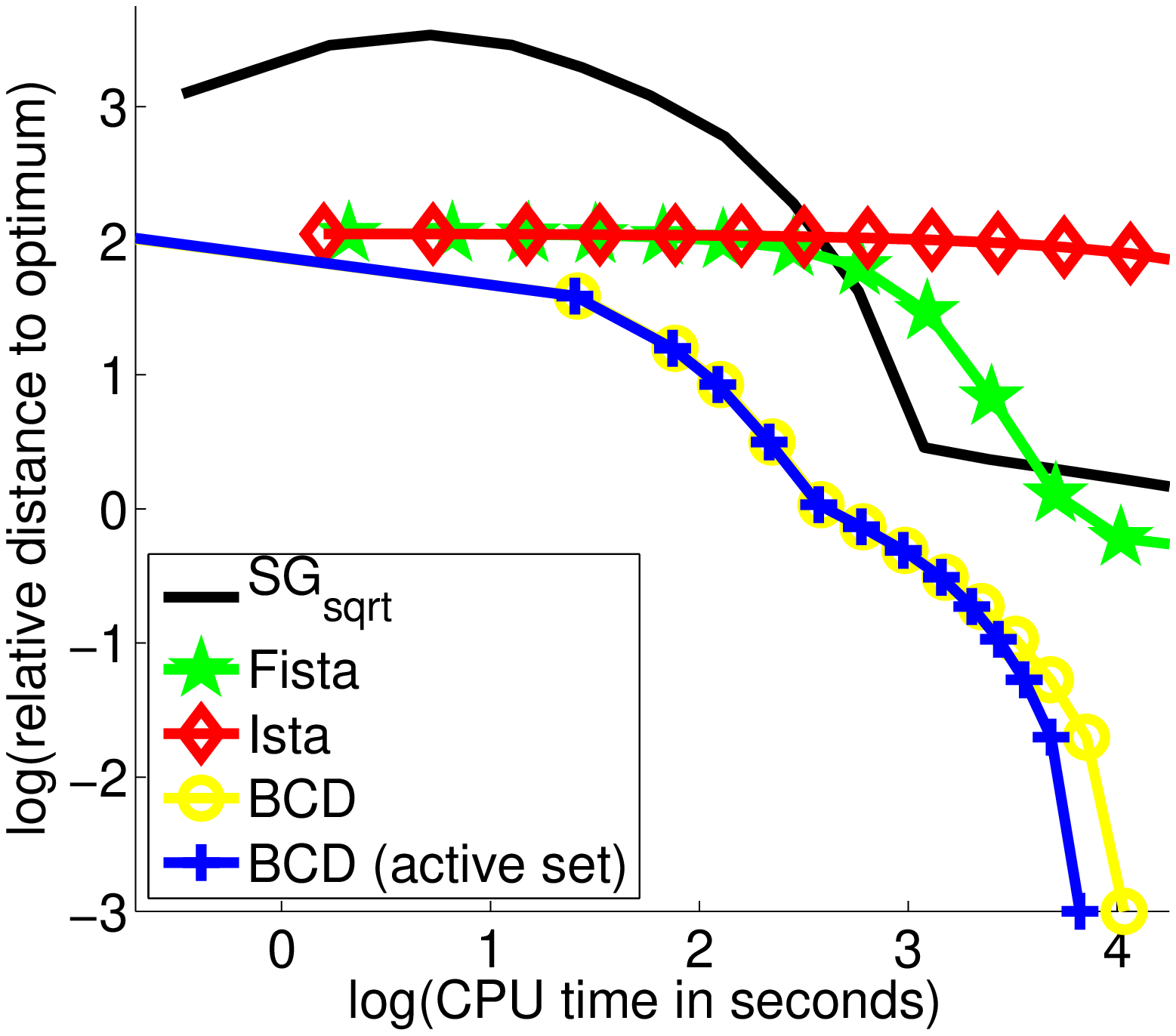}} \hfill
   \subfloat[scale: large,\newline regul: medium]{\includegraphics[width=0.33\linewidth]{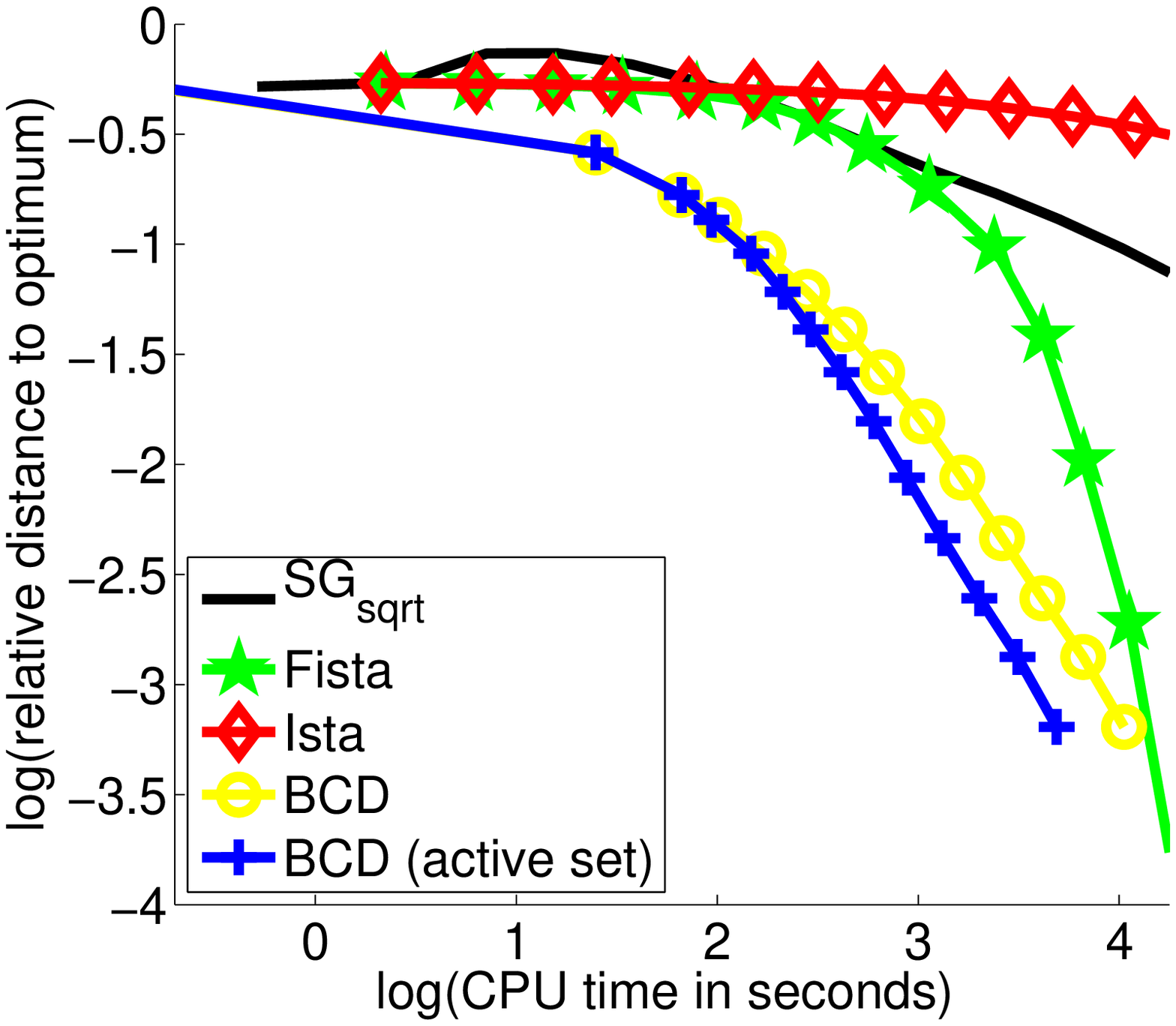}} \hfill
   \subfloat[scale: large,\newline regul: high]{\includegraphics[width=0.33\linewidth]{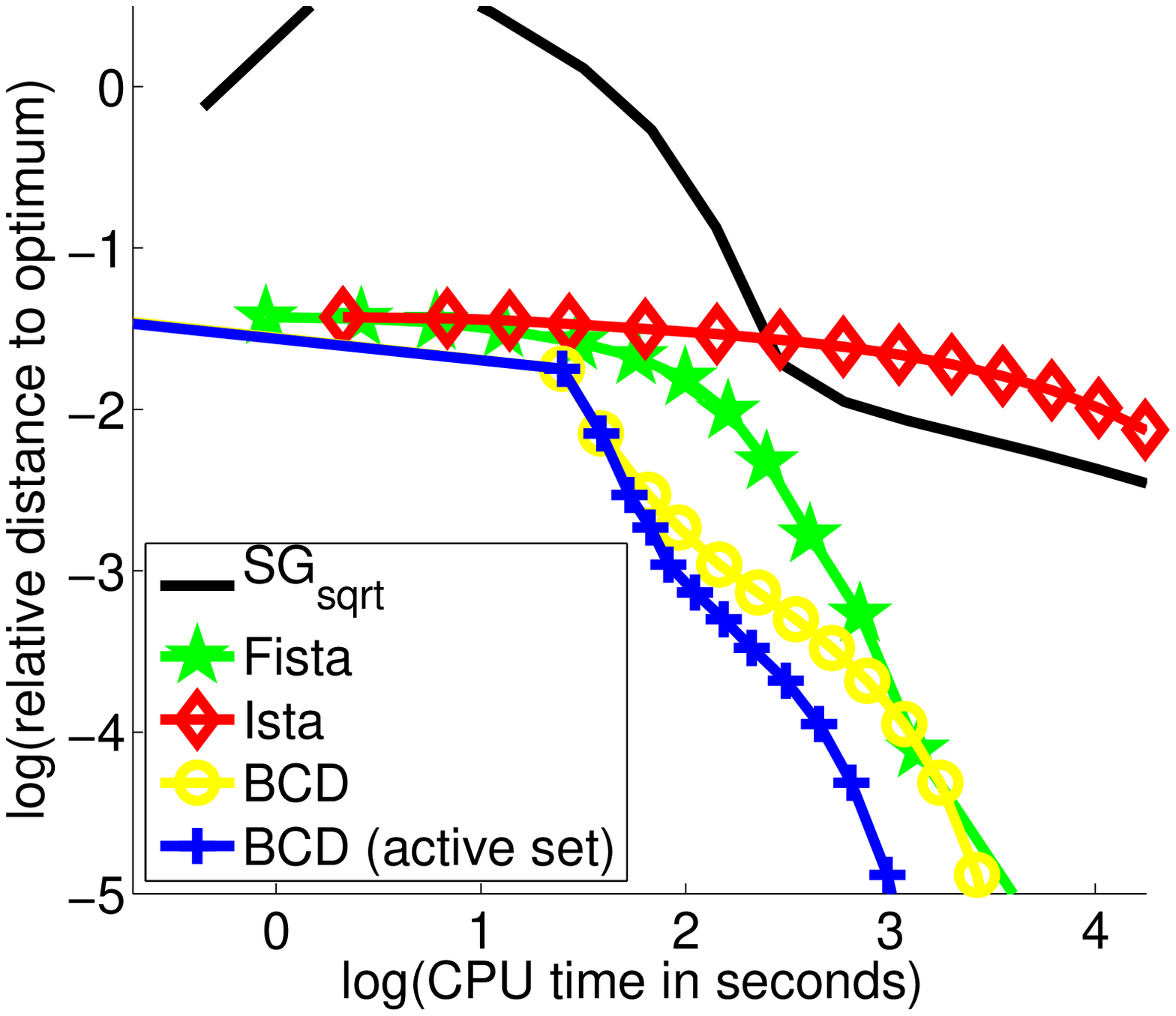}}
   \caption{Medium- and large-scale multi-class classification problems with an $\ell_1/\ell_2$-regularization,  
   for three optimization methods (see details about the datasets and the methods in the main text).
   Three levels of regularization are considered. The curves represent the relative value of the objective function as a function of the computation time in second on a $\log_{10}/\log_{10}$ scale.
   In the highly regularized setting, the tuning of the step-size for the subgradient turned out to be difficult, 
   which explains the behavior of SG in the first iterations.}
   \label{fig:mutlitask_bench_l1l2}
\end{figure}
\begin{figure}[h]
   \subfloat[scale: medium,\newline regul: low]{\includegraphics[width=0.33\linewidth]{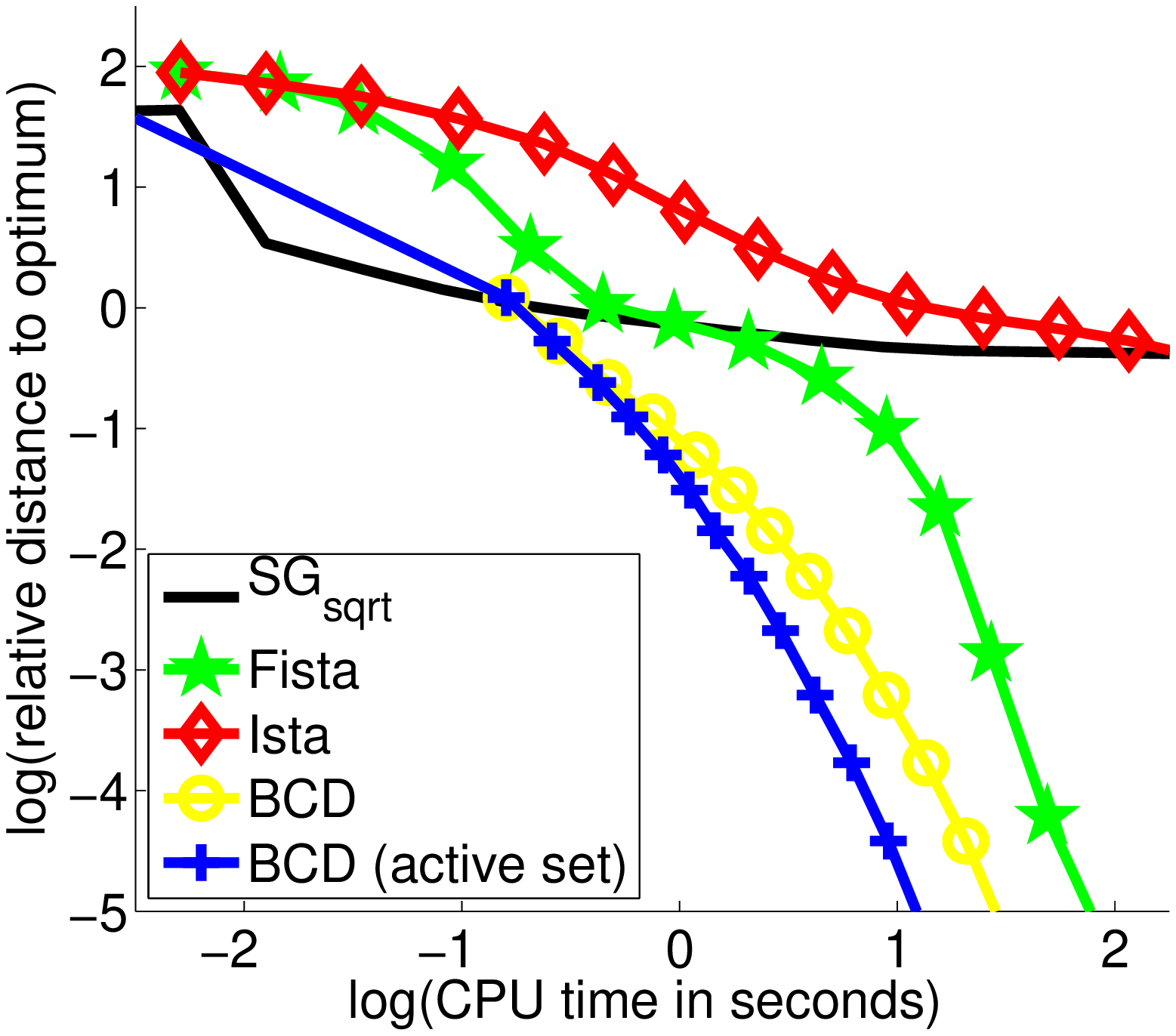}} \hfill
   \subfloat[scale: medium,\newline regul: medium]{\includegraphics[width=0.33\linewidth]{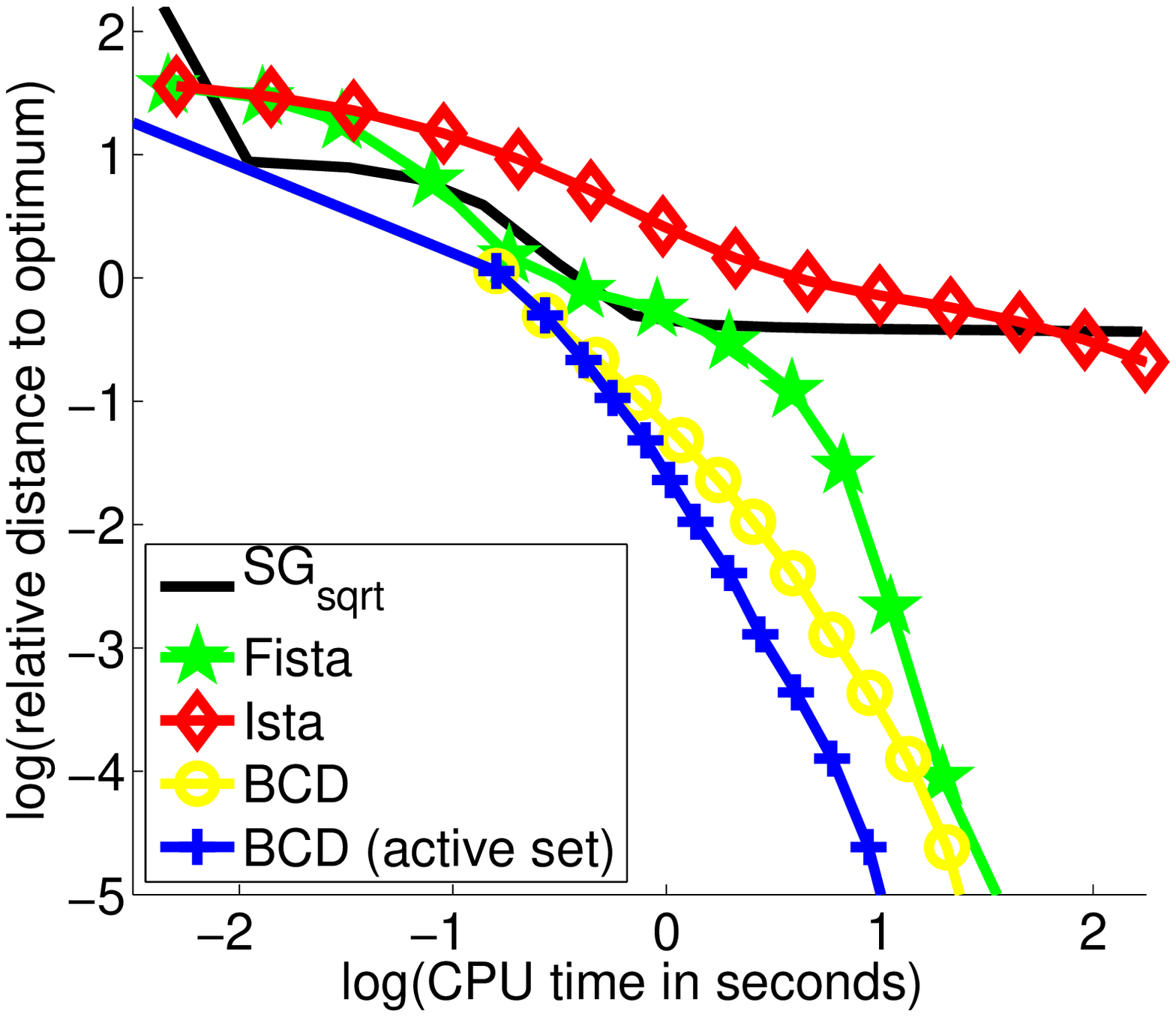}} \hfill
   \subfloat[scale: medium,\newline regul: high]{\includegraphics[width=0.33\linewidth]{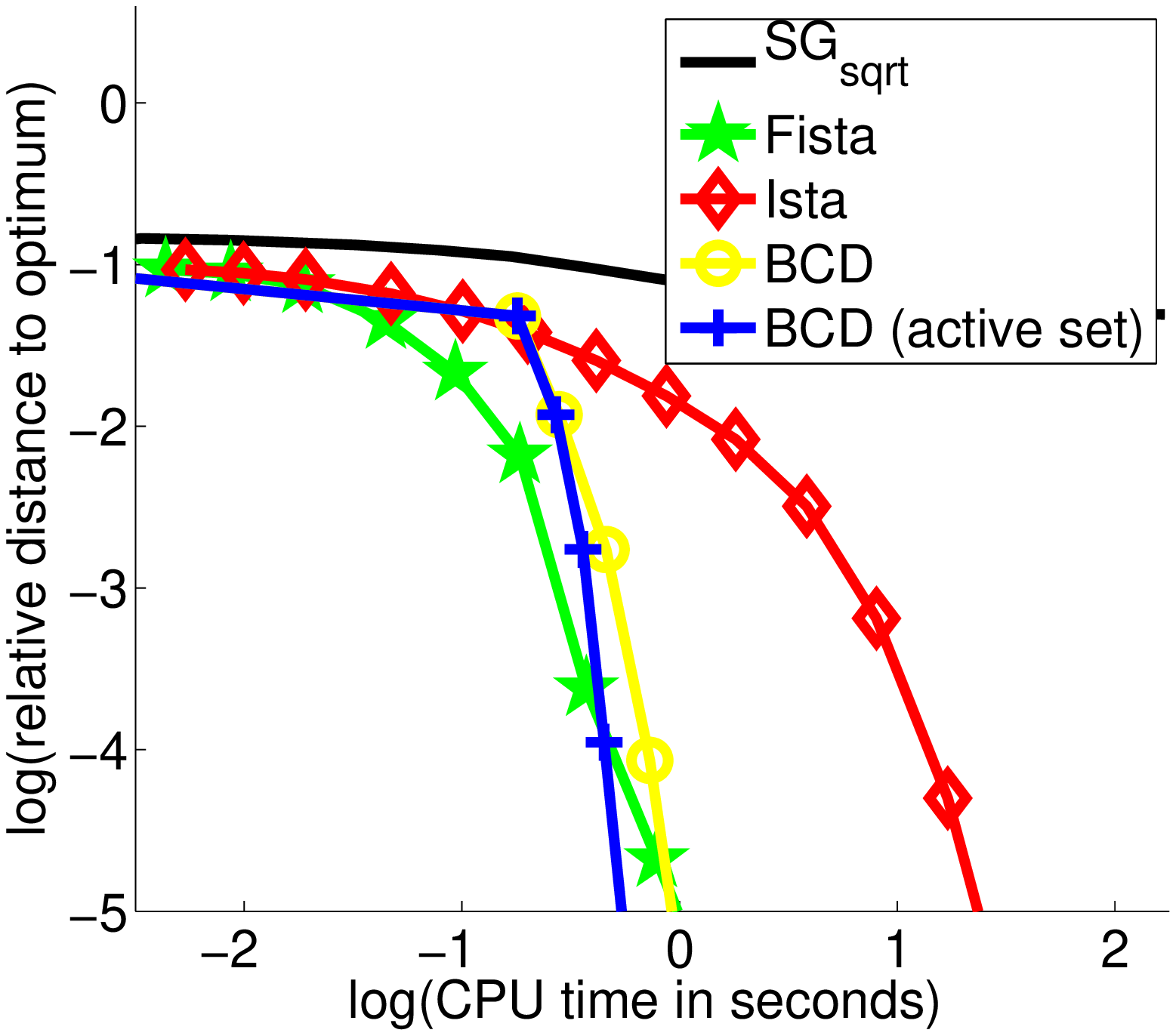}} \\
   \subfloat[scale: large,\newline regul: low]{\includegraphics[width=0.33\linewidth]{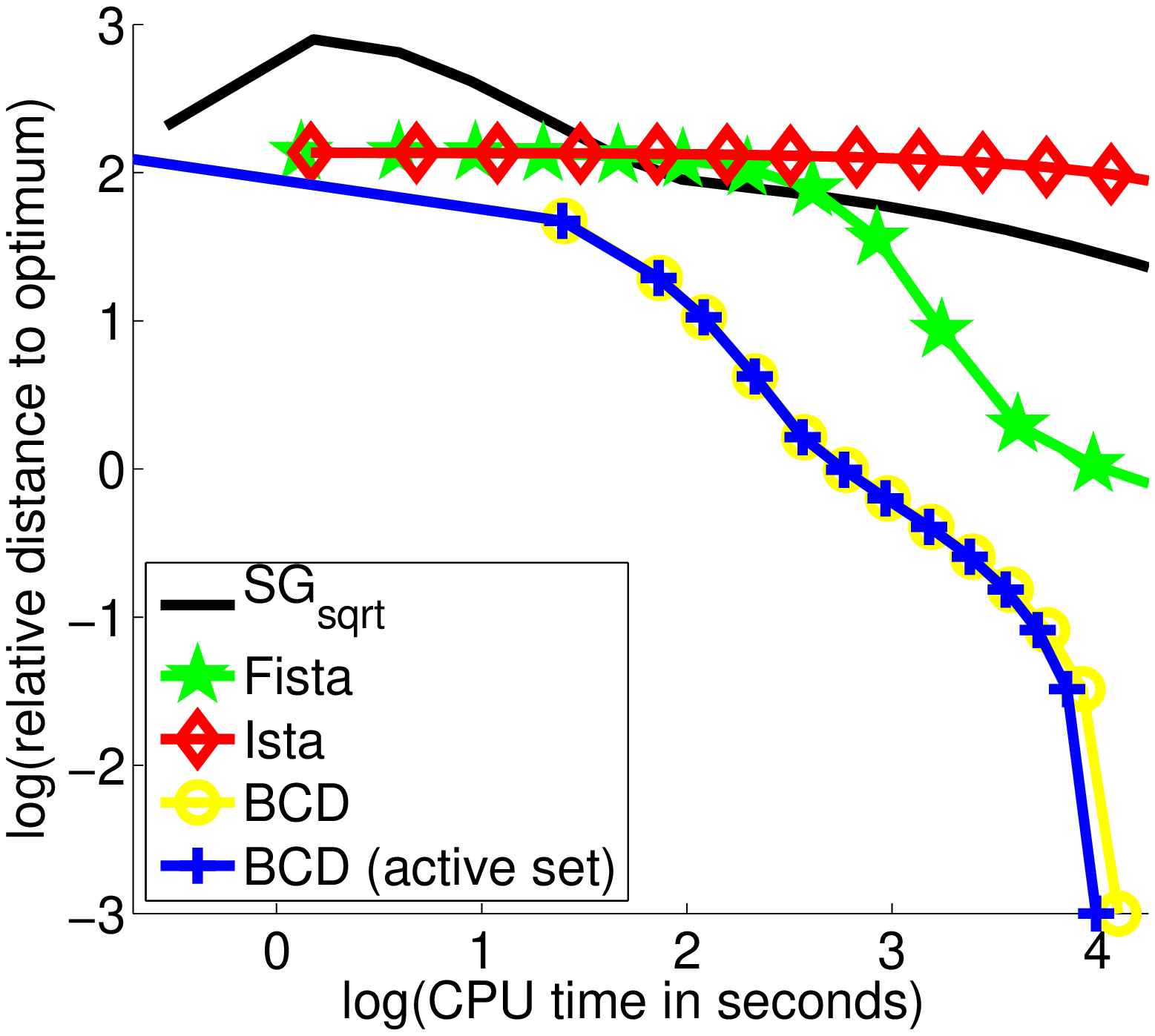}} \hfill
   \subfloat[scale: large,\newline regul: medium]{\includegraphics[width=0.33\linewidth]{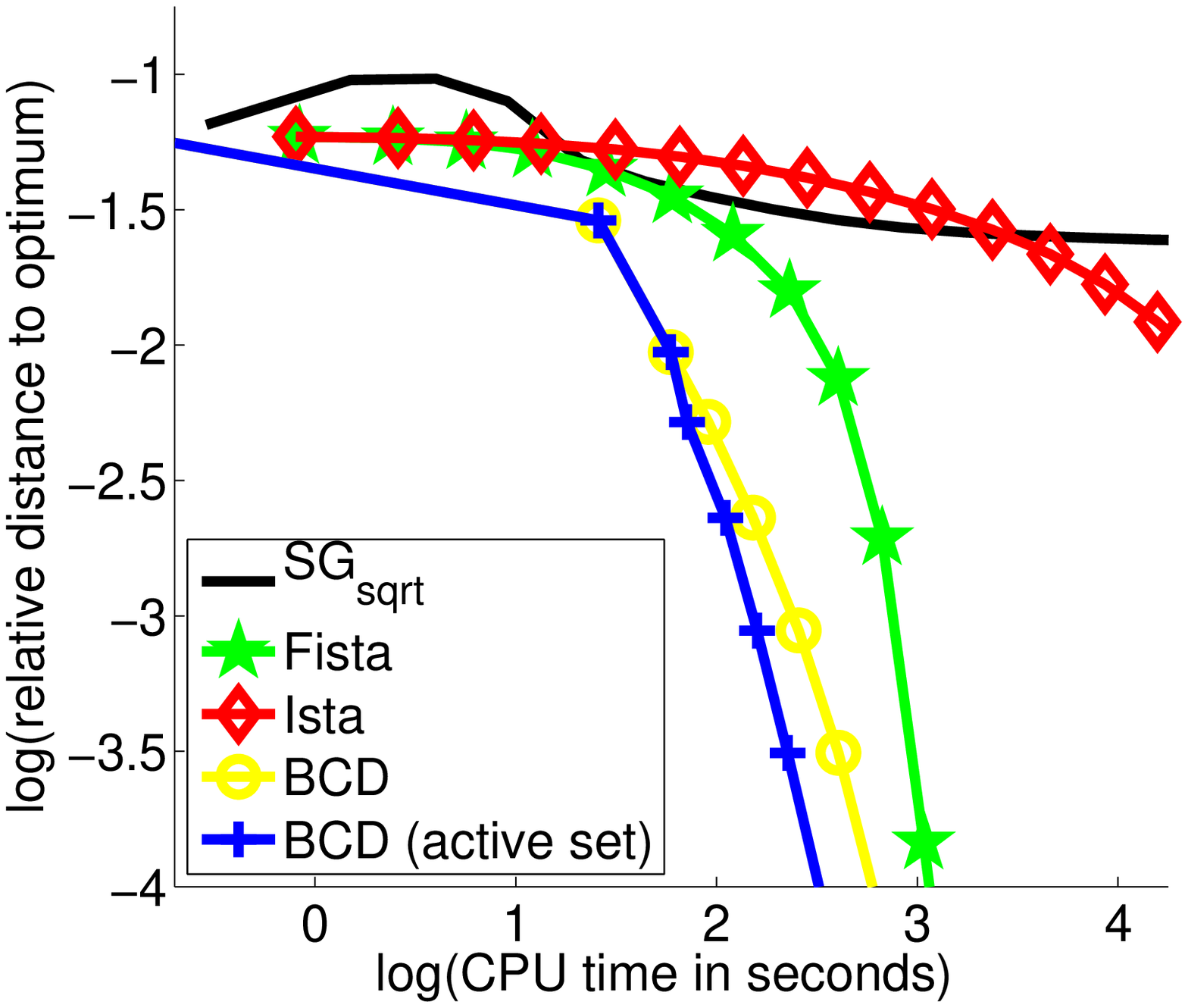}} \hfill
   \subfloat[scale: large,\newline regul: high]{\includegraphics[width=0.33\linewidth]{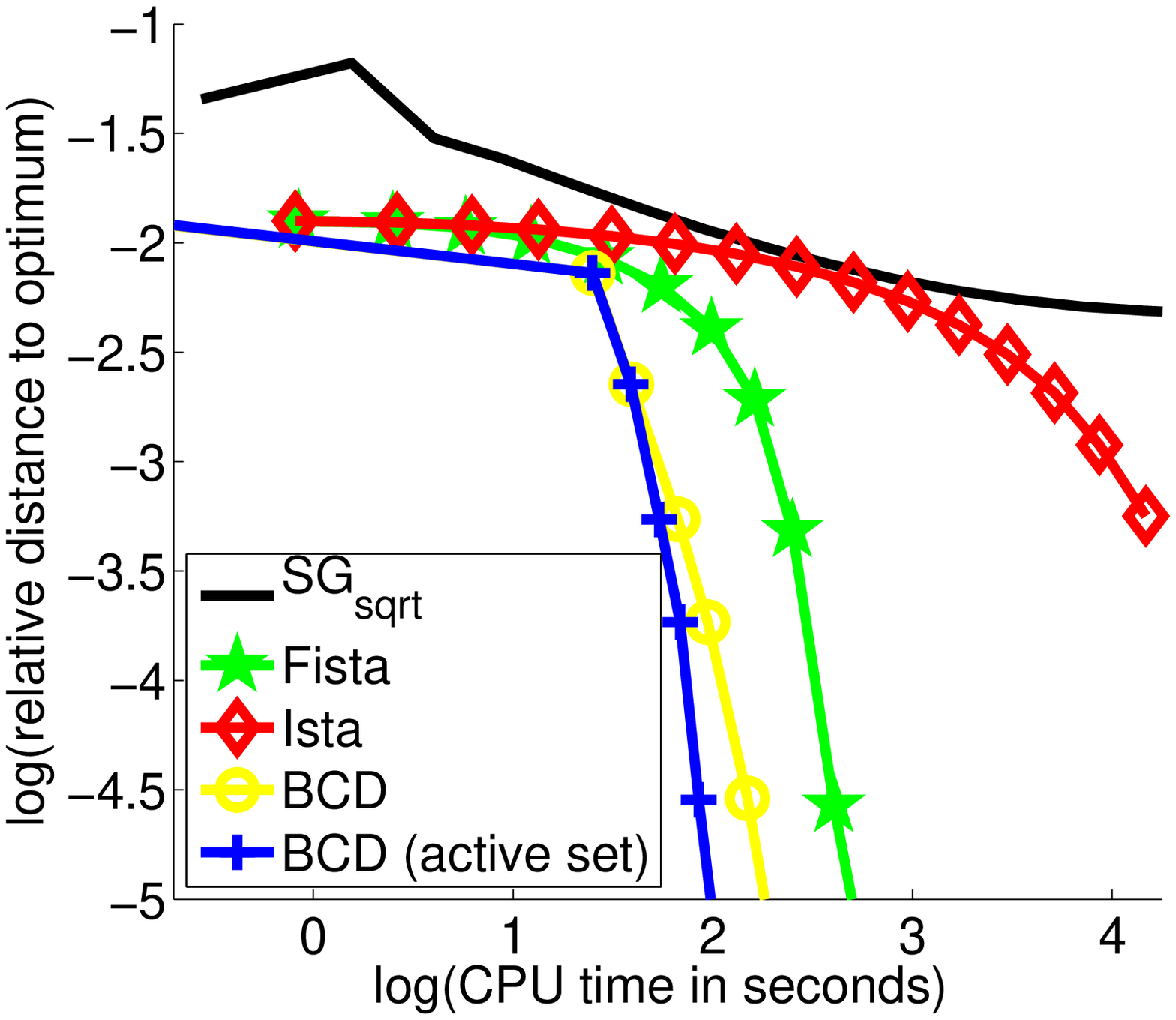}}
   \caption{Medium- and large-scale multi-class classification problems with an $\ell_1/\ell_\infty$-regularization 
   for three optimization methods (see details about the datasets and the methods in the main text).
   Three levels of regularization are considered. The curves represent the relative value of the objective function as a function of the computation time in second on a $\log_{10}/\log_{10}$ scale.
   In the highly regularized setting, the tuning of the step-size for the subgradient turned out to be difficult, 
   which explains the behavior of SG in the first iterations.}
   \label{fig:mutlitask_bench_l1linf}
\end{figure}

All the results are reported in Figures~\ref{fig:mutlitask_bench_l1l2} and \ref{fig:mutlitask_bench_l1linf}.
As expected in the light of the benchmark for the Lasso, BCD appears as the best option, 
regardless of the sparsity/scale conditions. 

\section{Structured Sparsity}
\label{sec:exp_struct}
In this second series of experiments, the optimization techniques of the previous sections
are further evaluated when applied to other types of loss and sparsity-inducing functions.
Instead of the $\ell_1$-norm previously studied, 
we focus on the particular \textit{hierarchical} $\ell_1/\ell_2$-norm $\Omega$ introduced in Section~\ref{sec:proximal_methods}.

From an optimization standpoint, although $\Omega$ shares some similarities with the $\ell_1$-norm (e.g., the convexity and the non-smoothness), it differs in that it cannot be decomposed into independent parts (because of the overlapping structure of $\Gc$).
Coordinate descent schemes hinge on this property and as a result, cannot be straightforwardly applied in this case.

\subsection{Denoising of Natural Image Patches}
In this first benchmark, we consider a least-squares regression problem regularized by $\Omega$ that arises in the context of the
denoising of natural image patches~\cite{Jenatton2010a}.
In particular, based on a hierarchical set of features that accounts for different types of edge orientations and frequencies in natural images, 
we seek to reconstruct noisy $16\!\times\!16$-patches.
Although the problem involves a small number of variables (namely $p=151$), it has to be solved repeatedly for thousands of patches, at moderate precision. It is therefore crucial to be able to solve this problem efficiently.

The algorithms that take part in the comparisons are 
ISTA, FISTA, Re-$\ell_2$, SG, and SOCP.
All results are reported in Figure~\ref{fig:struct_bench_patches}, by averaging $5$ runs.

We can draw several conclusions from the simulations.
First, we observe that across all levels of sparsity, 
the accelerated proximal scheme performs better, or similarly, than the other approaches.
In addition, as opposed to FISTA, ISTA seems to suffer in non-sparse scenarios.
In the least sparse setting, 
the reweighted-$\ell_2$ scheme matches the performance of FISTA.
However this scheme does not yield truly sparse solutions, and would therefore require a subsequent thresholding operation, which can be difficult to motivate in a principled way.
As expected, the generic techniques such as SG and SOCP do not compete with the dedicated algorithms. 
\begin{figure}
  \centering
   \subfloat[scale: small,\newline regul: low]{\includegraphics[width=0.33\linewidth]{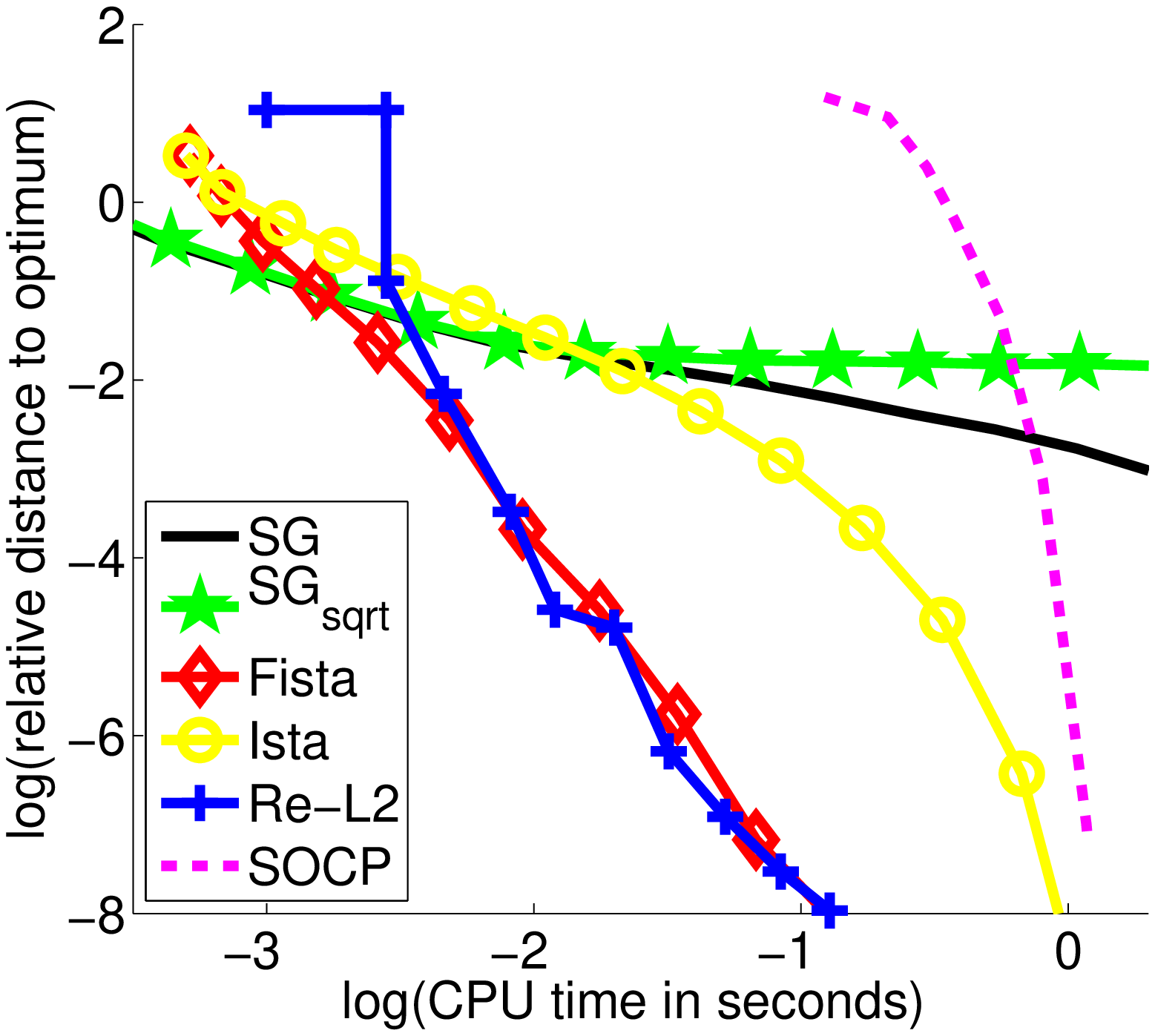}} \hfill
   \subfloat[scale: small,\newline regul: medium]{\includegraphics[width=0.33\linewidth]{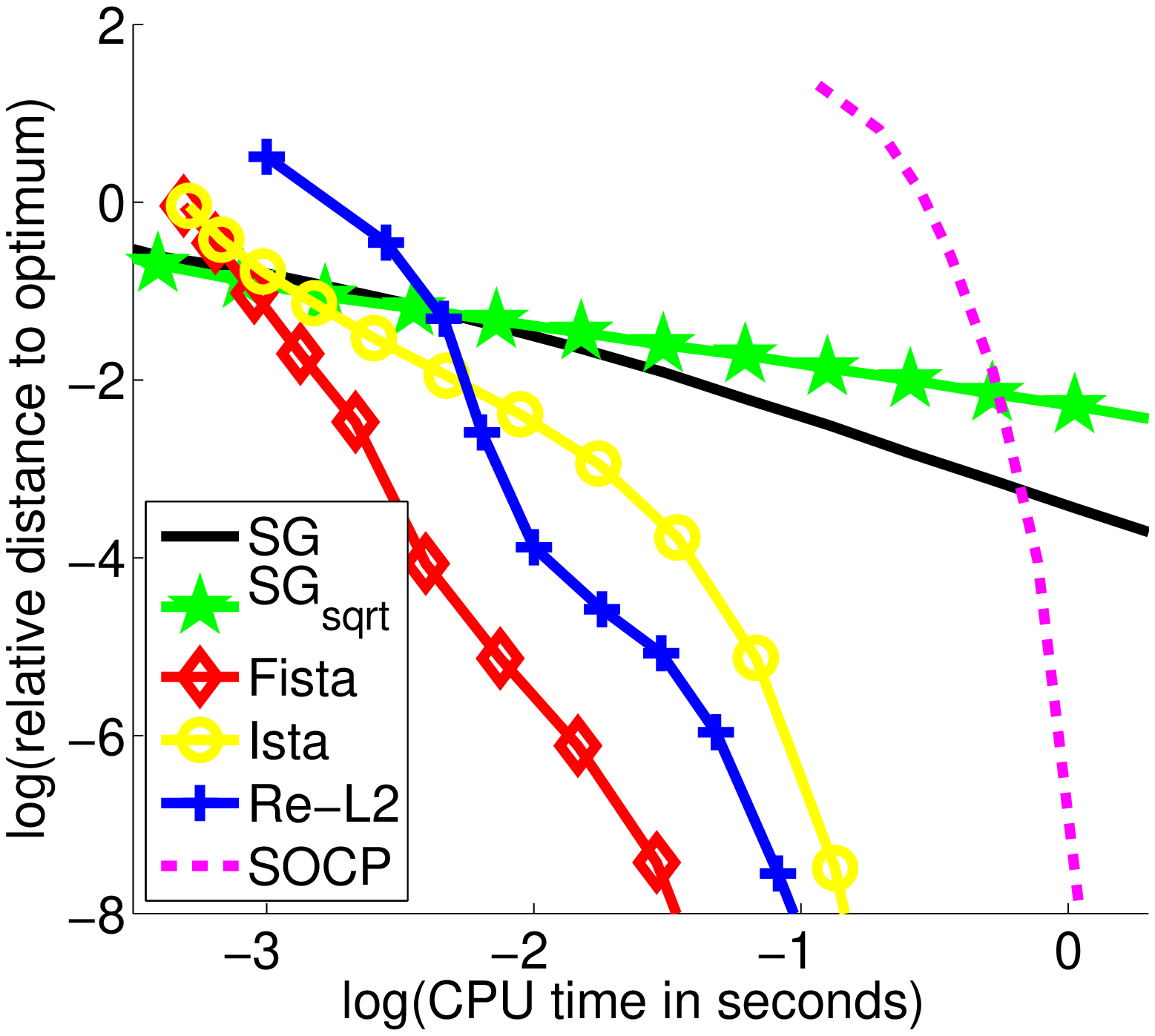}} \hfill
   \subfloat[scale: small,\newline regul: high]{\includegraphics[width=0.33\linewidth]{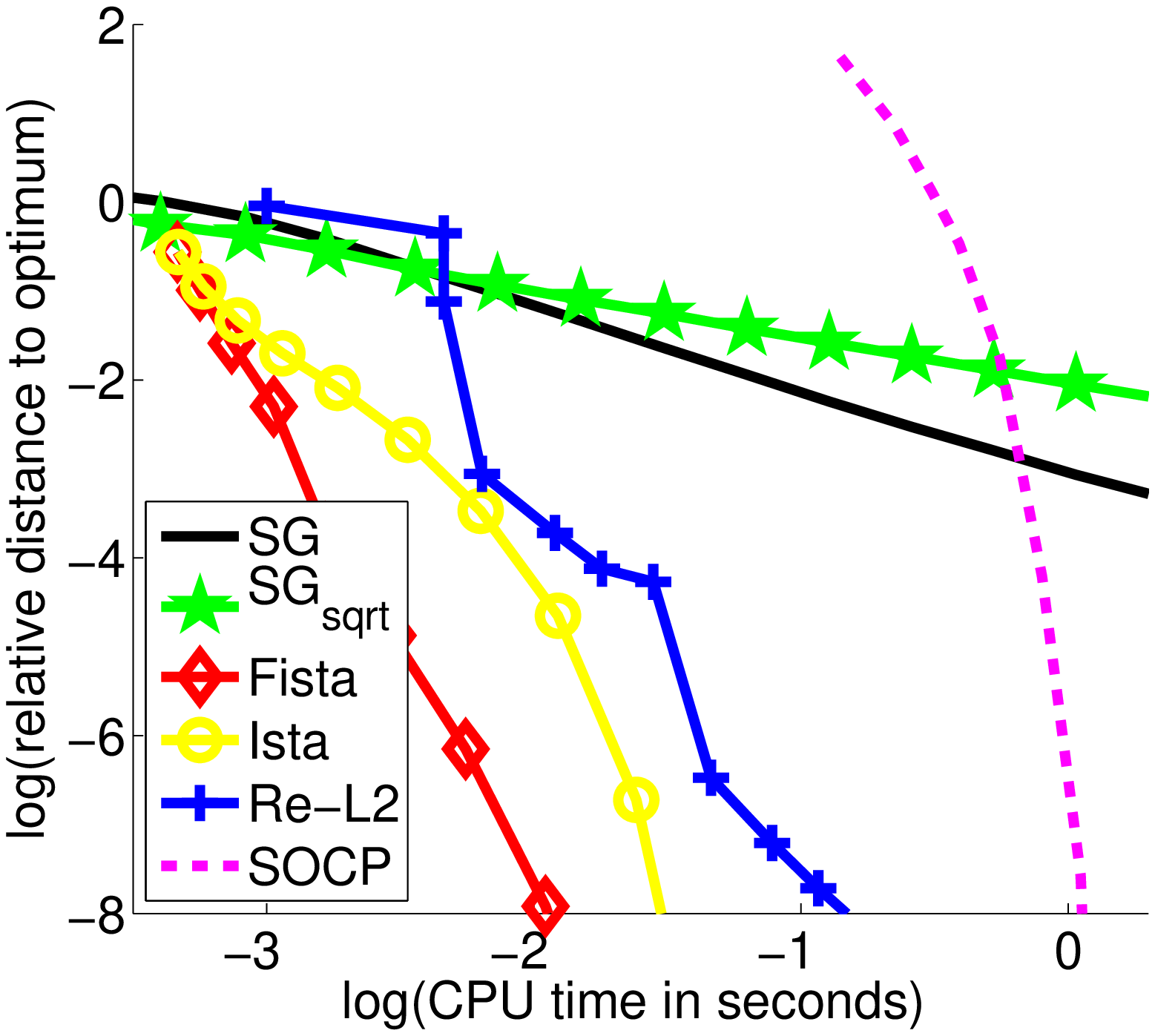}} \\
   \caption{Benchmark for solving a least-squares regression problem regularized by the hierarchical norm $\Omega$. 
   The experiment is small scale, $n=256, p=151$, and shows the performances of five optimization methods (see main text for details)
   for three levels of regularization. The curves represent the relative value of the objective function as a function of the computational time in second on a $\log_{10}/\log_{10}$ scale.}
   \label{fig:struct_bench_patches}
\end{figure}

\subsection{Multi-class Classification of Cancer Diagnosis}
This benchmark focuses on multi-class classification of cancer diagnosis and
reuses the two datasets from the multi-task problem of Section~\ref{sec:exp_multitask}.
Inspired by~\cite{kim3}, we build a tree-structured set of groups of features $\Gc$ by applying
Ward's hierarchical clustering~\cite{Johnson1967} on the gene expressions.
The norm $\Omega$ built that way aims at capturing the hierarchical structure of gene expression networks~\cite{kim3}.
For more details about this construction, see~\cite{Jenatton2011} in the context of neuroimaging.
The resulting datasets with tree-structured sets of features contain $p=4\,615$ and $p=30\,017$ variables, for 
respectively the medium- and large-scale datasets.

Instead of the square loss function, we consider the multinomial logistic loss function, which is better suited for multi-class classification problems. 
As a direct consequence, the algorithms whose applicability crucially depends on the choice of the loss function
are removed from the benchmark. 
This is for instance the case for reweighted-$\ell_2$ schemes that have closed-form updates available only with the square loss
(see Section~\ref{sec:reweighted_l2}).
Importantly, the choice of the multinomial logistic loss function requires one to optimize
over a matrix with dimensions $p$ times the number of classes 
(i.e., a total of $4\,615\times 4\approx18\,000$ and $30\,017\times 26\approx780\,000$ variables).
Also, for lack of scalability, generic interior point solvers could not be considered here.
To summarize, the following comparisons involve ISTA, FISTA, and SG.

\begin{figure}[h]
   \subfloat[scale: medium,\newline regul: low]{\includegraphics[width=0.33\linewidth]{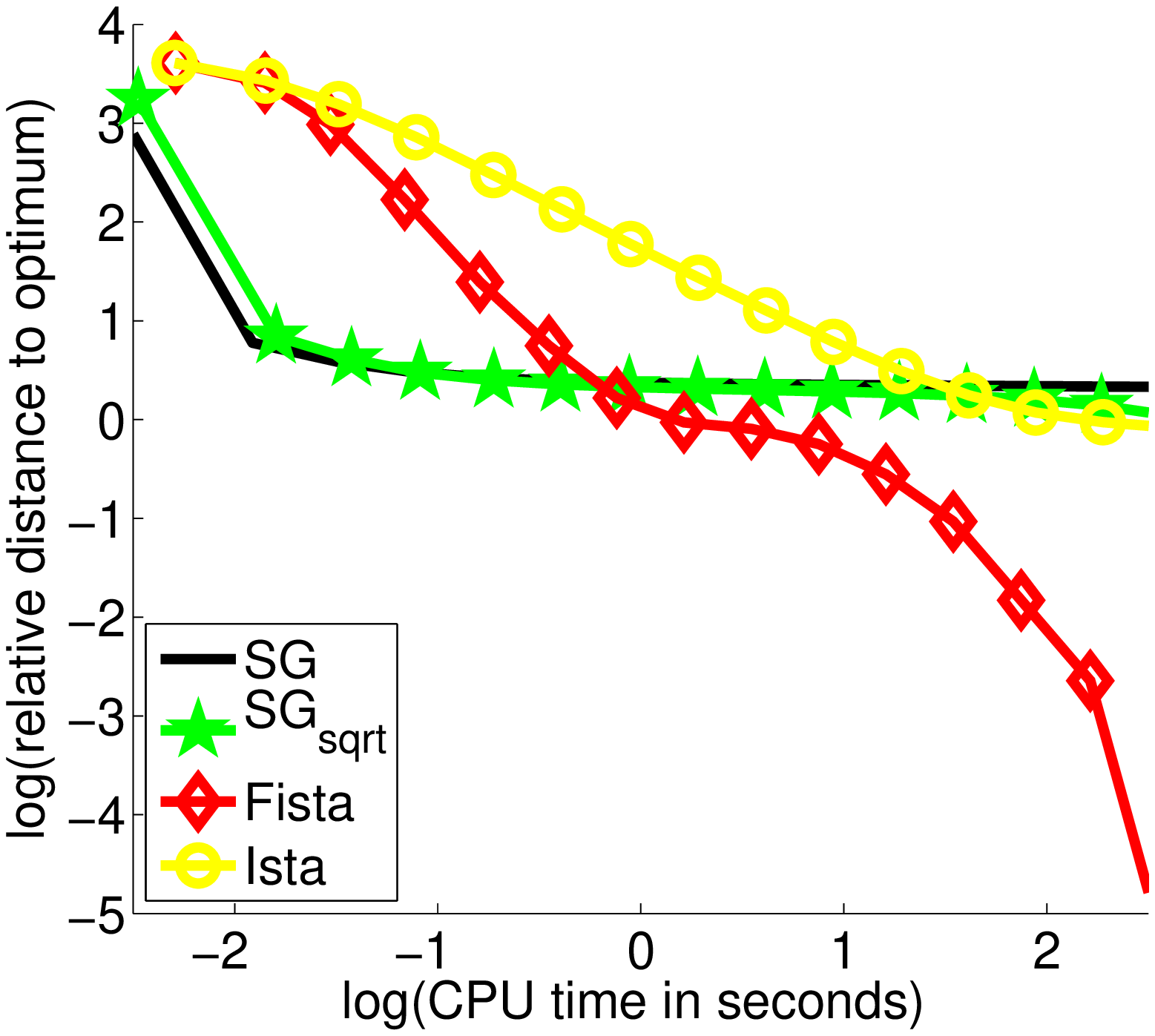}} \hfill
   \subfloat[scale: medium,\newline regul: medium]{\includegraphics[width=0.33\linewidth]{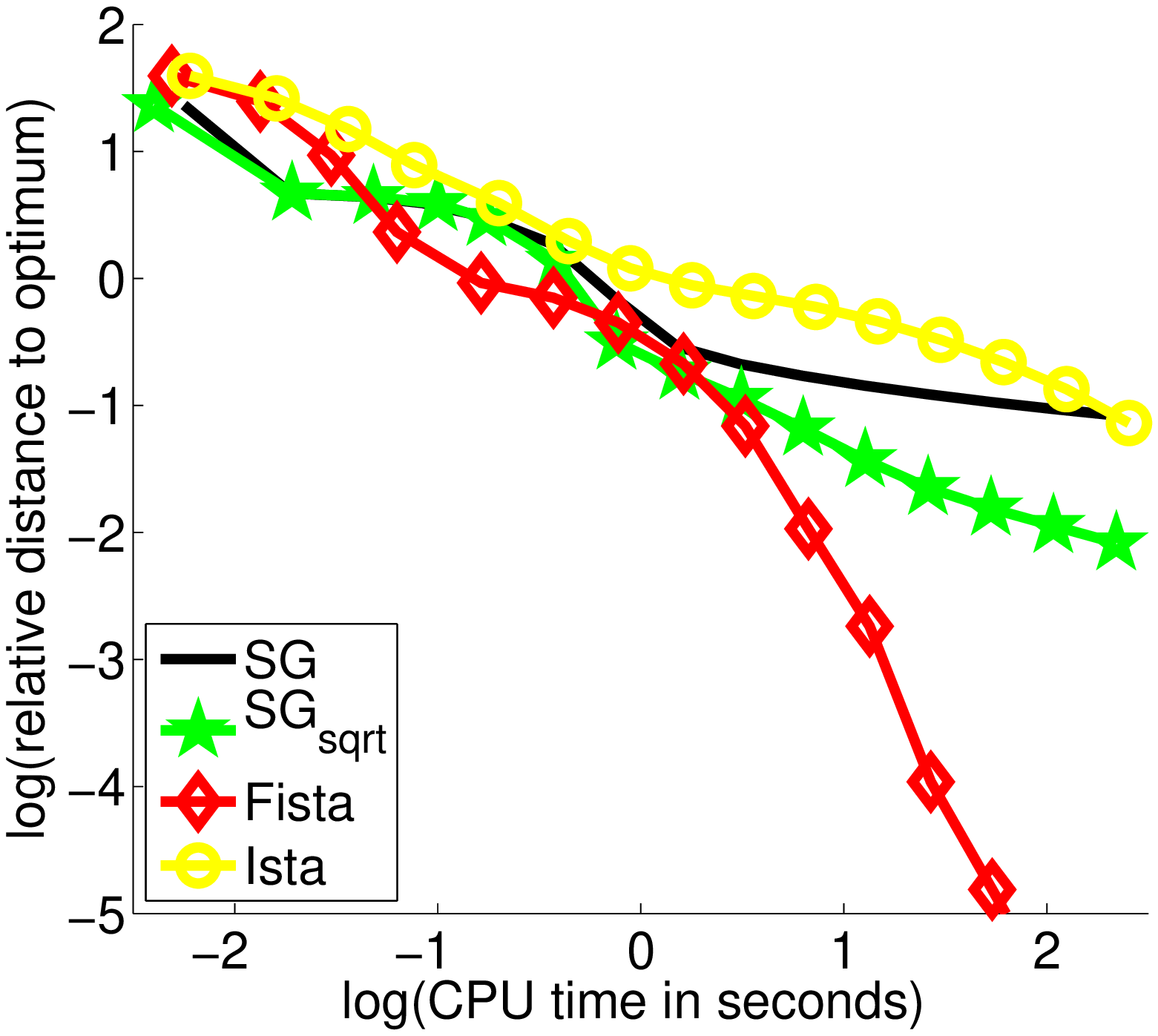}} \hfill
   \subfloat[scale: medium,\newline regul: high]{\includegraphics[width=0.33\linewidth]{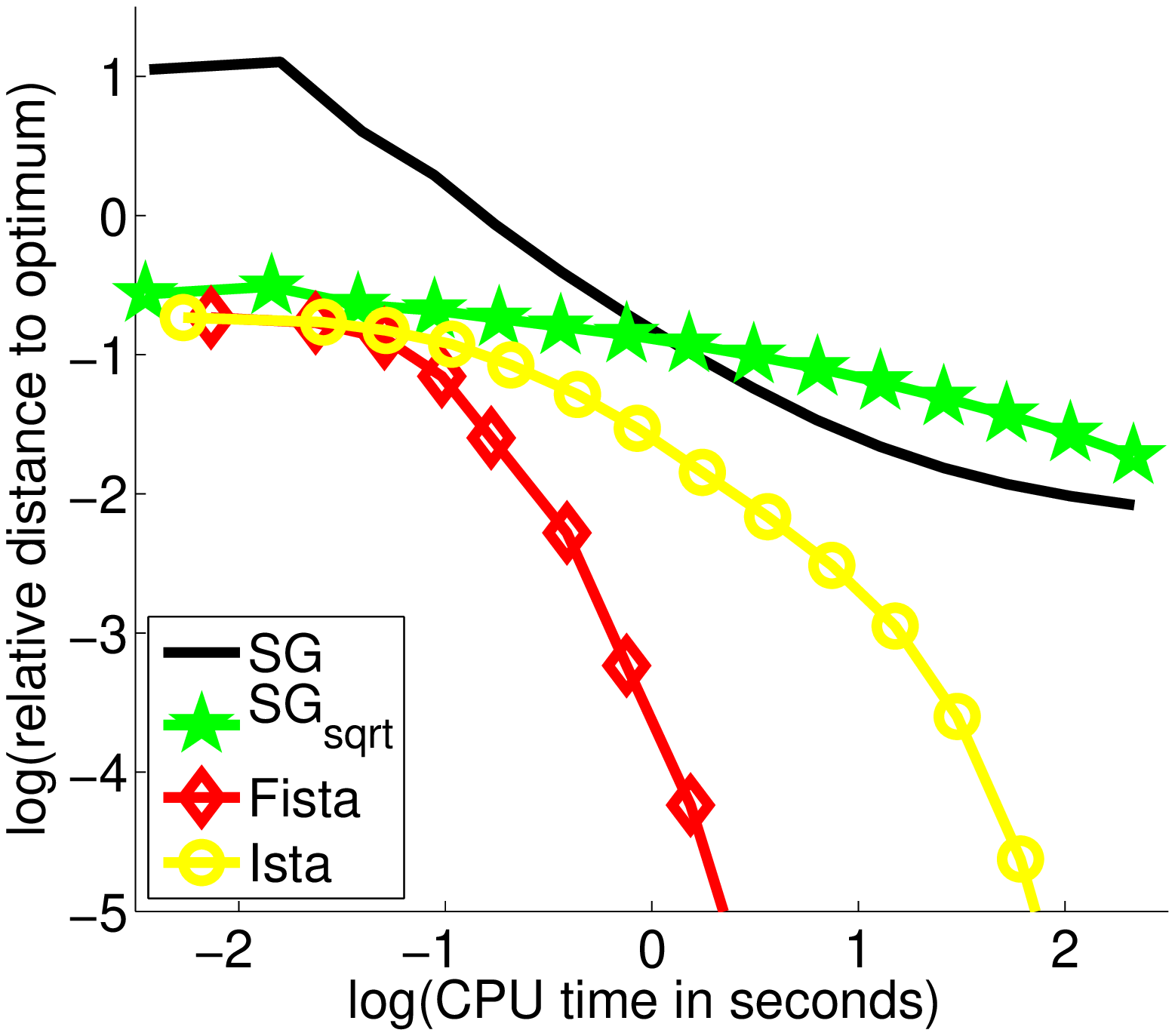}} \\
   \subfloat[scale: large,\newline regul: low]{\includegraphics[width=0.33\linewidth]{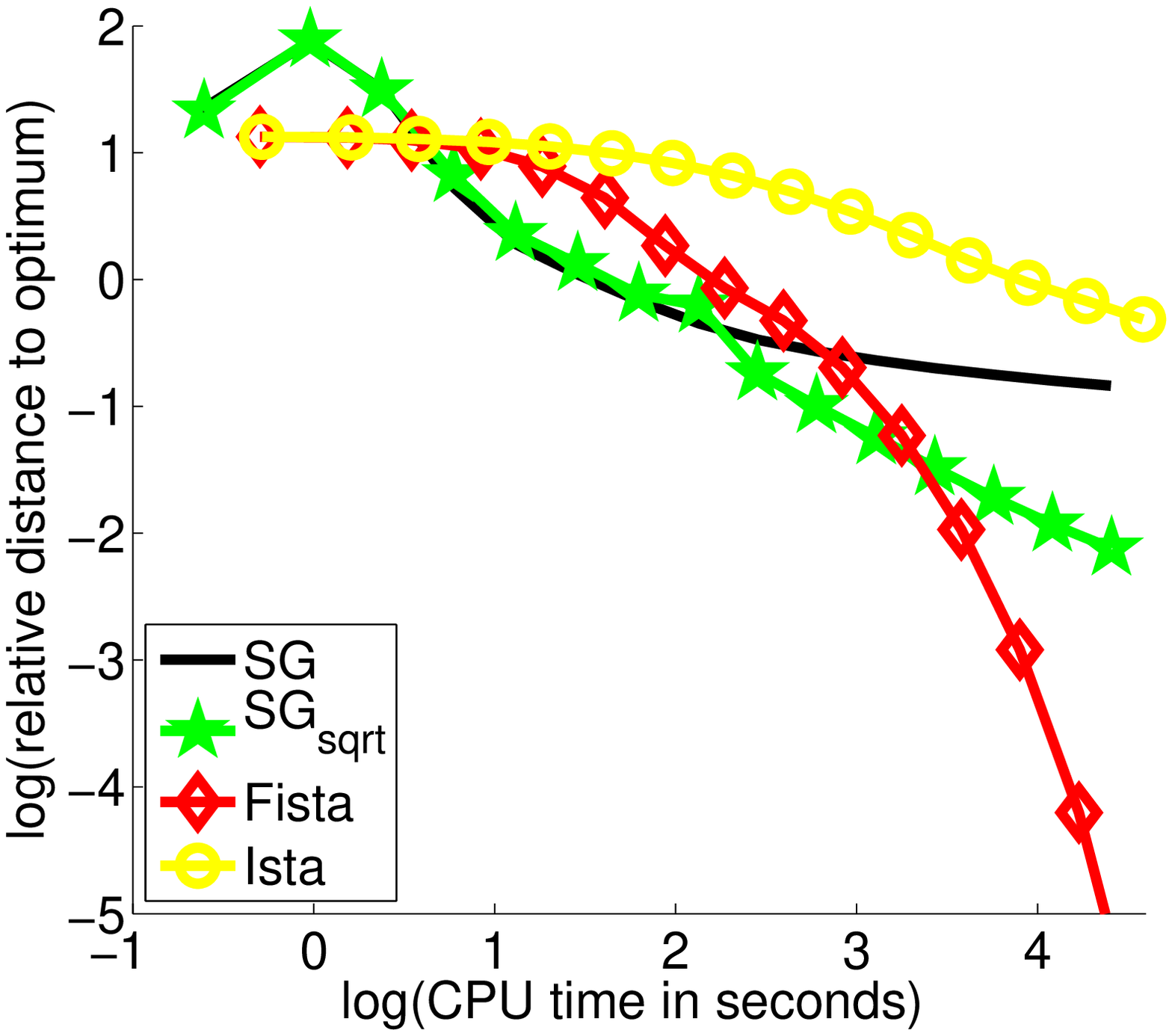}} \hfill
   \subfloat[scale: large,\newline regul: medium]{\includegraphics[width=0.33\linewidth]{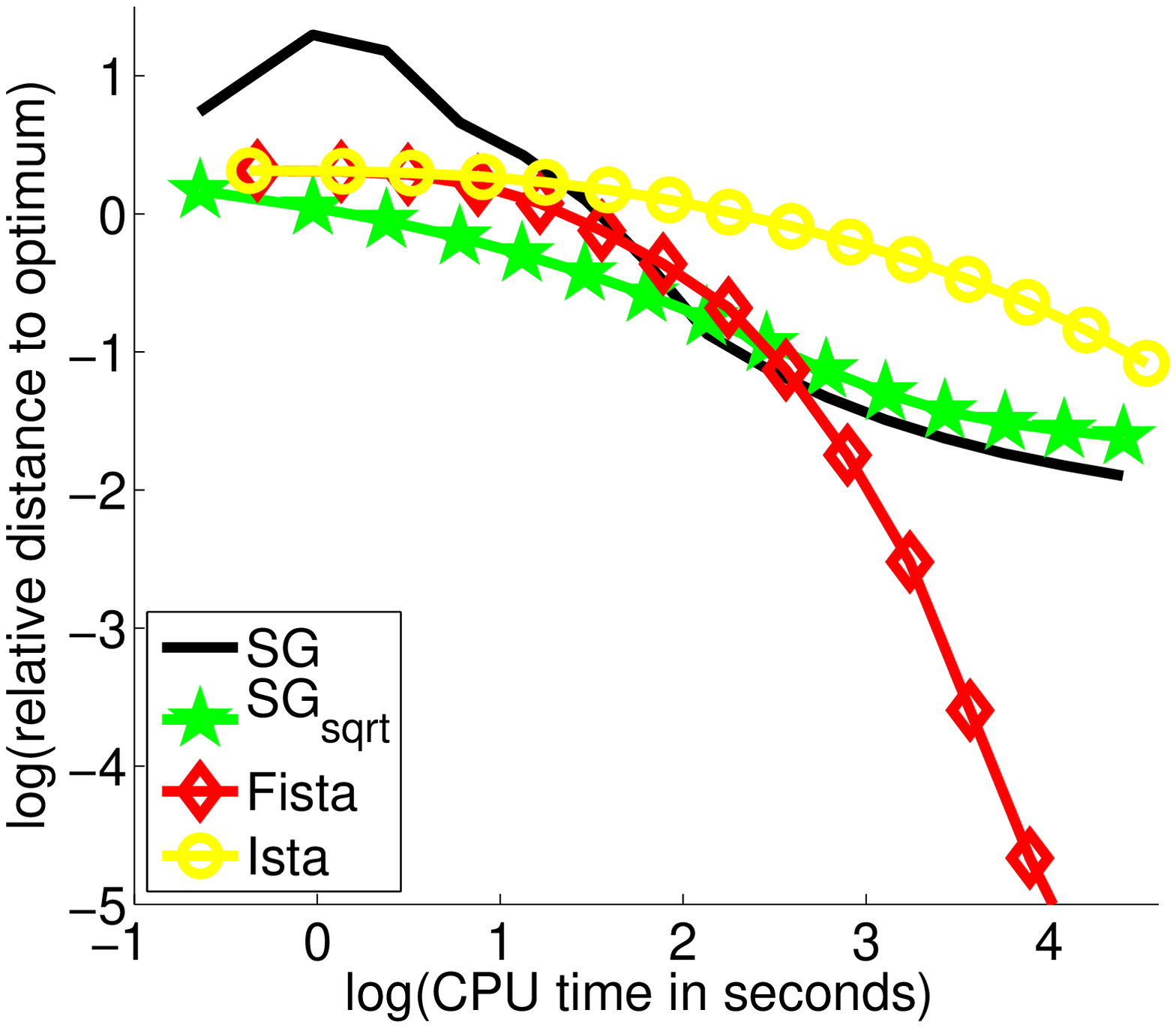}} \hfill
   \subfloat[scale: large,\newline regul: high]{\includegraphics[width=0.33\linewidth]{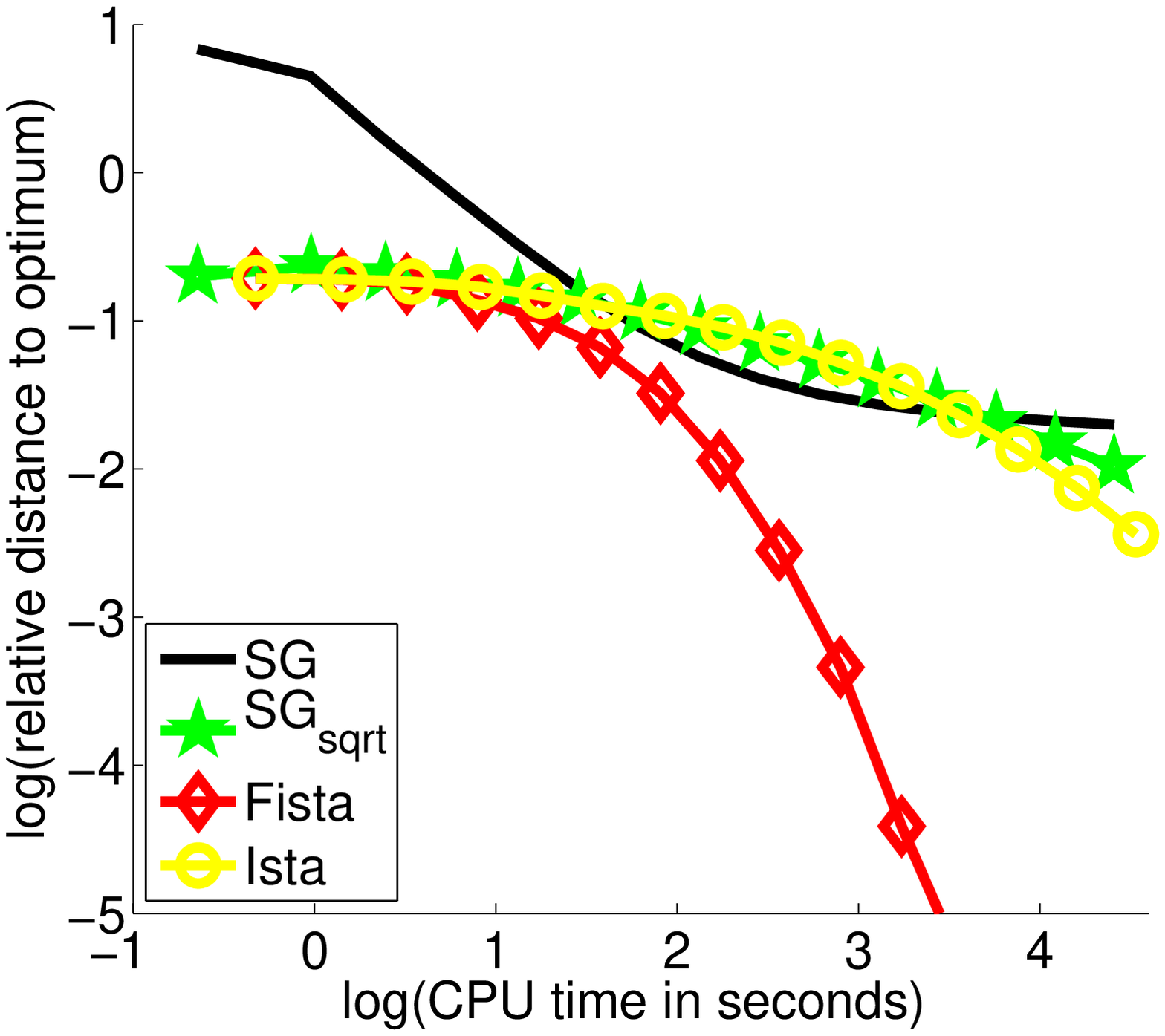}}
   \caption{Medium- and large-scale multi-class classification problems 
   for three optimization methods (see details about the datasets and the methods in the main text).
   Three levels of regularization are considered. The curves represent the relative value of the objective function as a function of the computation time in second on a $\log_{10}/\log_{10}$ scale.
   In the highly regularized setting, the tuning of the step-size for the subgradient turned out to be difficult, 
   which explains the behavior of SG in the first iterations.}
   \label{fig:struct_bench_bio}
\end{figure}

All the results are reported in Figure~\ref{fig:struct_bench_bio}.
The benchmark especially points out that the accelerated proximal scheme performs overall better that the two other methods. Again, it is important to note that both proximal algorithms yield sparse solutions, 
which is not the case for SG.
More generally, this experiment illustrates the flexibility of proximal algorithms with respect to the choice of the loss function.

\subsection{General Overlapping Groups of Variables}

We consider a structured sparse decomposition problem with overlapping groups
of $\ell_\infty$-norms, and compare the proximal gradient algorithm
FISTA~\cite{beck2009fast} consider the proximal operator presented in
Section~\ref{sec:proximal_operator} (referred to as ProxFlow~\cite{mairal10}). Since, the norm we use is a sum of several simple terms, we can bring to bear other optimization techniques which are dedicated to this situation, namely proximal splitting method known as alternating direction method of multipliers (ADMM)~(see, e.g.,~\cite{boydadmm,combettes2010proximal}). We consider two variants,
(ADMM) and (Lin-ADMM)---see more details in~\cite{Mairal2011}.

We consider a design matrix $\mx$ in $\R^{n
\times p}$ built from overcomplete dictionaries
of discrete cosine transforms (DCT), which are naturally organized on one-
or two-dimensional grids and display local correlations.  The following
families of groups $\Gc$ using this spatial information are thus
considered: (1) every contiguous sequence of length $3$ for the
one-dimensional case, and (2) every $3\!\times\!3$-square in the
two-dimensional setting.  We generate vectors~$\vy$ in $\R^{n}$ according to
the linear model $\vy = \mx\vw_0 + \vepsilon$, 
where $\vepsilon \sim \Nc(0,0.01\|\mx\vw_0\|_2^2)$.  The vector $\vw_0$ has
about $20\%$   nonzero components, randomly selected, while
respecting the structure of~$\Gc$, and uniformly generated in
$[-1,1]$.

In our experiments, the regularization parameter $\lambda$ is chosen to achieve
the same level of sparsity ($20\%$).  For SG, ADMM and Lin-ADMM, some parameters are
optimized to provide the lowest value of the objective function after $1\,000$
iterations of the respective algorithms.  For SG, we take the step size to be equal to $a/(k+b)$, where $k$
is the iteration number, and $(a,b)$ are the pair of parameters selected in
$\{10^{-3},\dots,10\}\!\times\!\{10^2,10^3,10^4\}$.
The parameter~$\gamma$ for ADMM is selected in $\{10^{-2},\ldots,10^{2}\}$.  The parameters
$(\gamma,\delta)$ for Lin-ADMM are selected in $\{10^{-2},\ldots,10^{2}\}
\times \{10^{-1},\ldots,10^8\}$.  For interior point methods, since
problem~(\ref{eq:formulation}) can be cast either as a quadratic (QP) or as a
conic program (CP), we show in Figure~\ref{fig:speed_cmp} the results for both
formulations.
On three problems of different sizes, with $(n,p)\in\{(100,10^3),(1024,10^4),(1024,10^5)\}$, our algorithms ProxFlow, ADMM and Lin-ADMM compare favorably
with the other methods, 
 (see
Figure~\ref{fig:speed_cmp}), except for ADMM in the large-scale setting which
yields an objective function value  similar to that of SG after~$10^4$ seconds.  Among
ProxFlow, ADMM and Lin-ADMM, ProxFlow is consistently better than Lin-ADMM,
which is itself better than ADMM. Note that for the small scale problem,
the performance of ProxFlow and Lin-ADMM is similar.
In addition, note that QP, CP, SG, ADMM and
Lin-ADMM do not obtain sparse solutions, whereas ProxFlow does.
\begin{figure}[hbtp]
    \centering
   \includegraphics[width=0.34\textwidth]{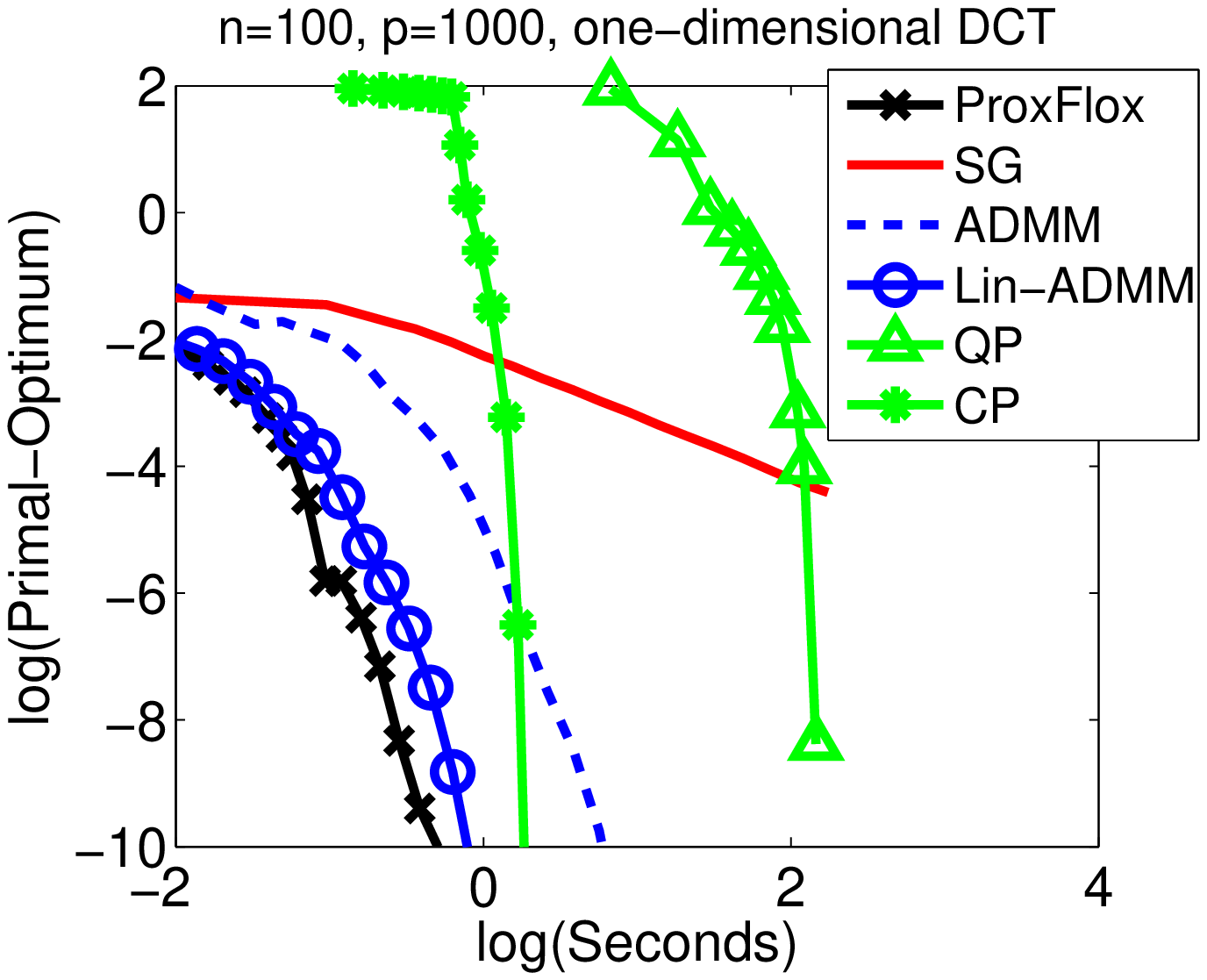} \hfill
   \includegraphics[width=0.32\textwidth]{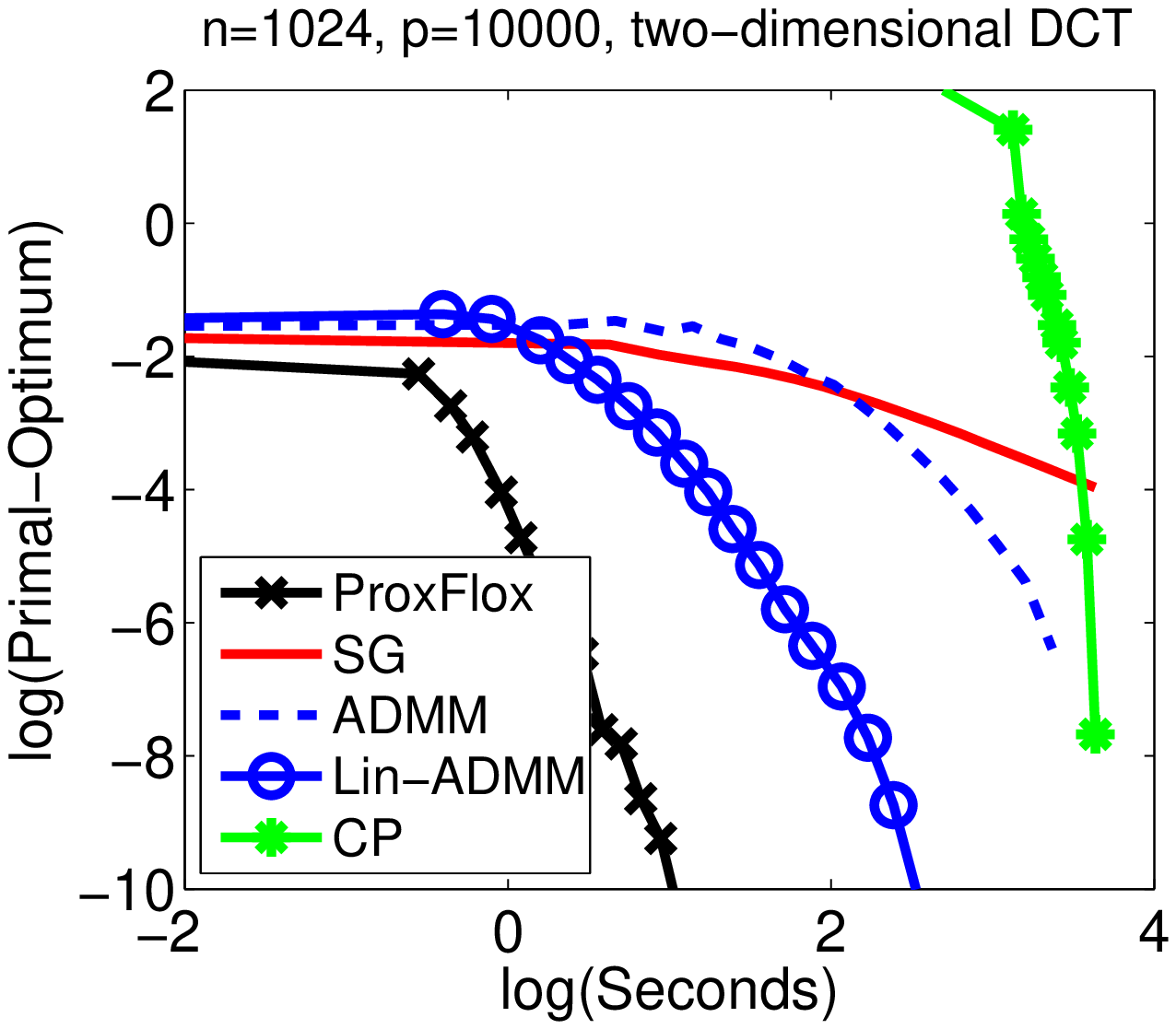} \hfill
   \includegraphics[width=0.32\textwidth]{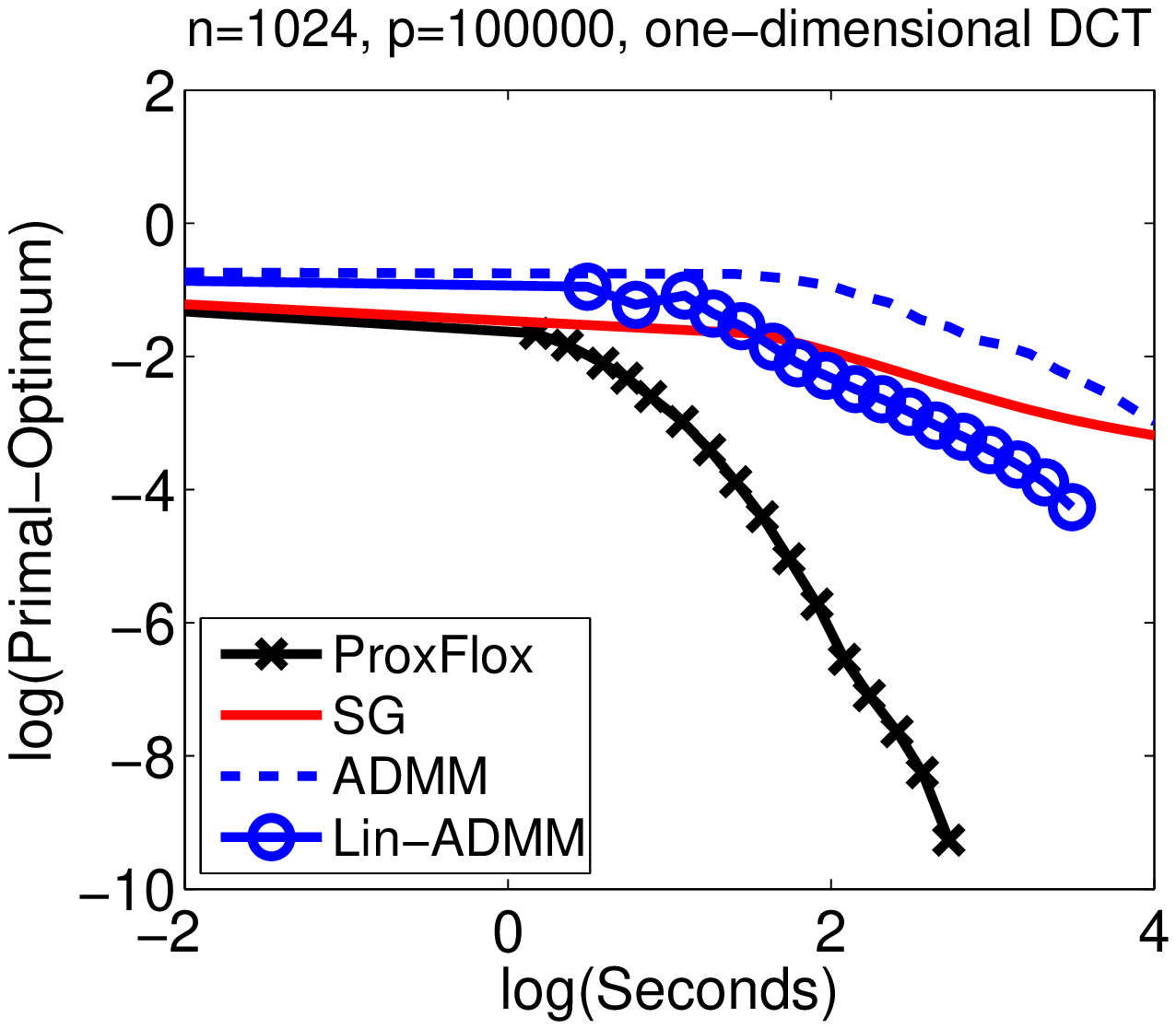} 
   \caption{Speed comparisons: distance to the optimal primal value versus CPU time (log-log scale). Due to the computational burden, QP and CP could not be run on every problem.} 
\label{fig:speed_cmp}
\end{figure}

\section{General Comments}
We conclude this section by a couple of general remarks on the experiments that we presented. 
First, the use of proximal methods is often advocated because of their optimal worst case complexities in $O(\frac{1}{t^2})$ (where $t$ is the number of iterations). In practice, in our experiments, these and several other methods exhibit empirically convergence rates that are at least linear, if not better, which suggests that the adaptivity of the method (e.g., its ability to take advantage of local curvature) might be more crucial to its practical success.
Second, our experiments concentrated on regimes that are of interest for sparse methods in machine learning where typically $p$ is larger than $n$ and where it is possible to find good sparse solutions. The setting where $n$ is much larger than $p$ was out of scope here, but would be worth a separate study, and should involve methods from stochastic optimization. Also, even though it might make sense from an optimization viewpoint, we did not consider problems with low levels of sparsity, that is with more dense solution vectors, since it would be a more difficult regime for many of the algorithms that we presented (namely LARS, CD or proximal methods).

\chapter{Extensions}
\label{sec:extensions}
We obviously could not cover exhaustively the literature on algorithms for sparse methods in this chapter.

Surveys and comparisons of algorithms for sparse methods have been proposed in \cite{schmidt2007fast} and \cite{yuan-comparison}.
These papers present quite a few algorithms, but focus essentially on $\ell_1$-regularization and unfortunately do not consider proximal methods. Also, it is not clear that the metrics used to compare the performance of various algorithms is the most relevant to machine learning; in particular, we present the full convergence curves that we believe are more informative than the ordering of algorithms at fixed precision.

Beyond the material presented here, there a few topics that we did not develop and that are worth mentioning.

In the section on proximal methods, we presented the proximal methods called forward-backward splitting methods.
We applied them to objectives which are the sum of two terms: a differentiable function with Lipschitz-continuous gradients and a norm. More generally these methods 
 apply to the sum of two semi-lower continuous (l.s.c.), proper, convex functions with
non-empty domain, and where one element is assumed differentiable with Lipschitz-continuous gradient~\cite{combettes2010proximal}.
The proximal operator itself dates back to~\cite{moreau1962fonctions} and
proximal methods themselves date back to~\cite{Lions1979,Martinet1970}. As of today, they 
have been extended to various settings~\cite{Chen1997,combettes2006signal,combettes2010proximal,Tseng1991}. 
In particular, instances of proximal methods are still applicable if the smoothness assumptions that we made on the loss are relaxed.
For example, the Douglas-Rachford splitting algorithm applies as soon as the objective function to minimize 
is only assumed l.s.c. proper convex, without any smoothness properties (although a l.s.c. convex function is continuous inside of its domain). 
The augmented Lagrangian techniques~(see \cite{boydadmm,combettes2010proximal,Glowinski1989} and numerous references therein) and more precisely their variants known as alternating-direction methods of multipliers are related to proximal methods via duality.
These methods are in particular applicable to cases where several regularizations and constraints are mixed \cite{Mairal2011,Tomioka2011Augmented}.

For certain combination of losses and regularizations, 
dedicated methods have been proposed. This is the case for linear regression with
the least absolute deviation (LAD) loss (also called $\ell_1$-loss) with an $\ell_1$-norm regularizer, which leads to a
linear program \cite{wu2008coordinate}. This is also the case for algorithms designed for classical multiple kernel learning when the regularizer is the squared norm \cite{simpleMKL,Sonnenburg2006Large,Suzuki2011SpicyMKL}; these methods are therefore not exactly comparable to the MKL algorithms presented in this monograph which apply to objective regularized by the unsquared norm (except for reweighted $\ell_2$-schemes, based on variational formulations for the squared norm).

In the context of proximal methods, the metric used to define the proximal operator can be modified by judicious rescaling operations, in order to fit better the geometry of the data~\cite{Duchi2010}. Moreover, they can be mixed with Newton and quasi-Newton methods, for further acceleration (see, e.g.,~\cite{bookchapter-mark}).

Finally, from a broader outlook, our---\textit{a priori} deterministic---optimization problem~(\ref{eq:formulation})
may also be tackled with stochastic optimization approaches, which has been the focus of much recent research~\cite{Bottou1998,Bottou2004,Duchi2010,Hu2009Accelerated,Shapiro2009,xiao2010dual}.

\chapter{Conclusions}
\label{sec:conclusions}
We have tried to provide in this monograph a unified view of sparsity and structured sparsity as it can emerge when convex analysis and convex optimization are used as the conceptual basis to formalize respectively problems and algorithms. In that regards, we did not aim at exhaustivity and other paradigms are likely to provide complementary views.

With convexity as a requirement however, using non-smooth norms as
regularizers is arguably the most natural way to encode sparsity constraints.
A main difficulty associated with these norms is that they are intrinsically non-differentiable; they are however fortunately also structured, so that a few concepts can be leveraged to manipulate and solve problems regularized with them. To summarize:

$-$ Fenchel-Legendre duality and the dual norm allow to compute subgradients, duality gaps and are also key to exploit sparsity algorithmically via working set methods. More trivially, duality also
provides an alternative formulation to the initial problem which is sometimes more tractable.

$-$ The proximal operator, which, when it can be computed efficiently (exactly or approximately), allows one to treat the optimization problem as if it were a smooth problem.

$-$ Quadratic variational formulations, which provide an alternative way to decouple the difficulties associated with the loss and the non-differentiability of the norm.

Leveraging these different tools lead us to present and compare four families of algorithms for sparse methods: proximal methods, block-coordinate descent algorithms, reweighted-$\ell_2$ schemes and the LARS/homotopy algorithm that are representative of the state of the art. 
The properties of these methods can be summarized as follows:

$-$ Proximal methods provide efficient and scalable algorithms that are applicable to a wide family of loss functions, that are simple to implement, compatible with many sparsity-inducing norms and often competitive with the other methods considered.

$-$ For the square loss, the homotopy method remains the fastest algorithm for (a) small and medium scale problems, since its complexity depends essentially on the size of the active sets, (b) cases with very correlated designs. It computes the whole path up to a certain sparsity level. Its main drawback is that it is difficult to implement efficiently, and it is subject to numerical instabilities. On the other hand, coordinate descent and proximal algorithms are trivial to implement.

$-$ For smooth losses, block-coordinate descent provides one of the fastest algorithms but it is limited to separable regularizers. 

$-$ For the square-loss and possibly sophisticated sparsity inducing regularizers, reweighted-$\ell_2$ schemes provide generic algorithms, that are still pretty competitive compared to subgradient and interior point methods.
For general losses, these methods currently require to solve iteratively $\ell_2$-regularized problems and it would be desirable to relax this constraint.

Of course, many learning problems are by essence non-convex and several approaches to inducing (sometimes more aggressively) sparsity are also non-convex. Beyond providing an overview of these methods to the reader as a complement to the convex formulations, we have tried to suggest that faced with non-convex non-differentiable and therefore potentially quite hard problems to solve, a good strategy is to try and reduce the problem to solving iteratively convex problems, since more stable algorithms are available and progress can be monitored with duality gaps.

Last but not least, duality suggests strongly that multiple kernel learning is in a sense the dual view to sparsity, and provides a natural way, via the ``kernel trick", to extend sparsity to reproducing kernel Hilbert spaces. We have therefore illustrated throughout the text that rather than being a vague connection, this duality can be exploited both conceptually, leading to the idea of structured MKL, and algorithmically to kernelize all of the algorithms we considered so as to apply them in the MKL and RKHS settings.


\begin{acknowledgements}
Francis Bach, Rodolphe Jenatton and Guillaume Obozinski are supported in
part by ANR under grant MGA ANR-07-BLAN-0311 and the European Research
Council (SIERRA Project).
Julien Mairal is supported by the NSF grant SES-0835531 and NSF award
CCF-0939370.
All the authors would like to thank the anonymous reviewers, whose comments have greatly contributed to improve the quality of this paper.
\end{acknowledgements}

\bibliographystyle{plain}
\bibliography{Bach-Jenatton-Mairal-Obozinski}

\end{document}